\DeclareMathOperator{\poly}{poly}
\DeclareMathOperator*{\argmin}{\mathrm{argmin}}
\newcommand{\sign}{\mathrm{sign}}
\newcommand{\diag}{\mathrm{diag}}
\newcommand{\rank}{\mathrm{rank}}
\newcommand{\wh}{\widehat}
\newcommand{\wt}{\widetilde}
\renewcommand{\bar}{\overline}
\newcommand{\ov}{\overline}
\renewcommand{\tilde}{\widetilde}
\newcommand{\ReLU}{{$\mathsf{ReLU}$}}
\DeclareMathOperator*{\R}{{\mathbb{R}}}
\newcommand{\bone}{{\mathbf{1}}}
\DeclareMathOperator*{\E}{{\mathbb{E}}}
\DeclareMathOperator*{\D}{{\mathcal{D}}}
\DeclareMathOperator*{\N}{{\mathcal{N}}}
\newcommand*{\RN}[1]{\expandafter\@slowromancap\romannumeral #1@}
\newcommand{\define}[4][ignore]{%
  \ifstrequal{#1}{ignore}{}{
  \@namedef{thmtitle@#2}{#1}}%
  \@namedef{thm@#2}{#4}%
  \@namedef{thmtypen@#2}{lemma}%
  \newtheorem{thmtype@#2}[theorem]{#3}%
  \newtheorem*{thmtypealt@#2}{#3~\ref{#2}}%
}
\newcommand{\state}[1]{%
  \@namedef{curthm}{#1}
  \@ifundefined{thmtitle@#1}{
  \begin{thmtype@#1}
    }{
  \begin{thmtype@#1}[\@nameuse{thmtitle@#1}]
  }
    \label{#1}
    \@nameuse{thm@#1}
  \end{thmtype@#1}
  \@ifundefined{thmdone@#1}{
  \@namedef{thmdone@#1}{stated}%
  }{}
}
\newcommand{\restate}[1]{%
  \@namedef{curthm}{#1}
  \@ifundefined{thmtitle@#1}{
    \begin{thmtypealt@#1}
    }{
  \begin{thmtypealt@#1}[\@nameuse{thmtitle@#1}]
  }
    \@nameuse{thm@#1}
  \end{thmtypealt@#1}
  \@ifundefined{thmdone@#1}{
  \@namedef{thmdone@#1}{stated}%
  }{}
}
\newcommand{\thmlabel}[1]{
  \@ifundefined{thmdone@\@nameuse{curthm}}{\label{#1}
    }{\tag*{\eqref{#1}}}
}
\author{
  Kai Zhong\thanks{Supported in part by NSF grants CCF-1320746, IIS-1546452 and CCF-1564000, and part of the work was done while interning in Microsoft research, India.} \\
  \texttt{zhongkai@ices.utexas.edu}\\
  UT-Austin
  \and
  Zhao Song\thanks{Supported in part by UTCS TAship (CS361 Spring 17 Introduction to Computer Security).} \\
  \texttt{zhaos@utexas.edu}\\
  UT-Austin
  \and
  Prateek Jain \\
  \texttt{prajain@microsoft.com}\\
  Microsoft Research, India
  \and
  Peter L. Bartlett\thanks{Supported in part by Australian Research Council through an Australian Laureate Fellowship (FL110100281) and through the Australian Research Council Centre of Excellence for Mathematical and Statistical Frontiers (ACEMS), and NSF grants IIS-1619362.} \\
  \texttt{bartlett@cs.berkeley.edu}\\
  UC, Berkeley
  \and
  Inderjit S. Dhillon\thanks{Supported in part by NSF grants CCF-1320746, IIS-1546452 and CCF-1564000.} \\
  \texttt{inderjit@cs.utexas.edu}\\
  UT-Austin
}
\author{
	Kai Zhong \\
	\texttt{zhongkai@ices.utexas.edu}\\
  	University of Texas  at Austin
	\and
  	Zhao Song\\
  	\texttt{zhaos@utexas.edu}\\
  	University of Texas  at Austin
  	\and
  	Prateek Jain\\
  	\texttt{prajain@microsoft.com}\\
  	Microsoft Research, India
	\and
  	Peter L. Bartlett \\
	\texttt{bartlett@cs.berkeley.edu}\\
	University of California, Berkeley
	\and
 	Inderjit S. Dhillon\\
  	\texttt{inderjit@cs.utexas.edu}\\
  	University of Texas  at Austin
}
\date{}
\title{Recovery Guarantees for One-hidden-layer Neural Networks\thanks{A preliminary version of this paper appears in Proceedings of the Thirty-fourth International Conference on Machine Learning (ICML 2017).}}
\newtheorem{theorem}{Theorem}[section]
\newtheorem{lemma}[theorem]{Lemma}
\newtheorem{definition}[theorem]{Definition}
\newtheorem{corollary}[theorem]{Corollary}
\newtheorem{assumption}[theorem]{Assumption}
\newtheorem{fact}[theorem]{Fact}
\newtheorem{remark}[theorem]{Remark}
\newtheorem{claim}[theorem]{Claim}
\newtheorem{property}[theorem]{Property}
\begin{document}

\begin{titlepage}
  \maketitle
  \begin{abstract}
In this paper, we consider regression problems with one-hidden-layer
neural networks (1NNs). We distill some properties of activation
functions that lead to \emph{local strong convexity} in the
neighborhood of the ground-truth parameters for the 1NN squared-loss
objective. Most popular nonlinear activation functions satisfy the
distilled properties, including rectified linear units (\ReLU s), leaky \ReLU s,
squared \ReLU s and sigmoids. For activation functions that are also
smooth, we show \emph{local linear convergence} guarantees of gradient
descent under a resampling rule. 
For homogeneous activations, we show tensor methods are able to initialize the parameters to fall into the local strong convexity region. As a result, tensor initialization followed by gradient descent is guaranteed to recover the ground truth with sample complexity $ d \cdot \log(1/\epsilon) \cdot \poly(k,\lambda )$ and computational complexity $n\cdot d \cdot \poly(k,\lambda) $ for smooth homogeneous activations with high probability, 
where $d$ is the dimension of the input, $k$ ($k\leq d$) is the number of hidden nodes, $\lambda$ is a conditioning property of the ground-truth parameter matrix between the input layer and the hidden layer, $\epsilon$ is the targeted precision and $n$ is the number of samples. 
To the best of our knowledge, this is the first work that provides recovery guarantees for 1NNs with both sample complexity and computational complexity \emph{linear} in the input dimension and \emph{logarithmic} in the precision.

  \end{abstract}
  \thispagestyle{empty}
\end{titlepage}
{\hypersetup{linkcolor=black}
\tableofcontents
}
\newpage
%%%%%%%%%%%%%% SECTION %%%%%%%%%%%%%% 
\section{Introduction}
Neural Networks (NNs) have achieved great practical success recently.
Many theoretical contributions have been made very recently to
understand the extraordinary performance of NNs. The remarkable
results of NNs on complex tasks in computer vision and natural
language processing inspired works on the expressive power of NNs
\cite{%h91,
css16,cs16,rpk16,dfs16,plr16,mpcb14,t16}. Indeed, several works found NNs are very powerful and the deeper the
more powerful. However, due to the high non-convexity of NNs, knowing
the expressivity of NNs doesn't guarantee that the targeted functions
will be learned. Therefore, several other works focused on the
achievability of global optima. Many of them considered the
over-parameterized setting, where the global optima or local minima
close to the global optima will be achieved when the number of
parameters is large enough, including \cite{fb16,hv15,lss14,dpg14,ss16,hm17}. This, however, leads to overfitting easily and can't provide any generalization guarantees, which are actually the essential goal in most tasks. 

A few works have considered generalization performance. For example,
\cite{xls17} provide generalization bound under the
Rademacher generalization analysis framework. Recently
\cite{zbh17} describe some experiments showing that
NNs are complex enough that they actually memorize the training data
but still generalize well. As they claim, this cannot be explained by
applying generalization analysis techniques, like VC dimension and
Rademacher complexity, to classification loss (although it does not
rule out a margins analysis---see, for example,~\cite{b98};
their experiments involve the unbounded cross-entropy loss).

In this paper, we don't develop a new generalization analysis. Instead
we focus on parameter recovery setting, where we assume there are underlying ground-truth parameters and we provide recovery guarantees for the ground-truth parameters up to equivalent permutations. Since the parameters are exactly recovered, the generalization performance will also be guaranteed. 

Several other techniques are also provided to recover the parameters or to guarantee generalization performance, such as tensor methods \cite{jsa15} and kernel methods \cite{agmr16}. These methods require sample complexity $O(d^3)$ or computational complexity $\widetilde O(n^2)$, which can be intractable in practice. 
We propose an algorithm that has recovery guarantees for 1NN with sample complexity $\widetilde O(d)$ and computational time $\widetilde O(d n)$ under some mild assumptions. 

Recently \cite{s16} show that neither specific assumptions on the niceness of the input distribution or niceness of the target function alone is sufficient to guarantee learnability using gradient-based methods.
In this paper, we assume data points are sampled from Gaussian distribution and the parameters of hidden neurons are linearly independent. 

Our main contributions are as follows,
\begin{enumerate}
\item We distill some properties for activation functions, which are satisfied by a wide range of activations, including ReLU, squared ReLU, sigmoid and tanh. With these properties we show positive definiteness (PD) of the Hessian in the neighborhood of the ground-truth parameters given enough samples  (Theorem~\ref{thm:lsc_nn_informal}). Further, for activations that are also smooth, we show local linear convergence is guaranteed using gradient descent. 
\item We propose a tensor method to initialize the parameters such that the initialized parameters fall into the local positive definiteness area. Our contribution is that we reduce the sample/computational complexity from cubic dependency on dimension to linear dependency (Theorem~\ref{thm:tensor_final}). 
\item Combining the above two results, we provide a globally converging algorithm (Algorithm~\ref{alg:overall_alg}) for smooth homogeneous activations satisfying the distilled properties. The whole procedure requires sample/computational complexity linear in dimension and logarithmic in precision (Theorem~\ref{thm:overall}).  
\end{enumerate}

%However, the optimization behavior on NNs is not well understood. Recent theoretical studies on the optimization of NNs / deep learning include non-existence of spurious local minimum \cite{k16, sc16,xls17} and closeness between local minima and the global optima \cite{fb16, chm15}. These methods require the number of parameters in the network to be larger than the number of data points, which however may lead to overfitting easily. 

\section{Related Work}
The recent empirical success of NNs has boosted their theoretical analyses \cite{fzk16,b16,bmb16,sbl16,apvz14,agmr16,gkkt16}. In this paper, we classify them into three main directions.

\subsection{Expressive Power}
Expressive power is studied to understand the remarkable performance
of neural networks on complex tasks. Although one-hidden-layer neural
networks with sufficiently many hidden nodes can approximate any
continuous function \cite{h91}, shallow networks
can't achieve the same performance in practice as deep networks.
Theoretically, several recent works show the depth of NNs plays an
essential role in the expressive power of neural networks
\cite{dfs16}. As shown in
\cite{css16,cs16,t16},
functions that can be implemented by a deep network of polynomial
size require exponential size in order to be implemented by a shallow
network.
\cite{rpk16,plr16,mpcb14,agmr16}
design some measures of expressivity that display an exponential
dependence on the depth of the network. However, the increasing of the
expressivity of NNs or its depth also increases the difficulty of the
learning process to achieve a good enough model. In this paper, we
focus on 1NNs and provide recovery guarantees using a finite number of
samples.

\subsection{Achievability of Global Optima}
The global convergence is in general not guaranteed for NNs due to
their non-convexity. It is widely believed that training deep models
using gradient-based methods works so well because the error surface
either has no local minima, or if they exist they need to be close in
value to the global minima. \cite{scp16} present examples showing that for this to be true additional assumptions on the data, initialization schemes and/or the model classes have to be made. Indeed the achievability of global optima has been shown under many different types of assumptions.

In particular, 
\cite{chm15} analyze the loss surface of a special
random neural network through spin-glass theory and show that it has
exponentially many local optima, whose loss is small and close to that
of a global optimum.  Later on, \cite{k16} eliminate
some assumptions made by \cite{chm15} but still
require the independence of activations as \cite{chm15}, which is unrealistic. 
\cite{ss16} study the geometric structure of the neural network objective function. They have shown that with high probability random initialization will fall into a basin with a small objective value when the network is over-parameterized.  
\cite{lss14} consider polynomial networks where the activations are square functions, which are typically not used in practice. 
\cite{hv15} show that when a local minimum has zero parameters related to a hidden node, a global optimum is achieved. 
\cite{fb16} study the landscape of 1NN in terms of
topology and geometry, and show that the level set becomes connected as the network is increasingly over-parameterized. 
\cite{hm17} show that products of matrices don't have
spurious local minima and that deep residual networks can represent
any function on a sample, as long as the number of parameters is
larger than the sample size. 
\cite{sc16} consider over-specified NNs, where the number of samples is smaller than the number of weights.
\cite{dpg14} propose a new approach to second-order optimization that identifies and attacks the saddle point problem in high-dimensional non-convex optimization. They apply the approach to recurrent neural networks and show practical performance. 
\cite{agmr16} use results from tropical geometry to
show global optimality of an algorithm, but it requires $(2n)^k\poly(n)$ computational complexity. %%% poly is able to hide any constant.

Almost all of these results require the number of parameters is larger than the number of points, which probably overfits the model and no generalization performance will be guaranteed. 
In this paper, we propose an efficient and provable algorithm for 1NNs
that can achieve the underlying ground-truth parameters. 

\subsection{Generalization Bound / Recovery Guarantees}
The achievability of global optima of the objective from the training data doesn't guarantee the learned model to be able to generalize well on unseen testing data. In the literature, we find three main approaches to generalization guarantees. 

1) {\it Use generalization analysis frameworks}, including VC
dimension/Rademacher complexity, to bound the generalization error. A
few works have studied the generalization performance for NNs.
\cite{xls17} follow \cite{sc16} but additionally
provide generalization bounds using Rademacher complexity. They assume
the obtained parameters are in a regularization set so that the
generalization performance is guaranteed, but this assumption can't be
justified theoretically.  \cite{hrs16} apply stability analysis to the
generalization analysis of SGD for convex and non-convex problems,
arguing early stopping is important for generalization performance. 

2) {\it Assume an underlying model and try to recover this model.}
This direction is popular for many non-convex problems including
matrix completion/sensing \cite{jns13,h14,sl15,blwz17}, mixed linear regression
\cite{zjd16}, subspace recovery \cite{ev09}
and other latent models \cite{agh14}. 

Without making any assumptions, those non-convex problems are intractable \cite{agkm12, gv15,swz17a,gg11,rsw16,sr11,hm13,agm12,yi2014alternating}. Recovery guarantees for NNs also need assumptions. Several different approaches under different assumptions are provided to have recovery guarantees on different NN settings.

Tensor methods
\cite{agh14,wsta15,wa16,swz16} are a general
tool for recovering models with latent factors by assuming the data
distribution is known. Some existing recovery guarantees for NNs are
 provided by tensor methods \cite{sa15,jsa15}.
However, \cite{sa15} only provide guarantees to recover the subspace spanned by the weight matrix and no sample complexity is given, while \cite{jsa15} require $O(d^3 /\epsilon^2)$ sample complexity.
In this paper, we use tensor methods as an initialization step so that we don't need very accurate estimation of the moments, which enables us to reduce the total sample complexity from $1/\epsilon^2$ to $\log(1/\epsilon)$. 

\cite{abgm14} provide polynomial sample complexity and
computational complexity bounds for learning deep representations in
unsupervised setting, and they need to assume the weights are sparse
and randomly distributed in $[-1,1]$.
 
 \cite{t17} analyze 1NN by assuming Gaussian inputs in a
supervised setting, in particular, regression and classification
with a teacher. This paper also considers this setting. However, there are some key differences. a)
\cite{t17} require the second-layer parameters are all
ones, while we can learn these parameters. b) In \cite{t17}, the ground-truth first-layer weight vectors are required to be orthogonal, while we only require linear independence. c) \cite{t17} require a good initialization but doesn't provide initialization methods, while we show the parameters can be efficiently initialized by tensor methods. d)
In \cite{t17}, only the population case (infinite sample size) is
considered, so there is no sample complexity analysis, while we show finite sample complexity. 

Recovery guarantees for convolution neural network with Gaussian inputs are provided in \cite{bg17}, where they show a globally converging guarantee of gradient descent on a one-hidden-layer no-overlap convolution neural network. However, they consider population case, so no sample complexity is provided. Also their analysis depends on \ReLU~activations and the no-overlap case is very unlikely to be used in practice. In this paper, we consider a large range of activation functions, but for one-hidden-layer fully-connected NNs. 

3) {\it Improper Learning.} In the improper learning setting for NNs, the learning algorithm is not restricted to output a NN, but only should output a prediction function whose error is not much larger than the error of the best NN among all the NNs considered. 
\cite{zlj16,zlw16} propose kernel methods
to learn the prediction function which is
guaranteed to have generalization performance close to that of the
NN. However, the sample complexity and computational
complexity are exponential. \cite{azs14} transform NNs to
convex semi-definite programming. The works by
\cite{b14} and \cite{brv05}
are also in this direction.
However, these methods are actually not learning the original NNs.
Another work by \cite{zlwj17} uses random
initializations to achieve arbitrary small excess risk. However, their
algorithm has exponential running time in $1/\epsilon$.

%general non-convex approaches, \cite{dpg14,hrs16}

\paragraph{Roadmap.} The paper is organized as follows. In
Section~\ref{sec:preliminary}, we present our problem setting and show
three key properties of activations required for our guarantees. In Section~\ref{sec:lsc_nns}, we introduce the formal theorem of local strong convexity and show local linear convergence for smooth activations. Section~\ref{sec:tensor} presents a tensor method to initialize the parameters so that they fall into the basin of the local strong convexity region.

%%%%%%%%%%%%%% SECTION %%%%%%%%%%%%%% 
\section{Problem Formulation}\label{sec:preliminary}
We consider the following regression problem. Given a set of $n$ samples 
\begin{align*}
S=\{ ( x_1, y_1), ( x_2,y_2), \cdots ( x_n,y_n) \} \subset \mathbb{R}^d \times \mathbb{R},
\end{align*} 
let ${\cal D}$ denote a underlying distribution over
$\mathbb{R}^d\times\mathbb{R}$ with parameters 
\begin{align*}
\{  w^*_{1},  w^*_2,
\cdots  w^*_k \} \subset  \mathbb{R}^d, \text{~and~} \{v^*_1, v^*_2, \cdots,
v^*_k \} \subset \mathbb{R}
\end{align*}
 such that each sample $( x,y) \in S$ is
sampled i.i.d.~from this distribution, with 
\begin{align}\label{eq:model}
{\cal D}:\qquad
 x \sim {\cal N} (0,I), ~ ~
y= \sum_{i=1}^k v^*_i \cdot \phi( w_{i}^{*\top }  x),
\end{align}
where $\phi(z)$ is the activation function, $k$ is the number of nodes in the hidden layer. 
The main question we want to answer is:  How many samples are sufficient to recover the underlying parameters?

It is well-known that, training one hidden layer neural network is NP-complete \cite{br88}. Thus, without making any assumptions, learning deep neural network is intractable. Throughout the paper, we assume $x$ follows a standard normal distribution; the data is noiseless; the dimension of input data is at least the number of hidden nodes; and activation function $\phi(z)$ satisfies some reasonable properties.

 Actually our results can be easily extended to
multivariate Gaussian distribution with positive definite covariance
and zero mean since we can estimate the covariance first and then
transform the input to a standard normal distribution but with some
loss of accuracy. Although this paper focuses on the regression problem, we can transform classification problems to regression problems if a good teacher is provided as described in \cite{t17}. Our analysis requires $k$ to be no greater than $d$, since the first-layer parameters will be linearly dependent otherwise.

For activation function $\phi(z)$, we assume it is continuous and if it is non-smooth let its first derivative be left derivative. Furthermore, we assume it satisfies Property \ref{pro:gradient}, \ref{pro:expect}, and \ref{pro:hessian}. These properties are critical for the later analyses. We also observe that most activation functions actually satisfy these three properties.

\begin{property}\label{pro:gradient}
The first derivative $\phi'(z)$ is nonnegative and homogeneously bounded, i.e., $0\leq \phi'(z) \leq L_1 |z|^p$ for some constants $L_1>0$ and $p\geq 0$.
\end{property}

\begin{property} \label{pro:expect}
Let $\alpha_q(\sigma) = {\E}_{z\sim {\cal N}(0,1)}[\phi'(\sigma \cdot z) z^q], \forall q\in \{0,1,2\}$, and $\beta_q(\sigma) = {\E}_{z\sim {\cal N}(0,1)} [\phi'^2(\sigma \cdot z) z^q ] , \forall q\in \{0,2\}.$
Let $\rho(\sigma)$ denote
$
 \min\{\beta_0(\sigma) -\alpha_0^2(\sigma) - \alpha_1^2(\sigma), \;
 \beta_2(\sigma) - \alpha_1^2(\sigma)- \alpha_2^2(\sigma),\;
  \alpha_0(\sigma)\cdot \alpha_2(\sigma) - \alpha_1^2(\sigma) \}
$
The first derivative $\phi'(z)$ satisfies that, for all $\sigma>0$, we have $\rho(\sigma)>0$.
\end{property}
\begin{property}\label{pro:hessian}
The second derivative $\phi''(z)$ is either {\bf (a)} globally bounded $|\phi''(z)| \leq L_2$ for some constant $L_2$, i.e., $\phi(z)$ is $L_2$-smooth, or {\bf (b)} $\phi''(z)=0$ except for $e$ ($e$ is a finite constant) points.
\end{property}

\begin{remark} The first two properties are related to the first derivative $\phi'(z)$ and the last one is about the second derivative $\phi''(z)$. At high level, Property~\ref{pro:gradient} requires $\phi$ to be non-decreasing with homogeneously bounded derivative; Property~\ref{pro:expect} requires $\phi$ to be highly non-linear; Property~\ref{pro:hessian} requires $\phi$ to be either smooth or piece-wise linear. 
 \end{remark}
\define{proposition:act_example}{Theorem}{
\ReLU~$\phi(z) =\max\{z,0\}$, leaky \ReLU~$\phi(z)=\max\{z,0.01z\}$, squared \ReLU~$\phi(z) =\max\{z,0\}^2$ and any non-linear non-decreasing smooth functions with bounded symmetric $\phi'(z)$, like the sigmoid function $\phi(z) = 1/(1+e^{-z})$, the $\mathrm{tanh}$ function and the $\mathrm{erf}$ function $\phi(z) = \int_0^z e^{-t^2} dt $, satisfy Property~\ref{pro:gradient},\ref{pro:expect},\ref{pro:hessian}. The linear function, $\phi(z)=z$, doesn't satisfy Property~\ref{pro:expect} and the quadratic function, $\phi(z) = z^2$,  doesn't satisfy Property~\ref{pro:gradient} and \ref{pro:expect}. 
}
\state{proposition:act_example}
\ifdefined\icmlcamera
The proof can be found in the full version~\cite{full}.
\else
The proof can be found in Appendix~\ref{app:proof_prop1}.
\fi

%%%%%%%%%%%%%% SECTION %%%%%%%%%%%%%% 
\section{Positive Definiteness of Hessian}\label{sec:lsc_nns}
In this section, we study the Hessian of empirical risk near the ground truth. We consider the case when $ v^*$ is already known. Note that for homogeneous activations, we can assume $v^*_i\in\{-1,1\}$ since $v \phi(z) = \frac{v}{|v|} \phi(|v|^{1/p} z)$, where $p$ is the degree of homogeneity. As $v_i^*$ only takes discrete values for homogeneous activations, in the next section, we show we can exactly recover $ v^*$ using tensor methods with finite samples. %For non-homogeneous activations, the tensor methods can approximate $v^*$ to a small error.

%For any $W\in \mathbb{R}^{d\times k}$, we consider the following squared loss objective %to learn the parameters $W^* := [ w_1^*, w_2^*,\cdots, w_K^*]$ and $ v^* := [v^*_1;\cdots;v^*_K]$,
For a set of samples $S$, we define the {\it Empirical Risk},
\begin{equation}\label{eq:nns_emp_obj}
\widehat{f}_S(W) =  \frac{1}{2|S|}\sum_{ ( x,y) \in S } \left( \sum_{i=1}^k v^*_i \phi( w_{i}^\top  x) - y \right) ^2.
\end{equation}
For a distribution ${\cal D}$, we define the {\it Expected Risk},
\begin{align}\label{eq:nns_pop_obj}
f_{\cal D}(W) =  \frac{1}{2}\underset{( x,y)\sim{\cal D}} {\mathbb{E}} \left[\left( \sum_{i=1}^k v^*_i \phi( w_{i}^\top  x) - y \right) ^2\right].
\end{align}
Let's calculate the gradient and the Hessian of $\widehat{f}_S(W)$ and $f_{\cal D}(W)$. For each $j\in [k]$, the partial gradient of $f_{\cal D}(W) $ with respect to $ w_j$ can be represented as
\begin{align*}
 \frac{\partial f_{\cal D}(W)}{\partial  w_j} = \underset{( x,y)\sim{\cal D}}{\mathbb{E}} \left[ \left( \sum_{i=1}^k v^*_i \phi( w_{i}^\top  x)  - y \right) v^*_j \phi'( w_j^\top  x)  x \right].
\end{align*}
For each $j,l\in[k]$ and $j\neq l$, the second partial derivative of $f_{\cal D}(W)$ for the $(j,l)$-th off-diagonal block is,
\begin{align*}
\frac{\partial^2 f_{\cal D}(W)}{\partial  w_j \partial  w_l} = & ~ \underset{( x,y)\sim{\cal D}}{\mathbb{E}} \left[  v^*_j v^*_l \phi'( w_j^\top  x)  \phi'( w_l^\top  x)   x  x^\top \right] ,
\end{align*}
and for each $j\in [k]$, the second partial derivative of $f_{\cal D}(W)$ for the $j$-th diagonal block is
\ifdefined\icmlcamera
\begin{align*}
\frac{\partial^2 f_{\cal D}(W)}{\partial  w_j^2} = & \underset{( x,y)\sim{\cal D}}{\mathbb{E}} [ ( \overset{k}{ \underset{i=1}{\sum} } v_i^* \phi( w_{i}^\top  x)  - y ) v_j^* \phi''( w_j^\top  x)  x  x^\top  \\
& + (  v^*_j \phi'( w_j^\top  x) )^2  x  x^\top ].
\end{align*}
\else
\begin{align*}
\frac{\partial^2 f_{\cal D}(W)}{\partial  w_j^2} = & \underset{( x,y)\sim{\cal D}}{\mathbb{E}} \left[ \left( \overset{k}{ \underset{i=1}{\sum} } v_i^* \phi( w_{i}^\top  x)  - y \right) v_j^* \phi''( w_j^\top  x)  x  x^\top + (  v^*_j \phi'( w_j^\top  x) )^2  x  x^\top \right].
\end{align*}
\fi
If $\phi(z)$ is non-smooth, we use the Dirac function and its derivatives to represent $\phi''(z)$.
Replacing the expectation $\E_{(x,y)\sim \D}$ by the average over the samples $|S|^{-1} \sum_{( x,y)\in S}$, we obtain the Hessian of the empirical risk.

Considering the case when $W=W^* \in \mathbb{R}^{d\times k}$, for all $j,l\in[k]$, we have,
\begin{align*}
\frac{\partial^2 f_{\cal D}(W^*)}{\partial  w_j \partial  w_l} = & ~ \underset{( x,y)\sim{\cal D}}{\mathbb{E}} \left[  v^*_j v^*_l \phi'( w_j^{*\top}  x)  \phi'( w_l^{*\top}  x)   x  x^\top \right].% \in \mathbb{R}^{d\times d}.
\end{align*}
If Property~\ref{pro:hessian}(b) is satisfied, $\phi''(z) = 0$ almost surely. So in this case the diagonal blocks of the empirical Hessian can be written as, %w.p. $1$,
 \begin{align*}
\frac{\partial^2 \widehat{f}_S(W)}{\partial  w_j^2} = & ~\frac{1}{|S|}\sum_{( x,y)\in S} (v^*_j \phi'( w_j^\top  x))^2  x  x^\top.
\end{align*}
Now we show the Hessian of the objective near the global optimum is positive definite.
\begin{definition}\label{def:W_tau}
Given the ground truth matrix $W^*\in \mathbb{R}^{d\times k}$, let
$\sigma_i(W^*)$ denote the $i$-th singular value of $W^*$, often
abbreviated as $\sigma_i$. Let $\kappa = \sigma_1/ \sigma_k$,
$\lambda= (\prod_{i=1}^k \sigma_i) / \sigma_{k}^{k}$. Let $v_{\max}$
denote $\max_{i\in[k]} |v_i^*|$ and $v_{\min}$
denote $\min_{i\in[k]} |v_i^*|$ . Let $\nu=v_{\max}/v_{\min}$.  Let $\rho$ denote $\rho(\sigma_k)$. Let $\tau=  (3\sigma_1/2)^{4p} /
\min_{\sigma\in [\sigma_k/2,3\sigma_1/2 ]}\{\rho^2(\sigma)\} $. 
\end{definition}

\ifdefined\icmlcamera
\begin{theorem}
\else
\begin{theorem}[Informal version of Theorem~\ref{thm:lsc_nn}]
\fi
\label{thm:lsc_nn_informal}
For any $W\in \mathbb{R}^{d\times k}$ with $\|W - W^*\| \leq
\poly(1/k,$ $1/\lambda, 1/\nu, \rho/\sigma_1^{2p} ) \cdot \| W^*\|$, let $S$ denote a set of i.i.d. samples from distribution ${\cal D}$ (defined in~(\ref{eq:model})) and let the activation function satisfy Property~\ref{pro:gradient},\ref{pro:expect},\ref{pro:hessian}. Then for any $t\geq 1$, if $|S| \geq d\cdot \poly(\log d, t,$ $k, \nu, \tau,\lambda,\sigma_1^{2p}/\rho)$, we have with probability at least $1-d^{-\Omega(t)}$,
\begin{align*}
 \Omega( v_{\min}^2 \rho(\sigma_k) / (\kappa^2 \lambda ) ) I\preceq \nabla^2 \widehat{f}_S(W) \preceq O(kv_{\max}^2\sigma_1^{2p}) I.
\end{align*}
\end{theorem}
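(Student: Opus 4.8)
The plan is to establish the bound in two stages: first prove the corresponding spectral bounds for the \emph{population} Hessian $\nabla^2 f_{\cal D}(W^*)$ at the exact ground truth, then control two perturbations --- moving from $W^*$ to a nearby $W$, and moving from the population Hessian to the empirical Hessian $\nabla^2 \widehat f_S(W)$ on a finite sample --- and absorb both into the slack left by the population bound. For the population lower bound at $W^*$, I would write $\nabla^2 f_{\cal D}(W^*)$ as a $k\times k$ block matrix whose $(j,l)$ block is $v_j^* v_l^* \,\E[\phi'(w_j^{*\top}x)\phi'(w_l^{*\top}x)\,xx^\top]$, and evaluate these Gaussian expectations. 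Each such expectation depends only on the $2$-dimensional marginal of $x$ in the span of $w_j^*,w_l^*$, so it decomposes into a component along $\mathrm{span}\{w_j^*,w_l^*\}$ (governed by the $\alpha_q,\beta_q$ moments of Property~\ref{pro:expect}) and a multiple of the identity on the orthogonal complement. Testing the block matrix against an arbitrary direction and reducing to the span of the $w_i^*$, the quadratic form becomes (up to the action of $W^*$, which contributes the $\sigma_i$ factors in $\kappa$ and $\lambda$) a sum of $2\times2$ Gram-type quantities whose smallest eigenvalue is exactly the quantity $\rho(\sigma)=\min\{\beta_0-\alpha_0^2-\alpha_1^2,\ \beta_2-\alpha_1^2-\alpha_2^2,\ \alpha_0\alpha_2-\alpha_1^2\}$ appearing in Property~\ref{pro:expect}; Property~\ref{pro:expect} guarantees $\rho(\sigma)>0$. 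This yields $\nabla^2 f_{\cal D}(W^*)\succeq \Omega(v_{\min}^2\rho(\sigma_k)/(\kappa^2\lambda))\,I$. The population upper bound is easier: by Property~\ref{pro:gradient}, $\phi'(w_j^\top x)^2\le L_1^2|w_j^\top x|^{2p}$, so each diagonal block is bounded by $O(v_{\max}^2\|w_j\|^{2p})\le O(v_{\max}^2\sigma_1^{2p})$ via Gaussian moment bounds, and summing $k$ blocks (or bounding the operator norm of the block matrix by $k$ times the max block) gives the $O(kv_{\max}^2\sigma_1^{2p})$ ceiling.

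Next I would handle the $W$-vs-$W^*$ perturbation at the population level. When Property~\ref{pro:hessian}(b) holds (piecewise-linear $\phi$), the $\phi''$ terms vanish a.s., so only the $\phi'(w_j^\top x)\phi'(w_l^\top x)$ products change; when Property~\ref{pro:hessian}(a) holds ($L_2$-smooth), there is an extra $\E[(\sum_i v_i^*\phi(w_i^\top x)-y)v_j^*\phi''(w_j^\top x)xx^\top]$ term that is zero at $W=W^*$ and Lipschitz in $W$. In both cases I would show $\|\nabla^2 f_{\cal D}(W)-\nabla^2 f_{\cal D}(W^*)\|\le \poly(k,\lambda,\nu,\sigma_1^{2p}/\rho)\cdot\|W-W^*\|$, using Property~\ref{pro:gradient} (homogeneous boundedness of $\phi'$) together with Hölder to bound the moments of the differences $\phi'(w_j^\top x)-\phi'(w_j^{*\top}x)$; for non-smooth $\phi$ the derivative jumps are handled by noting that $\phi'$ is bounded and the probability that $w_j^\top x$ and $w_j^{*\top}x$ straddle a breakpoint is $O(\|w_j-w_j^*\|/\|w_j^*\|)$, an anti-concentration estimate for Gaussians. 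Choosing $\|W-W^*\|\le\poly(1/k,1/\lambda,1/\nu,\rho/\sigma_1^{2p})\|W^*\|$ small enough makes this perturbation at most, say, half the population lower bound.

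Finally, the empirical-vs-population concentration. The diagonal blocks of $\nabla^2\widehat f_S(W)$ (in case~(b), or the main $\phi'^2$ part in case~(a)) are sums of i.i.d.\ matrices $(v_j^*\phi'(w_j^\top x))^2 xx^\top$; these are not bounded but have sub-exponential/sub-Weibull tails because $\phi'(w_j^\top x)^2|x|^2\lesssim |x|^{2p+2}$ by Property~\ref{pro:gradient}. I would apply a truncated matrix Bernstein inequality (or the Matrix Chernoff / a covering-number argument over the direction of $xx^\top$ combined with Bernstein's inequality for scalar heavy-tailed sums), which gives, with probability $1-d^{-\Omega(t)}$, that $\|\nabla^2\widehat f_S(W)-\nabla^2 f_{\cal D}(W)\|$ is small once $|S|\ge d\cdot\poly(\log d,t,k,\nu,\tau,\lambda,\sigma_1^{2p}/\rho)$ --- the $d$ factor being the ambient dimension of the $xx^\top$ matrices, the $\log d$ from the net, and $\tau=(3\sigma_1/2)^{4p}/\min_\sigma\rho^2(\sigma)$ packaging the ratio of the (fourth-moment-type) variance proxy to the target lower bound. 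In case~(a) one also needs the cross term $(\sum_i v_i^*\phi(w_i^\top x)-y)\phi''(w_j^\top x)xx^\top$ to concentrate; since at $W\approx W^*$ the residual $\sum_i v_i^*\phi(w_i^\top x)-y$ is small in expectation and $\phi''$ is bounded by $L_2$, its contribution is lower order. Combining the three pieces --- population lower bound at $W^*$, $O(\|W-W^*\|)$ Lipschitz slack, and $o(1)$ sampling error --- yields the stated $\Omega(v_{\min}^2\rho(\sigma_k)/(\kappa^2\lambda))\,I\preceq\nabla^2\widehat f_S(W)\preceq O(kv_{\max}^2\sigma_1^{2p})\,I$. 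I expect the main obstacle to be the non-smooth case: carefully bounding how much the empirical Hessian's diagonal blocks can deviate when $\phi''$ is a sum of Dirac masses, which forces the anti-concentration argument and a delicate truncation to keep the variance proxy (hence the sample complexity polynomial $\tau$) under control, and making the off-diagonal block bounds tight enough that the $\kappa^2\lambda$ dependence --- rather than something worse --- comes out.
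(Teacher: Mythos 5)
Your three-stage architecture --- population bounds at $W^*$, a Lipschitz/anti-concentration perturbation from $W^*$ to $W$, and a truncated matrix-Bernstein concentration step --- is exactly the paper's decomposition (Lemma~\ref{lem:pd_ground} plus Lemmas~\ref{lem:smooth_pop_local}, \ref{lem:emp_pop}, and \ref{lem:emp_pop_nonsmooth}), including the surrogate-Hessian trick for piecewise-linear activations (where $\phi''=0$ a.s.) and the Gaussian anti-concentration bound on the probability that $w_j^\top x$ and $w_j^{*\top}x$ straddle a breakpoint. The upper bound and the concentration step are sketched at the right level of detail, and the parameters $\tau$, $\kappa$, $\lambda$ enter for the reasons you give.

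The one place your sketch under-delivers is the population lower bound, which is the mathematical heart of the theorem. Taken literally, reducing to ``a sum of $2\times2$ Gram-type quantities'' indexed by pairs $(j,l)$ cannot work: the off-diagonal blocks $\E[\phi'(u_j)\phi'(u_l)(p_j^\top u)(p_l^\top u)]$ are not individually sign-controlled, and summing pairwise lower bounds does not bound the full $dk\times dk$ quadratic form. What is actually needed (Lemma~\ref{lem:lower_bound_ortho}) is a \emph{global} completion of squares over all $k$ indices simultaneously: after changing variables $z=V^{\dagger\top}s$ to whiten $W^*$ (this is where the $1/(\kappa^2\lambda)$ loss occurs), the full quadratic form $A+B_1+B_2+B_3$ is rewritten identically as $C_1+\cdots+C_5$, where $C_1=\|\alpha_0 P\bone+(\alpha_2-\alpha_0)\diag(P)\|^2$, $C_2=\alpha_1^2(\diag(P)^\top\bone)^2$ and $C_3$ are discarded as nonnegative, and only $C_4,C_5$ --- with coefficients $\beta_0-\alpha_0^2-\alpha_1^2$ and $\beta_2-\alpha_1^2-\alpha_2^2$ --- deliver $\rho\|P\|_F^2$. (The third member of $\rho$, $\alpha_0\alpha_2-\alpha_1^2$, is what you would need for a pairwise argument, but it alone does not suffice.) You have correctly identified the constant $\rho$ and the role of non-orthogonality, but the rank-one-plus-diagonal algebraic identity that makes the cross terms harmless is the step you would still have to find.
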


\begin{remark} As we can see from Theorem~\ref{thm:lsc_nn_informal}, 
$\rho(\sigma_k)$ from Property~\ref{pro:expect} plays an important role
for positive definite (PD) property. Interestingly, many popular activations, like ReLU,
sigmoid and tanh, have $\rho(\sigma_k)>0$, while some simple functions
like linear ($\phi(z)=z$) and square ($\phi(z)=z^2$) functions have $\rho(\sigma_k) = 0$ and their Hessians are rank-deficient. Another
important numbers are $\kappa$ and $\lambda$, two different condition numbers of the weight
matrix, which directly influences the positive definiteness. If $W^*$ is rank deficient, $\lambda \to \infty$, $\kappa \to \infty$ and we don't have PD property. In the best case when $W^*$ is orthogonal, $\lambda=\kappa=1$. In the worse case, $\lambda$ can be exponential in $k$. Also $W$ should be close enough to $W^*$. In the next section, we provide tensor methods to initialize $w_i^*$ and $ v_i^*$ such that they satisfy the conditions in Theorem~\ref{thm:lsc_nn_informal}. 
\end{remark}

For the PD property to hold, we need the samples to be independent of the current parameters. Therefore, we need to do resampling at each iteration to guarantee the convergence in iterative algorithms like gradient descent. The following theorem provides the linear convergence guarantee of gradient descent for smooth activations. 
\ifdefined\icmlcamera
\begin{theorem}[Linear convergence of gradient descent]
\else
\begin{theorem}[Linear convergence of gradient descent, informal version of Theorem~\ref{thm:lc_gd}]
\fi
\label{thm:lc_gd_informal}
Let $W$ be the current iterate satisfying $\|W - W^*\| \leq \poly(1/\nu, 1/k,1/\lambda, \rho/\sigma_1^{2p}) \| W^* \|$.
%v_{\min}^4 \rho^2(\sigma_k) \sigma_k/ ( k^2\kappa^4 {\lambda}^2 v_{\max}^4 \sigma_1^{4p} )$.
 Let $S$ denote a set of i.i.d. samples from distribution ${\cal D}$ (defined in~(\ref{eq:model})) with $|S| \geq d \cdot \poly(\log d, t,k,\nu,\tau,\lambda,\sigma_1^{2p}/\rho)$
 %\widetilde{\Omega}(d\cdot t^{p} k^2v_{\max}^4 \tau \kappa^8 \lambda^2 \sigma_1^{4p}/(v_{\min}^4 \rho^2(\sigma_k)))$ 
  and let the activation function satisfy Property~\ref{pro:gradient},\ref{pro:expect} and \ref{pro:hessian}(a). 
Define $m_0 := \Theta( v_{\min}^2 \rho(\sigma_k)/ (\kappa^2 \lambda) )$ and $M_0:= \Theta ( kv_{\max}^2 \sigma_1^{2p} )$. If we perform gradient descent with step size $1/M_0$ on $\widehat{f}_S(W)$ and obtain the next iterate,
$$\wt{W} = W - \frac{1}{M_0} \nabla \widehat{f}_S(W),$$ 
then with probability at least $1-d^{-\Omega(t)}$,
$$\|\wt{W} - W^*\|_F^2  \leq  (1- \frac{m_0}{M_0} ) \| W-W^*\|_F^2.$$
\end{theorem}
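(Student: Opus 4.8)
The plan is to derive the linear convergence bound as a standard consequence of the positive-definiteness of the empirical Hessian, combined with the fact that $W^*$ is a critical point of $\widehat{f}_S$ when the diagonal second-derivative terms vanish. First I would note that, under Property~\ref{pro:hessian}(a), $\phi$ is $L_2$-smooth, so $\widehat f_S$ is twice continuously differentiable, and the informal Theorem~\ref{thm:lsc_nn_informal} (more precisely its formal counterpart) guarantees that on the whole segment connecting $W$ to $W^*$ — which lies inside the prescribed neighborhood of $W^*$ provided the radius in the hypothesis is chosen small enough relative to the one in Theorem~\ref{thm:lsc_nn_informal} — the Hessian satisfies $m_0 I \preceq \nabla^2 \widehat f_S(U) \preceq M_0 I$ for every $U$ on that segment, with $m_0 = \Theta(v_{\min}^2\rho(\sigma_k)/(\kappa^2\lambda))$ and $M_0 = \Theta(kv_{\max}^2\sigma_1^{2p})$. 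One subtlety is that the sample set $S$ must be independent of $W$ (the resampling rule), which the statement already builds in by treating $W$ as a fixed iterate and $S$ as freshly drawn; I would state this explicitly.

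Next I would establish that $\nabla \widehat f_S(W^*) = 0$. This is where I would use that the residual $\sum_i v_i^*\phi(w_i^{*\top}x) - y$ is identically zero on every sample, since the data is noiseless and generated exactly by $W^*$; hence $\nabla \widehat f_S(W^*) = \tfrac1{|S|}\sum_{(x,y)\in S}(\text{residual})\cdot(\dots) = 0$ termwise. (Note this does not even need Property~\ref{pro:hessian}(b); it is just the interpolation property.) Then I would invoke the standard descent/contraction argument: writing $g(W) = \nabla\widehat f_S(W)$, by the fundamental theorem of calculus $g(W) = g(W) - g(W^*) = \left(\int_0^1 \nabla^2\widehat f_S(W^* + s(W-W^*))\,ds\right)(W - W^*) =: H\,(W-W^*)$, where $m_0 I \preceq H \preceq M_0 I$. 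The gradient step gives $\widetilde W - W^* = (W - W^*) - \tfrac1{M_0}H(W-W^*) = (I - \tfrac1{M_0}H)(W-W^*)$, and since $0 \preceq I - \tfrac1{M_0}H \preceq (1 - \tfrac{m_0}{M_0})I$, taking Frobenius norms (which dominate the operator action column-wise, or equivalently vectorizing $W$ and $H$ acting blockwise) yields $\|\widetilde W - W^*\|_F^2 \leq (1 - \tfrac{m_0}{M_0})^2\|W - W^*\|_F^2 \leq (1 - \tfrac{m_0}{M_0})\|W - W^*\|_F^2$, using $0 < m_0/M_0 < 1$.

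One bookkeeping point I would be careful about: Theorem~\ref{thm:lsc_nn_informal} is stated for a single point $W$, whereas the contraction argument needs the Hessian bound uniformly over the segment $[W^*, W]$. I would handle this by choosing the neighborhood radius in the hypothesis of the present theorem to be at most the radius in Theorem~\ref{thm:lsc_nn_informal}, so that every $U = W^* + s(W-W^*)$ with $s\in[0,1]$ satisfies $\|U - W^*\| \leq \|W - W^*\| \leq \poly(\cdots)\|W^*\|$ and thus also falls in the good region; the high-probability event from Theorem~\ref{thm:lsc_nn_informal} applied with the sample set $S$ can be taken to hold simultaneously for all such $U$ because the matrix-concentration bounds there are actually uniform over the neighborhood (this is implicit in how the formal version is proved), so a single $1 - d^{-\Omega(t)}$ event suffices — alternatively one re-derives the spectral bound for the averaged Hessian $H$ directly.

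The main obstacle is really this uniformity-over-the-segment issue together with cleanly tracking that the averaged Hessian $H$ inherits the $[m_0, M_0]$ spectral bracket; the rest is the textbook strongly-convex-plus-smooth gradient-descent contraction. There is also a minor issue that $\widehat f_S$, being a sum of $L_2$-smooth functions, has globally bounded Hessian only in terms of $x$-dependent quantities, so the upper bound $M_0$ genuinely relies on the sample-complexity-driven concentration in Theorem~\ref{thm:lsc_nn_informal} rather than on a deterministic smoothness constant; I would make sure to cite that rather than claim a naive global smoothness constant.
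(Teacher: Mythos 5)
Your overall architecture matches the paper's: establish $m_0 I \preceq \nabla^2\widehat f_S \preceq M_0 I$ along the segment $[W^*,W]$, use $\nabla\widehat f_S(W^*)=0$ (noiseless interpolation) and the fundamental theorem of calculus to write $\nabla\widehat f_S(W)=H\,\mathrm{vec}(W-W^*)$ with $m_0I\preceq H\preceq M_0I$, and then run the standard contraction. Your final step (bounding $\|I-H/M_0\|$ directly) is a cosmetic variant of the paper's inner-product expansion and is fine; it even gives the slightly stronger factor $(1-m_0/M_0)^2$.

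The genuine gap is in how you obtain the Hessian bracket \emph{uniformly over the segment}. You correctly flag the issue, but your resolution --- that the matrix-concentration bounds of Theorem~\ref{thm:lsc_nn_informal} ``are actually uniform over the neighborhood (this is implicit in how the formal version is proved)'' --- is not true. Theorem~\ref{thm:lsc_nn} is proved for a single, sample-independent $W$ via matrix Bernstein; nothing in its proof gives simultaneous control over the continuum of points $W^*+s(W-W^*)$, and a union bound over uncountably many points is not available. The paper closes this gap with real additional work: it places $d^{(p+1)/2}$ anchor points along the segment, applies Theorem~\ref{thm:lsc_nn} at each anchor with a union bound, and then invokes Lemma~\ref{lem:pd_near_anchors}, a separate Hessian-perturbation estimate of the form $\|\nabla^2\widehat f_S(W)-\nabla^2\widehat f_S(W^a)\|\lesssim k v_{\max}^2\sigma_1^p(\|W^a-W^*\|+\|W-W^a\|\,d^{(p+1)/2})$. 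The $d^{(p+1)/2}$ blow-up in that lemma (coming from bounding $\frac{1}{|S|}\sum_x\|x\|^{p+1}xx^\top$) is exactly why the net must have $d^{(p+1)/2}$ points: the anchor spacing must be fine enough to cancel it. Your fallback suggestion --- concentrating the averaged Hessian $H=\int_0^1\nabla^2\widehat f_S(W^*+s(W-W^*))\,ds$ directly --- is viable in principle, since $H$ is still an i.i.d.\ sum over samples when $W$ is independent of $S$, but it requires redoing the moment bounds for the integrated per-sample Hessian rather than citing Theorem~\ref{thm:lsc_nn_informal}. As written, this step of your proof does not go through; you need either the anchor-point net plus a Hessian-continuity lemma, or a genuinely new concentration argument for $H$.
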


\ifdefined\icmlcamera
Due to the space limitation, we provide the proofs in the full version.
\else
We provide the proofs in the Appendix~\ref{app:convex_main_result}
\fi

%%%%%%%%%%%%%% SECTION %%%%%%%%%%%%%% 

\section{Tensor Methods for Initialization}\label{sec:tensor}
In this section, we show that Tensor methods can recover the parameters $W^*$ to some precision and exactly recover $ v^*$ for homogeneous activations.% with finite number of samples.

It is known that most tensor problems are NP-hard \cite{h90,hl13} or even hard to approximate \cite{swz17b}. However, by making some assumptions, tensor decomposition method becomes efficient \cite{agh14,wsta15,wa16,swz16}. Here we utilize the noiseless assumption and Gaussian inputs assumption to show a provable and efficient tensor methods.

\subsection{Preliminary}
%We assume they are not zero and $\gamma_0 \neq \gamma_2$. This is satisfied by ReLU and squared ReLU.

%Let's define a special outer product for simplification of the notation. For two matrices $M,N$, define $M \tilde \otimes N$ as the sum of the outer products between $M,N$ of all the possible permutations. Specifically, $(M \tilde \otimes N)_{i,j,k,l} := \sum_{\text{all permutations of}\; \{i,j,k,l\}} M_{i,j}N_{k,l}$. If both $M,N$ are non-symmetric, there are $12$ permutations totally. If they are symmetric matrices, there are 6 permutations. If they are equal to each other and symmetric as well, there are $3$ permutations. This definition can also be extended to the product between other order tensors, or even different orders of tensors. We give the following examples. 
Let's define a special outer product $\wt{\otimes}$ for simplification of the notation. If $ v \in \R^d$ is a vector and $I$ is the identity matrix, then
$ v \wt{\otimes} I = \sum_{j=1}^d [ v\otimes e_j\otimes e_j+ e_j\otimes v\otimes e_j+ e_j \otimes e_j \otimes v] .$
If $M$ is a symmetric rank-$r$ matrix factorized as $M = \sum_{i=1}^r s_i v_i v_i^\top$ and $I$ is the identity matrix, then
\begin{align*}
 M \wt{\otimes} I = \sum_{i=1}^r s_i \sum_{j=1}^d \sum_{l=1}^6 A_{l,i,j},
\end{align*}
where $A_{1,i,j} = v_i \otimes  v_i \otimes e_j \otimes e_j$, $A_{2,i,j} = v_i \otimes e_j \otimes v_i \otimes e_j$, $A_{3,i,j}= e_j \otimes v_i \otimes v_i \otimes e_j $, $A_{4,i,j}= v_i \otimes e_j \otimes e_j  \otimes  v_i$, $A_{5,i,j}= e_j \otimes v_i \otimes e_j  \otimes v_i$ and $A_{6,i,j} = e_j \otimes e_j  \otimes v_i \otimes  v_i $.

%Another example is for two identical symmetric matrices,
%\begin{align*}
%I \tilde \otimes I =  \sum_{i=1}^d\sum_{j=1}^d \sum_{l=1}^3 B_{l,i,j},
%\end{align*}
%where $B_{1,i,j} = e_j \otimes e_j \otimes e_i \otimes e_i$, $B_{2,i,j} = e_j \otimes e_i \otimes e_j \otimes e_i$, $B_{3,i,j} = e_i \otimes e_j \otimes e_j \otimes e_i $. 
Denote $\ov{ w} = w/\| w\|$.
Now let's calculate some moments. 
\begin{definition}\label{def:M_m}
We define $M_1,M_2,M_3,M_4$ and $m_{1,i}, m_{2,i}, m_{3,i}, m_{4,i}$ as follows : \\
 $M_1 = \E_{ (x,y) \sim \D }[y \cdot x]$.\\
 $M_2=  \E_{ (x,y) \sim \D }[y \cdot (x\otimes x -I)]$. \\
 $M_3 = \E_{ (x,y) \sim \D }[y \cdot ( x^{\otimes 3} - x \wt{\otimes} I)]$.\\
 $M_4 = \E_{ (x,y) \sim \D }[y \cdot ( x^{\otimes 4} -  ( x\otimes x) \wt{\otimes} I +   I \tilde \otimes I)]$.\\
 $\gamma_j(\sigma) = \E_{z\sim \N(0,1)}[ \phi(\sigma\cdot z)z^j], \; \forall j=0,1,2,3,4.$ \\
 $m_{1,i} = \gamma_1(\| w_i^*\|) $.\\
 $m_{2,i} = \gamma_2(\| w_i^*\|) -\gamma_0(\| w_i^*\|)$.\\
 $m_{3,i} = \gamma_3 ( \| w_i^*\|)- 3\gamma_1( \|w_i^*\|)$.\\
 $m_{4,i} =  \gamma_4(\| w_i^*\|)+3\gamma_0(\| w_i^*\|) - 6\gamma_2(\| w_i^*\|)$.
\end{definition}

\begin{comment}
{%\small
\begin{align}
%M_0 & := \mathbb{E}[y] =  \sum_{i=1}^k v^*_i \gamma_0(\| w_i^*\|). \\
 M_1 &= \E_{ (x,y) \sim \D }[y \cdot x] =  \sum_{i=1}^k v^*_i  \gamma_1(\|w_i^*\|) \ov{w}_i^*. \label{eq:1st_moment}
 \end{align}
 \begin{align}
 M_2 &=  \E_{ (x,y) \sim \D }[y \cdot (x\otimes x -I)] \notag \\
  &  = \sum_{i=1}^k v^*_i (\gamma_2(\|w_i^*\|) -\gamma_0(\| w_i^*\|)) \ov{ w}_i^{*\otimes 2}. \label{eq:2nd_moment} 
 \end{align}
 \begin{align}
 M_3 &= \E_{ (x,y) \sim \D }[y \cdot ( x^{\otimes 3} - x \wt{\otimes} I)] \notag \\
  & =  \sum_{i=1}^k  v_i^* (\gamma_3 ( \| w_i^*\|)- 3\gamma_1( \| w_i^*\|)) \ov{ w}_i^{* \otimes 3}. \label{eq:3rd_moment}
  \end{align}
\begin{align*}
M_4 & = \E_{ (x,y) \sim \D }[y \cdot ( x^{\otimes 4} -  ( x\otimes x) \wt{\otimes} I +   I \tilde \otimes I)] \notag \\
 & =  \sum_{i=1}^k v_i^* (\gamma_4(\| w_i^*\|)+3\gamma_0(\| w_i^*\|) - 6\gamma_2(\| w_i^*\|))    \ov{ w}_i^{*\otimes 4} .%\label{eq:4th_moment}
\end{align*}
}
Define %$m_{0,i} = \gamma_0(\| w_i^*\|)$, 
$m_{1,i} = \gamma_1(\| w_i^*\|) $, 
$ m_{2,i} = \gamma_2(\| w_i^*\|) -\gamma_0(\| w_i^*\|)$, $m_{3,i} = \gamma_3 ( \| w_i^*\|)- 3\gamma_1( \|w_i^*\|)$ and $m_{4,i} =  \gamma_4(\| w_i^*\|)+3\gamma_0(\| w_i^*\|) - 6\gamma_2(\| w_i^*\|)$. Therefore, $M_j = \sum_{i=1}^k v_i^* m_{j,i} \ov{w}_i^{*\otimes j}$, $\forall j=1,2,3,4$.
\end{comment}
According to Definition~\ref{def:M_m}, we have the following results,
\begin{claim}\label{cla:M_m}
For each $j\in [4]$, $ M_j = \sum_{i=1}^k v_i^* m_{j,i} \ov{w}_i^{*\otimes j}$.
\end{claim}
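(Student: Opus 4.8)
The plan is to prove the identity one neuron at a time. Since $y = \sum_{i=1}^k v_i^* \phi(w_i^{*\top}x)$ and the maps $x\mapsto x$, $x\mapsto x\otimes x - I$, $x\mapsto x^{\otimes 3}-x\wt\otimes I$, $x\mapsto x^{\otimes 4}-(x\otimes x)\wt\otimes I + I\wt\otimes I$ are all linear, by linearity of expectation it suffices to show that for a single weight vector $w^*\in\R^d$, writing $\sigma=\|w^*\|$ and $u=\ov{w}^*=w^*/\sigma$,
\[
\E_{x\sim\N(0,I)}\big[\phi(w^{*\top}x)\,T_j(x)\big] \;=\; m_j\, u^{\otimes j},\qquad j=1,2,3,4,
\]
where $T_j(x)$ is the tensor appearing in $M_j$ in Definition~\ref{def:M_m} and $m_j$ is the corresponding scalar ($m_1=\gamma_1(\sigma)$, $m_2=\gamma_2(\sigma)-\gamma_0(\sigma)$, etc.). Summing over $i$ with weights $v_i^*$ then yields Claim~\ref{cla:M_m}.

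The structural point is that $T_j(x)$ is exactly the degree-$j$ probabilists' Hermite tensor $H_j(x)$ of the standard Gaussian: the subtracted terms $x\wt\otimes I$, $(x\otimes x)\wt\otimes I$, $I\wt\otimes I$ are precisely the symmetrized trace corrections that make $x^{\otimes j}$ orthogonal (for $j=4$ these are the $\binom42=6$ and $3$ terms produced by $\wt\otimes$, matching $H_4(x)_{abcd}=x_ax_bx_cx_d-\sum x_ax_b\delta_{cd}+\sum\delta_{ab}\delta_{cd}$). I would verify this directly for $j\le 4$ from the definition of $\wt\otimes$, or simply invoke the generating-function identity $\sum_{j\ge0}\frac1{j!}H_j(x)[t^{\otimes j}]=\exp(\langle t,x\rangle-\tfrac12\|t\|^2)$.

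With $T_j=H_j$ in hand, decompose $x = zu + x_\perp$ where $z:=u^\top x\sim\N(0,1)$ and $x_\perp:=(I-uu^\top)x$ is an independent Gaussian supported on $u^\perp$ with covariance $P:=I-uu^\top$. Since $w^{*\top}x=\sigma z$, the expectation becomes $\E_z\big[\phi(\sigma z)\,\E_{x_\perp}[H_j(zu+x_\perp)]\big]$. Evaluating the inner expectation via the generating function (for $t=su+t_\perp$ one has $\E_{x_\perp}[\exp(\langle t,x\rangle-\tfrac12\|t\|^2)]=\exp(sz-\tfrac12 s^2)$, killing all $t_\perp$), or via a direct Wick/Isserlis expansion of $\E[x_\perp^{\otimes 2m}]$ into sums of $P^{\otimes m}$ that cancel against the $I$-correction terms of $H_j$, gives $\E_{x_\perp}[H_j(zu+x_\perp)]=\mathrm{He}_j(z)\,u^{\otimes j}$, where $\mathrm{He}_j$ is the univariate Hermite polynomial. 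Hence the single-neuron expectation equals $\E_z[\phi(\sigma z)\mathrm{He}_j(z)]\,u^{\otimes j}$, and a last elementary check using $\mathrm{He}_1(z)=z$, $\mathrm{He}_2(z)=z^2-1$, $\mathrm{He}_3(z)=z^3-3z$, $\mathrm{He}_4(z)=z^4-6z^2+3$ shows $\E_z[\phi(\sigma z)\mathrm{He}_j(z)]=m_{j,i}$, exactly matching Definition~\ref{def:M_m}.

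All expectations are finite: Property~\ref{pro:gradient} gives $|\phi(z)|\le|\phi(0)|+\tfrac{L_1}{p+1}|z|^{p+1}$, so $\phi(w^{*\top}x)$ times any polynomial in $x$ is integrable against the Gaussian, and no smoothness of $\phi$ is needed. The main obstacle is the bookkeeping for $j=3,4$: one must carefully expand the symmetrized products $(x\otimes x)\wt\otimes I$ and $I\wt\otimes I$ and match them term by term against the Wick expansion of the $x_\perp$-moments, confirming that every contribution not proportional to $u^{\otimes j}$ cancels. The generating-function route largely avoids this combinatorial verification and is the cleaner way to present the argument.
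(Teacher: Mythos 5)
Your proof is correct, and it is in fact more than the paper provides: the paper states Claim~\ref{cla:M_m} as an immediate consequence of Definition~\ref{def:M_m} with no written derivation, so your Hermite-polynomial argument (linearity over neurons, the identification of the corrected tensors with the Hermite tensors $H_j$, the decomposition $x = zu + x_\perp$ with $z=u^\top x$ independent of $x_\perp$, and the reduction to $\E_z[\phi(\sigma z)\mathrm{He}_j(z)]=m_{j,i}$) is exactly the computation the paper leaves implicit. All four scalar checks match: $\mathrm{He}_2$ gives $\gamma_2-\gamma_0$, $\mathrm{He}_3$ gives $\gamma_3-3\gamma_1$, $\mathrm{He}_4$ gives $\gamma_4-6\gamma_2+3\gamma_0$, as in Definition~\ref{def:M_m}.

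One point your promised ``direct verification from the definition of $\wt{\otimes}$'' would surface, and which you should make explicit: the paper's definition of $M\wt{\otimes}I$ for a symmetric matrix $M$ yields $(M\wt{\otimes}I)_{abcd}=M_{ab}\delta_{cd}+M_{ac}\delta_{bd}+M_{ad}\delta_{bc}+M_{bc}\delta_{ad}+M_{bd}\delta_{ac}+M_{cd}\delta_{ab}$, so taking $M=I$ literally gives $2(\delta_{ab}\delta_{cd}+\delta_{ac}\delta_{bd}+\delta_{ad}\delta_{bc})$ --- twice the Hermite correction. With that normalization a residual term $\gamma_0(\|w_i^*\|)\,(\delta_{ab}\delta_{cd}+\delta_{ac}\delta_{bd}+\delta_{ad}\delta_{bc})$ survives in $M_4$, and the claim for $j=4$ fails unless $\gamma_0=0$. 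The identity holds precisely when $I\wt{\otimes}I$ is read as the single-counted sum over the three pairings (equivalently, as half of what the displayed formula produces for $M=I$), which is clearly the intended meaning since that is what makes $T_4=H_4$. This is a defect of the paper's definition rather than of your argument, but your write-up should state which convention it is using rather than asserting the two agree.
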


Note that some $m_{j,i}$'s will be zero for specific activations. For example, for activations with symmetric first derivatives, i.e., $\phi'(z) = \phi'(-z)$, like sigmoid and erf, we have $\phi(z)+\phi(-z)$ being a constant and $M_2 = 0$ since $\gamma_0(\sigma) = \gamma_2(\sigma)$. Another example is \ReLU. \ReLU~functions have vanishing $M_3$, i.e., $M_3=0$, as $\gamma_3(\sigma) =3 \gamma_1(\sigma)$. To make tensor methods work, we make the following assumption.%So we assume that at least one of $M_3,M_4$ is non-zero and has rank-$k$. 
\begin{assumption}\label{assumption:non_zero_moment}
Assume the activation function $\phi(z)$ satisfies the following conditions:\\%, given $\{ w_1^*, w_2^*,\cdots, w_k^*\}$,
1. If $M_j\neq 0 $, then $m_{j,i} \neq 0$ for all $i\in [k]$. \\
2. At least one of $M_3$ and $M_4$ is non-zero.\\
3. If $M_1= M_3=0$, then $\phi(z)$ is an even function, i.e., $\phi(z) = \phi(-z)$.\\%\label{assume:13}
4. If $M_2= M_4=0$, then $\phi(z)$ is an odd function, i.e., $\phi(z) = - \phi(-z)$.\\%\label{assume:24}
%\end{enumerate}
\end{assumption}
%This assumes that 1) at least one of $M_1,M_2$ is non-zero and has rank-$k$; 2)  at least one of $M_3,M_4$ is non-zero and has rank-$k$. It ensures us there is enough information to recover $ v^*$ and $W^*$. 
If $\phi(z)$ is an odd function then $\phi(z) = - \phi(-z)$ and $v\phi( w^\top  x) = -v\phi(- w^\top  x)$. Hence we can always assume $v>0$. If $\phi(z)$ is an even function, then $v\phi( w^\top  x) = v\phi(- w^\top  x)$. So if $ w$ recovers $ w^*$ then $- w$ also recovers $ w^*$.  Note that \ReLU, leaky \ReLU~and squared \ReLU~satisfy Assumption~\ref{assumption:non_zero_moment}. We further define the following  non-zero moments. 
\begin{definition}\label{def:P2_P3}
Let $\alpha \in \R^{d}$ denote a randomly picked vector. We define $P_2$ and $P_3$ as follows: $P_{2} = M_{j_2}(I,I,  \alpha,\cdots, \alpha) $ , %\label{eq:def_P2}
where $j_2 = \min \{j\geq 2| M_{j}\neq 0\}$ and
$
P_{3} =  M_{j_3}(I,I,I,\alpha,\cdots, \alpha) $, %\label{eq:def_P3}
where $j_3 =  \min \{j\geq 3| M_{j}\neq 0\} $.
\end{definition}
According to Definition~\ref{def:M_m} and \ref{def:P2_P3}, we have,
\begin{claim}\label{cla:P2_P3}
$P_2 = \sum_{i=1}^k v_i^* m_{j_2,i} ( \alpha^\top \ov{ w}_i^*)^{{j_2-2}} \ov{ w}_i^{*\otimes 2}$ and
$P_3=\sum_{i=1}^k v_i^* m_{j_3,i}( \alpha^\top \ov{ w}_i^*)^{{j_3-3}} \ov{ w}_i^{*\otimes 3}$.
\end{claim}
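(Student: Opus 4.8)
\textbf{Proof proposal for Claim~\ref{cla:P2_P3}.}
The plan is to read both identities straight off the symmetric rank-$\le k$ expansions of $M_{j_2}$ and $M_{j_3}$ supplied by Claim~\ref{cla:M_m}, using nothing beyond multilinearity of the contraction operator $T\mapsto T(A_1,\dots,A_j)$ and its action on a pure tensor power. First I would check that the indices $j_2,j_3$ in Definition~\ref{def:P2_P3} are actually well defined: by Assumption~\ref{assumption:non_zero_moment}(2) at least one of $M_3,M_4$ is nonzero, so $j_3\in\{3,4\}$ exists, and therefore $j_2\le j_3\le 4$ exists too. In particular $M_{j_2},M_{j_3}$ are among $M_2,M_3,M_4$, for which Claim~\ref{cla:M_m} gives $M_j=\sum_{i=1}^k v_i^* m_{j,i}\,\ov w_i^{*\otimes j}$.

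The one elementary fact I would isolate is that, for any unit vector $u\in\R^d$, any order $j$, and any $m\le j$, contracting the last $j-m$ modes of $u^{\otimes j}$ against a vector $\alpha$ gives
\[
u^{\otimes j}(\underbrace{I,\dots,I}_{m},\underbrace{\alpha,\dots,\alpha}_{j-m})=(\alpha^\top u)^{\,j-m}\,u^{\otimes m},
\]
since each of the $j-m$ contracted modes yields the scalar factor $\alpha^\top u$ while the remaining $m$ modes are untouched. Applying this with $u=\ov w_i^*$ and $m=2$ to each rank-one term of $M_{j_2}$ (so that exactly $j_2-2$ copies of $\alpha$ are contracted), and using linearity of the contraction map, yields
\[
P_2=M_{j_2}(I,I,\alpha,\dots,\alpha)=\sum_{i=1}^k v_i^* m_{j_2,i}\,(\alpha^\top\ov w_i^*)^{\,j_2-2}\,\ov w_i^{*\otimes 2};
\]
running the same computation on $M_{j_3}$ with $m=3$ (hence $j_3-3$ copies of $\alpha$) gives the stated formula for $P_3$.

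Since the argument is pure multilinear algebra once Claim~\ref{cla:M_m} is in hand, there is no genuine obstacle at this step; the substantive content lives in Claim~\ref{cla:M_m} itself, which is the Gaussian/Hermite moment computation expressing $M_j$ in terms of $\gamma_0,\dots,\gamma_j$, and which we are free to take as given. The only points worth stating carefully are the convention for the contraction operator (which modes receive $I$ versus $\alpha$) and the boundary cases $j_2=2$ and $j_3=3$: there no $\alpha$-contraction occurs and the empty product $(\alpha^\top\ov w_i^*)^0=1$ simply recovers $P_2=M_2$, $P_3=M_3$, consistent with Definition~\ref{def:P2_P3}.
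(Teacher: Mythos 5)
Your proposal is correct and matches the paper's (implicit) argument: the paper states Claim~\ref{cla:P2_P3} as an immediate consequence of Definitions~\ref{def:M_m} and~\ref{def:P2_P3} together with Claim~\ref{cla:M_m}, without writing out the contraction computation, and your identity $u^{\otimes j}(I,\dots,I,\alpha,\dots,\alpha)=(\alpha^\top u)^{j-m}u^{\otimes m}$ applied termwise is exactly the intended step. The well-definedness check for $j_2,j_3$ via Assumption~\ref{assumption:non_zero_moment} and the boundary cases $j_2=2$, $j_3=3$ are handled correctly.
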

% and $V\in \R^{d\times k}$ is an estimation of the orthogonal basis of the subspace spanned by $\{ w_1^*, w_2^*,\cdots, w_k^*\}$. 
In other words for the above definition, $P_{2}$ is equal to the first non-zero matrix in the ordered sequence $\{M_2, M_3(I,I, \alpha), M_4(I,I, \alpha, \alpha)\}$. $P_{3}$ is equal to the first non-zero tensor in the ordered sequence $\{M_3, M_4(I,I,I, \alpha)\}$. Since $ \alpha$ is randomly picked up, $ w_i^{*\top}  \alpha\neq 0$ and we view this number as a constant throughout this paper. So by construction and Assumption~\ref{assumption:non_zero_moment}, both $P_{2}$ and $P_{3}$ are rank-$k$.
%Let's denote $P_2 = \sum_{i\in[k]} r_i^* \ov{ w}_i^{*\otimes 2}$, $P_3 = \sum_{i\in[k]} o^{(3)}_i \ov{ w}_i^{*\otimes 3}$. So by construction and Assumption~\ref{assumption:tensor_act}, $|o^{(j)}_i| >0$ for all $i\in[k],j=2,3$.
Also, let $\wh{P}_2 \in \mathbb{R}^{d\times d}$ and $\wh{P}_3 \in \mathbb{R}^{d\times d\times d}$ denote the corresponding empirical moments of $P_2 \in \mathbb{R}^{d\times d}$ and $P_3 \in \mathbb{R}^{d\times d\times d}$ respectively.
%Define $R_3 := P_3(V,V,V)$. Also, let $\wh{P}_2,\wh{P}_3,\wh{R}_3$ denote the corresponding empirical moments of $P_2,P_3,R_3$ respectively.
%In the next section, we will show how to estimate the subspace $V$ spanned by $\{ w_i^*\}_{i=1,2,\cdots,k}$  from $\wh{P}_2$ and how to estimate $\{V^\top \ov{ w}_i^*\}_{i=1,2,\cdots,k}$ from $\wh{R}_3$. 

%For \ReLU, we have the following, %$m_{1,i} =\frac{1}{\sqrt{2\pi}}v_i^* \| w_i^*\|/2$, 
%$m_{2,i} = \frac{1}{\sqrt{2\pi}} v^*_i \| w_i^*\|$, $m_{3,i} =0$ and 
%$m_{4,i} = - \frac{1}{\sqrt{2\pi}} v^*_i \| w_i^*\|$. For squared \ReLU, we have $m_{3,i} = \frac{2}{\sqrt{2\pi}} v^*_i \| w_i^*\|$. Sigmoid and tanh are (biased) odd functions, so they have $|m_{3,i}| >0$

\subsection{Algorithm}
Now we briefly introduce how to use a set of samples with size linear
in dimension to recover the ground truth parameters to some precision.
As shown in the previous section, we have a rank-$k$ 3rd-order
moment $P_3$ that has tensor decomposition formed by $\{\ov{ w}_1^*,\ov w_2^*,\cdots,\ov w_k^*\}$. Therefore, we can use the non-orthogonal decomposition method \cite{kuleshov2015tensor} to decompose the corresponding estimated tensor $\widehat P_3$ and obtain
an approximation of the parameters. %If the 3rd-order tensor is zero, we can form a 3rd-order tensor from the 4th-order tensor, i.e., $R_3 = M_4(I,I,I, \alpha)$, with a fixed vector $ \alpha$. 
The precision of the obtained parameters depends on the estimation error of $P_3$, which requires $\Omega(d^3/\epsilon^2)$ samples to achieve $\epsilon$ error. Also, the time complexity for tensor decomposition on a $d\times d \times d$ tensor is $\Omega(d^3)$. 
%To achieve a certain precision for the recovered parameters, we need as many samples as possible to estimate the population tensors. 

In this paper, we reduce the cubic dependency of sample/computational complexity in dimension
\cite{jsa15} to linear dependency. 
%The cubic dependency mainly comes from the approximation of a third-order tensor which contains $d\times d\times d$ entries. 
%To the best of our knowledge, to approximate a tensor, we first flat the tensor into a matrix and then use matrix Chernoff/Bernstein inequality to bound the errors, which leads to cubic dependency. 
Our idea follows the techniques used in \cite{zjd16}, where they first used a 2nd-order moment $P_2$ to approximate the subspace spanned by $\{\ov{ w}_1^*,\ov w_2^*,\cdots,\ov w_k^*\}$, denoted as $V$, 
%(when the 2nd-order moment is zero, we can use a higher-order tensor to generate a 2nd order moment, e.g., the eigenspace of$M_3(I,I, \alpha)$ for some fixed random vector $ \alpha$ also covers the subspace spanned by $\{ w_i^*\}_{i\in[k]}$). 
then use $V$ to reduce a higher-dimensional third-order tensor $P_3 \in\R^{d\times d\times d}$ to a lower-dimensional tensor $R_3 := P_3(V,V,V) \in \R^{k\times k\times k}$. Since the tensor decomposition and the tensor estimation are conducted on a lower-dimensional $\R^{k\times k\times k}$ space, the sample complexity and computational complexity are reduced. 

The detailed algorithm is shown in Algorithm~\ref{alg:tensor}. First, we randomly partition the dataset into three subsets each with size $\tilde O(d)$. Then apply the power method on $\wh{P}_2$, which is the estimation of $P_2$ from $S_2$, to estimate $V$. After that, the non-orthogonal tensor decomposition (KCL)\cite{kuleshov2015tensor} on $\wh{R}_3$ outputs $ \wh{u}_i$ which estimates $s_iV^\top \ov{ w}_i^*$ for $i\in [k]$ with unknown sign $s_i\in\{-1,1\}$. Hence $\ov{ w}_i^*$ can be estimated by $s_iV \wh{u}_i$. Finally we estimate the magnitude of $ w_i^*$ and the signs $s_i, v_i^*$ in the \textsc{RecMagSign} function for homogeneous activations. 
\ifdefined\icmlcamera
We discuss the details of each procedure and provide \textsc{PowerMethod} and \textsc{RecMagSign} algorithms in the full version. 
\else
We discuss the details of each procedure and provide \textsc{PowerMethod} and \textsc{RecMagSign} algorithms in Appendix~\ref{app:tensor}. 
\fi

\begin{algorithm}[t]
\caption{Initialization via Tensor Method}
\label{alg:tensor}
\begin{algorithmic}[1]
\Procedure{\textsc{Initialization}}{$S$} \Comment{Theorem~\ref{thm:tensor_final}}
\State $S_2,S_3,S_4\leftarrow \textsc{Partition}(S,3)$
%\State $V\leftarrow [\ov{w}_1^*, \ov{w}_2^*,\cdots, \ov{w}_k^*]$
%Partition $S$ into $S = S_2 \cup S_3 \cup S_4$ each with size $\wt{\Omega}(d)$.
\State $\wh{P}_2 \leftarrow \E_{S_2}[P_2]$
\State $V\leftarrow \textsc{PowerMethod}(\wh{P}_2,k)$
%\State Compute the subspace estimation of $\text{span}(\{ w_i^*\}_{i\in[k]})$, $V$, using $\wh{P}_2$, an estimation of $P_2$ from $S_2$. %\Comment{ Algorithm~\ref{alg:power_method}}
\State $\wh{R}_3 \leftarrow \E_{S_3} [ P_3(V,V,V) ]$
%Compute the empirical $R_3 = P_3(V,V,V)$, $\wh{R}_3$, from $S_3$.
\State $ \{ \wh{u}_i\}_{i\in[k]}  \leftarrow \textsc{KCL}(\wh{R}_3)$ 
%Compute the eigenvectors $\{ \wh{u}_i\}_{i=1}^k$ of the tensor $\wh{R}_3 \in \R^{k\times k\times k}$ by using non-orthogonal tensor decomposition methods \cite{kuleshov2015tensor}.\label{rtpm_alg}
\begin{comment}
\State Partition the dataset $S$ into $S = S_2 \cup S_3 \cup S_4$ each with size $\wt{\Omega}(d)$.
\State Compute the subspace estimation of $\text{span}(\{ w_i^*\}_{i=1,2,\cdots,k})$, $V$, from $\wh{P}_2$, an estimation of $P_2$ using $S_2$, via power method. %\Comment{ Algorithm~\ref{alg:power_method}}
\State  Compute the empirical $R_3 = P_3(V,V,V)$, $\wh{R}_3$, from $S_3$.
\State Compute the eigenvectors $\{ \wh{u}_i\}_{i=1}^k$ of the tensor $\wh{R}_3 \in \R^{k\times k\times k}$ by using non-orthogonal tensor decomposition methods \cite{kuleshov2015tensor}.\label{rtpm_alg}
\end{comment}
%\State Obtain the estimation of the normalized $ w_i^*$ with an unknown sign, i.e., $\ov{ w}_i^*\approx s_i V  \wh{u}_i $ with some $s_i \in \{-1,1\}$.
\State {\small $\{ w_i^{(0)},v_i^{(0)}\}_{i\in[k]} \leftarrow $ \textsc{RecMagSign}$(V,\{ \wh{u}_i\}_{i\in[k]}, S_4)$} %\Comment{Algorithm~\ref{alg:recover} } 
\State {\bf Return} $\{ w_i^{(0)},v_i^{(0)}\}_{i \in[k]}$
\EndProcedure 
\end{algorithmic}
\end{algorithm}

\begin{algorithm}[t]
\caption{Globally Converging Algorithm}
\label{alg:overall_alg}
\begin{algorithmic}[1]
\Procedure{\textsc{Learning1NN}}{$S$, $d$, $k$, $\epsilon$} \Comment{Theorem~\ref{thm:overall}}
\State $T \leftarrow \log(1/\epsilon)\cdot \poly(k,\nu,\lambda,\sigma_1^{2p}/\rho)$.
\State $\eta \leftarrow 1/ (kv^2_{\max}\sigma_1^{2p})$.
\State $S_0,S_1,\cdots,S_q\leftarrow \textsc{Partition}(S,q+1)$.
%\State Partition $S$ into $S = \cup_{q=0}^T S_q$ each with size $\wt{\Omega}(d)$.
\State $W^{(0)}, v^{(0)} \leftarrow \textsc{Initialization}(S_0)$. % \Comment{ Algorithm~\ref{alg:tensor}}
\State Set $v_i^* \leftarrow v_i^{(0)}$ in Eq.~\eqref{eq:nns_emp_obj} for all $\widehat f_{S_q}(W)$, $q\in [T]$ 
\For{$q=0,1,2,\cdots, T-1$}
\State $W^{(q+1)} = W^{(q)} - \eta \nabla \widehat{f}_{S_{q+1} }(W^{(q)})$
\EndFor
\State {\bf Return }  $\{ w_i^{(T)},v_i^{(0)}\}_{i\in[k]} $
\EndProcedure 
\end{algorithmic}
\end{algorithm}

\subsection{Theoretical Analysis}
\ifdefined\icmlcamera
We formally present our theorem for Algorithm~\ref{alg:tensor}, and provide the proof in the full version. 
\else
We formally present our theorem for Algorithm~\ref{alg:tensor}, and provide the proof in the Appendix~\ref{app:tensor_main_result}.
\fi

\define{thm:tensor_final}{Theorem}{
Let the activation function be homogeneous satisfying Assumption~\ref{assumption:non_zero_moment}. For any $0<\epsilon < 1$ and $t\geq 1$, if $|S|\geq \epsilon^{-2} \cdot 
d\cdot \poly(t, k, \kappa,\log d) $, then there exists an algorithm (Algorithm~\ref{alg:tensor}) that takes $|S| k \cdot \wt{O}(d)$ time and  outputs a matrix $W^{(0)}\in \mathbb{R}^{d\times k}$ and a vector $ v^{(0)}\in \mathbb{R}^{k}$ such that, with probability at least $1-d^{-\Omega(t)}$,
\begin{align*}
\|W^{(0)} - W^*\|_F \leq \epsilon \cdot \poly(k, \kappa) \| W^* \|_F , \mathrm{~and~} v_i^{(0)} = v_i^*.
\end{align*}
}
\state{thm:tensor_final}

\begin{figure*}[!t]
\begin{tabular}{cccc}
\includegraphics[width=.30\textwidth]{./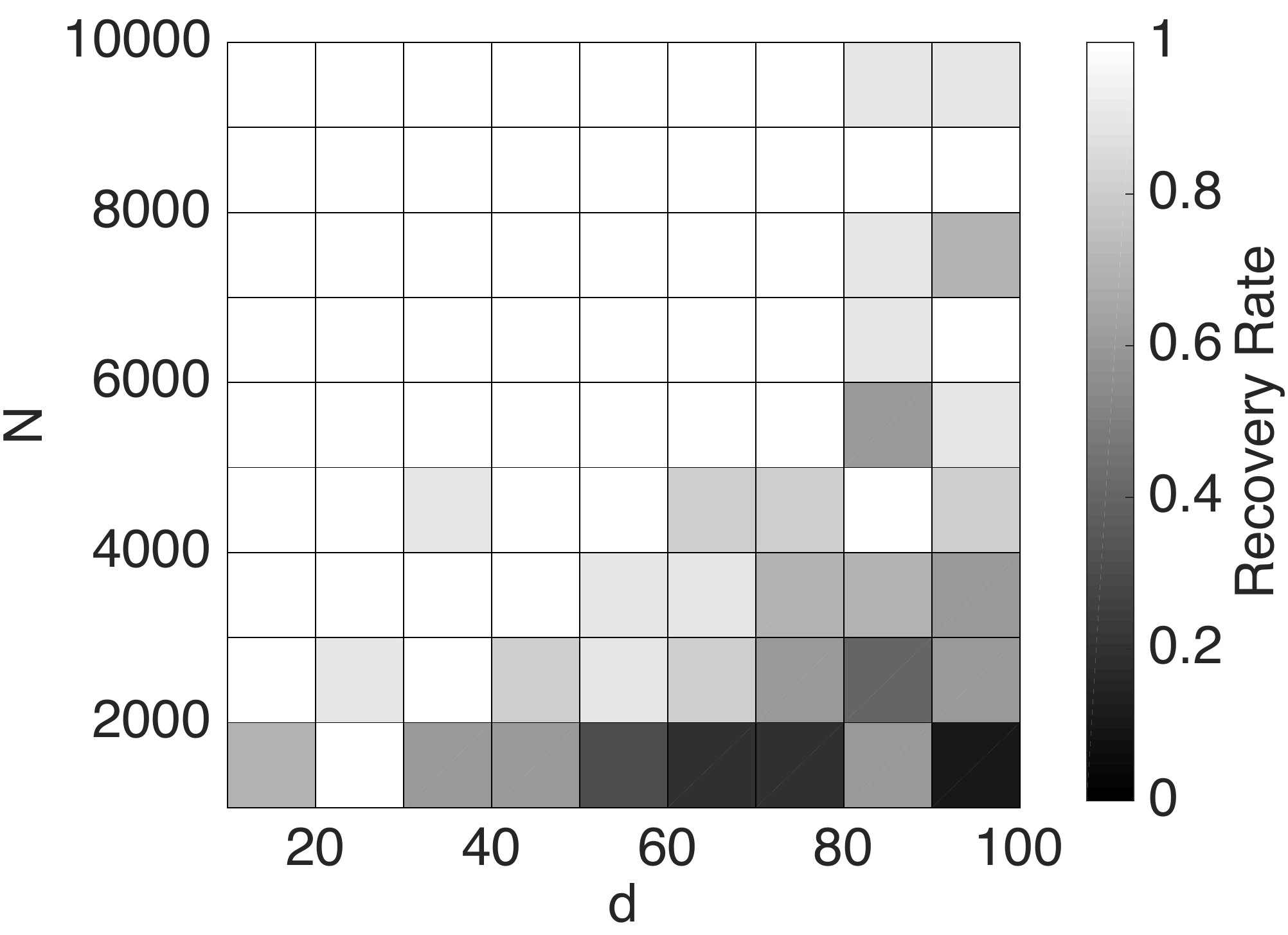} &
\includegraphics[width=.30\textwidth]{./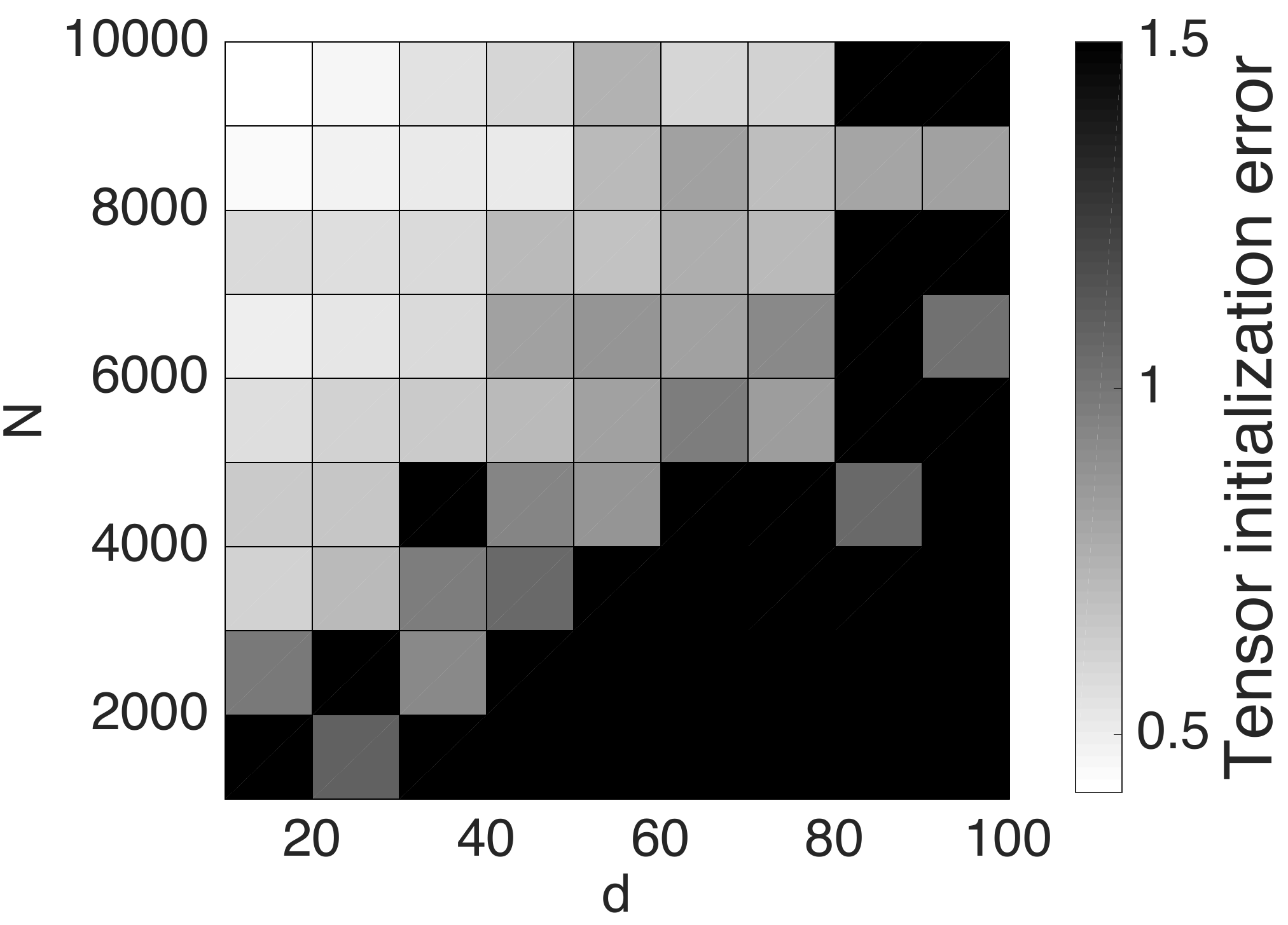} &
\includegraphics[width=.28\textwidth]{./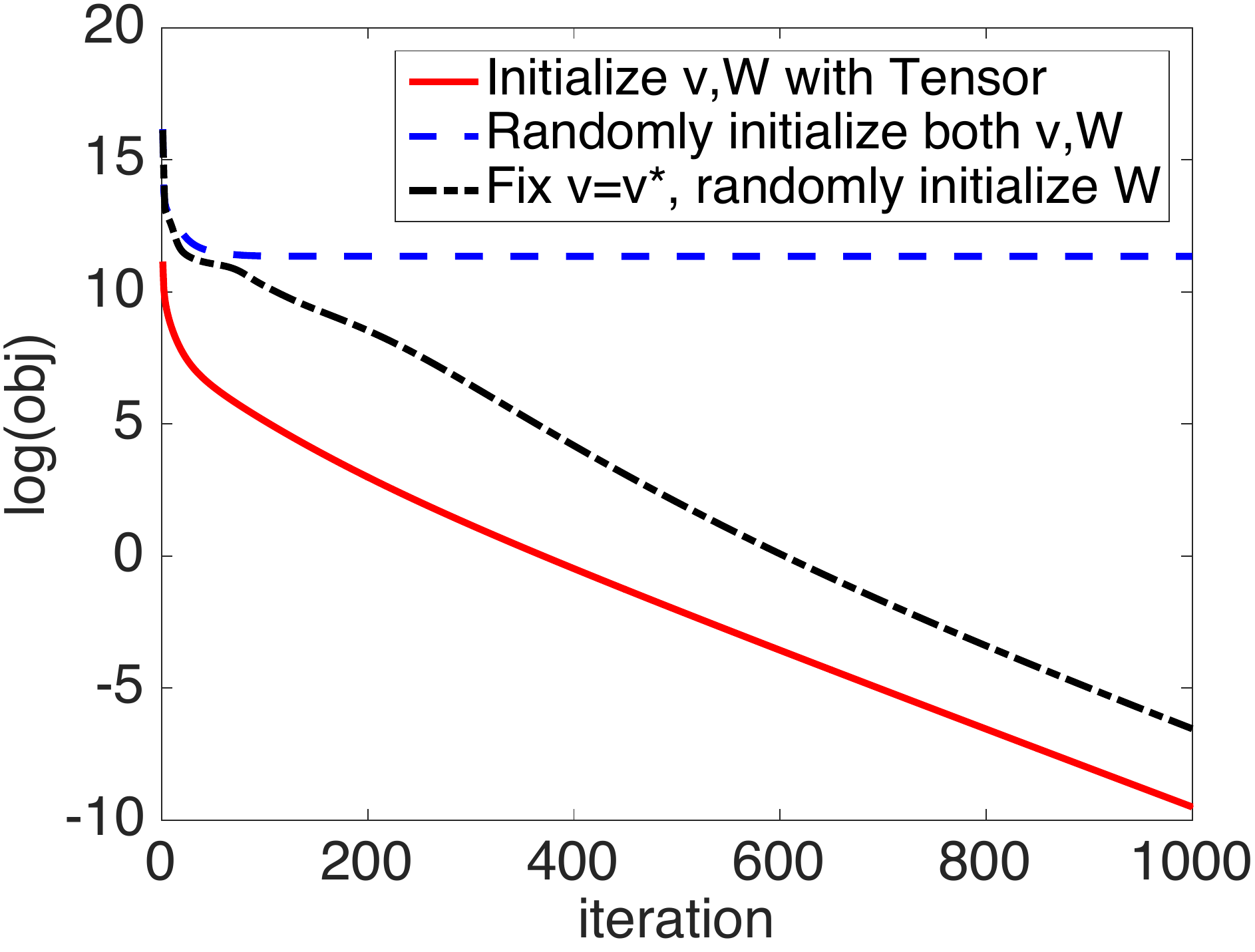}  \\
(a) Sample complexity for recovery & (b) Tensor initialization error  &(c) Objective v.s. iterations
\end{tabular}
\caption{Numerical Experiments}
\label{fig:performance}
\end{figure*}

\section{Global Convergence}
Combining the positive definiteness of the Hessian near the global optimal in Section~\ref{sec:lsc_nns} and the tensor initialization methods in Section~\ref{sec:tensor}, we come up with the overall globally converging algorithm Algorithm~\ref{alg:overall_alg} and its guarantee Theorem~\ref{thm:overall}.

\begin{theorem}[Global convergence guarantees]\label{thm:overall}
Let $S$ denote a set of i.i.d. samples from distribution ${\cal D}$ (defined in~(\ref{eq:model})) and let the activation function be homogeneous satisfying Property~\ref{pro:gradient}, \ref{pro:expect}, \ref{pro:hessian}(a) and Assumption~\ref{assumption:non_zero_moment}. Then for any $t\geq 1$ and any $\epsilon>0$, if $|S| \geq d \log(1/\epsilon) \cdot \poly(\log d, t, k,\lambda )$, $T \geq \log(1/\epsilon)\cdot \poly(k,\nu,\lambda,\sigma_1^{2p}/\rho) $ and $0< \eta \leq 1/(kv_{\max}^2\sigma_1^{2p})$, then there is an Algorithm (procedure \textsc{Learning1NN} in Algorithm~\ref{alg:overall_alg}) taking $|S | \cdot d \cdot \poly(\log d, k,\lambda)  $ time and outputting a matrix $W^{(T)}\in \mathbb{R}^{d\times k}$ and a vector $v^{(0)}\in \mathbb{R}^{k}$ satisfying
\begin{align*}
\|W^{(T)} - W^*\|_F \leq \epsilon \| W^* \|_F, \mathrm{~and~} v_i^{(0)} = v_i^*.
\end{align*}
with probability at least $1-d^{-\Omega(t)}$.
\end{theorem}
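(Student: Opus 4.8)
# Proof Proposal for Theorem~\ref{thm:overall}

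The plan is to stitch together the two main pillars already established: the tensor initialization guarantee (Theorem~\ref{thm:tensor_final}) and the local linear convergence of gradient descent (Theorem~\ref{thm:lc_gd_informal}, the formal version being Theorem~\ref{thm:lc_gd}). The overall algorithm \textsc{Learning1NN} first partitions the sample set $S$ into $T+1$ disjoint pieces $S_0, S_1, \ldots, S_T$, runs \textsc{Initialization} on $S_0$ to produce $(W^{(0)}, v^{(0)})$, and then performs $T$ resampled gradient descent steps, one per fresh batch $S_{q+1}$. Since the pieces are disjoint and each sample is i.i.d., each batch $S_{q+1}$ is independent of the iterate $W^{(q)}$ (which depends only on $S_0, \ldots, S_q$), so the positive-definiteness-of-Hessian hypothesis can be legitimately invoked at every step.

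\medskip

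\noindent\textbf{Step 1: Initialization lands in the basin.} Apply Theorem~\ref{thm:tensor_final} with an internal accuracy parameter $\epsilon_0$ chosen small enough that $\epsilon_0 \cdot \poly(k,\kappa) \leq \poly(1/\nu, 1/k, 1/\lambda, \rho/\sigma_1^{2p})$, i.e., so that $\|W^{(0)} - W^*\|_F$ is within the radius required by Theorem~\ref{thm:lc_gd_informal}. This $\epsilon_0$ is an absolute function of $k,\nu,\lambda,\kappa,\sigma_1,\rho,p$ (not of the target $\epsilon$), so the sample cost of the initialization step is $\epsilon_0^{-2} \cdot d \cdot \poly(t,k,\kappa,\log d) = d \cdot \poly(t,k,\lambda,\kappa,\ldots)$, which is absorbed into the stated bound $|S| \geq d\log(1/\epsilon)\cdot\poly(\log d, t, k, \lambda)$. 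We also record that $v_i^{(0)} = v_i^*$ exactly, which is what allows us to substitute the recovered second-layer weights into the empirical objective $\widehat f_{S_q}$ for all subsequent steps; from here on the problem is genuinely an optimization over $W$ alone with the correct $v^*$.

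\medskip

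\noindent\textbf{Step 2: Inductive contraction.} Argue by induction on $q$ that $\|W^{(q)} - W^*\|_F \leq (1 - m_0/M_0)^{q/2}\|W^{(0)} - W^*\|_F$ and in particular that every iterate stays inside the basin. The base case is Step 1. For the inductive step, since $W^{(q)}$ is in the basin and $S_{q+1}$ is an independent fresh sample of size $|S_{q+1}| = |S|/(T+1) \geq d\cdot\poly(\log d, t, k, \nu, \tau, \lambda, \sigma_1^{2p}/\rho)$ (this is where we need $|S|$ to carry a $\log(1/\epsilon)$ factor, since $T = \Theta(\log(1/\epsilon)\cdot\poly(\cdot))$), Theorem~\ref{thm:lc_gd_informal} gives $\|W^{(q+1)} - W^*\|_F^2 \leq (1 - m_0/M_0)\|W^{(q)} - W^*\|_F^2$ with probability $1 - d^{-\Omega(t)}$, with the step size $\eta = 1/M_0 = 1/(kv_{\max}^2\sigma_1^{2p})$ matching the theorem's requirement. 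A union bound over the $T$ steps (and the one initialization event) costs only a $\log(1/\epsilon)\cdot\poly(\cdot)$ factor in the failure probability, still $d^{-\Omega(t)}$ after mildly inflating $t$.

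\medskip

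\noindent\textbf{Step 3: Iteration count and bookkeeping.} The contraction factor is $1 - m_0/M_0$ with $m_0/M_0 = \Theta(v_{\min}^2\rho(\sigma_k)/(\kappa^2\lambda \cdot kv_{\max}^2\sigma_1^{2p})) = 1/\poly(k,\nu,\lambda,\sigma_1^{2p}/\rho)$, so to drive $\|W^{(T)} - W^*\|_F \leq \epsilon\|W^*\|_F$ starting from $\|W^{(0)} - W^*\|_F \leq \|W^*\|_F$ it suffices to take $T = \Theta((M_0/m_0)\log(1/\epsilon)) = \log(1/\epsilon)\cdot\poly(k,\nu,\lambda,\sigma_1^{2p}/\rho)$, exactly as in the algorithm. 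Finally, the running time: \textsc{Initialization} costs $|S_0|k\cdot\wt O(d)$ by Theorem~\ref{thm:tensor_final}, and each of the $T$ gradient steps costs $O(|S_{q+1}|\cdot dk)$ to evaluate $\nabla\widehat f_{S_{q+1}}$, for a total of $|S|\cdot d\cdot\poly(\log d, k, \lambda)$.

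\medskip

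\noindent\textbf{Main obstacle.} The genuine content is all in the two imported theorems; the one point requiring care here is the \emph{sample-budget allocation}. We must verify that splitting $|S|$ into $T+1$ equal pieces leaves each piece above the threshold demanded by Theorem~\ref{thm:lc_gd_informal}; this forces the extra $\log(1/\epsilon)$ factor in the hypothesis on $|S|$, and one must check that the $\poly(\log d, t, k, \lambda)$ factors from the per-step requirement, the initialization requirement, and the union-bound inflation of $t$ all combine into a single $\poly(\log d, t, k, \lambda)$ as claimed. A secondary subtlety is confirming that fixing $v_i^* \leftarrow v_i^{(0)}$ in $\widehat f_{S_q}$ is harmless: because $v^{(0)} = v^*$ holds exactly (not approximately), the objective being optimized is literally the one analyzed in Section~\ref{sec:lsc_nns}, so no perturbation argument is needed on that front.
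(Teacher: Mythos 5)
Your proposal is correct and follows exactly the route the paper takes: the paper's own proof of Theorem~\ref{thm:overall} is the one-line observation that it "follows by combining Theorem~\ref{thm:lc_gd_informal} and Theorem~\ref{thm:tensor_final}," and your three steps (initialization into the basin, inductive contraction over resampled batches, iteration-count and sample-budget bookkeeping) are precisely the details that combination entails. No gaps.
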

This follows by combining Theorem~\ref{thm:lc_gd_informal} and Theorem~\ref{thm:tensor_final}. 

%%%%%%%%%%%%%% SECTION %%%%%%%%%%%%%% 

\section{Numerical Experiments}
In this section we use synthetic data to verify our theoretical results. We generate data points $\{x_i,y_i\}_{i=1,2,\cdots,n}$ from Distribution ${\cal D}$(defined in Eq.~\eqref{eq:model}). We set $W^* = U\Sigma V^\top$, where $U\in \R^{d\times k}$ and $V\in \R^{k\times k}$ are orthogonal matrices generated from QR decomposition of Gaussian matrices, $\Sigma$ is a diagonal matrix whose diagonal elements are $1,1+\frac{\kappa-1}{k-1},1+\frac{2(\kappa-1)}{k-1},\cdots ,\kappa$. In this experiment, we set $\kappa=2$ and $k=5$. We set $v^*_i$ to be randomly picked from $\{-1,1\}$ with equal chance. We use squared \ReLU~$\phi(z) = \max\{z,0\}^2$, which is a smooth homogeneous function. For non-orthogonal tensor methods, we directly use the code provided by \cite{kuleshov2015tensor} with the number of random projections fixed as $L=100$. We pick the stepsize $\eta = 0.02$ for gradient descent. In the experiments, we don't do the resampling since the algorithm still works well without resampling. 

First we show the number of samples required to recover the parameters for different dimensions. We fix $k=5$, change $d$ for $d = 10,20,\cdots,100$ and $n$ for $n=1000,2000,\cdots,10000$. For each pair of $d$ and $n$, we run $10$ trials. We say a trial successfully recovers the parameters if there exists a permutation $\pi : [k] \rightarrow [k]$, such that the returned parameters $W$ and $v$ satisfy 
\begin{align*}
\max_{j\in [k]} \{\|w_j^* - w_{\pi(j)}\|/\|w_j^*\|\} \leq 0.01 \text{~and~} v_{\pi(j)} = v_j^*.
\end{align*}
 We record the recovery rates and represent them as grey scale in Fig.~\ref{fig:performance}(a). As we can see from Fig.~\ref{fig:performance}(a), the least number of samples required to have 100\% recovery rate is about proportional to the dimension. 

Next we test the tensor initialization. We show the error between the output of the tensor method and the ground truth parameters against the number of samples under different dimensions in Fig~\ref{fig:performance}(b). The pure dark blocks indicate, in at least one of the 10 trials, $\sum_{i=1}^k v^{{(0)}}_i  \neq \sum_{i=1}^k v^{*}_i $, which means $v^{(0)}_i$ is not correctly initialized. Let $\Pi(k)$ denote the set of all possible permutations $\pi : [k]\rightarrow [k]$. The grey scale represents the averaged error, 
\begin{align*}
\min_{\pi \in \Pi(k)}\max_{j\in [k] } \{\|w_j^* - w_{\pi(j)}^{(0)}\|/\|w_j^*\|\},
\end{align*}
over 10 trials. As we can see, with a fixed dimension, the more samples we have the better initialization we obtain. We can also see that to achieve the same initialization error, the sample complexity required is about proportional to the dimension. 

We also compare different initialization methods for gradient descent in Fig.~\ref{fig:performance}(c). We fix $d=10,k=5,n=10000$ and compare three different initialization approaches, (\RN{1}) Let both $v$ and $W$ be initialized from tensor methods, and then do gradient descent for $W$ while $v$ is fixed; (\RN{2}) Let both $v$ and $W$ be initialized from random Gaussian, and then do gradient descent for both $W$ and $v$; (\RN{3}) Let $v = v^*$ and $W$ be initialized from random Gaussian, and then do gradient descent for $W$ while $v$ is fixed. 
As we can see from Fig~\ref{fig:performance}(c), Approach (\RN{1}) is the fastest and Approach (\RN{2}) doesn't converge even if more iterations are allowed. Both Approach (\RN{1}) and (\RN{3}) have linear convergence rate when the objective value is small enough, which verifies our local linear convergence claim.

\section{Conclusion}
As shown in Theorem~\ref{thm:overall}, the tensor initialization followed by gradient descent will provide a globally converging algorithm with linear time/sample complexity in dimension, logarithmic in precision and polynomial in other factors for smooth homogeneous activation functions. Our distilled properties for activation functions include a wide range of non-linear functions and hopefully provide an intuition to understand the role of non-linear activations played in optimization. 
%Our tensor methods currently only work for homogeneous activations. So one of our future works is to extend it to more general activations. 
Deeper neural networks and convergence for SGD will be considered in the future.

\newpage
\addcontentsline{toc}{section}{References}
\bibliography{ref}
\bibliographystyle{alpha}%{icml2017}

\clearpage
\newpage
\appendix

\section{Notation} 
For any positive integer $n$, we use $[n]$ to denote the set $\{1,2,\cdots,n\}$.
For random variable $X$, let $\mathbb{E}[X]$ denote the expectation of $X$ (if this quantity exists).
%Define $W := [ w_1, w_2,\cdots, w_K] \in \R^{K\times d}$ and $ v := [v_1;\cdots;v_K]$. 
%We use $\Expect{X}$ to denote the expectation of a random variable $X$.
For any vector $ x\in \mathbb{R}^n$, we use $\|  x\|$ to denote its $\ell_2$ norm.

We provide several definitions related to matrix $A$.
Let $\det(A)$ denote the determinant of a square matrix $A$. Let $A^\top$ denote the transpose of $A$. Let $A^\dagger$ denote the Moore-Penrose pseudoinverse of $A$. Let $A^{-1}$ denote the inverse of a full rank square matrix. Let $\| A\|_F$ denote the Frobenius norm of matrix $A$. Let $\| A\|$ denote the spectral norm of matrix $A$. Let $\sigma_i(A)$ to denote the $i$-th largest singular value of $A$. We often use capital letter to denote the stack of corresponding small letter vectors, e.g., $W = [ w_1\; w_2\;\cdots\; w_k]$. For two same-size matrices, $A,B \in \mathbb{R}^{d_1\times d_2}$, we use $A\circ B \in \mathbb{R}^{d_1\times d_2}$ to denote element-wise multiplication of these two matrices. 

%Let $T\in \R^{d\times d\times d}$ be a tensor and $T_{ijk}$ be the $i,j,k$-th entry of $T$.  We say a tensor is supersymmetric if $T_{i,j,l}$ is invariant under any permutation of $i,j,k$.
We use $\otimes$ to denote outer product and $\cdot$ to denote dot product. Given two column vectors $ u, v \in \mathbb{R}^n$, then $  u\otimes  v  \in \mathbb{R}^{n\times n}$ and $( u \otimes  v)_{i,j} =  u_i \cdot  v_j$, and $ u^\top  v= \sum_{i=1}^n  u_i  v_i \in \mathbb{R}$. Given three column vectors $ u, v, w\in \mathbb{R}^n$, then $ u\otimes  v\otimes  w\in \mathbb{R}^{n\times n\times n}$ and $( u\otimes  v\otimes  w)_{i,j,k} =  u_i \cdot  v_j \cdot  w_k$. We use $ u^{\otimes r} \in \mathbb{R}^{n^r}$ to denote the vector $ u$ outer product with itself $r-1$ times.

Tensor $T\in \mathbb{R}^{n\times n\times n}$ is symmetric if and only if for any $i,j,k$, $T_{i,j,k} = T_{i,k,j} = T_{j,i,k} = T_{j,k,i}=T_{k,i,j}$ $= T_{k,j,i}$. %We say tensor $T$ has rank $k$, if $k$ is the smallest number such that there exists $3k$ vectors and $T=\sum_{i=1}^k  a_i \otimes \bb_i \otimes \bc_i$. 
Given a third order tensor $T\in \mathbb{R}^{n_1\times n_2\times n_3}$ and three matrices $A\in \mathbb{R}^{n_1 \times d_1},B \in \mathbb{R}^{n_2 \times d_2},C\in \mathbb{R}^{n_3\times d_3}$, we use $T(A,B,C)$
 to denote a $d_1 \times d_2 \times d_3$ tensor where the $(i,j,k)$-th entry is,
 \begin{align*}
\overset{n_1}{\underset{i'=1}{\sum}} \overset{n_2}{\underset{j'=1}{\sum}}  \overset{n_3}{\underset{k'=1}{\sum}}  T_{i',j',k'} A_{i',i}B_{j',j}C_{k',k}. 
 \end{align*}
 %If $T = \sum_{i=1}^r \lambda_i  u_i^{\otimes 3}$ with $\lambda_i \neq 0$ for $i\in [r]$ and linearly independent $\{ u_i\}_{i=1,2,\cdots,r}$, then we say $T$ is rank-$r$.
  We use $\|T\|$ to denote the operator norm of the tensor $T$, i.e., 
\begin{align*}
\|T\| = \underset{\| a\|=1}{\max} |T( a, a, a)|.
\end{align*} %We use $\sigma_r(T)$ to denote $\sigma_r(T) = \underset{{\| a\|=1, a\in \text{span}\{\{ u_i\}_{i\in[r]}\}}}{\min} |T( a, a, a)|$.

For tensor $T\in \mathbb{R}^{n_1\times n_2 \times n_3}$, we use matrix
$T^{(1)}\in \mathbb{R}^{n_1\times n_2n_3}$ to denote the flattening of
tensor $T$ along the first dimension, i.e., $ [T^{(1)}]_{i,(j-1)n_3+k} = T_{i,j,k}, \forall i\in [n_1], j\in [n_2], k\in [n_3]$. Similarly for matrices $T^{(2)} \in \mathbb{R}^{n_2\times n_3n_1}$ and $T^{(3)} \in \mathbb{R}^{n_3\times n_1 n_2}$.  
 %We also use the same notation $T$ to denote the multi-array map from three matrices,
 %We say a tensor $T$ is rank-one if $T =  a\otimes \bb \otimes \bc$, where $T_{ijk} = a_ib_jc_k$. 

  %Throughout the paper, we use $\Otilde(d)$ to denote $O(d\times \text{polylog}(d))$. 

We use $\bone_{f}$ to denote the indicator function, which is $1$ if
$f$ holds and $0$ otherwise. Let $I_d \in \mathbb{R}^{d\times d}$
denote the identity matrix. We use $\phi(z)$ to denote an activation
function. We define $(z)_+:= \max \{ 0,z\}$. We use ${\cal D}$ to
denote a Gaussian distribution ${\cal N}(0,I_d)$ or to denote a joint
distribution of $(X,Y)\in \mathbb{R}^d\times\mathbb{R}$, where the
marginal distribution of $X$ is ${\cal N}(0,I_d)$.

For any function $f$, we define $\widetilde{O}(f)$ to be $f\cdot \log^{O(1)}(f)$. In addition to $O(\cdot)$ notation, for two functions $f,g$, we use the shorthand $f\lesssim g$ (resp. $\gtrsim$) to indicate that $f\leq C g$ (resp. $\geq$) for an absolute constant $C$. We use $f\eqsim g$ to mean $cf\leq g\leq Cf$ for constants $c,C$.

\section{Preliminaries}
In this section, we introduce some lemmata and corollaries that will be used in the proofs.

%\begin{theorem}[Matrix Berinstein (See e.g., Theorem 1.6.2 in \cite{})]

%\end{theorem}
\subsection{Useful Facts}
We provide some facts that will be used in the later proofs.

\begin{fact}\label{fac:inner_prod_bound}
Let $z$ denote a fixed $d$-dimensional vector, then for any $C \geq 1$ and $n\geq 1$, we have
\begin{align*}
\underset{ x\sim {\cal N}(0,I_d) }{ \Pr } [  | \langle x , z \rangle |^2 \leq 5C \| z\|^2 \log n ]  \geq 1-1/(nd^C).
\end{align*}
\end{fact}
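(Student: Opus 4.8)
The plan is to reduce the quantity $|\langle x,z\rangle|^2$ to a single one-dimensional Gaussian and then apply a standard sub-Gaussian tail bound. If $z=0$ then $\langle x,z\rangle=0$ deterministically and the event in question holds with probability $1$, so assume $z\neq 0$. Since $x\sim\mathcal{N}(0,I_d)$, the scalar $\langle x,z\rangle$ is a linear functional of a standard Gaussian vector, hence $\langle x,z\rangle\sim\mathcal{N}(0,\|z\|^2)$; equivalently we may write $\langle x,z\rangle=\|z\|\cdot g$ with $g\sim\mathcal{N}(0,1)$. Consequently the event $\{\,|\langle x,z\rangle|^2\leq 5C\|z\|^2\log n\,\}$ coincides with $\{\,g^2\leq 5C\log n\,\}$, and it suffices to show $\Pr[\,g^2>5C\log n\,]\leq 1/(nd^C)$.

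Next I would invoke the elementary Gaussian tail estimate $\Pr[\,|g|\geq t\,]\leq e^{-t^2/2}$, valid for all $t\geq 0$ (which follows from $\Pr[\,g\geq t\,]\leq \tfrac12 e^{-t^2/2}$, a routine one-variable calculus check / Chernoff bound). Taking $t=\sqrt{5C\log n}$ gives $t^2/2=\tfrac{5C}{2}\log n$, so
\[
\Pr[\,g^2>5C\log n\,] \;=\; \Pr\!\big[\,|g|>\sqrt{5C\log n}\,\big] \;\leq\; e^{-\frac{5C}{2}\log n} \;=\; n^{-5C/2}.
\]
It then remains to check $n^{-5C/2}\leq 1/(nd^C)$, equivalently $d^C\leq n^{5C/2-1}$; since $C\geq 1$ this holds in the regime used elsewhere in the paper (for instance whenever $d\leq n$, since then $n^{-5C/2}=n^{-C}\cdot n^{-3C/2}\leq d^{-C}\cdot n^{-1}$). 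Passing to complements gives the stated inequality $\Pr[\,|\langle x,z\rangle|^2\leq 5C\|z\|^2\log n\,]\geq 1-1/(nd^C)$.

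The argument is entirely routine and I do not expect any genuine obstacle; the only mildly delicate point is the last numerical comparison, where the constant $5$ in front of $\log n$ is chosen precisely so that the exponent $5C/2$ is large enough to absorb both the extra $1/n$ and the $d^{-C}$ factor needed for the later union bounds over the $d$ coordinates.
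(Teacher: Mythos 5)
Your derivation is correct and, unlike the paper, self-contained: the paper's entire proof is a citation to Proposition~1.1 of Hsu--Kakade--Zhang, which is precisely the sub-Gaussian/chi-square tail bound you carry out by hand after reducing $\langle x,z\rangle$ to $\|z\|\,g$ with $g\sim\mathcal{N}(0,1)$. The one substantive point is the final numerical comparison you rightly flag: $n^{-5C/2}\leq 1/(nd^C)$ is \emph{not} a consequence of $C\geq 1$ and $n\geq 1$ alone --- it needs something like $n\geq d$. In fact the Fact as literally stated is false in the excluded regime (take $n=1$, $z\neq 0$, $d\geq 2$: the threshold is $0$, so the event has probability $0$, not $1-d^{-C}$); the clean statement would either add the hypothesis $n\geq d$ or replace the threshold by $5C\|z\|^2\log(nd)$, which is what the HKZ proposition actually delivers with failure probability $1/(nd^C)$. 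Since every invocation of this Fact in the paper has $n\geq d\cdot\mathrm{poly}(\log d)$, your proof covers all actual uses; your honest treatment of the constant-chasing is a genuine improvement over the paper's bare citation, because it surfaces the implicit side condition.
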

\begin{proof}
This follows by Proposition 1.1 in \cite{hkz12}.
\end{proof}

\begin{fact}\label{fac:gaussian_norm_bound}
For any $C\geq 1$ and $n\geq 1$, we have
\begin{align*}
\underset{ x\sim {\cal N}(0,I_d) }{ \Pr } [  \| x \|^2 \leq 5C d \log n ]  \geq 1- 1/(nd^C).
\end{align*}
\end{fact}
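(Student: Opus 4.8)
The plan is to recognize that $\|x\|^2 = \sum_{i=1}^d x_i^2$ is a sum of $d$ i.i.d.\ squared standard Gaussians, hence a chi-squared random variable with $d$ degrees of freedom, and then to apply a standard Chernoff (moment generating function) tail bound. Concretely, for any $0 \le \lambda < 1/2$ one has $\E_{x\sim\N(0,I_d)}[e^{\lambda\|x\|^2}] = (1-2\lambda)^{-d/2}$, so by Markov's inequality, for every threshold $t>0$,
\[
\Pr_{x\sim\N(0,I_d)}[\|x\|^2 > t] \le (1-2\lambda)^{-d/2} e^{-\lambda t}.
\]
Taking $\lambda = 1/4$ gives $\Pr[\|x\|^2 > t] \le 2^{d/2} e^{-t/4}$.

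Next I would substitute $t = 5Cd\log n$ and check that the right-hand side is at most $1/(nd^C)$, i.e.\ that $\tfrac14\,(5Cd\log n) \ge \tfrac{d}{2}\log 2 + \log n + C\log d$. Since $C\ge 1$, each of the three terms on the right is dominated by a small constant fraction of $5Cd\log n$ over the relevant range of $d$ and $n$, so the constant $5$ is comfortably large enough and the inequality holds with room to spare. As an alternative route, one can instead invoke Proposition~1.1 of \cite{hkz12}—the same bound already used for Fact~\ref{fac:inner_prod_bound}—applied to the quadratic form $x^\top I_d x$, whose trace, squared-Frobenius norm, and spectral norm are $d$, $d$, and $1$ respectively, and then choose the deviation parameter to be $\log(nd^C)$; the Laurent–Massart inequality $\Pr[\chi^2_d \ge d + 2\sqrt{dx} + 2x] \le e^{-x}$ would serve equally well.

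The only thing that needs any care is the bookkeeping of absolute constants and the degenerate regimes (for instance $n=1$, where $\log n$ vanishes); in the regime in which this fact is actually invoked—$n$ at least a suitable constant—the slack built into the constant $5$ absorbs all lower-order terms. So there is no genuine obstacle here: this is a routine sub-exponential concentration estimate, and the main content is simply matching the constant in the stated bound.
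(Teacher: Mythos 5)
Your proposal is correct, and in fact your ``alternative route'' is precisely the paper's entire proof: the paper disposes of this fact with a one-line appeal to Proposition~1.1 of \cite{hkz12}, the same bound used for Fact~\ref{fac:inner_prod_bound}. Your primary route --- the explicit $\chi^2_d$ moment generating function $(1-2\lambda)^{-d/2}$ with $\lambda=1/4$ and a direct comparison $\tfrac14\cdot 5Cd\log n \ge \tfrac d2\log 2+\log n+C\log d$ --- is more elementary and self-contained, which is a reasonable trade: it avoids importing a quadratic-forms concentration result for what is just a chi-squared tail. One small caution on the bookkeeping: the claim that each lower-order term is a small fraction of $5Cd\log n$ ``with room to spare'' is not literally true at the extreme corners; for instance at $d\in\{1,2\}$, $n=2$, $C=1$ the bound $2^{d/2}e^{-t/4}$ (and even the optimally tuned Chernoff bound for $\chi^2_d$) exceeds $1/(nd^C)$, even though the stated probability bound itself holds there. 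This is not a genuine gap: you explicitly flag the degenerate small-$n$ regime, the fact is only ever invoked with $n$ equal to the (large) sample size, and the paper's own citation to \cite{hkz12} suffers exactly the same corner-case slack (indeed the statement as written already fails at $n=1$, $d\ge 2$). So your argument matches the paper's level of rigor, with the added benefit of being self-contained; if you wanted it airtight for all $n\ge 2$ you would tune $\lambda$ rather than fix $\lambda=1/4$, or simply restrict to the regime in which the fact is applied.
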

\begin{proof}
This follows by Proposition 1.1 in \cite{hkz12}.
\end{proof}
%\begin{corollary}[Corollary 3 in \cite{zhong2016mixed}]\label{cor:inner_prod_bound}
%If $x \sim \mathcal{N}(\boldsymbol{0},I_d)$ and $P\geq 1$ is a constant, then given any fixed $ beta\in \R^d$,  with probability $1- \frac{1}{n}d^{-P}$, we have 
%\[( beta^Tx)^2 \leq  (4P+5) \|  beta \|^2  \log n .\]
%\end{corollary}

%\begin{corollary}[Corollary 4 in \cite{zhong2016mixed}]\label{cor:gaussian_norm_bound}
%If $x \sim \mathcal{N}(\boldsymbol{0},I_d)$ and $P\geq 1$ is a constant, then with probability $1-\frac{1}{n}d^{-P}$, we have 
%\[\|x\|^2  \leq (4P+5) d \log n.\]
%\end{corollary}

%\begin{lemma}[Lemma 7 in \cite{zhong2016mixed}]\label{lem:remove_n}
%If $n\geq c \log^{K+1}(c) K^{4K}  d\log^{K+2} (d)$, where $c,d,K>0$, then $n \geq c d\log d \log^{K+1}(n)$. 
%\end{lemma}

\begin{fact}\label{fac:sigma_k_Wbar}
Given a full column-rank matrix $W=[ w_1,  w_2, \cdots,  w_k] \in \mathbb{R}^{d\times k}$, let $\ov{W} = [\frac{ w_1}{\| w_1\|}$, $\frac{ w_2}{\| w_2\|}$, $\cdots$, $\frac{ w_k}{\| w_k\|}]$. 
 Then, we have: (\RN{1}) for any $i\in [k]$, $\sigma_k(W) \leq \| w_i\| \leq \sigma_1(W)$; (\RN{2}) $ 1/\kappa(W) \leq \sigma_k(\ov{W}) \leq \sigma_1(\ov{W}) \leq \sqrt{k}$. 
\end{fact}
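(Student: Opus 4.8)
The plan is to reduce everything to the variational characterizations of the extreme singular values. Since $W$ has full column rank $k$ by hypothesis, we may use $\sigma_k(W) = \min_{\|u\|=1}\|Wu\|$ and $\sigma_1(W) = \max_{\|u\|=1}\|Wu\|$, the minima/maxima running over unit vectors $u\in\R^k$.

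For part (\RN{1}), I would simply plug in the standard basis vectors: since $We_i = w_i$ and $\|e_i\| = 1$, the two characterizations immediately give $\sigma_k(W) \le \|w_i\| \le \sigma_1(W)$ for every $i\in[k]$.

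For part (\RN{2}), the middle inequality $\sigma_k(\ov W)\le\sigma_1(\ov W)$ is automatic from the ordering of singular values. For the upper bound I would control the spectral norm by the Frobenius norm: $\sigma_1(\ov W) = \|\ov W\| \le \|\ov W\|_F = (\sum_{i=1}^k \|\ov w_i\|^2)^{1/2} = \sqrt k$, because each column $\ov w_i = w_i/\|w_i\|$ has unit norm. For the lower bound I would write $\ov W = WD$, where $D = \diag(1/\|w_1\|,\dots,1/\|w_k\|)$ is a positive diagonal (hence invertible) matrix, and use submultiplicativity of the least singular value under right multiplication by a square matrix: for any unit $u\in\R^k$, $\|WDu\| \ge \sigma_k(W)\,\|Du\| \ge \sigma_k(W)\,\sigma_k(D)$, so $\sigma_k(\ov W) \ge \sigma_k(W)\,\sigma_k(D) = \sigma_k(W)/\max_{i\in[k]}\|w_i\|$. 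Finally I would invoke part (\RN{1}) to bound $\max_i\|w_i\| \le \sigma_1(W)$, which yields $\sigma_k(\ov W) \ge \sigma_k(W)/\sigma_1(W) = 1/\kappa(W)$.

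There is no real obstacle here: the only ingredient beyond the definitions is the inequality $\sigma_k(WD) \ge \sigma_k(W)\sigma_k(D)$, which itself follows in one line from the same variational characterization. The only point that needs a word of care is that the identity $\sigma_k(W) = \min_{\|u\|=1}\|Wu\|$ (and hence the nontriviality of the lower bounds) uses that $W$ has full column rank, which is exactly the standing hypothesis.
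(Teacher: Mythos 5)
Your proposal is correct and follows essentially the same route as the paper: part (\RN{1}) by testing the variational characterization on the standard basis vectors, the upper bound in (\RN{2}) via the Frobenius norm, and the lower bound by writing $\ov W = WD$ with $D=\diag(1/\|w_1\|,\dots,1/\|w_k\|)$ and chaining $\sigma_k(W)$, $\sigma_k(D)=1/\max_i\|w_i\|$, and part (\RN{1}). The paper carries out the same estimate coordinate-wise rather than quoting $\sigma_k(WD)\ge\sigma_k(W)\sigma_k(D)$, but the content is identical.
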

\begin{proof}
Part (\RN{1}). We have,
$$\sigma_k(W) \leq \|W e_i\| = \| w_i\| \leq \sigma_1(W)$$

Part (\RN{2}).
We first show how to lower bound $\sigma_k(\ov{W})$,
\begin{align*}
\sigma_k(\ov{W}) & = ~ \min_{\| s\|=1}\| \ov{W}  s\|  \\
& = ~ \min_{\| s\|=1} \left\|\sum_{i=1}^k \frac{s_i}{\| w_i\|} w_i \right\| & \text{~by~definition~of~}\ov{W}  \\
& \geq ~ \min_{\| s\|=1} \sigma_k(W)  \left( \sum_{i=1}^k (\frac{s_i}{\| w_i\|})^2 \right)^{\frac{1}{2}} & \text{~by~} \|  w_i\| \geq \sigma_k(W)  \\
& \geq ~ \min_{\|  s\|_2= 1} \sigma_k(W) \left( \sum_{i=1}^k (\frac{s_i}{\max_{j\in [k]}\| w_j\|})^2 \right)^{\frac{1}{2}} & \text{~by~} \max_{j\in [k]} \| w_j\|\geq \|  w_i\|  \\
& =  ~ \sigma_k(W) / \max_{j\in [k]} \|  w_j \|  & \text{~by~} \| s\|=1\\
& \geq ~ \sigma_k(W) /\sigma_1(W).  & \text{~by~}\max_{j\in [k]} \| w_j\| \leq \sigma_1(W) \\
& = ~ 1/\kappa(W).
\end{align*}

It remains to upper bound $\sigma_1(\ov{W})$,
\begin{align*}
\sigma_1(\ov{W}) \leq \left( \sum_{i=1}^k \sigma_i^2 ( \ov{W} ) \right)^{\frac{1}{2}} =\| \ov{W} \|_F \leq \sqrt{k}.
\end{align*}
\end{proof}

\begin{fact}\label{fac:UU_VV}
Let $U\in \R^{d\times k}$  and $V\in \R^{d\times k} $ $(k\leq d)$ denote two orthogonal matrices. 
Then $\|UU^\top -VV^\top \| = \|(I-UU^\top ) V \| =\|(I - VV^\top)U \|  = \sqrt{1 - \sigma_k^2(U^\top V)}$.
\end{fact}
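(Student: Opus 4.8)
\textbf{Proof proposal for Fact~\ref{fac:UU_VV}.}

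The plan is to prove the three equalities and the final closed form by relating everything to the singular values of the $k\times k$ matrix $U^\top V$. First I would record the basic facts about orthogonal projections: since $U$ and $V$ have orthonormal columns, $P:=UU^\top$ and $Q:=VV^\top$ are orthogonal projections onto the $k$-dimensional subspaces $\mathcal{U}=\mathrm{col}(U)$ and $\mathcal{V}=\mathrm{col}(V)$. I would then invoke the standard fact that $\|P-Q\|=\|(I-P)Q\|=\|(I-Q)P\|$ whenever $P,Q$ are projections onto subspaces of the same dimension; the cleanest way is to show both $(I-P)Q$ and $(I-Q)P$ have the same nonzero singular values as $P-Q$, or simply to cite the classical principal-angles identity. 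Concretely, $\|(I-UU^\top)V\| = \|(I-P)Q\|$ because $(I-P)V$ and $(I-P)Q$ have the same operator norm (right-multiplying by the isometry $V^\top$ restricted to $\mathrm{col}(V)$ does not change the norm), and similarly for the $U$ term.

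The computational heart is to evaluate $\|(I-UU^\top)V\|$. I would write
\[
\|(I-UU^\top)V\|^2 = \|V^\top(I-UU^\top)V\| = \|I_k - (U^\top V)^\top (U^\top V)\|,
\]
using that for any matrix $A$, $\|A\|^2 = \|A^\top A\|$, with $A=(I-UU^\top)V$ and noting $(I-UU^\top)$ is idempotent so $A^\top A = V^\top(I-UU^\top)V = I_k - (U^\top V)^\top(U^\top V)$. The eigenvalues of $I_k - (U^\top V)^\top (U^\top V)$ are $1-\sigma_i^2(U^\top V)$ for $i\in[k]$, and since each $\sigma_i(U^\top V)\in[0,1]$ (as $U^\top V$ is a submatrix-type contraction: $\|U^\top V s\|\le \|Vs\| = \|s\|$), these eigenvalues are nonnegative, so the spectral norm equals the largest one, $1-\sigma_k^2(U^\top V)$. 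Taking square roots gives $\|(I-UU^\top)V\| = \sqrt{1-\sigma_k^2(U^\top V)}$. By symmetry of the roles of $U$ and $V$ (and since $\sigma_k(U^\top V)=\sigma_k(V^\top U)$), the same computation yields $\|(I-VV^\top)U\| = \sqrt{1-\sigma_k^2(U^\top V)}$.

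Finally, for $\|UU^\top - VV^\top\|$ I would use the identity $(P-Q)^2 = P + Q - PQ - QP$ together with a block/$2\times2$ decomposition, or more directly the known fact that the nonzero eigenvalues of $P-Q$ come in pairs $\pm\sin\theta_i$ where $\cos\theta_i = \sigma_i(U^\top V)$ are the cosines of the principal angles, so $\|P-Q\| = \max_i \sin\theta_i = \sqrt{1-\min_i\cos^2\theta_i} = \sqrt{1-\sigma_k^2(U^\top V)}$. Alternatively, one can avoid principal-angle machinery: show $\|(P-Q)x\|\le \max(\|(I-P)Q\|,\|(I-Q)P\|)\|x\|$ by splitting $x$ via $P$ and $I-P$ and using $P(P-Q) = P(I-Q) = (I-Q)$-type cancellations, giving $\|P-Q\|\le \|(I-P)Q\|$; the reverse inequality $\|(I-P)Q\| \le \|P-Q\|$ is immediate since $(I-P)Q = (I-P)(Q-P)$ has norm at most $\|Q-P\|$. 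Combining these sandwich inequalities with the two displayed equalities for the $U$- and $V$-terms closes the chain. The main obstacle is purely bookkeeping: making sure the norm manipulations $\|A\|^2=\|A^\top A\|$ and the idempotency cancellations are applied to the right matrices, and justifying that all the relevant eigenvalues are nonnegative so that "spectral norm $=$ largest eigenvalue" is valid; there is no deep difficulty, only care with the projection algebra.
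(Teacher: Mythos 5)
Your proposal is correct. The computational core --- evaluating $\|(I-UU^\top)V\|$ via $\|A\|^2=\|A^\top A\|$ and idempotency to get $\|I_k-(U^\top V)^\top(U^\top V)\|=1-\sigma_k^2(U^\top V)$ --- is exactly what the paper does. Where you diverge is the first equality, $\|UU^\top-VV^\top\|=\max\bigl(\|(I-UU^\top)V\|,\|(I-VV^\top)U\|\bigr)$: the paper introduces the orthogonal complements $U_\perp,V_\perp$, rewrites $UU^\top-VV^\top=V_\perp V_\perp^\top UU^\top-VV^\top U_\perp U_\perp^\top$, and reads the norm off an explicit block factorization
$\bigl[\,V_\perp \; V\,\bigr]\begin{bmatrix} V_\perp^\top U & 0\\ 0 & -V^\top U_\perp\end{bmatrix}\begin{bmatrix} U^\top\\ U_\perp^\top\end{bmatrix}$,
whereas your sandwich argument avoids complements entirely: the easy direction is $(I-P)Q=(I-P)(Q-P)$, and the harder direction follows from the orthogonal decomposition $\|(P-Q)x\|^2=\|P(I-Q)x\|^2+\|(I-P)Qx\|^2$ together with $\|Qx\|^2+\|(I-Q)x\|^2=\|x\|^2$ (you should make this last pairing explicit --- bounding each summand crudely by $\|x\|^2$ only gives a spurious $\sqrt{2}$). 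The paper's factorization is slicker once set up; your route is more elementary and self-contained. Both are standard and both work.
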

\begin{proof}
Let $U_\perp \in \mathbb{R}^{d \times (d-k)}$ and $V_\perp \in \mathbb{R}^{d \times (d-k)}$ be the orthogonal complementary matrices of $U,V \in \mathbb{R}^{d\times k}$ respectively.
\begin{align*}
\|UU^\top -VV^\top \| & = \|(I-VV^\top)UU^\top -VV^\top(I-UU^\top)\|  \\
& = \| V_\perp V_\perp^\top U U^\top -VV^\top U_\perp U_\perp^\top\|  \\
& = \left\| 
\begin{bmatrix} V_\perp & V  \end{bmatrix} 
\begin{bmatrix} V_\perp^\top U & 0 \\ 0 & - V^\top U_\perp \end{bmatrix} 
\begin{bmatrix} U^\top  \\ U_\perp^\top \end{bmatrix} 
\right\|  \\
& = \max ( \|V_\perp^\top U\|,\|V^\top U_\perp\| ).
\end{align*}

We show how to simplify $\|V_\perp^\top U\|$,
\begin{align*}
\|V_\perp^\top U\| = \|(I - VV^\top ) U \| = \sqrt{\|U^\top (I - VV^\top ) U \|} = \max_{\| a\|=1}\sqrt{1 - \|V^\top U a\|^2} = \sqrt{1 - \sigma_k^2(V^\top U)}.
\end{align*}

Similarly we can simplify $\| U_\perp^\top V \|$,
\begin{align*}
\| U_\perp^\top V \| = \sqrt{1 - \sigma^2_k(U^\top V)} = \sqrt{1 - \sigma^2_k(V^\top U)} .
\end{align*} 
\end{proof}

\begin{fact}
Let $C\in \R^{d_1\times d_2},B\in\R^{d_2\times d_3}$ be two matrices. Then $\|CB\| \leq \|C\|\|B\|_F$ and $\|CB\| \geq \sigma_{\min}(C)\|B\|_F$.
\end{fact}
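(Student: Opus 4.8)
The plan is to reduce both inequalities to elementary facts about a single matrix--vector product applied to the columns of $B$. Write $B = [b_1\ b_2\ \cdots\ b_{d_3}]$, so that $CB = [Cb_1\ Cb_2\ \cdots\ Cb_{d_3}]$, and hence $\|CB\|_F^2 = \sum_{j=1}^{d_3}\|Cb_j\|^2$ while $\|B\|_F^2 = \sum_{j=1}^{d_3}\|b_j\|^2$. Both desired bounds will then follow by controlling each $\|Cb_j\|$ individually and summing.

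For the upper bound, I would use submultiplicativity of the operator norm, $\|CB\| \le \|C\|\,\|B\|$, together with the standard comparison $\|B\| \le \|B\|_F$ (which is immediate from $\|B\|^2 = \sigma_1^2(B) \le \sum_i \sigma_i^2(B) = \|B\|_F^2$); chaining these gives $\|CB\| \le \|C\|\,\|B\|_F$. Equivalently, and consistently with the column-wise viewpoint, $\|Cb_j\| \le \|C\|\,\|b_j\|$ for each $j$, and summing the squares yields $\|CB\|_F^2 \le \|C\|^2\|B\|_F^2$, i.e.\ $\|CB\|_F \le \|C\|\,\|B\|_F$.

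For the lower bound the key observation is that for any matrix $C$ and any conformable vector $x$, $\|Cx\|^2 = x^\top C^\top C x \ge \lambda_{\min}(C^\top C)\,\|x\|^2 = \sigma_{\min}^2(C)\,\|x\|^2$, so $\|Cx\| \ge \sigma_{\min}(C)\,\|x\|$; this is simply trivial (and $\sigma_{\min}(C)=0$) when $C$ does not have full column rank. Applying it with $x = b_j$ and summing over $j$ gives $\|CB\|_F^2 = \sum_{j}\|Cb_j\|^2 \ge \sigma_{\min}^2(C)\sum_{j}\|b_j\|^2 = \sigma_{\min}^2(C)\,\|B\|_F^2$, which is the claimed inequality $\|CB\|_F \ge \sigma_{\min}(C)\,\|B\|_F$.

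There is essentially no serious obstacle; this is a routine linear-algebra fact. The only point that needs a little care is fixing the meaning of $\sigma_{\min}(C)$ for a non-square $C$ (it should be read as the $\min(d_1,d_2)$-th largest singular value, equivalently $\sqrt{\lambda_{\min}(C^\top C)}$ when $C$ has full column rank), and keeping straight that the lower bound is the one comparing $\|CB\|_F$ with $\|B\|_F$ — which is exactly the form in which it is invoked in the later proofs.
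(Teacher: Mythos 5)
Your proof is correct and follows essentially the same route as the paper: decompose $B$ into columns, use $\|Cb_j\|\leq \|C\|\|b_j\|$ and $\|Cb_j\|\geq \sigma_{\min}(C)\|b_j\|$, and sum squares over the columns. Your reading of the (loosely stated) norms as Frobenius norms of $CB$ matches both the paper's computation and the form in which the fact is later invoked.
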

\begin{proof}
For each $i\in [d_3]$, let $ b_i$ denote the $i$-th column of $B$. We can upper bound $\| C B\|$,
\begin{align*}
\|CB \|_F = \left( \sum_{i=1}^{d_2} \|C b_i\|^2 \right)^{1/2} \leq \left( \sum_{i=1}^{d_2} \|C\|^2 \| b_i\|^2 \right)^{1/2} = \| C \| \| B \|_F.
\end{align*}
We show how to lower bound $\| C B \|$,
\begin{align*}
\| C B\| = \left( \sum_{i=1}^{d_2} \|C b_i\|^2 \right)^{1/2}  \geq  \left( \sum_{i=1}^{d_2} \sigma_k^2(C) \| b_i\|^2 \right)^{1/2} = \sigma_{\min}(C) \| B\|_F.
\end{align*}
\end{proof}

\begin{fact}\label{fac:exp_gaussian_dot_three_vectors}
Let $a,b,c \geq 0$ denote three constants, let $u,v,w\in \mathbb{R}^d$ denote three vectors, let ${\cal D}_d$ denote Gaussian distribution ${\cal N}(0,I_d)$ then
\begin{align*}
\underset{x\sim {\cal D}_d}{\E} \left[ |u^\top x|^a |v^\top x|^b |w^\top x|^c \right] \eqsim \|u\|^a \| v\|^b \| w\|^c.
\end{align*}
\end{fact}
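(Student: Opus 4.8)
The plan is to reduce to the case of unit vectors by homogeneity, and then prove matching upper and lower bounds on the resulting expectation, both with constants that depend only on $a,b,c$ (which is all that $\eqsim$ requires). We may assume $u,v,w$ are all nonzero, since otherwise both sides vanish and the claim is immediate. Writing $u=\|u\|\ov u$, $v=\|v\|\ov v$, $w=\|w\|\ov w$ with $\ov u,\ov v,\ov w$ unit vectors and pulling the norms outside the expectation, it suffices to show
\[
\E_{x\sim {\cal D}_d}\left[|\ov u^\top x|^a\,|\ov v^\top x|^b\,|\ov w^\top x|^c\right]\eqsim 1 .
\]
The one structural fact we use repeatedly is that for any unit vector $\ov u$ the scalar $\ov u^\top x$ has law $\N(0,1)$ by rotational invariance of the standard Gaussian; in particular each of $|\ov u^\top x|$, $|\ov v^\top x|$, $|\ov w^\top x|$ is distributed as $|g|$ with $g\sim\N(0,1)$, with no dependence on $d$ or on the geometric configuration of the three vectors.

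For the upper bound I would apply the generalized H\"older inequality with exponents $\tfrac{a+b+c}{a},\tfrac{a+b+c}{b},\tfrac{a+b+c}{c}$ (omitting any factor whose exponent is $0$; if $a=b=c=0$ the statement is trivial), which gives
\[
\E\left[|\ov u^\top x|^a|\ov v^\top x|^b|\ov w^\top x|^c\right]\le \left(\E|\ov u^\top x|^{a+b+c}\right)^{\frac{a}{a+b+c}}\left(\E|\ov v^\top x|^{a+b+c}\right)^{\frac{b}{a+b+c}}\left(\E|\ov w^\top x|^{a+b+c}\right)^{\frac{c}{a+b+c}} = \E_{g\sim\N(0,1)}|g|^{a+b+c},
\]
a finite constant depending only on $a+b+c$.

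For the lower bound, the point is that even if $\ov u^\top x,\ov v^\top x,\ov w^\top x$ are arbitrarily correlated, each is marginally standard Gaussian, so a union bound controls the probability that all three are simultaneously bounded away from $0$. Fix an absolute constant $\delta>0$ with $3\Pr_{g\sim\N(0,1)}[|g|<\delta]\le \tfrac12$ (possible since $\Pr_{g\sim\N(0,1)}[|g|<\delta]\le\sqrt{2/\pi}\,\delta$); then the event $E=\{|\ov u^\top x|\ge\delta,\ |\ov v^\top x|\ge\delta,\ |\ov w^\top x|\ge\delta\}$ has $\Pr[E]\ge\tfrac12$, and restricting the expectation to $E$ yields $\E[|\ov u^\top x|^a|\ov v^\top x|^b|\ov w^\top x|^c]\ge\delta^{a+b+c}\,\Pr[E]\ge\delta^{a+b+c}/2>0$. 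The only thing to watch — and the reason the argument is genuinely uniform — is that neither H\"older's inequality nor the union bound ever refers to the joint covariance of $(\ov u^\top x,\ov v^\top x,\ov w^\top x)$, only to the identical, dimension-free marginals, so both constants are absolute given $a,b,c$; there is no real obstacle here beyond bookkeeping the degenerate cases where some of $a,b,c$ or some of $u,v,w$ vanish.
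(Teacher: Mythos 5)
Your proof is correct, and it is actually more complete than the paper's own argument. For the upper bound you and the paper use the same tool, H\"older's inequality, though with different bookkeeping: the paper applies the split $\E[|XYZ|]\leq(\E[|X|^2])^{1/2}(\E[|Y|^4])^{1/4}(\E[|Z|^4])^{1/4}$ directly to $|u^\top x|^a,|v^\top x|^b,|w^\top x|^c$ and then evaluates Gaussian moments, while you first normalize to unit vectors by homogeneity and use the generalized H\"older inequality with the symmetric exponents $\frac{a+b+c}{a},\frac{a+b+c}{b},\frac{a+b+c}{c}$, which gives the clean constant $\E_{g\sim\N(0,1)}|g|^{a+b+c}$; both are fine since the implied constants may depend on the fixed exponents $a,b,c$. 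The real difference is the lower bound: the paper merely asserts it (``since all three components are positive and related to a common random vector $x$, we can show a lower bound''), whereas you give an actual argument via a small-ball estimate and a union bound, choosing $\delta$ with $3\Pr_{g\sim\N(0,1)}[|g|<\delta]\leq\tfrac12$ so that with probability at least $\tfrac12$ all three factors exceed $\delta$, yielding the bound $\delta^{a+b+c}/2$. This is exactly the right observation, since it uses only the standard-Gaussian marginals of $\ov u^\top x,\ov v^\top x,\ov w^\top x$ and never their joint covariance, so the constant is uniform over all configurations of $u,v,w$ and all dimensions $d$; your handling of the degenerate cases (zero exponents or zero vectors) is also adequate.
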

\begin{proof}
\begin{align*}
 \underset{x\sim {\cal D}_d}{\E} \left[ |u^\top x|^a |v^\top x|^b |w^\top x|^c \right] \leq & ~ \left( \underset{x\sim {\cal D}_d}{\E} [ |u^\top x|^{2a}]\right)^{1/2} \cdot \left( \underset{x\sim {\cal D}_d}{\E} [ |u^\top x|^{4b}] \right)^{1/4} \cdot  \left( \underset{x\sim {\cal D}_d}{\E} [ |u^\top x|^{4c}  ]\right)^{1/4} \\
\lesssim & ~ \| u \|^a \| v \|^b \| w \|^c,
\end{align*}
where the first step follows by H\"{o}lder's inequality, i.e., $\E[|XYZ|] \leq ( \E[|X|^2])^{1/2} \cdot  ( \E[|Y|^4] )^{1/4} \cdot ( \E[ |Z|^4])^{1/4}$, the third step follows by calculating the expectation and $a,b,c$ are constants.

Since all the three components $|u^\top x|$, $|v^\top x|$, $|w^\top x|$ are positive and related to a common random vector $x$, we can show a lower bound, 
\begin{align*}
\underset{x\sim {\cal D}_d}{\E} \left[ |u^\top x|^a |v^\top x|^b |w^\top x|^c \right] \gtrsim \|u\|^a \| v\|^b \| w\|^c.
\end{align*}
\end{proof}

\subsection{Matrix Bernstein}

In many proofs we need to bound the difference between some population matrices/tensors and their empirical versions. Typically, the classic matrix Bernstein inequality requires the norm of the random matrix be bounded \emph{almost surely} (e.g., Theorem 6.1 in \cite{t11}) or the random matrix satisfies subexponential property (Theorem 6.2 in \cite{t11}) . However, in our cases, most of the random matrices don't satisfy these conditions. So we derive the following lemmata that can deal with random matrices that are not bounded almost surely or follow subexponential distribution, but are bounded with high probability. %Lemma~\ref{lem:modified_bernstein_non_zero} considers the case when the population matrix is non-zero while Lemma~\ref{lem:modified_bernstein_zero_mean} is for the case when population matrix is zero. 

\begin{lemma}[Matrix Bernstein for unbounded case (A modified version of bounded case, Theorem 6.1 in \cite{t11})]% with non-zero mean]
\label{lem:modified_bernstein_non_zero}
Let ${\cal B}$ denote a distribution over $\mathbb{R}^{d_1 \times d_2}$. Let $d = d_1 +d_2$. Let $B_1, B_2, \cdots B_n$ be i.i.d. random matrices sampled from ${\cal B}$. Let $\overline{B} = \mathbb{E}_{B\sim {\cal B}} [B]$ and $\wh{B}  = \frac{1}{n} \sum_{i=1}^n B_i$. For parameters $m\geq 0, \gamma \in (0,1),\nu >0 ,L>0$, if the distribution ${\cal B}$ satisfies the following four properties,
%\Zhao{ (\RN{1}) For any constant $t \geq 1$, there exist some $K(t)>0$ and $R(t)>0$ such that for any $B\sim {\cal B}$, with probability $1-n^{-1}(d_1+d_2)^{-t}$,  $\|B\| \leq R(t)\log^{K(t)}(n)$; } 
 \begin{align*}
\mathrm{(\RN{1})} \quad & \quad \underset{B \sim {\cal B}}{\Pr}\left[ \left\| B \right\| \leq  m \right] \geq 1 - \gamma; \\ %%%m=C \log^c n, gamma=1/( n d^{t} )
\mathrm{(\RN{2})} \quad & \quad \left\| \underset{B \sim {\cal B}}{\mathbb{E}}[B]  \right\| >0; \\
\mathrm{(\RN{3})} \quad & \quad \max \left( \left\| \underset{B \sim {\cal B}}{\mathbb{E}} [ B B^\top ] \right\|, \left\| \underset{B \sim {\cal B}}{\mathbb{E}} [ B^\top B ] \right\| \right) \leq \nu ;\\ 
\mathrm{(\RN{4})} \quad & \quad \max_{\| a\|=\| b\|=1} \left( \underset{B \sim {\cal B}}{\mathbb{E}} \left[ \left( a^\top B  b \right)^2 \right]  \right)^{1/2} \leq L.
\end{align*}

Then we have for any $0<\epsilon <1$ and $t\geq 1$, if
\begin{align*}
n \geq  ( 18 t \log d  ) \cdot ( \nu + \| \ov{B} \|^2+ m \| \ov{B} \| \epsilon )  / ( \epsilon^2 \| \ov{B} \|^2 ) \quad \text{~and~} \quad \gamma \leq (\epsilon \| \ov{B} \| /(2L) )^2
\end{align*} % m = R(t) \log^{K(t)}(n)
with probability  at least $1-1/d^{2t} - n\gamma$,
\begin{equation*}
\| \wh{B} - \ov{B} \| \leq \epsilon \| \ov{B} \|.
\end{equation*}
\end{lemma}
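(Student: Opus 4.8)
The plan is to derive this unbounded matrix Bernstein inequality by a standard truncation argument: split each random matrix into the part where it is small (which we can control with the classical bounded Bernstein inequality) and the part where it is large (which occurs with probability at most $\gamma$ per sample, and therefore never occurs across all $n$ samples except with probability at most $n\gamma$). Concretely, for each $i$ define the truncated matrix $\tilde B_i = B_i \cdot \bone_{\|B_i\| \le m}$ and let $\tilde B = \frac1n\sum_i \tilde B_i$, $\ov{\tilde B} = \E[\tilde B_1]$. On the event $E = \{\|B_i\| \le m \text{ for all } i\in[n]\}$, which by a union bound and property (\RN{1}) has probability at least $1 - n\gamma$, we have $\wh B = \tilde B$, so it suffices to bound $\|\tilde B - \ov B\|$ on $E$; and since $E$ is likely we can afford to bound $\|\tilde B - \ov B\|$ unconditionally and then intersect with $E$.

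Next I would split $\|\tilde B - \ov B\| \le \|\tilde B - \ov{\tilde B}\| + \|\ov{\tilde B} - \ov B\|$. For the first term, apply the classical bounded matrix Bernstein inequality (Theorem 6.1 in \cite{t11}) to the i.i.d. mean-zero matrices $\tilde B_i - \ov{\tilde B}$, each bounded in norm by $2m$, with variance parameter controlled by $\max(\|\E \tilde B_1 \tilde B_1^\top\|, \|\E \tilde B_1^\top \tilde B_1\|) \le \nu$ (truncation only decreases these, since it zeroes out a positive-semidefinite contribution). This yields a tail bound of the form $\Pr[\|\tilde B - \ov{\tilde B}\| > s] \le 2d \exp(-s^2/(2\nu + 4ms/3) \cdot n /2 )$ roughly; choosing $s = \tfrac{\epsilon}{2}\|\ov B\|$ and using the stated lower bound on $n$ (which is exactly designed so that $n \gtrsim t\log d \cdot (\nu + m\|\ov B\|\epsilon)/(\epsilon\|\ov B\|)^2$ forces both terms in the Bernstein denominator to dominate) makes this probability at most $d^{-2t}$. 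For the bias term $\|\ov{\tilde B} - \ov B\| = \|\E[B_1 \bone_{\|B_1\|>m}]\|$, I would bound it via $\|\E[B_1\bone_{\|B_1\|>m}]\| = \max_{\|a\|=\|b\|=1}\E[a^\top B_1 b \cdot \bone_{\|B_1\|>m}] \le \max_{\|a\|=\|b\|=1}(\E[(a^\top B_1 b)^2])^{1/2}\cdot \Pr[\|B_1\|>m]^{1/2} \le L\sqrt\gamma$ by Cauchy–Schwarz and properties (\RN{1}),(\RN{4}); the hypothesis $\gamma \le (\epsilon\|\ov B\|/(2L))^2$ then gives $\|\ov{\tilde B}-\ov B\| \le \tfrac{\epsilon}{2}\|\ov B\|$.

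Combining: on the event $E \cap \{\|\tilde B - \ov{\tilde B}\| \le \tfrac\epsilon2\|\ov B\|\}$, which has probability at least $1 - n\gamma - d^{-2t}$, we get $\|\wh B - \ov B\| \le \epsilon\|\ov B\|$, as claimed. (Property (\RN{2}), $\|\ov B\| > 0$, is needed only so that dividing by $\|\ov B\|$ and stating the conclusion as a relative error makes sense.) The main obstacle is purely bookkeeping: tracking the exact constants through the truncated Bernstein bound so that the stated sample-size threshold $n \ge 18 t\log d \cdot (\nu + \|\ov B\|^2 + m\|\ov B\|\epsilon)/(\epsilon^2\|\ov B\|^2)$ cleanly suffices — in particular handling the extra $\|\ov B\|^2$ term, which comes from the fact that $2m$ in the Bernstein range should really be compared against $\|\ov B\|$, and noting that one may freely assume $m \gtrsim \|\ov B\|$ (otherwise property (\RN{1}) with $\|\ov B\| = \|\E B\| \le \E\|B\|$ forces $\gamma$ close to $1$, making the bound vacuous). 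I do not expect any conceptual difficulty beyond carefully matching these constants.
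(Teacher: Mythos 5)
Your proposal is correct and follows essentially the same route as the paper: truncate at norm $m$, use a union bound so that $\wh B$ equals its truncated version except with probability $n\gamma$, bound the bias $\|\E[B\bone_{\|B\|>m}]\|\le L\sqrt{\gamma}$ via Cauchy--Schwarz with properties (\RN{1}) and (\RN{4}), and apply bounded matrix Bernstein to the centered truncated matrices. The only small slip is your attribution of the $\|\ov B\|^2$ term in the sample-size threshold: it arises not from comparing $2m$ to $\|\ov B\|$ but from centering the truncated matrices, whose variance picks up $\|\E[\bone_{\|B\|\le m}B]\|^2\le\tfrac94\|\ov B\|^2$ — a bookkeeping point you already flagged as needing care.
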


\begin{proof}
%Let $m = R(t)\log^{K(t)}(n)$.
Define the event \[\xi_i = \{ \|B_i\| \leq m \}, \forall i\in [n]. \]
Define $M_i = \bone_{\|B_i\| \leq m}B_i$. Let $\ov{M} = \mathbb{E}_{B\sim {\cal B}} [ \bone_{\|B \| \leq m}B ] $ and $\wh{M} = \frac{1}{n}\sum_{i=1}^n M_i$. By triangle inequality, we have 
\begin{align}\label{eq:whB_minus_ovB}
 \| \wh{B} - \ov{B} \| \leq \|\wh{B} - \wh{M}\| +\|\wh{M} - \ov{M}\| + \| \ov{M} - \ov{B} \|.
\end{align}
  In the next a few paragraphs, we will upper bound the above three terms. 
%\begin{enumerate}
%\item $\widehat{M} = \widehat{B}$ with high probability by the union bound,
%\item $\| \ov{M} - \ov{B} \|$ is bounded because $\Expect{\bone_{\|B_i\|\leq m}}$ is small,
%\item $\|\widehat{M} - M\| $ is bounded by matrix Bernstein inequality.
%\end{enumerate}

{\bf The first term in Eq.~\eqref{eq:whB_minus_ovB}}. Denote $\xi^c$ as the complementary set of $\xi$, thus $\Pr[\xi^c_i] \leq \gamma$.
By a union bound over $i\in [n]$, with probability $1- n \gamma$, $\|B_i\| \leq m$ for all $i\in [n]$. Thus $\wh{M} = \wh{B}$. 

{\bf The second term in Eq.~\eqref{eq:whB_minus_ovB}}. For a matrix $B$ sampled from ${\cal B}$, we use $\xi$ to denote the event that $\xi = \{ \| B \| \leq m\}$. Then, we can upper bound $\|\ov{M}-\ov{B}\|$ in the following way,
%Let $\xi^c = \{x| \|g(x)xx^\top\|\geq m\}$.
\begin{align}%\label{eq:BM}
& ~ \| \ov{M}- \ov{B} \| \notag \\
= & ~ \left\| \underset{B \sim {\cal B}}{\mathbb{E}} [\bone_{\| B \| \leq m} \cdot B] - \underset{B\sim {\cal B}}{\mathbb{E}} [B] \right\| \notag \\
=& ~ \left\| \underset{B \sim {\cal B}}{\mathbb{E}} [ B \cdot \bone_{\xi^c} ] \right\| \notag \\
=& ~ \max_{\| a\|=\| b\| = 1}  \underset{B \sim {\cal B}}{\mathbb{E}} [ a^\top B b \bone_{\xi^c}] \notag \\
\leq & ~  \max_{\| a\|=\| b\|=1}  \underset{B \sim {\cal B}}{\mathbb{E}} [ ( a^\top B b)^2 ]^{1/2} \cdot \underset{B \sim {\cal B}}{\mathbb{E}} [\bone_{\xi^c} ]^{1/2} &\text{~by~H\"{o}lder's inequality} \notag \\
\leq & ~ L \underset{B \sim {\cal B}}{\mathbb{E}} [\bone_{\xi^c} ]^{1/2}  & \text{~by~Property~(\RN{4})} \notag \\
\leq & ~  L \gamma^{1/2}, &\text{~by~}\Pr[\xi^c] \leq \gamma \notag \\
\leq & ~ \frac{1}{2} \epsilon \| \ov{B} \|, & \text{~by~} \gamma \leq (\epsilon \| \ov{B}\| / (2L))^2 \notag 
\end{align}
which implies
%where the first inequality follows by Holder's inequality. Note that if $n\geq 4L^2 \epsilon^{-2}\| \ov{ B } \|^{-2}d^{-t}$, we have $L n^{-1/2} d^{-t/2} \leq \frac{\epsilon}{2} \| \ov{B} \|$ and therefore 
\begin{align*}%\label{eq:M_B_diff}
\| \ov{M}- \ov{B} \|\leq \frac{\epsilon}{2}\| \ov{B} \|.
\end{align*}
Since $\epsilon<1$, we also have $\| \ov{M}-\ov{B} \| \leq \frac{1}{2} \| \ov{B} \|$ and $\frac{3}{2}\| \ov{B} \|\geq \|\ov{M}\|\geq \frac{1}{2}\| \ov{B} \|$.

{\bf The third term in Eq.~\eqref{eq:whB_minus_ovB}}. We can bound $\|\wh{M} - \ov{M}\|$ by Matrix Bernstein's inequality \cite{t11}. 

We define $Z_i = M_i - \ov{M}$. Thus we have $ \underset{B_i\sim {\cal B}}{\mathbb{E}} [Z_i] = 0$, $\|Z_i\| \leq 2m$,  and 
\begin{align*}
 \left\| \underset{B_i\sim {\cal B}}{\mathbb{E}} [ Z_iZ_i^\top ] \right\| = \left\| \underset{B_i\sim {\cal B}}{\mathbb{E}} [ M_iM_i^\top] - \ov{M}\cdot \ov{M}^\top \right\| \leq \nu + \| \ov{M} \|^2 \leq  \nu + 3\| \ov{B} \|^2 .
\end{align*}
  Similarly, we have $\left\|\underset{B_i\sim {\cal B}}{\mathbb{E}} [Z_i^\top Z_i] \right\| \leq   \nu+3\| \ov{ B} \|^2$. 
Using matrix Bernstein's inequality, for any $\epsilon>0$, 
\begin{align*}
& \underset{B_1, \cdots, B_n \sim {\cal B} }{\Pr} \left[ \frac{1}{n} \left\| \sum_{i=1}^{n}Z_i \right\|\geq \epsilon \| \ov{B} \| \right] 
\leq  d \exp\left(-\frac{\epsilon^2 \|\ov{B}\|^2 n /2}{ \nu+3\|\ov{B}\|^2 + 2m \| \ov{B}\| \epsilon /3} \right) .
\end{align*}
%Setting $\epsilon \|B\| \leq \frac{3\|B\|^2 + \nu}{ 2m} $, and 
By choosing
\begin{align*}%\label{sample_complexity_raw}
n \geq ( 3t \log d ) \cdot  \frac{\nu+3\|\ov{B} \|^2+2m \| \ov{B} \| \epsilon /3}{ \epsilon^2 \| \ov{B}\|^2 /2 } ,
\end{align*}
for $t\geq 1$, we have with probability at least $1-1/d^{2t}$,
\begin{align*}%\label{eq:M_bounded}
\left\| \frac{1}{n}\sum_{i=1}^n M_i - \ov{M} \right\| \leq \frac{\epsilon}{2} \|\ov{B}\| 
\end{align*}

%So combining Eq~\eqref{eq:M_bounded} with previous results in Step 1, $\wh{B}=  \wh{M}$, and Step 2, Eq~\eqref{eq:M_B_diff}, 
Putting it all together, we have for $0<\epsilon <1$, if
\begin{equation*}
n \geq ( 18 t \log d)  \cdot ( \nu+ \|\ov{B}\|^2+ m \|\ov{B}\| \epsilon ) / (\epsilon^2 \|\ov{B}\|^2)  \quad \text{~and~} \quad \gamma \leq (\epsilon \| \ov{B}\| / (2L))^2
\end{equation*} 
with probability  at least $1-1/d^{2t}-n\gamma$,
\begin{equation*}
\left\| \frac{1}{n} \sum_{i=1}^n B_i - \underset{B \sim {\cal B} }{\mathbb{E}}[B] \right\| \leq \epsilon \left\| \underset{B \sim {\cal B} }{\mathbb{E}}[B] \right\|.
\end{equation*}

\end{proof}

\begin{corollary}[Error bound for symmetric rank-one random matrices]% with non-zero mean]
\label{cor:modified_bernstein_tail_xx}
Let $x_1, x_2, \cdots x_n$ denote $n$ i.i.d. samples drawn from Gaussian distribution ${\cal N}(0,I_d)$. Let $h(x) : \mathbb{R}^d \rightarrow \mathbb{R}$ be a function satisfying the following properties $\mathrm{(\RN{1})}$, $\mathrm{(\RN{2})}$ and $\mathrm{(\RN{3})}$. 
%\Zhao{
%(\RN{1}) For any constant $t \geq 1$, there exist some $R_h(t)>0$ and $K_h(t)>0$ depending on $t$ such that given any $i \in [n]$ with probability $1-n^{-1}d^{-t}$,  $|h(x_i)| \leq R_h(t) \log^{K_h(t)}(n)$; 
%}
 \begin{align*}
\mathrm{(\RN{1})} ~ & ~ \underset{x\sim {\cal N}(0,I_d)}{\Pr} \left[ |h(x)| \leq m \right] \geq 1- \gamma\\
\mathrm{(\RN{2})} ~ & ~\left\| \underset{x \sim {\cal N}(0,I_d)}{\mathbb{E}} [ h(x) x x^\top ]  \right\| > 0; \\
\mathrm{(\RN{3})} ~ & ~ \left( \underset{x \sim {\cal N}(0,I_d)}{\mathbb{E}} [h^4(x)] \right)^{1/4} \leq L .
\end{align*}

Define function $B(x) = h(x) x x^\top \in \mathbb{R}^{d\times d}$, $\forall i\in[n]$. Let $ \ov{B} = \underset{x \sim {\cal N}(0,I_d)}{\mathbb{E}} [ h(x) x x^\top ]$.
For any $0<\epsilon <1$ and $t\geq 1$, if 
\begin{align*}
n \gtrsim ( t\log d) \cdot (  L^2 d + \| \ov{B}\|^2  +  (m t d \log n ) \| \ov{B}\| \epsilon) / ( \epsilon^2 \|\ov{B}\|^2 ),  \text{~and~} \gamma + 1/(nd^{2t}) \lesssim (\epsilon \| \ov{B}\| / L )^2
\end{align*} 
then
\begin{align*}
\underset{x_1,\cdots, x_n \sim {\cal N}(0,I_d) }{\Pr} \left[ \left\| \ov{B} - \frac{1}{n} \sum_{i=1}^n B(x_i) \right\| \leq \epsilon \|\ov{B}\| \right] \geq 1- 2 / (d^{2t}) - n\gamma.
 \end{align*}
 \end{corollary}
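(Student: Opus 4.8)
The plan is to derive Corollary~\ref{cor:modified_bernstein_tail_xx} as a direct instantiation of Lemma~\ref{lem:modified_bernstein_non_zero} applied to the distribution $\mathcal{B}$ of the random matrix $B(x) = h(x)\, x x^\top$, where $x \sim \mathcal{N}(0,I_d)$. The main work is to translate the four hypotheses (I)--(IV) of the lemma into the three hypotheses (I)--(III) of the corollary, using the concentration facts for Gaussian vectors from the preliminaries (Fact~\ref{fac:inner_prod_bound} and Fact~\ref{fac:gaussian_norm_bound}).

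First I would verify property (I) of the lemma. We need a high-probability bound on $\|B(x)\| = |h(x)| \cdot \|x\|^2$. By hypothesis (I) of the corollary, $|h(x)| \le m$ with probability $1-\gamma$, and by Fact~\ref{fac:gaussian_norm_bound}, $\|x\|^2 \le 5Cd\log n$ with probability $1 - 1/(nd^C)$; taking $C = 2t$ and a union bound gives $\|B(x)\| \le m \cdot O(td\log n) =: m'$ with probability $1 - \gamma - 1/(nd^{2t}) =: 1 - \gamma'$. So the lemma's parameter ``$m$'' becomes $m' \eqsim mtd\log n$ and its ``$\gamma$'' becomes $\gamma'$. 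Property (II) is immediate: it is exactly hypothesis (II) of the corollary. For property (III), note $BB^\top = B^\top B = h^2(x)\|x\|^2 \, xx^\top$, so $\|\mathbb{E}[BB^\top]\| = \|\mathbb{E}[h^2(x)\|x\|^2 xx^\top]\|$; by symmetry this matrix is a multiple of $I_d$ along $x$-directions, and a direct computation bounding it in terms of $\mathbb{E}[h^4(x)]^{1/2}$ (Cauchy--Schwarz) and Gaussian moments of $\|x\|^2 \langle x, a\rangle^2$ yields $\|\mathbb{E}[BB^\top]\| \lesssim L^2 d$. For property (IV), $\mathbb{E}[(a^\top B b)^2] = \mathbb{E}[h^2(x)(a^\top x)^2(b^\top x)^2] \le \mathbb{E}[h^4(x)]^{1/2} \cdot \mathbb{E}[(a^\top x)^4(b^\top x)^4]^{1/2} \lesssim L^2$ by Cauchy--Schwarz and Fact~\ref{fac:exp_gaussian_dot_three_vectors} (or a direct fourth-moment computation), so the lemma's ``$L$'' becomes $\lesssim L$.

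Finally I would plug these translated parameters $\nu \lesssim L^2 d$, $m' \lesssim mtd\log n$, $L_{\text{lem}} \lesssim L$, $\gamma' = \gamma + 1/(nd^{2t})$ into the conclusion of Lemma~\ref{lem:modified_bernstein_non_zero}. The sample-size condition $n \geq (18t\log d)(\nu + \|\ov B\|^2 + m'\|\ov B\|\epsilon)/(\epsilon^2\|\ov B\|^2)$ becomes exactly $n \gtrsim (t\log d)(L^2 d + \|\ov B\|^2 + (mtd\log n)\|\ov B\|\epsilon)/(\epsilon^2\|\ov B\|^2)$, and the small-failure condition $\gamma' \lesssim (\epsilon\|\ov B\|/L)^2$ becomes $\gamma + 1/(nd^{2t}) \lesssim (\epsilon\|\ov B\|/L)^2$, matching the corollary statement; the failure probability $1 - 1/d^{2t} - n\gamma'$ is absorbed into $1 - 2/d^{2t} - n\gamma$ (using $n \cdot 1/(nd^{2t}) = 1/d^{2t}$).

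The main obstacle is the careful bookkeeping in step two: properly tracking how the $\log n$ and $td$ factors from the Gaussian norm tail propagate into $m'$, and making sure the spectral-norm bounds $\|\mathbb{E}[h^2(x)\|x\|^2 xx^\top]\| \lesssim L^2 d$ and $\mathbb{E}[(a^\top B b)^2]^{1/2} \lesssim L$ are correctly obtained via Cauchy--Schwarz in a way that isolates $\mathbb{E}[h^4(x)]^{1/4} \le L$ and leaves only pure Gaussian polynomial moments, which contribute the remaining constants and the dimension factor $d$. Everything else is a mechanical substitution into the already-proven unbounded matrix Bernstein lemma.
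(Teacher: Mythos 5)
Your proposal is correct and follows essentially the same route as the paper's proof: both instantiate Lemma~\ref{lem:modified_bernstein_non_zero} for $B(x)=h(x)xx^\top$, verify property (I) via Fact~\ref{fac:gaussian_norm_bound} to get the truncation level $m\cdot O(td\log n)$ and failure probability $\gamma+1/(nd^{2t})$, and bound properties (III) and (IV) by H\"older/Cauchy--Schwarz to isolate $(\E[h^4(x)])^{1/4}\leq L$ against pure Gaussian moments, yielding $\nu\lesssim L^2 d$ and $L_{\mathrm{lem}}\lesssim L$. The final substitution and bookkeeping of the sample-size and $\gamma$ conditions match the paper exactly.
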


\begin{proof}
We show that the four Properties in Lemma~\ref{lem:modified_bernstein_non_zero} are satisfied. Define function $B(x) = h(x) x x^\top$.

(\RN{1}) $\|B(x)\| = \|h(x)x x^\top\| = |h(x)| \|x\|^2$. 

By using Fact~\ref{fac:gaussian_norm_bound}, we have
\begin{align*}
\underset{x \sim {\cal N}(0,I_d)}{\Pr}[ \|x\|^2 \leq 10 t d \log n ] \geq 1 -1/(nd^{2t})
\end{align*} 
Therefore,
\begin{align*}
\underset{x \sim {\cal N}(0,I_d)}{\Pr} [ \|B(x) \| \leq m \cdot 10 t d \log(n) ] \geq 1-\gamma - 1/(nd^{2t}).
\end{align*}

(\RN{2}) $ \left\| \underset{B \sim {\cal B}}{\E} [ B ] \right\| =  \left\| \underset{x \sim {\cal N}(0,I_d)  }{\E} [ h(x) x x^\top ] \right\|  > 0$. 

(\RN{3})
 \begin{align*}
 & ~ \max \left( \left\| \underset{B \sim {\cal B} }{\E} [ B B^\top ] \right\|, \left\| \underset{B \sim {\cal B} }{\E} [ B^\top B ] \right\| \right) \\
= & ~ \max_{\| a\|=1} \underset{x \sim {\cal N}(0,I_d) }{\E} [(h(x))^2 \|x\|^2 ( a^\top x)^2] \\
\leq & ~ \left( \underset{x \sim {\cal N}(0,I_d) }{\E} [(h(x))^4] \right)^{1/2} \cdot \left(  \underset{x \sim {\cal N}(0,I_d) }{\E} [ \|x\|^8] \right)^{1/4} \cdot \max_{\| a\|=1} \left( \underset{x \sim {\cal N}(0,I_d) }{\E} [ ( a^\top x)^8] \right)^{1/4} \\
\lesssim & ~ L^2 d .
\end{align*}

(\RN{4})
\begin{align*}
 & ~\max_{\| a\|=\| b\|=1} \left(\underset{B \sim {\cal B} }{\E}  [( a^\top B  b)^2] \right)^{1/2}  \\
 = & ~ \max_{\| a\|=1} \left( \underset{x\sim {\cal N}(0,I_d)}{\E}[ h^2(x) ( a^\top x)^4] \right)^{1/2} \\
\leq & ~ \left( \underset{x\sim {\cal N}(0,I_d)}{\E}[ h^4(x)] \right)^{1/4} \cdot \max_{\| a\|=1} \left( \underset{x\sim {\cal N}(0,I_d)}{\E}[ ( a^\top x)^8] \right)^{1/4} \\
\lesssim & ~ L.
\end{align*}

Applying Lemma~\ref{lem:modified_bernstein_non_zero}, we obtain,
for any $0<\epsilon <1$ and $t\geq 1$, if 

\begin{align*}
n \gtrsim ( t\log d) \cdot (  L^2 d + \| \ov{B}\|^2  +  (m t d \log n ) \| \ov{B}\| \epsilon ) / ( \epsilon^2 \|\ov{B}\|^2 )  , \text{~and~} \gamma + 1/(nd^{2t}) \lesssim (\epsilon \| \ov{B}\| / L )^2
\end{align*}
then
\begin{align*}
\underset{x_1,\cdots, x_n \sim {\cal N}(0,I_d) }{\Pr} \left[ \left\| \ov{B} - \frac{1}{n} \sum_{i=1}^n B(x_i) \right\| \leq \epsilon \|\ov{B}\| \right] \geq 1- 2 / (d^{2t}) - n\gamma.
 \end{align*}

\end{proof}

%\section{About Property~\ref{pro:gradient},~\ref{pro:expect},~\ref{pro:hessian}}

%\section{Proof of Proposition~\ref{proposition:act_example}}
\section{Properties of Activation Functions}\label{app:proof_prop1}
\restate{proposition:act_example}
\begin{proof}
We can easily verify that \ReLU~, leaky \ReLU~and squared \ReLU~satisfy Property~\ref{pro:expect} by calculating $\rho(\sigma)$ in Property~\ref{pro:expect}, which is shown in Table~\ref{table:pro2}. Property~\ref{pro:gradient} for  \ReLU~, leaky \ReLU~and squared \ReLU~can be verified since they are non-decreasing with bounded first derivative. \ReLU~and leaky \ReLU~are piece-wise linear, so they satisfy Property~\ref{pro:hessian}(b). Squared \ReLU~is smooth so it satisfies Property~\ref{pro:hessian}(a). 
\begin{table}[H]
\centering
\begin{tabular}{|c|c|c|c|c|c|c|c|} \hline
Activations & \ReLU & Leaky \ReLU & squared \ReLU & erf & \shortstack{ sigmoid\\ ($\sigma=0.1$)} & \shortstack{ sigmoid \\ ($\sigma=1$)} & \shortstack{ sigmoid \\ ($\sigma=10$)}\\ \hline
$\alpha_0(\sigma)$ &  $\frac{1}{2}$ &  $\frac{1.01}{2} $& $\sigma \sqrt{ \frac{2}{\pi}}$ &  $\frac{1}{(2\sigma^2+1)^{1/2}}$ & 0.99 & 0.605706 & 0.079\\
$\alpha_1(\sigma)$ &  $\frac{1}{\sqrt{2\pi}}$ & $\frac{0.99}{\sqrt{2\pi}}$ & $\sigma$ & 0 & 0 & 0 & 0 \\
$\alpha_2(\sigma)$ &  $\frac{1}{2}$ &  $\frac{1.01}{2}$ & $2\sigma \sqrt{\frac{2}{\pi}}$ &  $\frac{1}{(2\sigma^2+1)^{3/2}}$ & 0.97 & 0.24 & 0.00065 \\
$\beta_0(\sigma)$ &   $\frac{1}{2}$ &  $\frac{1.0001}{2}$ & $2\sigma$ &  $\frac{1}{(4\sigma^2+1)^{1/2}}$ & 0.98	& 0.46 & 0.053\\
$\beta_2(\sigma)$ &   $\frac{1}{2}$ &  $\frac{1.0001}{2}$ & $6\sigma$ &  $\frac{1}{(4\sigma^2+1)^{3/2}}$ & 0.94 & 0.11 & 0.00017  \\ \hline
$\rho(\sigma)$ & 0.091 & 0.089	& 0.27$\sigma$ & $\rho_{\text{erf}}(\sigma)$ $^1$ & 1.8E-4 & 4.9E-2 & 5.1E-5 \\ \hline
\end{tabular}
\caption{$\rho(\sigma)$ values for different activation functions. Note that we can calculate the exact values for \ReLU, Leaky \ReLU, squared \ReLU~and erf. We can't find a closed-form value for sigmoid or tanh, but we calculate the numerical values of $\rho(\sigma)$ for $\sigma=0.1,1,10$. $^1$ $\rho_{\text{erf}}(\sigma) = \min\{(4\sigma^2+1)^{-1/2} - (2\sigma^2+1)^{-1}, (4\sigma^2+1)^{-3/2} - (2\sigma^2+1)^{-3}, (2\sigma^2+1)^{-2}\}$}
\label{table:pro2}
\end{table}
Smooth non-decreasing activations with bounded first derivatives automatically satisfy Property~\ref{pro:gradient} and ~\ref{pro:hessian}. For Property~\ref{pro:expect}, since their first derivatives are symmetric, we have $\E[\phi'(\sigma\cdot z)z] = 0$. Then by H\"{o}lder's inequality and $\phi'(z)\geq 0$, we have 
\begin{align*}
&\E_{z\sim \D_1}[\phi'^2(\sigma\cdot z)] \geq \left( \E_{z\sim \D_1}[ \phi'(\sigma\cdot z) ] \right)^2, \\
& \E_{z\sim \D_1 } [ \phi'^2(\sigma\cdot z)z^2] \cdot \E_{z\sim \D_1}[z^2] \geq \left(\E_{z\sim \D_1}[\phi'(\sigma\cdot z)z^2] \right)^2 , \\
&\E_{z\sim \D_1}[ \phi'(\sigma\cdot z)z^2] \cdot \E_{z\sim \D_1}[ \phi'(\sigma\cdot z)] = \E_{z\sim \D_1}[(\sqrt{\phi'(\sigma\cdot z)}z)^2]\cdot \E_{z\sim \D_1}[(\sqrt{\phi'(\sigma\cdot z)})^2] \geq \left(\E_{z\sim \D_1}[\phi'(\sigma\cdot z)z] \right)^2.
\end{align*}
The equality in the first inequality happens when $\phi'(\sigma\cdot z)$ is a constant a.e.. The equality in the second inequality happens when  $|\phi'(\sigma\cdot z)| $ is a constant a.e., which is invalidated by the non-linearity and smoothness condition. The equality in the third inequality holds only when $\phi'(z)=0$ a.e., which leads to a constant function under non-decreasing condition. Therefore, $\rho(\sigma) > 0$ for any smooth non-decreasing non-linear activations with bounded symmetric first derivatives. The statements about linear activations and quadratic activation follow direct calculations. 
\end{proof}

\section{Local Positive Definiteness of Hessian}\label{app:local_pd}

\subsection{Main Results for Positive Definiteness of Hessian}\label{app:convex_main_result}

\subsubsection{Bounding the Spectrum of the Hessian near the Ground Truth}
\begin{theorem}[Bounding the spectrum of the Hessian near the ground truth]\label{thm:lsc_nn}
%Assume $\{ x_j,y_j\}_{j=1,2,\cdots,n}$ obeys Model~\eqref{eq:model}. Let $v_i=v_i^*$ and $ w_i$ is close enough to $ w_i^*$, $\forall i\in [k]$, i.e., $\|W - W^*\| \leq \Theta(\frac{1}{k^2 \kappa^6 \hat \kappa}\sigma_k(W^*))$, where $\kappa = \frac{\sigma_1(W^*)}{\sigma_k(W^*)}$ and $\hat \kappa = \frac{\Pi_{i=1}^k \sigma_i(W^*)}{ (\sigma_{k}(W^*))^{k} }$. Let $W$ be independent of the sample set and $\phi(z) = \max\{0,z\}$. Then when the number of samples, $n\geq \Otilde(P^2d k^4 \kappa^6 {\hat \kappa}^2)$ for any $P\geq 1$, w.p. $1-O(d^{-P})$ the Hessian of $\widehat f(W) $ at $W$ satisfies
%$$ \Theta(\frac{1}{k\kappa \hat \kappa}) I\preceq \nabla^2 \widehat{f}(W) \preceq \Theta(k\kappa^2) I$$ 
%%Given a full-column-rank matrix $W^*\in \mathbb{R}^{d\times k}$, let $\sigma_i$ denote the $i$-th largest singular value of $W^*$. Let $\kappa = \sigma_1/ \sigma_k$, $\lambda=  (\prod_{i=1}^k \sigma_i )/ \sigma_{k}^{k} $. Let $v_{\max}$ denote $\max_{i\in[k]} |v_i^*|$. 
For any $W\in \mathbb{R}^{d\times k}$ with $\|W - W^*\| \lesssim
v_{\min}^4 \rho^2(\sigma_k) / ( k^2\kappa^5 {\lambda}^2 v_{\max}^4 \sigma_1^{4p} ) \cdot \| W^*\|$, let $S$ denote a set of i.i.d. samples from distribution ${\cal D}$ (defined in~(\ref{eq:model})) and let the activation function satisfy Property~\ref{pro:gradient},\ref{pro:expect},\ref{pro:hessian}. Then for any $t\geq 1$, if $|S| \geq d \cdot \poly(\log d,t) \cdot k^2v_{\max}^4 \tau \kappa^8 \lambda^2 \sigma_1^{4p}/(v_{\min}^4 \rho^2(\sigma_k))$, we have with probability at least $1-d^{-\Omega(t)}$,
\begin{align*}
 \Omega( v_{\min}^2 \rho(\sigma_k) / (\kappa^2 \lambda ) ) I\preceq \nabla^2 \widehat{f}_S(W) \preceq O(kv_{\max}^2\sigma_1^{2p}) I.
\end{align*}
\end{theorem}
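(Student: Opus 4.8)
The plan is to establish the two-sided bound in three stages connected by triangle inequalities on the operator norm: first analyze the population Hessian $\nabla^2 f_{\cal D}(W^*)$ at the ground truth, then pass to the population Hessian $\nabla^2 f_{\cal D}(W)$ at the nearby point $W$, and finally to the empirical Hessian $\nabla^2\widehat f_S(W)$. The upper bound $O(kv_{\max}^2\sigma_1^{2p})I$ is the easy half: since $0\le\phi'(z)\le L_1|z|^p$ (Property~\ref{pro:gradient}), each block $v_j^*v_l^*\,\E[\phi'(w_j^{\top}x)\phi'(w_l^{\top}x)xx^\top]$ has operator norm $\lesssim v_{\max}^2\sigma_1^{2p}$ by Gaussian moment bounds (Fact~\ref{fac:exp_gaussian_dot_three_vectors}), a crude sum over the $k$ block rows gives the $k v_{\max}^2\sigma_1^{2p}$ factor, and the residual-times-$\phi''$ contribution to the diagonal is negligible near $W^*$; the empirical version follows from the Bernstein step below.

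\textbf{Stage 1: the population Hessian at $W^*$.} Because the residual vanishes at $W^*$, every block (diagonal and off-diagonal alike) equals $v_j^*v_l^*\,\E[\phi'(w_j^{*\top}x)\phi'(w_l^{*\top}x)xx^\top]$, so $\nabla^2 f_{\cal D}(W^*)\succeq 0$ is automatic and the job is a lower bound. I would write, for any $a=(a_1,\dots,a_k)$ with $a_j\in\mathbb R^d$,
\[
a^\top\nabla^2 f_{\cal D}(W^*)\,a=\E\Big[\Big(\textstyle\sum_{j=1}^k v_j^*\,\phi'(w_j^{*\top}x)\,(a_j^\top x)\Big)^{2}\Big],
\]
and aim to bound this below by $\Omega\!\big(v_{\min}^2\rho(\sigma_k)/(\kappa^2\lambda)\big)\sum_j\|a_j\|^2$. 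Splitting $\mathbb R^d=\mathrm{span}(W^*)\oplus\mathrm{span}(W^*)^\perp$ and writing $a_j=c_j\bar w_j^*+b_j$ with $b_j\perp\bar w_j^*$, the components of the $a_j$'s lying in $\mathrm{span}(W^*)^\perp$ are independent of every factor $\phi'(w_j^{*\top}x)$, so they contribute a clean $\min_\sigma\beta_0(\sigma)$-type term; the components in $\mathrm{span}(W^*)$ form an honest $k$-dimensional problem, where I would pass to coordinates given by $\bar W^*$, using $\sigma_k(\bar W^*)\ge 1/\kappa$ (Fact~\ref{fac:sigma_k_Wbar}) and the determinant-type quantity $\lambda$ to keep track of conditioning, and reduce the remaining scalar computation over the correlated Gaussians $(\bar w_1^{*\top}x,\dots,\bar w_k^{*\top}x)$ to positive-definiteness of small matrices whose entries are exactly the $\alpha_q(\sigma),\beta_q(\sigma)$ of Property~\ref{pro:expect}. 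The quantity $\rho(\sigma)=\min\{\beta_0-\alpha_0^2-\alpha_1^2,\ \beta_2-\alpha_1^2-\alpha_2^2,\ \alpha_0\alpha_2-\alpha_1^2\}$ is precisely the statement that these $2\times2$/$3\times3$ blocks are positive definite, which is what keeps the whole quadratic form strictly positive.

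\textbf{Stage 2: perturbation to $W$.} Here I would bound $\|\nabla^2 f_{\cal D}(W)-\nabla^2 f_{\cal D}(W^*)\|$. There are two sources of change. First, the $\phi'(w_j^{\top}x)\phi'(w_l^{\top}x)$ products move because $w_j$ moves; using Property~\ref{pro:gradient} together with the almost-everywhere bound on $\phi''$ (or piecewise linearity in the non-smooth case), each block changes by at most $O(\poly(\sigma_1^{p})\cdot\|W-W^*\|/\sigma_k)$ after taking the Gaussian expectation. Second, the $\phi''$-residual term, absent at $W^*$, is now nonzero, but the residual is $O(v_{\max}k\sigma_1^{p}\|W-W^*\|)$-small in $L^2$, so under Property~\ref{pro:hessian}(a) this contributes a term proportional to $L_2\|W-W^*\|$, while under Property~\ref{pro:hessian}(b) the term is identically zero almost surely because $x$ is continuous and $\phi''$ is supported on finitely many points. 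Choosing $\|W-W^*\|\lesssim v_{\min}^4\rho^2(\sigma_k)/(k^2\kappa^5\lambda^2 v_{\max}^4\sigma_1^{4p})\cdot\|W^*\|$ makes the total perturbation at most half the Stage-1 lower bound. Finally, Stage~3 bounds $\|\nabla^2\widehat f_S(W)-\nabla^2 f_{\cal D}(W)\|$ block by block via the unbounded matrix Bernstein inequality (Lemma~\ref{lem:modified_bernstein_non_zero} and Corollary~\ref{cor:modified_bernstein_tail_xx}): each block is an average of rank-one matrices $h(x)xx^\top$ with $h$ a product of derivatives of $\phi$ at linear forms, so the ``bounded with high probability'' hypothesis comes from Facts~\ref{fac:inner_prod_bound} and~\ref{fac:gaussian_norm_bound} and the fourth-moment hypothesis from Property~\ref{pro:gradient}; this yields the stated requirement $|S|\ge d\cdot\poly(\log d,t)\cdot k^2 v_{\max}^4\tau\kappa^8\lambda^2\sigma_1^{4p}/(v_{\min}^4\rho^2(\sigma_k))$, with $\tau$ from Definition~\ref{def:W_tau} encoding exactly the $\sigma_1^{4p}/\rho^2$ ratio that appears per block. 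Summing the three error estimates gives the theorem.

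\textbf{Main obstacle.} The crux is Stage~1: extracting the explicit $\rho(\sigma_k)/(\kappa^2\lambda)$ lower bound. The difficulty is that $\bar w_1^*,\dots,\bar w_k^*$ are only linearly independent, not orthogonal, so the Gaussians $\bar w_j^{*\top}x$ are correlated; disentangling them while keeping the dependence on $\kappa$ and $\lambda$ polynomial (rather than exponential in $k$) is delicate, as is pinning down the precise scalar inequalities that turn ``$\phi$ is highly non-linear'' into ``$\rho(\sigma)>0$''. The non-smooth subcase in Stage~2 needs only the short observation that $\phi''(w_j^{\top}x)=0$ almost surely, and is otherwise mechanical.
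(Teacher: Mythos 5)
Your overall architecture --- bound the spectrum of $\nabla^2 f_{\cal D}(W^*)$, then control $\|\nabla^2\widehat f_S(W)-\nabla^2 f_{\cal D}(W^*)\|$ by a population-perturbation step plus a truncated matrix-Bernstein step --- is exactly the paper's, and your Stage~1 (splitting $a_j$ along $\mathrm{span}(W^*)$ and its complement, changing variables to decorrelate the Gaussians $\bar w_j^{*\top}x$ at the cost of a $1/(\kappa^2\lambda)$ factor, and reducing to the positive definiteness certified by $\rho(\sigma)$) matches Lemmas~\ref{lem:lower_bound_pd_ground} and~\ref{lem:lower_bound_ortho}. The smooth case of Stages~2--3 likewise matches Lemmas~\ref{lem:smooth_pop_local} and~\ref{lem:emp_pop}.

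The gap is the non-smooth case (Property~\ref{pro:hessian}(b)), which you dismiss as ``mechanical.'' Two concrete problems. First, $\nabla^2 f_{\cal D}(W)$ is the wrong pivot when $\phi''$ is distributional: the empirical Hessian's diagonal blocks almost surely lack the residual-times-$\phi''$ term, while the population Hessian retains a nonzero distributional contribution from it, so the paper compares both $\nabla^2\widehat f_S(W)$ and $\nabla^2 f_{\cal D}(W^*)$ to a surrogate matrix $H$ with blocks $v_i^*v_l^*\,\E[\phi'(w_i^\top x)\phi'(w_l^\top x)xx^\top]$, which is \emph{not} $\nabla^2 f_{\cal D}(W)$. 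Second, and more seriously, your claimed bound $O(\mathrm{poly}(\sigma_1^p)\|W-W^*\|/\sigma_k)$ on the change of the $\phi'\phi'$ blocks fails for ReLU-type activations: $|\phi'(w_i^\top x)-\phi'(w_i^{*\top}x)|$ is not pointwise small --- it is $\Theta(1)$ whenever $w_i^\top x$ and $w_i^{*\top}x$ straddle a kink of $\phi'$ --- so ``piecewise linearity'' buys nothing. The paper instead bounds the probability of the straddling event by $O(\|W-W^*\|/\sigma_k)$ and applies Cauchy--Schwarz against the unbounded weight $|\phi'|\,(x^\top a)^2$, which yields only the square-root bound $k v_{\max}^2(e\|W-W^*\|/\sigma_k)^{1/2}\sigma_1^{2p}$ of Lemma~\ref{lem:emp_pop_nonsmooth}; this square root is precisely why the admissible radius in the theorem scales with $\rho^2(\sigma_k)$ and $\lambda^{-2}$ rather than $\rho$ and $\lambda^{-1}$. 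Without this probabilistic kink-crossing argument your Stage~2 does not close in the non-smooth case.
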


\begin{proof}
The main idea of the proof follows the following inequalities,
\begin{align*}
 \nabla^2 f_{\cal D}(W^*) - \|\nabla^2 \widehat{f}_S(W) - \nabla^2 f_{\cal D}(W^*)\|I \preceq \nabla^2 \widehat{f}_S(W) \preceq ~ \nabla^2 f_{\cal D}(W^*) + \|\nabla^2 \widehat{f}_S(W) - \nabla^2 f_{\cal D}(W^*)\|I 
%&- \|\nabla^2 f(W) - \nabla^2 \widehat{f}(W)\| I 
\end{align*}
The proof sketch is first to bound the range of the eigenvalues of $\nabla^2 f_{\cal D}(W^*)$ (Lemma~\ref{lem:pd_ground}) and then bound the spectral norm of the remaining error, $\|\nabla^2 \widehat{f}_S(W) - \nabla^2 f_{\cal D}(W^*)\|$.
$\|\nabla^2 \widehat{f}_S(W) - \nabla^2 f_{\cal D}(W^*)\|$ can be further decomposed into two parts, $\|\nabla^2 \widehat{f}_S(W) - H\|$ and $\|H - \nabla^2 f_{\cal D}(W^*)\|$, where $H$ is $\nabla^2 f_{\cal D}(W)$ if $\phi$ is smooth, otherwise $H$ is a specially designed matrix . We can upper bound them when $W$ is close enough to $W^*$ and there are enough samples. In particular, if the activation satisfies Property~\ref{pro:hessian}(a), see Lemma~\ref{lem:smooth_pop_local} for bounding $ \|H - \nabla^2 f_{\cal D}(W^*)\|$ and Lemma~\ref{lem:emp_pop} for bounding $ \|H -
\nabla^2 \widehat{f}_S(W)\|$. If the activation satisfies Property~\ref{pro:hessian}(b), see Lemma~\ref{lem:emp_pop_nonsmooth}. 

Finally we can complete the proof by setting $\delta = O(v_{\min}^2 \rho(\sigma_1) / ( kv_{\max}^2\kappa^2 \lambda \sigma_1^{2p} ) )$
in Lemma~\ref{lem:emp_pop} and Lemma~\ref{lem:emp_pop_nonsmooth}, setting $\|W-W^*\|\lesssim v_{\min}^2 \rho(\sigma_k)/ ( k\kappa^2\lambda v_{\max}^2 \sigma_1^p)$ in Lemma~\ref{lem:smooth_pop_local} and setting $\|W-W^*\|\leq v_{\min}^4 \rho^2(\sigma_k) \sigma_k/ ( k^2\kappa^4 {\lambda}^2 v_{\max}^4 \sigma_1^{4p} )$ in Lemma~\ref{lem:emp_pop_nonsmooth}. 
\end{proof}

\subsubsection{Local Linear Convergence of Gradient Descent}%{Proof of Proposition~\ref{thm:lc_gd}}

Although Theorem~\ref{thm:lsc_nn} gives upper and lower bounds for the spectrum of the Hessian w.h.p., it only holds when the current set of parameters $W$ are independent of samples. When we use iterative methods, like gradient descent, to optimize this objective, the next iterate calculated from the current set of samples will depend on this set of samples. Therefore, we need to do resampling at each iteration. Here we show that for activations that satisfies Properties~\ref{pro:gradient},~\ref{pro:expect} and \ref{pro:hessian}(a), linear convergence of gradient descent is guaranteed. To the best of our knowledge, there is no linear convergence guarantees for general non-smooth objective. So the following proposition also applies to smooth objectives only, which excludes \ReLU. 

\begin{theorem}[Linear convergence of gradient descent, formal version of Theorem~\ref{thm:lc_gd_informal}]\label{thm:lc_gd}
Let $W^c\in \R^{d\times k}$ be the current iterate satisfying 
\begin{align*}
\|W^c - W^*\| \lesssim v_{\min}^4 \rho^2(\sigma_k) / ( k^2\kappa^5 {\lambda}^2 v_{\max}^4 \sigma_1^{4p} ) \| W^*\|.
\end{align*}
 Let $S$ denote a set of i.i.d. samples from distribution ${\cal D}$ (defined in~(\ref{eq:model})) Let the activation function satisfy Property~\ref{pro:gradient},\ref{pro:expect} and \ref{pro:hessian}(a). Define 
\begin{align*}
m_0 = \Theta( v_{\min}^2 \rho(\sigma_k)/ (\kappa^2 \lambda) ) \text{~and~} M_0= \Theta ( kv_{\max}^2 \sigma_1^{2p} ).
\end{align*}

 For any $t\geq 1$, if we choose 
\begin{align}\label{eq:lc_gd_S_bound}
 |S| \geq d\cdot \poly(\log d,t) \cdot k^2v_{\max}^4 \tau \kappa^8 \lambda^2 \sigma_1^{4p}/(v_{\min}^4 \rho^2(\sigma_k))
\end{align}
and
 perform gradient descent with step size $1/M_0$ on $\widehat{f}_S(W^c)$ and obtain the next iterate,
\begin{align*}
\wt{W} = W^c - \frac{1}{M_0} \nabla \widehat{f}_S(W^c),
\end{align*}
then with probability at least $1-d^{-\Omega(t)}$,
\begin{align*}
\| \wt{W} - W^*\|_F^2  \leq  (1- \frac{m_0}{M_0} ) \| W^c-W^*\|_F^2.
\end{align*}
\end{theorem}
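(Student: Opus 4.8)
The plan is to reduce the one-step contraction to the spectral bounds on the empirical Hessian supplied by Theorem~\ref{thm:lsc_nn}, using the exact-recovery structure of the noiseless model. The key first observation is that, because the data is noiseless, $y=\sum_{i=1}^k v_i^*\phi(w_i^{*\top}x)$ holds for every sample, so the residual $\sum_i v_i^*\phi(w_i^{*\top}x)-y$ vanishes identically at $W=W^*$; hence $\nabla\widehat f_S(W^*)=0$, i.e.\ $W^*$ is a stationary point of $\widehat f_S$ for \emph{every} sample set $S$. This lets me express the gradient at the current iterate as an average of Hessians along the segment $[W^*,W^c]$: setting $v:=\mathrm{vec}(W^c-W^*)$ and
$$
H:=\int_0^1 \nabla^2\widehat f_S\big(W^*+\gamma(W^c-W^*)\big)\,d\gamma\ \in\ \mathbb{R}^{dk\times dk},
$$
the fundamental theorem of calculus (valid since $\widehat f_S$ is smooth on the neighborhood under Property~\ref{pro:hessian}(a)) gives $\mathrm{vec}(\nabla\widehat f_S(W^c))=\mathrm{vec}\big(\nabla\widehat f_S(W^c)-\nabla\widehat f_S(W^*)\big)=Hv$.

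Next I would establish that $m_0 I\preceq H\preceq M_0 I$. Every point $W_\gamma:=W^*+\gamma(W^c-W^*)$ on the segment satisfies $\|W_\gamma-W^*\|\le\|W^c-W^*\|\lesssim v_{\min}^4\rho^2(\sigma_k)/(k^2\kappa^5\lambda^2 v_{\max}^4\sigma_1^{4p})\cdot\|W^*\|$, so it lies in the region where Theorem~\ref{thm:lsc_nn} applies. Since $W^c$ is independent of the resampled set $S$, and with $|S|$ as in~\eqref{eq:lc_gd_S_bound}, Theorem~\ref{thm:lsc_nn} yields $\Omega(v_{\min}^2\rho(\sigma_k)/(\kappa^2\lambda))\,I\preceq\nabla^2\widehat f_S(W_\gamma)\preceq O(kv_{\max}^2\sigma_1^{2p})\,I$ on an event of probability $\ge 1-d^{-\Omega(t)}$; by definition these bounds are exactly $m_0 I$ and $M_0 I$. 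Averaging (integrating) the Hessian over $\gamma\in[0,1]$ preserves the two-sided bound, so $m_0 I\preceq H\preceq M_0 I$ on the same event.

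Finally I would carry out the contraction algebra. Vectorizing $\wt W-W^*=W^c-W^*-\tfrac1{M_0}\nabla\widehat f_S(W^c)$ and substituting $\mathrm{vec}(\nabla\widehat f_S(W^c))=Hv$ gives
$$
\|\wt W-W^*\|_F^2=\left\|\Big(I-\tfrac1{M_0}H\Big)v\right\|_2^2=v^\top\Big(I-\tfrac1{M_0}H\Big)^2 v .
$$
Since $H$ is symmetric with spectrum contained in $[m_0,M_0]$, the symmetric matrix $I-\tfrac1{M_0}H$ has spectrum in $[0,\,1-m_0/M_0]$, so $\big(I-\tfrac1{M_0}H\big)^2\preceq(1-m_0/M_0)^2 I\preceq(1-m_0/M_0)I$; hence $\|\wt W-W^*\|_F^2\le(1-m_0/M_0)\|W^c-W^*\|_F^2$, which is the claim (in fact with the slightly better factor $(1-m_0/M_0)^2$).

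\textbf{Main obstacle.} The delicate point is the assertion $m_0 I\preceq H\preceq M_0 I$: Theorem~\ref{thm:lsc_nn} is phrased for a single fixed $W$, whereas $H$ averages Hessians over the whole continuum $\{W_\gamma:\gamma\in[0,1]\}$. Making this rigorous requires either noting that the proof of Theorem~\ref{thm:lsc_nn} in fact controls $\sup_{W}\|\nabla^2\widehat f_S(W)-\nabla^2 f_{\mathcal D}(W^*)\|$ uniformly over the neighborhood $\{W:\|W-W^*\|\lesssim\cdots\}$, or running a standard $\epsilon$-net argument over the segment and absorbing the union bound into the $d^{-\Omega(t)}$ failure probability by enlarging $t$ by a constant. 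A secondary, routine point is justifying the fundamental-theorem step (interchange of $\mathrm{vec}$, differentiation, and integration), which is immediate since under Property~\ref{pro:hessian}(a) the map $W\mapsto\nabla\widehat f_S(W)$ is locally Lipschitz with an a.e.-defined Hessian.
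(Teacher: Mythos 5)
Your overall route is the same as the paper's: represent $\nabla\widehat f_S(W^c)$ via the fundamental theorem of calculus as $H\,\mathrm{vec}(W^c-W^*)$ with $H$ the averaged Hessian along the segment, establish $m_0 I\preceq H\preceq M_0 I$, and finish with the standard contraction algebra (your version, giving $(1-m_0/M_0)^2$, is fine and even slightly sharper than the paper's, which bounds $\|\nabla\widehat f_S(W^c)\|_F^2\le M_0\langle v,Hv\rangle$ and gets $(1-m_0/M_0)$).

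The genuine gap is exactly the step you flag as the ``main obstacle,'' and neither of your two proposed fixes closes it as written. The first is false: the proof of Theorem~\ref{thm:lsc_nn} is a pointwise concentration statement for a single sample-independent $W$; it does not control $\sup_{W}\|\nabla^2\widehat f_S(W)-\nabla^2 f_{\cal D}(W^*)\|$ over the neighborhood. The second (discretize the segment and union bound) is the right idea and is what the paper does, but it is not routine: to pass from a finite net to all points of the segment you need a quantitative modulus of continuity for the \emph{empirical} Hessian, and this modulus is dimension-dependent. Concretely, the paper's Lemma~\ref{lem:pd_near_anchors} shows that for an anchor $W^a$ and any nearby $W$, $\|\nabla^2\widehat f_S(W)-\nabla^2\widehat f_S(W^a)\|\lesssim k v_{\max}^2\sigma_1^p\bigl(\|W^a-W^*\|+\|W-W^a\|\,d^{(p+1)/2}\bigr)$; the $d^{(p+1)/2}$ factor arises because the empirical averages involve terms like $\tfrac{1}{|S|}\sum_{x}\|x\|^{p+1}xx^\top$, whose norm concentrates around $d^{(p+1)/2}$ rather than $O(1)$. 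This forces a net of $d^{(p+1)/2}$ anchor points (so the union bound costs a factor $d^{(p+1)/2}$, absorbed by replacing $t$ with $t+(p+1)/2$), and proving the continuity bound itself requires fresh matrix-concentration estimates (Claims~\ref{cla:SW_SWa_bound_Delta_ii_11}--\ref{cla:decomp_delta_il}). Without supplying that Lipschitz-type estimate, your argument only controls $H$ at finitely many (or at each individual) $\gamma$, which does not bound the integral $H$ from below. Everything else in your proposal is correct.
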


\begin{proof}

To prove Theorem~\ref{thm:lc_gd}, we need to show the positive definite properties on the entire line between the current iterate and the optimum by constructing a set of anchor points, which are independent of the samples. Then we apply traditional analysis for the linear convergence of gradient descent. 

In particular, given a current iterate $W^c$, we set $d^{(p+1)/2}$ anchor points $\{W^a\}_{a=1,2,\cdots,d^{(p+1)/2}}$ equally along the line $\xi W^* + (1-\xi)W^c$ for $\xi\in[0,1]$. 

According to Theorem~\ref{thm:lsc_nn}, by setting $t \leftarrow  t+(p+1)/2$, we have with probability at least $1-d^{-(t+(p+1)/2)}$ for each anchor point $\{W^a\}$,
\begin{align*}
m_0 I\preceq \nabla^2 \widehat{f}_S(W^a) \preceq M_0 I.
\end{align*}

Then given an anchor point $W^a$, according to Lemma~\ref{lem:pd_near_anchors},
we have with probability $1-2d^{-(t+(p+1)/2)}$, for any points $W$ between $(W^{a-1}+W^{a})/2$ and $(W^a+W^{a+1})/2$,
\begin{equation}\label{eq:pd_all_points}
m_0 I\preceq \nabla^2 \widehat{f}_S(W) \preceq M_0 I.
\end{equation}

Finally by applying union bound over these $d^{(p+1)/2}$ small intervals,  we have with probability at least $1-d^{-t}$ for any points $W$ on the line between $W^c$ and $W^*$, 
\begin{align*}
m_0 I\preceq \nabla^2 \widehat{f}_S(W) \preceq M_0 I.
\end{align*}

Now we can apply traditional analysis for linear convergence of gradient descent. 

Let $\eta$ denote the stepsize.
\begin{align*}
& ~\| \wt{W} - W^*\|_F^2 \\
= &~ \|W^c  - \eta \nabla \widehat{f}_S(W^c) - W^* \|_F^2 \\
= &~ \| W^c - W^*\|_F^2 - 2\eta \langle \nabla \widehat{f}_S(W^c), (W^c-W^*) \rangle + \eta^2 \|\nabla \widehat{f}_S(W^c)\|_F^2 
\end{align*}
We can rewrite $\wh{f}_S(W^c)$,
\begin{align*}
\nabla \widehat{f}_S(W^c) =  \left( \int_{0}^1 \nabla^2 \widehat{f}_S( W^* + \gamma (W^c-W^*) ) d\gamma \right) \text{vec}(W^c - W^*).
\end{align*}
We define function $\wh{H}_S : \R^{d \times k} \rightarrow \R^{dk\times dk}$ such that
\begin{align*}
\wh{H}_S(W^c-W^*) = \left( \int_{0}^1 \nabla^2 \widehat{f}_S( W^* + \gamma (W^c-W^*) ) d\gamma \right).
\end{align*}
According to Eq.~\eqref{eq:pd_all_points},
\begin{equation}\label{eq:smooth_sc_line}
m_0 I \preceq \wh{H} \preceq M_0 I.
\end{equation} 
We can upper bound $\| \nabla \wh{f}_S(W^c) \|_F^2$,
\begin{align*}
\|\nabla \widehat{f}_S(W^c)\|_F^2 =  \langle \wh{H}_S (W^c-W^*), \wh{H}_S (W^c-W^*)\rangle 
\leq  M_0 \langle W^c-W^*, \wh{H}_S(W^c-W^*) \rangle .
\end{align*}

Therefore, 
\begin{align*}
& ~\|\wt{W} - W^*\|_F^2  \\
\leq & ~ \| W^c - W^*\|_F^2 - (-\eta^2 M_0 + 2\eta)\langle W^c-W^*, \wh{H} (W^c-W^*) \rangle\\
 \leq  & ~ \| W^c - W^*\|_F^2 - (-\eta^2 M_0 + 2\eta)m_0 \| W^c-W^*\|_F^2 \\
  =  & ~ \| W^c - W^*\|_F^2 - \frac{m_0}{M_0} \| W^c-W^*\|_F^2 \\
    \leq  & ~ (1- \frac{m_0}{M_0} ) \| W^c-W^*\|_F^2 
\end{align*}
where the third equality holds by setting $\eta = \frac{1}{M_0}$. 
\end{proof}

\subsection{Positive Definiteness of Population Hessian at the Ground Truth}\label{proof:lemma:pd_ground}

The goal of this Section is to prove Lemma~\ref{lem:pd_ground}.

\begin{lemma}[Positive definiteness of population Hessian at the ground truth]\label{lem:pd_ground}
If $\phi(z)$ satisfies Property~\ref{pro:gradient},\ref{pro:expect} and \ref{pro:hessian}, we have the following property for the second derivative of function $f_{\cal D}(W)$ at $W^*$,
\begin{align*}
 \Omega(v_{\min}^2 \rho(\sigma_k) / ( \kappa^2 \lambda ) ) I \preceq \nabla^2 f_{\cal D}(W^*) \preceq O(kv_{\max}^2 \sigma_1^{2p}) I.
\end{align*}
\end{lemma}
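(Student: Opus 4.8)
The plan is to pass to the quadratic form of $\nabla^2 f_{\cal D}(W^*)$. Since at $W=W^*$ the residual $\sum_i v_i^*\phi(w_i^{*\top}x)-y$ is identically zero, the excerpt already records that \emph{every} block (diagonal and off-diagonal) equals $\E_x[v_j^*v_l^*\phi'(w_j^{*\top}x)\phi'(w_l^{*\top}x)xx^\top]$, so for any $A=[a_1\ \cdots\ a_k]\in\R^{d\times k}$,
\[
\mathrm{vec}(A)^\top\nabla^2 f_{\cal D}(W^*)\,\mathrm{vec}(A)=\E_{x\sim{\cal N}(0,I)}\Big[\Big(\sum_{j=1}^k v_j^*\,\phi'(w_j^{*\top}x)\,(a_j^\top x)\Big)^2\Big],
\]
and it suffices to bound this over all $A$ with $\|A\|_F=1$. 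The upper bound $O(kv_{\max}^2\sigma_1^{2p})$ is routine: apply Cauchy--Schwarz in $j$ to pull out a factor $k$, bound $\phi'(z)^2\le L_1^2|z|^{2p}$ by Property~\ref{pro:gradient}, estimate $\E[|w_j^{*\top}x|^{2p}|a_j^\top x|^2]\lesssim\|w_j^*\|^{2p}\|a_j\|^2\le\sigma_1^{2p}\|a_j\|^2$ by Fact~\ref{fac:exp_gaussian_dot_three_vectors}, and sum over $j$ using $|v_j^*|\le v_{\max}$ and $\sum_j\|a_j\|^2=1$.

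For the lower bound I would first reduce to the subspace $\mathcal S=\mathrm{span}(w_1^*,\dots,w_k^*)$, which is $k$-dimensional since $W^*$ has full column rank. Fix an orthonormal basis $U\in\R^{d\times k}$ of $\mathcal S$, write $x=Uy+x_\perp$ with $y\sim{\cal N}(0,I_k)$ independent of $x_\perp$, and set $\tilde W=U^\top W^*$, which has the same singular values as $W^*$. Decompose each column as $a_j=Ub_j+a_j^\perp$; then $\phi'(w_j^{*\top}x)=\phi'(\tilde w_j^\top y)$ depends on $x$ only through $y$, and $a_j^{\perp\top}x_\perp$ is mean-zero given $y$, so the cross term vanishes and the quadratic form splits as
\[
\E_y\Big[\Big(\sum_j v_j^*\phi'(\tilde w_j^\top y)(b_j^\top y)\Big)^2\Big]\;+\;\sum_{j,l}v_j^*v_l^*\,\E_y[\phi'(\tilde w_j^\top y)\phi'(\tilde w_l^\top y)]\,\langle a_j^\perp,a_l^\perp\rangle,
\]
with $\sum_j\|b_j\|^2+\sum_j\|a_j^\perp\|^2=1$. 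The second sum is $\mathrm{tr}(MG)$ where $M_{jl}=v_j^*v_l^*\E[\phi'(\tilde w_j^\top y)\phi'(\tilde w_l^\top y)]$ is PSD (it is $\E[uu^\top]$ for $u_j=v_j^*\phi'(\tilde w_j^\top y)$) and $G_{jl}=\langle a_j^\perp,a_l^\perp\rangle$ is a Gram matrix, so $\mathrm{tr}(MG)\ge\lambda_{\min}(M)\sum_j\|a_j^\perp\|^2$; and the first term is exactly the Hessian quadratic form of the reduced $k$-dimensional 1NN at $\tilde W$. Thus the whole lower bound reduces to showing that $\lambda_{\min}(M)$ and the reduced-dimension Hessian are both $\gtrsim v_{\min}^2\rho(\sigma_k)/(\kappa^2\lambda)$.

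That non-degeneracy estimate is where the work lies and is the step I expect to be the main obstacle. One must show $\min_{\|c\|=1}\E_y\big[\big(\sum_j c_j v_j^*\phi'(\|w_j^*\|\,\bar{\tilde w}_j^\top y)\big)^2\big]$ stays bounded away from zero: the only way it can collapse is if the functions $y\mapsto\phi'(\|w_j^*\|\,\bar{\tilde w}_j^\top y)$ are driven into a linear dependence, and Property~\ref{pro:expect} is precisely the quantitative guarantee that this cannot occur --- the three expressions whose minimum is $\rho$ are the Schur complements/determinants of the $2\times2$ blocks $\big(\begin{smallmatrix}\beta_0&\alpha_1\\\alpha_1&\beta_2\end{smallmatrix}\big)$ and $\big(\begin{smallmatrix}\alpha_0&\alpha_1\\\alpha_1&\alpha_2\end{smallmatrix}\big)$ obtained after conditioning on $\bar{\tilde w}_j^\top y$, so $\rho>0$ keeps each neuron's contribution nondegenerate (in particular $\beta_0(\sigma),\beta_2(\sigma)\ge\rho(\sigma)$ and $\alpha_0(\sigma)\alpha_2(\sigma)-\alpha_1^2(\sigma)\ge\rho(\sigma)$). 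Passing this per-neuron nondegeneracy through the geometry of the normalized directions $\bar w_i^*$ --- expanding each $\phi'(\|w_j^*\|\,\bar{\tilde w}_j^\top y)$ along and across the span of the remaining directions and accounting for the worst-case cancellation --- is what produces the loss by the two condition numbers, via Fact~\ref{fac:sigma_k_Wbar} ($\sigma_k(\bar W^*)\ge1/\kappa$) and the volume identity $\prod_i\sigma_i(W^*)=\lambda\sigma_k^k$; and since the relevant norms $\|w_i^*\|$ all lie in $[\sigma_k,\sigma_1]$, the activation constant reduces to $\rho$ on that range, abbreviated $\rho(\sigma_k)$. Everything else --- the upper bound, the splitting, and the Gaussian moment computations --- is routine.
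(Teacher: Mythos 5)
Your setup is the same as the paper's: the quadratic form at $W^*$, the routine upper bound via Cauchy--Schwarz, Property~\ref{pro:gradient} and Fact~\ref{fac:exp_gaussian_dot_three_vectors}, and the orthogonal splitting $a_j=Ub_j+a_j^{\perp}$ with the cross term vanishing by independence --- all of this matches Lemmas~\ref{lem:upper_bound_pd_ground} and~\ref{lem:lower_bound_pd_ground}, and your reduction of the perpendicular contribution to $\lambda_{\min}(M)$ via $\operatorname{tr}(MG)\ge\lambda_{\min}(M)\operatorname{tr}(G)$ is a clean (and correct) repackaging of the paper's term $B$.

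However, there is a genuine gap: the step you yourself flag as ``where the work lies'' is the entire content of the lower bound, and you do not carry it out. You must actually produce the constant $\Omega(v_{\min}^2\rho(\sigma_k)/(\kappa^2\lambda))$ from Property~\ref{pro:expect}, and your proposed route (``expanding each $\phi'$ along and across the span of the remaining directions and accounting for the worst-case cancellation'') is not an argument --- it is not clear it terminates, nor that it yields $1/(\kappa^2\lambda)$ rather than a worse dependence. The paper's mechanism has two concrete ingredients you would need. First, a change of variables $z=V^{\dagger\top}s$ (with $V=U^\top W^*$) followed by the rescaling $s=u/\sigma_1(V^{\dagger})$: this makes the $k$ arguments of $\phi'$ \emph{independent} coordinates $\sigma_k u_i$, and the Jacobian $|\det V^{\dagger}|$ combined with the density bound $e^{-\|V^{\dagger\top}s\|^2/2}\ge e^{-\sigma_1^2(V^{\dagger})\|s\|^2/2}$ produces exactly the factor $\det(V^{\dagger})/\sigma_1^k(V^{\dagger})=1/\lambda$, while the transformed coefficients $p_i$ satisfy $\|p\|\ge\|b\|/\kappa$, giving the $1/\kappa^2$. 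Second, for the resulting decoupled problem one needs the explicit computation of Lemma~\ref{lem:lower_bound_ortho}: expand $\E_{u}\bigl[(\sum_i p_i^\top u\,\phi'(u_i))^2\bigr]$ over all Gaussian moments and reorganize $A+B_1+B_2+B_3$ into five terms $C_1,\dots,C_5$, three of which are manifestly nonnegative squares and two of which equal $(\beta_0-\alpha_0^2-\alpha_1^2)\|P-\diag(\diag(P))\|_F^2$ and $(\beta_2-\alpha_1^2-\alpha_2^2)\|\diag(P)\|^2$, whence the bound $\rho\|P\|_F^2$. Neither ingredient appears in your proposal (and the same whitening is also needed to bound your $\lambda_{\min}(M)$, which does not follow from $\rho>0$ alone without it), so the lower bound is asserted rather than proved.
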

\begin{proof}
The proof directly follows Lemma~\ref{lem:lower_bound_pd_ground} (Section~\ref{app:lower_bound_hessian_non_ortho}) and Lemma~\ref{lem:upper_bound_pd_ground}(Section~\ref{app:upper_bound_hessian_non_ortho}).
\end{proof}
\subsubsection{Lower Bound on the Eigenvalues of Hessian for the Orthogonal Case}

\begin{lemma}\label{lem:lower_bound_ortho}
Let ${\cal D}_1$ denote Gaussian distribution ${\cal N}(0,1)$. Let $ \alpha_0 = \E_{z\sim {\cal D}_1} [\phi'(z)] $, $\alpha_1 = \E_{z \sim {\cal D}_1 } [ \phi'(z)z]$, $\alpha_2 =\E_{z \sim {\cal D}_1 } [\phi'(z)z^2]$,
$ \beta_0 = \E_{z\sim {\cal D}_1} [ \phi'^2(z) ]$ ,$ \beta_2 = \E_{z \sim {\cal D}_1 } [ \phi'^2(z)z^2 ]$. Let $\rho$ denote $ \min\{(\beta_0 - \alpha_0^2-\alpha_1^2), (\beta_2 - \alpha_1^2  - \alpha_2^2) \}$. Let $P = \begin{bmatrix} p_1 & p_2 & \cdots & p_k \end{bmatrix} \in \mathbb{R}^{k\times k}$.
Then we have,
\begin{equation}\label{eq:ortho_lower_bound}
\underset{ u \sim {\cal D}_k }{\E}  \left[ \left(\sum_{i=1}^k p_i^\top u \cdot \phi'(u_i) \right)^2 \right] \geq \rho \|P\|_F^2
\end{equation}
\end{lemma}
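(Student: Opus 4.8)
The plan is to expand the quadratic form $\E_{u\sim\mathcal{D}_k}[(\sum_i p_i^\top u\,\phi'(u_i))^2]$ into a sum over pairs $(i,j)$ and use independence of the coordinates $u_1,\dots,u_k$ together with the explicit Gaussian moments. Writing $p_i^\top u = \sum_{\ell} p_{i\ell} u_\ell$, the product becomes
\[
\E\Big[\sum_{i,j}\sum_{\ell,m} p_{i\ell}p_{jm}\, u_\ell u_m\,\phi'(u_i)\phi'(u_j)\Big],
\]
and I would split according to how the index set $\{i,j,\ell,m\}$ overlaps. Because $u$ has i.i.d. $\mathcal{N}(0,1)$ coordinates, only the terms where every coordinate appears an even number of times (or is matched against $\phi'$ of the same coordinate) survive. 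The surviving contributions are governed exactly by the constants $\alpha_0=\E[\phi'(z)]$, $\alpha_1=\E[\phi'(z)z]$, $\alpha_2=\E[\phi'(z)z^2]$, $\beta_0=\E[\phi'^2(z)]$, $\beta_2=\E[\phi'^2(z)z^2]$, since e.g. $\E[u_i^2\phi'(u_i)^2]=\beta_2$, $\E[u_i\phi'(u_i)]\E[u_j\phi'(u_j)]=\alpha_1^2$, etc.

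Carrying out the bookkeeping, I expect the left-hand side of \eqref{eq:ortho_lower_bound} to reduce to an expression of the form
\[
\sum_{i} \Big( \beta_2 \|p_i\|^2 + (\beta_0-\beta_2) p_{ii}^2 \Big) + \sum_{i\neq j}\Big( \alpha_1^2(\cdots) + \alpha_0\alpha_2(\cdots) + \cdots \Big),
\]
i.e. a diagonal "dominant" part plus cross terms in $p_{ii},p_{ij},p_{ji}$ that can be organized, for each unordered pair $\{i,j\}$, into a small quadratic form in the three scalars $(p_{ij}, p_{ji}, \text{diagonal pieces})$. The strategy is then to show this quadratic form is lower bounded by $\rho$ times the corresponding piece of $\|P\|_F^2 = \sum_{i,\ell} p_{i\ell}^2$. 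Concretely, the "off-diagonal, off-index" entries $p_{i\ell}$ with $i\neq \ell$ and $\ell\neq$ the other summation index contribute $\beta_2 p_{i\ell}^2 \geq (\beta_2-\alpha_1^2-\alpha_2^2)p_{i\ell}^2 \geq \rho\, p_{i\ell}^2$ since $\alpha_1^2+\alpha_2^2\geq 0$; the genuinely coupled entries (the $2\times 2$ or $3\times 3$ blocks involving $p_{ii}, p_{ij}, p_{ji}$) need the two conditions $\beta_0-\alpha_0^2-\alpha_1^2\geq\rho$ and $\alpha_0\alpha_2-\alpha_1^2\geq 0$ (this last is why Property~\ref{pro:expect} includes that term) to certify positive semidefiniteness of the block minus $\rho I$.

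The main obstacle I anticipate is the pair-block analysis: once the expansion is done, one must verify that for each pair $\{i,j\}$ the relevant symmetric matrix (with entries built from $\beta_0,\beta_2,\alpha_0,\alpha_1,\alpha_2$) minus $\rho$ times the identity is positive semidefinite, using precisely the three quantities whose minimum defines $\rho$ in Property~\ref{pro:expect}. This is a finite linear-algebra check but it is delicate because the cross terms mix the diagonal entries $p_{ii},p_{jj}$ with the off-diagonal $p_{ij},p_{ji}$, so the decomposition of $\|P\|_F^2$ into per-pair pieces must be done carefully (e.g. splitting each $p_{ii}^2$ evenly among the pairs it participates in, or isolating a pure diagonal contribution first). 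I would handle it by first peeling off a clean $\beta_2\|P\|_F^2$-type term from all entries, then showing the remaining (possibly negative) cross terms are dominated after subtracting, so that what is left is exactly a sum of manifestly PSD $2\times 2$ or $3\times 3$ forms, each bounded below using one of the three defining quantities of $\rho$. Summing over all pairs and all "free" entries then yields $\rho\|P\|_F^2$.
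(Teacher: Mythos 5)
Your first step (expanding the square and using independence of the coordinates to reduce everything to $\alpha_0,\alpha_1,\alpha_2,\beta_0,\beta_2$) matches the paper, but the organizational claim at the heart of your plan — that the surviving cross terms can be grouped, pair by pair, into small quadratic forms in $(p_{ij},p_{ji},p_{ii},p_{jj})$ — is not what the expansion actually produces. After the bookkeeping, the off-diagonal ($i\neq l$) contribution is
$2(\alpha_0\alpha_2-\alpha_0^2)\sum_{i\neq l}P_{ii}P_{il}+\alpha_1^2\sum_{i\neq l}\bigl(P_{ii}P_{ll}+P_{il}P_{li}\bigr)+\alpha_0^2\sum_{i\neq l}p_i^\top p_l$,
and the last sum equals $\alpha_0^2(\|P\mathbf{1}\|^2-\|P\|_F^2)$ with $p_i^\top p_l=\sum_m P_{mi}P_{ml}$: it couples \emph{every} entry of $P$ with every other entry in the same row, and the first sum couples each diagonal entry with its entire row. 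These couplings cannot be confined to $2\times2$ or $3\times3$ blocks indexed by pairs $\{i,j\}$, so the per-pair PSD certificate you propose does not exist in that form. Two smaller errors point the same way: off-diagonal entries $P_{mi}$ ($m\neq i$) receive the coefficient $\beta_0$ from the diagonal term (only $P_{ii}$ gets $\beta_2$), so the roles of $\beta_0-\alpha_0^2-\alpha_1^2$ and $\beta_2-\alpha_1^2-\alpha_2^2$ are the reverse of what you state; and the quantity $\alpha_0\alpha_2-\alpha_1^2$ that you anticipate needing is deliberately absent from this lemma's $\rho$ — the bound must be proved without assuming its sign.

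The missing idea is a \emph{global} sum-of-squares identity rather than a blockwise one. The paper verifies that the full quadratic form equals
\begin{align*}
\bigl\|\alpha_0 P\mathbf{1}+(\alpha_2-\alpha_0)\diag(P)\bigr\|^2+\alpha_1^2\bigl(\diag(P)^\top\mathbf{1}\bigr)^2+\tfrac{\alpha_1^2}{2}\bigl\|P+P^\top-2\diag(\diag(P))\bigr\|_F^2 & \\
{}+(\beta_0-\alpha_0^2-\alpha_1^2)\bigl\|P-\diag(\diag(P))\bigr\|_F^2+(\beta_2-\alpha_1^2-\alpha_2^2)\bigl\|\diag(P)\bigr\|^2, &
\end{align*}
where the first square is precisely the completion that absorbs the row-sum coupling $\alpha_0^2\|P\mathbf{1}\|^2$ together with the $2(\alpha_0\alpha_2-\alpha_0^2)\diag(P)^\top P\mathbf{1}$ cross term (no sign condition on $\alpha_0\alpha_2-\alpha_1^2$ required), and the second and third squares absorb the diagonal-diagonal and transpose couplings. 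Dropping the three manifestly nonnegative squares leaves exactly the last two terms, which give $\rho\|P\|_F^2$. Without identifying this (or an equivalent global) completion of squares, the negative cross terms spread across whole rows and the whole diagonal cannot be dominated, so as written your argument has a gap at its main step.
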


\begin{proof}
The main idea is to explicitly calculate the LHS of Eq~\eqref{eq:ortho_lower_bound}, then reformulate the equation and find a lower bound represented by $\alpha_0,\alpha_1,\alpha_2,\beta_0,\beta_2$.

\begin{align*}
& ~ \underset{ u\sim {\cal D}_k }{\mathbb{E}} \left[ \left(\sum_{i=1}^k p_i^\top u \cdot \phi'(u_i) \right)^2 \right]  \\
= & ~ \sum_{i=1}^k \sum_{l=1}^k \underset{ u\sim {\cal D}_k }{\mathbb{E}} [ p_i^\top ( \phi'( u_l) \phi'( u_i) \cdot u u^\top) p_l ]  \\
= & ~ \underbrace{ \sum_{i=1}^k  \underset{ u\sim {\cal D}_k }{\mathbb{E}} [ p_i^\top ( \phi'( u_i)^2 \cdot u u^\top) p_i ] }_{A} + \underbrace{ \sum_{i \neq l} \underset{ u\sim {\cal D}_k }{\mathbb{E}} [ p_i^\top ( \phi'( u_l) \phi'( u_i) \cdot u u^\top) p_l ] }_{B}
\end{align*}
Further, we can rewrite the diagonal term in the following way,
\begin{align*}
A = & ~\sum_{i=1}^k  \underset{ u\sim {\cal D}_k }{\mathbb{E}} [ p_i^\top ( \phi'( u_i)^2 \cdot u u^\top) p_i ] \\
 = & ~ \sum_{i=1}^k  \underset{ u\sim {\cal D}_k }{\mathbb{E}} \left[ p_i^\top \left( \phi'( u_i)^2 \cdot \left( u_i^2 e_i e_i^\top +  \sum_{j\neq i} u_i u_j ( e_i e_j^\top  + e_je_i^\top) + \sum_{j\neq i} \sum_{l \neq i} u_j u_l e_j e_l^\top  \right) \right) p_i \right] \\
 = & ~ \sum_{i=1}^k  \underset{ u\sim {\cal D}_k }{\mathbb{E}} \left[ p_i^\top \left( \phi'( u_i)^2 \cdot \left( u_i^2 e_i e_i^\top + \sum_{j\neq i}  u_j^2 e_j e_j^\top  \right) \right) p_i \right] \\
 = & ~ \sum_{i=1}^k   \left[ p_i^\top \left( \underset{ u\sim {\cal D}_k }{\mathbb{E}} [\phi'( u_i)^2  u_i^2  ] e_i e_i^\top + \sum_{j\neq i} \underset{ u\sim {\cal D}_k }{\mathbb{E}} [\phi'( u_i)^2    u_j^2  ] e_j e_j^\top \right) p_i \right] \\
 = & ~ \sum_{i=1}^k   \left[ p_i^\top \left( \beta_2 e_i e_i^\top + \sum_{j\neq i} \beta_0  e_j e_j^\top \right) p_i \right] \\
 = & ~ \sum_{i=1}^k p_i^\top ( (\beta_2-\beta_0) e_i e_i^\top + \beta_0 I_k ) p_i \\
 = & ~ (\beta_2-\beta_0) \sum_{i=1}^k p_i^\top e_i e_i^\top p_i + \beta_0 \sum_{i=1}^k p_i^\top p_i \\
 = & ~ (\beta_2-\beta_0) \| \diag(P) \|^2 + \beta_0 \| P \|_F^2,
\end{align*}
where the second step follows by rewriting $uu^\top = \overset{k}{\underset{i=1}{\sum}} \overset{k}{\underset{j=1}{\sum}} u_i u_j e_i e_j^\top$, the third step follows by \\$\underset{ u\sim {\cal D}_k }{\mathbb{E}} [ \phi'(u_i)^2  u_i  u_j ] = 0$, $\forall j\neq i$ and $\underset{ u\sim {\cal D}_k }{\mathbb{E}} [ \phi'(u_i)^2 u_j u_l ] = 0$, $\forall j\neq l$, the fourth step follows by pushing expectation, the fifth step follows by $\underset{ u\sim {\cal D}_k }{\mathbb{E}} [\phi'( u_i)^2  u_i^2  ]=\beta_2$ and $\underset{ u\sim {\cal D}_k }{\mathbb{E}} [\phi'( u_i)^2    u_j^2  ] =\underset{ u\sim {\cal D}_k }{\mathbb{E}} [\phi'( u_i)^2] =\beta_0 $, and the last step follows by $\overset{k}{\underset{i=1}{\sum}} p_{i,i}^2 = \| \diag(P) \|^2$ and $\overset{k}{\underset{i=1}{\sum}} p_i^\top p_i = \overset{k}{\underset{i=1}{\sum}} \| p_i \|^2 = \| P \|_F^2$.

We can rewrite the off-diagonal term in the following way,
\begin{align*}
B = & ~ \sum_{i \neq l} \underset{ u\sim {\cal D}_k }{\mathbb{E}} [ p_i^\top ( \phi'( u_l) \phi'( u_i) \cdot u u^\top) p_l ] \\
= & ~ \sum_{i \neq l} \underset{ u\sim {\cal D}_k }{\mathbb{E}} \left[ p_i^\top \left( \phi'( u_l) \phi'( u_i) \cdot \left( u_i^2 e_i e_i^\top + u_l^2 e_l e_l^\top + u_i u_l (e_i e_l^\top + e_l e_i^\top ) + \sum_{j\neq l} u_i u_j e_i e_j^\top \right.\right.\right. \\
+ & ~ \left.\left.\left. \sum_{j\neq i} u_j u_l e_j e_l^\top + \sum_{j \neq i,l} \sum_{j' \neq i,l} u_j u_{j'} e_j e_{j'}^\top  \right) \right) p_l \right] \\
= & ~ \sum_{i \neq l} \underset{ u\sim {\cal D}_k }{\mathbb{E}} \left[ p_i^\top \left( \phi'( u_l) \phi'( u_i) \cdot \left( u_i^2 e_i e_i^\top + u_l^2 e_l e_l^\top+  u_i u_l (e_i e_l^\top + e_l e_i^\top ) + \sum_{j \neq i,l}  u_j^2 e_j e_{j}^\top  \right) \right) p_l \right] \\
= & ~ \sum_{i \neq l}  \left[ p_i^\top \left( \underset{ u\sim {\cal D}_k }{\mathbb{E}}[ \phi'( u_l) \phi'( u_i)  u_i^2] e_i e_i^\top + \underset{ u\sim {\cal D}_k }{\mathbb{E}}[ \phi'( u_l) \phi'( u_i)  u_l^2] e_l e_l^\top \right.\right. \\
+ & ~ \left.\left. \underset{ u\sim {\cal D}_k }{\mathbb{E}}[ \phi'( u_l) \phi'( u_i)  u_i u_l] (e_i e_l^\top + e_l e_i^\top ) + \sum_{j \neq i,l}  \underset{ u\sim {\cal D}_k }{\mathbb{E}}[ \phi'( u_l) \phi'( u_i) u_j^2] e_j e_{j}^\top   \right) p_l \right] \\
= & ~ \sum_{i \neq l}  \left[ p_i^\top \left( \alpha_0 \alpha_2 (e_i e_i^\top + e_l e_l^\top) + \alpha_1^2 (e_i e_l^\top + e_l e_i^\top ) + \sum_{j \neq i,l}  \alpha_0^2 e_j e_{j}^\top   \right) p_l \right] \\
= & ~ \sum_{i \neq l}  \left[ p_i^\top \left( (\alpha_0 \alpha_2 -\alpha_0^2) (e_i e_i^\top + e_l e_l^\top) + \alpha_1^2 (e_i e_l^\top + e_l e_i^\top ) +  \alpha_0^2 I_k   \right) p_l \right] \\
= & ~ \underbrace{ (\alpha_0 \alpha_2 - \alpha_0^2) \sum_{i\neq l} p_i^\top (e_ie_i^\top + e_l e_l^\top) p_l }_{B_1} + \underbrace{ \alpha_1^2 \sum_{i\neq l} p_i^\top (e_i e_l^\top + e_l e_i^\top) p_l }_{B_2} + \underbrace{ \alpha_0^2 \sum_{i\neq l} p_i^\top p_l }_{B_3},
\end{align*}

where the third step follows by $\underset{u\sim {\cal D}_k }{\E}[ \phi'(u_l) \phi'(u_i) u_i u_j] =0$ and $\underset{u\sim {\cal D}_k }{\E}[ \phi'(u_l) \phi'(u_i) u_{j'} u_j] =0$ for $j'\neq j$.

For the term $B_1$, we have
\begin{align*}
B_1 & = ~ (\alpha_0 \alpha_2 - \alpha_0^2) \sum_{i\neq l} p_i^\top (e_ie_i^\top + e_l e_l^\top) p_l \\
& = ~ 2 (\alpha_0 \alpha_2 - \alpha_0^2) \sum_{i\neq l} p_i^\top e_ie_i^\top  p_l \\
& = ~ 2 (\alpha_0 \alpha_2 - \alpha_0^2) \sum_{i=1}^k p_i^\top e_ie_i^\top  \left( \sum_{l=1}^k p_l - p_i \right) \\
& = ~ 2 (\alpha_0 \alpha_2 - \alpha_0^2) \left( \sum_{i=1}^k p_i^\top e_ie_i^\top \sum_{l=1}^k p_l -   \sum_{i=1}^k p_i^\top e_ie_i^\top  p_i \right) \\
& = ~ 2 (\alpha_0 \alpha_2 - \alpha_0^2) ( \diag(P)^\top \cdot P \cdot {\bf 1} - \| \diag(P) \|^2 )
\end{align*}

For the term $B_2$, we have
\begin{align*}
B_2 = & ~ \alpha_1^2 \sum_{i\neq l} p_i^\top (e_i e_l^\top + e_l e_i^\top) p_l \\
= & ~ \alpha_1^2 \left( \sum_{i\neq l} p_i^\top e_i e_l^\top p_l +\sum_{i\neq l} p_i^\top e_l e_i^\top p_l \right) \\
= & ~ \alpha_1^2 \left( \sum_{i=1}^k \sum_{l=1}^k p_i^\top e_i e_l^\top p_l - \sum_{j=1}^k p_j^\top e_j e_j^\top p_j + \sum_{i=1}^k \sum_{l=1}^k p_i^\top e_l e_i^\top p_l - \sum_{j=1}^k p_j^\top e_j e_j^\top p_j \right) \\
= & ~ \alpha_1^2 ( (\diag(P)^\top {\bf 1} )^2 - \| \diag(P) \|^2 +\langle P, P^\top \rangle - \| \diag(P) \|^2 )
\end{align*}

For the term $B_3$, we have
\begin{align*}
B_3 = & ~ \alpha_0^2 \sum_{i\neq l} p_i^\top p_l \\
= & ~ \alpha_0^2 \left( \sum_{i=1}^k p_i^\top \sum_{l=1}^k p_l - \sum_{i=1}^k p_i^\top p_i \right)\\
= & ~ \alpha_0^2 \left( \left\| \sum_{i=1}^k p_i \right\|^2 - \sum_{i=1}^k \| p_i \|^2 \right) \\
= & ~ \alpha_0^2 ( \| P \cdot {\bf 1 } \|^2 - \| P \|_F^2 )
\end{align*}

%\begin{align*}
%= & ~ \sum_{i=1}^k [ p_i^\top ( \beta_0 I +(\beta_2-\beta_0) e_i e_i^\top ) p_i ]  \\
%+ & ~ \sum_{i\neq l} \left[ p_i^\top ( \alpha_0^2 I +(\alpha_2\alpha_0-\alpha_0^2)( e_i e_i^\top +  e_l e_l^\top)  + \alpha_1^2( e_i e_l^\top +  e_l e_i^\top)) p_l \right]  
%\end{align*}
Let $\diag(P)$ denote a length $k$ column vector where the $i$-th entry is the $(i,i)$-th entry of $P\in \mathbb{R}^{k\times k}$. Furthermore, we can show $A+B$ is,
\begin{align*}
 & ~ A + B \\
= & ~ A + B_1 + B_2 + B_3 \\
= & ~ \underbrace{ (\beta_2-\beta_0) \| \diag(P) \|^2 + \beta_0 \| P \|_F^2 }_{A} + \underbrace{ 2 (\alpha_0 \alpha_2 - \alpha_0^2) ( \diag(P)^\top \cdot P \cdot {\bf 1} - \| \diag(P) \|^2 ) }_{B_1} \\
+ & ~ \underbrace{ \alpha_1^2 ( (\diag(P)^\top \cdot {\bf 1} )^2 - \| \diag(P) \|^2 +\langle P, P^\top \rangle - \| \diag(P) \|^2 ) }_{B_2} + \underbrace{ \alpha_0^2 ( \| P \cdot {\bf 1 } \|^2 - \| P \|_F^2 ) }_{B_3} \\
%= & ~ \alpha_0^2 \left( \left\|\sum_{i=1}^k p_i \right\|^2 - \|P\|_F^2 \right) + 2 (\alpha_2\alpha_0 - \alpha_0^2) \left( \left(\sum_{i=1}^k  e_i e_i^\top p_i \right)^\top \left(\sum_{l=1}^k p_l \right) - \|\diag(P)\|^2 \right) \\
%+ & ~ \alpha_1^2 ((\diag(P)^\top \bone)^2 - \|\diag(P)\|^2 + \langle P, P^\top \rangle - \|\diag(P)\|^2) + \beta_0 \|P\|_F^2 + (\beta_2 - \beta_0)\|\diag(P)\|^2 \\
= & ~ \underbrace{ \|\alpha_0 P \cdot \bone + (\alpha_2 - \alpha_0) \diag(P)\|^2 }_{C_1} + \underbrace{ \alpha_1^2 (\diag(P)^\top \cdot \bone)^2 }_{C_2} + \underbrace{ \frac{\alpha^2_1}{2}\|P+P^\top - 2\diag(\diag(P))\|_F^2 }_{C_3}\\
+ & ~ \underbrace{ (\beta_0 - \alpha_0^2-\alpha_1^2) \|P  - \diag(\diag(P))\|_F^2 }_{C_4}  ~+ \underbrace{ (\beta_2 - \alpha_1^2  - \alpha_2^2) \|\diag(P)\|^2 }_{C_5} \\ 
\geq & ~  (\beta_0 - \alpha_0^2-\alpha_1^2) \|P - \diag(\diag(P))\|_F^2  ~+ (\beta_2 - \alpha_1^2  - \alpha_2^2) \|\diag(P)\|^2 \\ 
\geq & ~ \min\{  (\beta_0 - \alpha_0^2-\alpha_1^2), (\beta_2 - \alpha_1^2  - \alpha_2^2)  \} \cdot (  \|P - \diag(\diag(P))\|_F^2 + \|\diag(P)\|^2 )\\ 
= & ~ \min\{  (\beta_0 - \alpha_0^2-\alpha_1^2), (\beta_2 - \alpha_1^2  - \alpha_2^2)  \} \cdot (  \|P - \diag(\diag(P))\|_F^2 + \|\diag(\diag(P))\|^2 )\\ 
\geq & ~ \min\{(\beta_0 - \alpha_0^2-\alpha_1^2), (\beta_2 - \alpha_1^2  - \alpha_2^2) \}\cdot \|P\|_F^2 \\
=  & ~ \rho \|P\|_F^2,
\end{align*}
where the first step follows by $B=B_1+B_2+B_3$, and the second step follows by the definition of $A,B_1,B_2,B_3$ the third step follows by $A+B_1+B_2+B_3=C_1+C_2+C_3+C_4+C_5$, the fourth step follows by $C_1, C_2, C_3 \geq 0$, the fifth step follows $a\geq \min(a,b)$, the sixth step follows by $\| \diag(P) \|^2 = \| \diag(\diag(P)) \|^2$, the seventh step follows by triangle inequality, and the last step follows the definition of $\rho$.
\end{proof}

\begin{claim}
$A+B_1+B_2+B_3=C_1+C_2+C_3+C_4+C_5$.
\end{claim}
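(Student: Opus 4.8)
The plan is to verify the identity by expanding every term on both sides into a common basis of quadratic invariants of $P$ and matching coefficients. Every summand appearing in the claim is a quadratic form in the entries of $P$, and a short inspection shows that each one lies in the span of the six scalars
\[
\|P\|_F^2,\quad \|\diag(P)\|^2,\quad \langle P,P^\top\rangle,\quad \diag(P)^\top P\bone,\quad (\diag(P)^\top\bone)^2,\quad \|P\bone\|^2.
\]
Before expanding I would record the two elementary identities $\|\diag(\diag(P))\|_F^2=\|\diag(P)\|^2$ and $\langle P,\diag(\diag(P))\rangle=\langle P^\top,\diag(\diag(P))\rangle=\|\diag(P)\|^2$, together with $\|P^\top\|_F^2=\|P\|_F^2$, since these are what let me rewrite the right-hand side terms in the basis above.

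First I would expand the three squared norms on the right. By bilinearity, $C_1=\alpha_0^2\|P\bone\|^2+2\alpha_0(\alpha_2-\alpha_0)\,\diag(P)^\top P\bone+(\alpha_2-\alpha_0)^2\|\diag(P)\|^2$. Expanding $\|P+P^\top-2\diag(\diag(P))\|_F^2$ and applying the identities above collapses the six cross terms and gives $C_3=\alpha_1^2\big(\|P\|_F^2+\langle P,P^\top\rangle-2\|\diag(P)\|^2\big)$. Similarly $C_4=(\beta_0-\alpha_0^2-\alpha_1^2)\big(\|P\|_F^2-\|\diag(P)\|^2\big)$, while $C_2$ and $C_5$ are already in basis form, and $A,B_1,B_2,B_3$ are (after reading off $\langle P,P^\top\rangle$ from $B_2$) already written as linear combinations of the six invariants.

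Then I would read off the coefficient of each invariant on both sides. The coefficients of $\|P\bone\|^2$ (namely $\alpha_0^2$), of $\diag(P)^\top P\bone$ (namely $2(\alpha_0\alpha_2-\alpha_0^2)$), of $(\diag(P)^\top\bone)^2$ (namely $\alpha_1^2$), and of $\langle P,P^\top\rangle$ (namely $\alpha_1^2$) each have a single contributor on each side and match immediately; the coefficient of $\|P\|_F^2$ is $\beta_0-\alpha_0^2$ on both sides. The only bookkeeping step is the coefficient of $\|\diag(P)\|^2$: on the right it is $(\alpha_2-\alpha_0)^2-2\alpha_1^2-(\beta_0-\alpha_0^2-\alpha_1^2)+(\beta_2-\alpha_1^2-\alpha_2^2)$, which simplifies to $\beta_2-\beta_0+2\alpha_0^2-2\alpha_0\alpha_2-2\alpha_1^2$, exactly the sum $(\beta_2-\beta_0)+2(\alpha_0^2-\alpha_0\alpha_2)-2\alpha_1^2$ coming from $A$, $B_1$ and $B_2$ on the left. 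The hard part here is nothing conceptual: it is just keeping the factor-of-two and sign conventions straight in the $C_3$ cross terms and tracking the five separate sources of $\|\diag(P)\|^2$ terms; once the six-invariant expansion is pinned down, the claim reduces to these routine scalar identities in $\alpha_0,\alpha_1,\alpha_2,\beta_0,\beta_2$.
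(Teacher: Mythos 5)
Your proposal is correct and takes essentially the same route as the paper: both expand the $C_i$ terms via the bilinear identities for $\|\cdot\|^2$ and $\|\cdot\|_F^2$, reduce everything using $\|\diag(\diag(P))\|_F^2=\|\diag(P)\|^2$ and $\langle P,\diag(\diag(P))\rangle=\|\diag(P)\|^2$, and match terms; your coefficient bookkeeping (in particular the $\|\diag(P)\|^2$ coefficient $\beta_2-\beta_0-2(\alpha_0\alpha_2-\alpha_0^2+\alpha_1^2)$) agrees with the paper's. Organizing the check as coefficient-matching over the six quadratic invariants is just a cleaner presentation of the same computation.
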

\begin{proof}
The key properties we need are, for two vectors $a,b$, $\| a + b\|^2 = \|a\|^2 + 2\langle a, b\rangle +\|b\|^2$; for two matrices $A,B$, $\| A+B \|_F^2 = \|A\|_F^2 + 2\langle A, B \rangle + \| B \|_F^2$.
Then, we have
\begin{align*}
  & ~ C_1 + C_2 +C_3+C_4 +C_5 \\
= & ~ \underbrace{ (\|\alpha_0 P \cdot \bone\|)^2 + 2 (\alpha_0\alpha_2 -\alpha_0^2) \langle  P \cdot \bone,  \diag(P) \rangle + (\alpha_2 - \alpha_0)^2 \| \diag(P)\|^2 }_{C_1} + \underbrace{ \alpha_1^2 ( \diag(P)^\top \cdot \bone)^2 }_{C_2} \\
+ & ~ \underbrace{ \frac{\alpha_1^2}{2} ( 2\|P\|_F^2  + 4 \| \diag(\diag(P)) \|_F^2 + 2 \langle P, P^\top \rangle - 4 \langle P, \diag(\diag(P)) \rangle - 4 \langle P^\top, \diag(\diag(P)) \rangle ) }_{C_3} \\
+ & ~ \underbrace{ (\beta_0-\alpha_0^2 -\alpha_1^2) (\| P\|_F^2 -2 \langle P, \diag(\diag(P)) \rangle + \| \diag(\diag(P)) \|_F^2) }_{C_4} + \underbrace{ (\beta_2 -\alpha_1^2 - \alpha_2^2) \| \diag (P) \|^2 }_{C_5}\\
= & ~ \underbrace{ \alpha_0^2 \| P \cdot \bone\|^2 + 2 (\alpha_0\alpha_2 -\alpha_0^2) \langle  P \cdot \bone,  \diag(P) \rangle + (\alpha_2 - \alpha_0)^2 \| \diag(P)\|^2 }_{C_1} + \underbrace{ \alpha_1^2 ( \diag(P)^\top \cdot \bone)^2 }_{C_2} \\
+ & ~ \underbrace{ \frac{\alpha_1^2}{2} ( 2\|P\|_F^2  + 4 \|\diag(P) \|^2 + 2 \langle P, P^\top \rangle - 8 \| \diag(P) \|^2 ) }_{C_3} \\
+ & ~ \underbrace{ (\beta_0-\alpha_0^2 -\alpha_1^2) (\| P\|_F^2 -2 \|\diag(P)\|^2 + \| \diag(P) \|^2 ) }_{C_4} + \underbrace{ (\beta_2 -\alpha_1^2 - \alpha_2^2) \| \diag (P) \|^2 }_{C_5} \\
= & ~ \alpha_0^2 \| P \cdot \bone\|^2 + 2(\alpha_0\alpha_2 -\alpha_0^2) \diag(P)^\top \cdot P \cdot \bone + \alpha_1^2 (\diag(P)^\top \cdot \bone)^2 + \alpha_1^2 \langle P, P^\top \rangle \\
+ & ~ (\beta_0 -\alpha_0^2 ) \| P\|_F^2 + \underbrace{ ( (\alpha_2-\alpha_0)^2 -2\alpha_1^2 -\beta_0 +\alpha_0^2 + \alpha_1^2 +\beta_2 -\alpha_1^2 -\alpha_2^2 ) }_{ \beta_2 - \beta_0 -2 (\alpha_2\alpha_0 - \alpha_0^2+ \alpha_1^2)} \| \diag(P) \|^2 \\
= & ~ \underbrace{ 0 }_{\text{part~of~}A}+ \underbrace{ 2 (\alpha_2\alpha_0 - \alpha_0^2)\cdot \diag(P)^\top P \cdot \bone }_{\text{part~of~}B_1}  + \underbrace{ \alpha_1^2 \cdot ((\diag(P)^\top \bone)^2 + \langle P, P^\top \rangle) }_{\text{part~of~}B_2} + \underbrace{ \alpha_0^2 \cdot \|P \cdot \bone\|^2 }_{\text{part~of~}B_3} \\
+ & ~ \underbrace{ (\beta_0 -\alpha_0^2) \cdot \|P\|_F^2 }_{\text{proportional~to~}\|P\|_F^2} + \underbrace{ (\beta_2 - \beta_0 -2 (\alpha_2\alpha_0 - \alpha_0^2+ \alpha_1^2))\cdot \|\diag(P)\|^2 }_{\text{proportional~to~}\|\diag(P)\|^2} \\
= & ~ \underbrace{ (\beta_2-\beta_0) \| \diag(P) \|^2 + \beta_0 \| P \|_F^2 }_{A} + \underbrace{ 2 (\alpha_0 \alpha_2 - \alpha_0^2) ( \diag(P)^\top \cdot P \cdot {\bf 1} - \| \diag(P) \|^2 ) }_{B_1} \\
+ & ~ \underbrace{ \alpha_1^2 ( (\diag(P)^\top \cdot {\bf 1} )^2 - \| \diag(P) \|^2 +\langle P, P^\top \rangle - \| \diag(P) \|^2 ) }_{B_2} + \underbrace{ \alpha_0^2 ( \| P \cdot {\bf 1 } \|^2 - \| P \|_F^2 ) }_{B_3}\\
= & A + B_1 + B_2 + B_3
\end{align*}
where the second step follows by $\langle P, \diag(\diag(P))\rangle = \| \diag(P) \|^2$ and $\| \diag(\diag(P)) \|_F^2 = \| \diag(P) \|^2$.
\end{proof}

\subsubsection{Lower Bound on the Eigenvalues of Hessian for Non-orthogonal Case}\label{app:lower_bound_hessian_non_ortho}

First we show the lower bound of the eigenvalues. The main idea is to reduce the problem to a $k$-by-$k$ problem and then lower bound the eigenvalues using orthogonal weight matrices.

\begin{lemma}[Lower bound]\label{lem:lower_bound_pd_ground}
If $\phi(z)$ satisfies Property~\ref{pro:gradient},\ref{pro:expect} and \ref{pro:hessian}, we have the following property for the second derivative of function $f_{\cal D}(W)$ at $W^*$,
\begin{align*}
 \Omega(v_{\min}^2 \rho(\sigma_k) / ( \kappa^2 \lambda ) ) I \preceq \nabla^2 f_{\cal D}(W^*).
\end{align*}
\end{lemma}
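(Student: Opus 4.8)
The plan is to bound $\lambda_{\min}(\nabla^2 f_{\cal D}(W^*))$ from below by peeling off, one feature at a time, the differences between our setting and the orthonormal model already handled in Lemma~\ref{lem:lower_bound_ortho}: the ambient dimension $d$, the output weights $v^*$, the column norms of $W^*$, and finally the non-orthogonality of the columns of $W^*$. First I would rewrite, for a stacked direction $A=[a_1\ \cdots\ a_k]\in\R^{d\times k}$,
\[
\text{vec}(A)^\top\nabla^2 f_{\cal D}(W^*)\,\text{vec}(A)=\E_{x\sim\N(0,I_d)}\Big[\Big(\sum_{j=1}^k v_j^*\,\phi'(w_j^{*\top}x)\,a_j^\top x\Big)^2\Big].
\]
To drop the dimension to $k$, let $U\in\R^{d\times k}$ be an orthonormal basis of $\mathrm{span}(W^*)$, write $x=Uz+U_\perp z_\perp$ with $z\sim\N(0,I_k)$, $z_\perp\sim\N(0,I_{d-k})$ independent, and set $\widehat W:=U^\top W^*$ (which has the same singular values as $W^*$). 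Since each $\phi'(w_j^{*\top}x)$ depends on $z$ only and $z_\perp$ has mean zero, the cross terms in the square vanish and the quadratic form splits as $\E_z[(\sum_j v_j^*\phi'(\widehat w_j^\top z)(U^\top a_j)^\top z)^2]+\langle G,\Lambda\rangle$, where $G=(\E_z[\phi'(\widehat w_j^\top z)\phi'(\widehat w_l^\top z)])_{j,l}\succeq0$ is a Gram matrix in $L^2(\N(0,I_k))$ and $\Lambda=\diag(v^*)(U_\perp^\top A)^\top(U_\perp^\top A)\diag(v^*)\succeq0$. By Schur's product theorem $\langle G,\Lambda\rangle\geq\lambda_{\min}(G)\,\mathrm{tr}(\Lambda)\geq\lambda_{\min}(G)\,v_{\min}^2\sum_j\|U_\perp^\top a_j\|^2$; expanding $\phi'$ in Hermite polynomials gives $G=\sum_{m\geq0}m!\,D_m\,(\ov{\widehat W}^\top\ov{\widehat W})^{\circ m}\,D_m$, where $D_m=\diag(c_m(\|\widehat w_j\|))_j$ and $c_m(\sigma)$ is the $m$-th Hermite coefficient of $z\mapsto\phi'(\sigma z)$, so that (using that $\ov{\widehat W}^\top\ov{\widehat W}$ has unit diagonal together with Property~\ref{pro:expect}, which forces $\sum_{m\geq2}m!c_m^2(\sigma)=\beta_0(\sigma)-\alpha_0^2(\sigma)-\alpha_1^2(\sigma)\geq\rho(\sigma)$) one obtains $\lambda_{\min}(G)\gtrsim\sigma_k^2(\ov{\widehat W})\,\rho(\sigma_k)\gtrsim\rho(\sigma_k)/\kappa^2$ by Fact~\ref{fac:sigma_k_Wbar}. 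Thus the perpendicular part already supplies $\gtrsim v_{\min}^2\rho(\sigma_k)/\kappa^2$ times the perpendicular mass, and it remains to lower bound the parallel part, a genuinely $k$-dimensional problem in the vectors $\widehat w_j$ and $U^\top a_j$.

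In the $k$-dimensional problem I would first absorb $v^*$: since $\sign(v_j^*)\phi'(\widehat w_j^\top z)d_j^\top z=\phi'(\widehat w_j^\top z)(\sign(v_j^*)d_j)^\top z$, replacing $d_j$ by $\sign(v_j^*)|v_j^*|d_j$ lets us set all $v_j^*=1$ at a cost of only $v_{\min}^2$. Next, to deal with the column norms $\|\widehat w_j\|=\|w_j^*\|\in[\sigma_k,\sigma_1]$ and the non-orthogonality, I would change variables $z=\widehat W^{-\top}\widehat D\,u$ with $\widehat D=\diag(\|\widehat w_j\|)$, so that $\widehat w_j^\top z=\|\widehat w_j\|u_j$ (the weight directions become coordinate axes), $d_j^\top z=(\ov{\widehat W}^{-1}d_j)^\top u$, and---because the Jacobian cancels the normalization constant exactly---$z\sim\N(0,I_k)$ is transported to $u\sim\N(0,\ov{\widehat W}^\top\ov{\widehat W})$. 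Since $\ov{\widehat W}^\top\ov{\widehat W}$ has unit diagonal, each $u_j$ is still standard normal marginally, so I would expand $\E_u[(\sum_j\phi'(\|\widehat w_j\|u_j)p_j^\top u)^2]$ into its moments and run the same completion-of-squares bookkeeping as in the proof of Lemma~\ref{lem:lower_bound_ortho}, now carrying the entries of $\Sigma:=\ov{\widehat W}^\top\ov{\widehat W}$ and the quantities $\alpha_q(\|\widehat w_j\|),\beta_q(\|\widehat w_j\|)$; this yields a bound $\gtrsim\rho(\sigma_k)\cdot(\text{a spectral factor of }\Sigma)\cdot\|P\|_F^2$, and combining with $\|P\|_F=\|\ov{\widehat W}^{-1}D\|_F\geq\|D\|_F/\sigma_1(\ov{\widehat W})$ and Fact~\ref{fac:sigma_k_Wbar} (which gives $\sigma_k(\ov{\widehat W})\geq1/\kappa$ and $\det\ov{\widehat W}=\prod_i\sigma_i(W^*)/\prod_i\|w_i^*\|$) turns the $\Sigma$-dependence into the factor $1/(\kappa^2\lambda)$. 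Together with the $v_{\min}^2$ from absorbing $v^*$ and the parallel-vs-perpendicular split, this gives $\nabla^2 f_{\cal D}(W^*)\succeq\Omega(v_{\min}^2\rho(\sigma_k)/(\kappa^2\lambda))I$.

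The main obstacle is precisely this last step. An isotropic Gaussian cannot be mapped to an isotropic Gaussian while orthogonalizing generic weight directions, so one cannot literally ``rotate to the orthogonal case''; the genuinely new work is to carry out the moment expansion for the correlated $k$-dimensional Gaussian (equivalently, for the congruence-transformed quadratic form), to show that the strict positivity coming from Property~\ref{pro:expect} ($\rho(\sigma_k)>0$) survives the worst-case geometry of $\{\ov w_j^*\}$, and---crucially---to track the two condition numbers so that the loss comes out as exactly $\kappa^2$ and $\lambda$: a crude device such as dominating $\N(0,\Sigma)$ by a scaled isotropic Gaussian would cost a factor exponential in $k$ and is far too lossy. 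Controlling $\lambda_{\min}(G)$ in the dimension-reduction step through the Hermite expansion of $\phi'$ is a smaller instance of the same kind of difficulty and likewise hinges on $\rho(\sigma_k)>0$.
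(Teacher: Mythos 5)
Your overall architecture matches the paper's: write the quadratic form as $\E[(\sum_i v_i^* a_i^\top x\,\phi'(w_i^{*\top}x))^2]$, absorb $v^*$ at a cost of $v_{\min}^2$, split $a_i=Ub_i+U_\perp c_i$ along the column span of $W^*$ (the cross term vanishes by independence), and reduce to a $k$-dimensional problem. Your treatment of the perpendicular block via the Gram matrix $G$ and a Hermite/Schur-product argument is a legitimate alternative to the paper's (and arguably cleaner, though it naturally produces $\min_j(\beta_0-\alpha_0^2-\alpha_1^2)(\|w_j^*\|)$ and $\sigma_k^2(\ov{W})$ rather than literally $\rho(\sigma_k)/\lambda$).

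The gap is in the parallel part, which you yourself flag as ``the main obstacle.'' You explicitly reject ``dominating $\N(0,\Sigma)$ by a scaled isotropic Gaussian'' as ``far too lossy'' --- but that is exactly the paper's device, and it is not lossier than the target. The paper bounds the density pointwise, $e^{-\|V^{\dagger\top}s\|^2/2}\ge e^{-\sigma_1^2(V^\dagger)\|s\|^2/2}$ (valid because the integrand is nonnegative), then rescales $s=u/\sigma_1(V^\dagger)$; the Jacobian bookkeeping costs exactly $|\det V^\dagger|/\sigma_1^k(V^\dagger)=\sigma_k^k/\prod_i\sigma_i=1/\lambda$, which is precisely the $\lambda$ appearing in the lemma (and which the paper concedes can be exponential in $k$ in the worst case). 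After this step the coordinates are i.i.d.\ and $\phi'$ is evaluated at $\sigma_k u_i$, so Lemma~\ref{lem:lower_bound_ortho} applies verbatim and $\|p_i\|\ge\|b_i\|/\kappa$ supplies the $\kappa^2$. Your proposed replacement --- redoing the completion-of-squares of Lemma~\ref{lem:lower_bound_ortho} directly for $u\sim\N(0,\Sigma)$ with $\Sigma=\ov{W}^\top\ov{W}$ --- is not carried out, and it is the hard part: once the $u_j$ are correlated, the mixed moments $\E[\phi'(\sigma_j u_j)\phi'(\sigma_l u_l)u_a u_b]$ no longer factor into the one-dimensional quantities $\alpha_q,\beta_q$, so the entire $A+B_1+B_2+B_3=C_1+\cdots+C_5$ identity collapses. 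Your asserted conclusion that this ``yields a bound $\gtrsim\rho(\sigma_k)\cdot(\text{a spectral factor of }\Sigma)\cdot\|P\|_F^2$'' which then ``comes out as exactly $\kappa^2$ and $\lambda$'' is unsupported; in particular $\lambda$ is a determinant ratio, not a spectral function of $\Sigma$, and there is no indication of how it would emerge from a correlated-moment expansion. As written, the proposal identifies the difficulty but does not resolve it, and steers away from the one elementary step that does.
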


\begin{proof}
Let $a\in \mathbb{R}^{dk}$ denote vector $ \begin{bmatrix}  a_1^\top &  a_2^\top & \cdots &  a_k^\top \end{bmatrix}^\top$, let $b\in \mathbb{R}^{dk}$ denote vector $ \begin{bmatrix}  b_1^\top &  b_2^\top & \cdots &  b_k^\top \end{bmatrix}^\top $ and let $c \in \mathbb{R}^{dk}$ denote vector $\begin{bmatrix} c_1^\top & c_2^\top & \cdots & c_k^\top \end{bmatrix}^\top $.
The smallest eigenvalue of the Hessian can be calculated by 
\begin{align}
\nabla^2 f(W^*) & \succeq \min_{ \| a \|=1} a^\top \nabla^2 f(W^*) a ~I_{dk} = \min_{ \| a\|=1} \underset{x\sim {\cal D}_d}{\E} \left[ \left(\sum_{i=1}^k v_i^*  a_i^\top x \cdot \phi'( w_i^{*\top}x ) \right)^2 \right] ~ I_{dk} \label{eq:sm_eig_cal}
\end{align}
Note that 
\begin{align}
 & ~ \min_{ \| a \| = 1} \underset{x\sim {\cal D}_d}{\E} \left[ \left(\sum_{i=1}^k (v_i^*  a_i)^\top x \cdot \phi'( w_i^{*\top}x ) \right)^2 \right]  \notag \\
= & ~\min_{ \| a \| \neq 0} \underset{x\sim {\cal D}_d}{\E} \left[ \left(\sum_{i=1}^k (v_i^*  a_i)^\top x \cdot \phi'( w_i^{*\top}x ) \right)^2  \right]  / \| a \|^2 \notag \\
 = & ~ \min_{  \sum_{i}  \|  b_i/v_i^*\|^2\neq 0} \underset{x\sim {\cal D}_d}{\E} \left[ \left(\sum_{i=1}^k  b_i^\top x \cdot \phi'( w_i^{*\top}x ) \right)^2  \right] / \left( \sum_{i=1}^k \|  b_i/v_i^*\|^2 \right) & \text{~by~} a_i=b_i/v_i^* \notag \\
  = & ~ \min_{ \sum_i \|  b_i\|^2\neq 0} \underset{x\sim {\cal D}_d}{\E} \left[ \left(\sum_{i=1}^k  b_i^\top x \cdot \phi'( w_i^{*\top}x )\right)^2 \right] / \left( \sum_{i=1}^k \|  b_i/v_i^*\|^2 \right) \notag \\
\geq &  ~ v_{\min}^2 \min_{ \sum_i \|  b_i\|^2\neq 0} \underset{x\sim {\cal D}_d}{\E} \left[ \left(\sum_{i=1}^k  b_i^\top x \cdot \phi'( w_i^{*\top}x ) \right)^2 \right] / \left( \sum_{i=1}^k \|  b_i\|^2 \right)  & \text{~by~} v_{\min} = \min_{i\in [k]} |v_i^*|  \notag \\
= & ~ v_{\min}^2 \min_{ \| a\|=1} \underset{x\sim {\cal D}_d}{\E} \left[ \left(\sum_{i=1}^k  a_i^\top x \cdot \phi'( w_i^{*\top}x ) \right)^2 \right] \label{eq:min_eigenvalue}
\end{align}
%Let ${\cal B}(k,d)$ denote the set $\{  a | \sum_{i=1}^k \|  a_i \|^2 =1, \forall i\in [k],  a_i \in \mathbb{R}^d \}$. 
Let $U\in \mathbb{R}^{d\times k}$ be the orthonormal basis of $W^*$ and let $V=\begin{bmatrix} v_1 & v_2 & \cdots & v_k \end{bmatrix}=U^\top W^* \in \mathbb{R}^{k\times k}$. 
%To clarify the notation, here $ v_i^*$ (bold font) is a vector and $v_i^*$ (plain font) is a scalar, which is the ground truth parameter in the second layer of our model. 
Also note that $V$ and $W^*$ have same singular values and $W^* = UV$.  We use $U_{\bot} \in \mathbb{R}^{d \times (d-k)}$ to denote the complement of $U$. For any vector $ a_j \in \mathbb{R}^{d}$, there exist two vectors $ b_j\in \mathbb{R}^k$ and $ c_j \in \mathbb{R}^{d-k}$ such that
\begin{align*}
 a_j = U  b_j + U_{\bot}  c_j.
\end{align*}
We use ${\cal D}_d$ to denote Gaussian distribution ${\cal N}(0,I_d)$, ${\cal D}_{d-k}$ to denote Gaussian distribution ${\cal N}(0,I_{d-k})$, and ${\cal D}_k$ to denote Gaussian distribution ${\cal N}(0,I_k)$. Then we can rewrite formulation~\eqref{eq:min_eigenvalue} (removing $v_{\min}^2$) as
\begin{align*}
& \underset{x\sim {\cal D}_d }{\mathbb{E}} \left[ \left(\sum_{i=1}^k  a_i^\top x \cdot \phi'( w_i^{*\top}x ) \right)^2 \right] 
=  \underset{x\sim {\cal D}_d }{\mathbb{E}} \left[ \left(\sum_{i=1}^k ( b_i^\top U^\top+ c_i^\top U_\perp^{\top})x \cdot \phi'( w_i^{*\top}x ) \right)^2 \right] =  A + B + C \\
\end{align*}
where 
\begin{align*}
A = & ~\underset{x \sim {\cal D}_d}{ \mathbb{E} } \left[  \left(\sum_{i=1}^k  b_i^\top U^\top x \cdot \phi'( w_i^{*\top}x ) \right)^2 \right], \\
B = & ~\underset{x \sim {\cal D}_d}{ \mathbb{E} } \left[ \left( \sum_{i=1}^k  c_i^\top U_\perp^{\top} x \cdot \phi'( w_i^{*\top}x ) \right)^2 \right],\\
C = & ~ \underset{x \sim {\cal D}_d}{ \mathbb{E} } \left[ 2 \left(\sum_{i=1}^k  b_i^\top U^\top x \cdot \phi'( w_i^{*\top} x ) \right) \cdot \left(\sum_{i=1}^k  c_i^\top U_\perp^{\top} x \cdot \phi'( w_i^{*\top}x ) \right) \right].
\end{align*}
We calculate $A,B,C$ separately. First, we can show
\begin{align*}
A & = \underset{x\sim {\cal D}_d }{\mathbb{E}} \left[  \left(\sum_{i=1}^k  b_i^\top U^\top x \cdot \phi'( w_i^{*\top}x ) \right)^2   \right] = \underset{ z\sim {\cal D}_k }{\mathbb{E}} \left[ \left(\sum_{i=1}^k  b_i^\top  z \cdot \phi'( v_i^{\top} z ) \right)^2   \right].
\end{align*}
Second, we can show
\begin{align*}
B & = \underset{x\sim {\cal D}_d }{\mathbb{E}} \left [ \left(\sum_{i=1}^k  c_i^\top U_\perp^{\top} x \cdot \phi'( w_i^{*\top}x ) \right)^2 \right]\\
& = \underset{ s\sim {\cal D}_{d-k}, z\sim {\cal D}_k }{\mathbb{E}} \left[\left(\sum_{i=1}^k  c_i^\top  s \cdot \phi'( v_i^{*\top} z ) \right)^2\right]\\
& = \underset{ s\sim {\cal D}_{d-k}, z\sim {\cal D}_k }{\mathbb{E}} [ (y^\top  s  )^2] & \text{~by~defining~}y=\sum_{i=1}^k \phi'( v_i^{*\top} z )  c_i \in \mathbb{R}^{d-k}\\
& = \underset{ z\sim {\cal D}_k }{\mathbb{E}} \left[ \underset{ s\sim {\cal D}_{d-k} }{\mathbb{E}} [( y^\top  s  )^2] \right]\\
& = \underset{ z\sim {\cal D}_k }{\mathbb{E}} \left[\underset{ s\sim {\cal D}_{d-k} }{\mathbb{E}} \left[ \sum_{j=1}^{d-k} s_j^2 y_j^2 \right] \right] & \text{~by~} \mathbb{E}[s_j s_{j'}]=0 \\
& = \underset{ z\sim {\cal D}_k }{\mathbb{E}} \left[  \sum_{j=1}^{d-k} y_j^2 \right] & \text{~by~}s_j \sim {\cal N}(0,1)\\
& =  \underset{ z\sim {\cal D}_k }{\mathbb{E}} \left[ \left\|\sum_{i=1}^k \phi'( v_i^{*\top} z ) c_i \right\|^2 \right] & \text{~by~definition~of~}y
\end{align*}
Third, we have $C=0$ since $U_\perp^\top x$ is independent of $ w_i^{*\top}x$ and $U^\top x$. Thus, putting them all together, 
\begin{align*}
& \underset{x\sim {\cal D}_d }{\mathbb{E}} \left[ \left(\sum_{i=1}^k  a_i^\top x \cdot \phi'( w_i^{*\top}x ) \right)^2 \right]= \underset{ z\sim {\cal D}_k }{\mathbb{E}} \left[  \left( \sum_{i=1}^k  b_i^\top  z \cdot \phi'( v_i^{\top} z ) \right)^2 \right]  + \underset{ z\sim {\cal D}_k }{\mathbb{E}} \left[ \left\|\sum_{i=1}^k \phi'( v_i^{\top} z ) c_i \right\|^2 \right] \\
\end{align*}

\begin{comment}
\begin{align*}
%= & \min_{  a \in {\cal B}(k,d)} \underset{x\sim {\cal D}(d) }{\mathbb{E}}[  (\sum_{i=1}^k ( b_i^\top U^\top x \cdot \phi'( w_i^{*\top}x ))^2  \\
%&+ \sum_{i=1}^k  c_i^\top U_\perp^{\top} x \cdot \phi'( w_i^{*\top}x ))^2 ] \\
= &  \mathbb{E}[  (\sum_{i=1}^k  b_i^\top U^\top x \cdot \phi'( w_i^{*\top}x ))^2   \\
& + (\sum_{i=1}^k  c_i^\top U_\perp^{\top} x \cdot \phi'( w_i^{*\top}x ))^2 \\
& + 2(\sum_{i=1}^k  b_i^\top U^\top x \cdot \phi'( w_i^{*\top}x ))(\sum_{i=1}^k  c_i^\top U_\perp^{\top} x \cdot \phi'( w_i^{*\top}x ))] \\
= &  \underset{x\sim {\cal D}(d) }{\mathbb{E}}[  (\sum_{i=1}^k  b_i^\top U^\top x \cdot \phi'( w_i^{*\top}x ))^2   ] \\
& + \underset{x\sim {\cal D}(d) }{\mathbb{E}} [(\sum_{i=1}^k  c_i^\top U_\perp^{\top} x \cdot \phi'( w_i^{*\top}x ))^2] \\
= &  \underset{ z\sim {\cal D}(k) }{\mathbb{E}}[  (\sum_{i=1}^k  b_i^\top  z \cdot \phi'( v_i^{*\top} z ))^2   ] \\
& + \underset{ s\sim {\cal D}(d-k), z\sim {\cal D}(k) }{\mathbb{E}} [(\sum_{i=1}^k  c_i^\top  s \cdot \phi'( v_i^{*\top} z ))^2] \\
= &  \underset{ z\sim {\cal D}(k) }{\mathbb{E}}[  (\sum_{i=1}^k  b_i^\top  z \cdot \phi'( v_i^{*\top} z ))^2   ] \\
& + \underset{ s\sim {\cal D}(d-k), z\sim {\cal D}(k) }{\mathbb{E}} [(\sum_{i=1}^k  c_i^\top  s \cdot \phi'( v_i^{*\top} z ))^2] \\
= &  \underset{ z\sim {\cal D}(k) }{\mathbb{E}}[  (\sum_{i=1}^k  b_i^\top  z \cdot \phi'( v_i^{*\top} z ))^2   ] \\
& + \underset{ z\sim {\cal D}(k) }{\mathbb{E}} [\|\sum_{i=1}^k \phi'( v_i^{*\top} z ) c_i\|^2] \\
\end{align*}
\end{comment}

Let us lower bound $A$,
\begin{align*}
A = &  \underset{ z\sim {\cal D}_k }{\mathbb{E}} \left[  \left(\sum_{i=1}^k  b_i^\top  z \cdot \phi'(v_i^{\top} z ) \right)^2 \right]  \\
= &  \int (2\pi)^{-k/2}  \left(\sum_{i=1}^k  b_i^\top  z \cdot \phi'( v_i^{\top} z ) \right)^2  e^{-\| z\|^2/2}d  z  \\
\underset{\xi_1}{=} &  \int (2\pi)^{-k/2}  \left(\sum_{i=1}^k  b_i^\top V^{\dagger\top}  s \cdot \phi'(s_i) \right)^2  e^{-\|V^{\dagger\top}  s\|^2/2} \cdot |\det(V^\dagger)| d s  \\
\underset{\xi_2}{\geq}  &  \int (2\pi)^{-k/2}  \left(\sum_{i=1}^k  b_i^\top V^{\dagger\top}  s \cdot \phi'(s_i) \right)^2  e^{-\sigma^2_1(V^{\dagger}) \| s\|^2/2}  \cdot |\det(V^\dagger)|  d  s  \\
\underset{\xi_3}{=}  &  \int (2\pi)^{-k/2}  \left(\sum_{i=1}^k  b_i^\top V^{\dagger\top} u/\sigma_1(V^{\dagger}) \cdot \phi'(u_i/\sigma_1(V^{\dagger})) \right)^2  e^{- \|u\|^2/2} |\det(V^\dagger)|/\sigma_1^k(V^{\dagger})  d u  \\
= &  \int (2\pi)^{-k/2}  \left(\sum_{i=1}^k p_i^\top u \cdot \phi'(\sigma_k\cdot u_i) \right)^2  e^{- \|u\|^2/2} \frac{1}{\lambda}  d u  \\
= &  \frac{1}{\lambda} \underset{ u\sim {\cal D}_k }{\mathbb{E}} \left[ \left(\sum_{i=1}^k p_i^\top u \cdot \phi'(\sigma_k \cdot u_i) \right)^2 \right]  
\end{align*}
where $V^\dagger \in \mathbb{R}^{k\times k}$ is the inverse of $V\in \mathbb{R}^{k\times k}$, i.e., $V^\dagger V=I$, $p_i^\top =  b_i^\top V^{\dagger\top} /\sigma_1(V^{\dagger}) $ and $\sigma_k = \sigma_k(W^*)$. $\xi_1$ replaces $ z$ by $ z = V^{\dagger\top}  s$, so $ v_i^{*\top} z = s_i$. $\xi_2$ uses the fact $\|V^{\dagger\top} s\| \leq \sigma_1(V^\dagger) \| s\|$. $\xi_3$ replaces $ s$ by $ s = u/\sigma_1(V^\dagger)$. Note that $\phi'(\sigma_k \cdot u_i)$'s are independent of each other, so we can simplify the analysis. 
% Then applying Lemma~\ref{lem:pop_smallest_eigenvalue} completes the we we  proof. 
In particular, Lemma~\ref{lem:lower_bound_ortho} gives a lower bound in this case in terms of $p_i$.
Note that $\|p_i\| \geq \| b_i\|/\kappa $. Therefore,
\begin{align*}
  \underset{ z\sim {\cal D}_k }{\mathbb{E}} \left[ \left(\sum_{i=1}^k  b_i^\top  z \cdot \phi'(v_i^{\top} z ) \right)^2   \right] \geq \rho(\sigma_k) \frac{1}{\kappa^2 \lambda} \|b\|^2.
\end{align*}

For $B$, similar to the proof of Lemma~\ref{lem:lower_bound_ortho}, we have,
\begin{align*}
B = & ~\underset{ z\sim {\cal D}_k }{\mathbb{E}} \left[ \left\|\sum_{i=1}^k \phi'( v_i^{\top} z ) c_i \right\|^2 \right] \\
= &  ~\int (2\pi)^{-k/2}  \left\|\sum_{i=1}^k \phi'( v_i^{\top} z ) c_i \right\|^2 e^{-\| z\|^2/2}d  z  \\
= & ~  \int (2\pi)^{-k/2}  \left\|\sum_{i=1}^k \phi'(\sigma_k\cdot u_i ) c_i \right \|^2 e^{-\|V^{\dagger\top}  u/\sigma_1(V^\dagger)\|^2/2} \cdot  \det(V^{\dagger} /\sigma_1(V^\dagger)) d u  \\
= & ~  \int (2\pi)^{-k/2}  \left\|\sum_{i=1}^k \phi'(\sigma_k\cdot u_i ) c_i \right \|^2 e^{-\|V^{\dagger\top}  u/\sigma_1(V^\dagger)\|^2/2} \cdot  \frac{1}{\lambda} d u  \\
\geq & ~  \int (2\pi)^{-k/2}  \left\|\sum_{i=1}^k \phi'(\sigma_k\cdot u_i ) c_i \right \|^2 e^{-\|  u\|^2/2} \cdot  \frac{1}{\lambda} d u  \\
= & ~ \frac{1}{\lambda} \underset{ u\sim {\cal D}_k}{\mathbb{E}} \left[\left\|\sum_{i=1}^k \phi'(\sigma_k\cdot u_i) c_i \right\|^2 \right]  \\
= & ~ \frac{1}{\lambda} \left( \sum_{i=1}^k  \underset{ u\sim {\cal D}_k }{\mathbb{E}} [ \phi'(\sigma_k\cdot u_i) \phi'(\sigma_k\cdot u_i)  c_i^\top  c_i] +  \sum_{i\neq l}  \underset{ u\sim {\cal D}_k }{\mathbb{E}} [ \phi'(\sigma_k\cdot u_i) \phi'(\sigma_k\cdot u_l)  c_i^\top  c_l] \right)\\
= & ~ \frac{1}{\lambda} \left(   \underset{ z\sim {\cal D}_1 }{\mathbb{E}} [ \phi'(\sigma_k\cdot u_i)^2]  \sum_{i=1}^k \| c_i \|^2 +   \left( \underset{ z\sim {\cal D}_1 }{\mathbb{E}} [ \phi'(\sigma_k\cdot z)  ] \right)^2  \sum_{i\neq l} c_i^\top  c_l \right)\\
= & ~ \frac{1}{\lambda} \left( \left(\underset{ z\sim {\cal D}_1 }{\E} [\phi'(\sigma_k\cdot z)] \right)^2 \left\|\sum_{i=1}^k  c_i\right\|_2^2 +  \left (\underset{ z\sim {\cal D}_1 }{\E} [\phi'(\sigma_k\cdot z)^2 ]  -  \left( \underset{ z\sim {\cal D}_1 }{\E}[\phi'(\sigma_k\cdot z) ] \right)^2 \right) \|c\|^2 \right) \\
\geq & ~ \frac{1}{\lambda} \left(\underset{ z\sim {\cal D}_1 }{\E} [\phi'(\sigma_k\cdot z)^2 ]  -  \left( \underset{ z\sim {\cal D}_1 }{\E}[\phi'(\sigma_k\cdot z) ] \right)^2 \right) \| c\|^2 \\
\geq & ~ \rho(\sigma_k) \frac{1}{ \lambda}\|c\|^2, 
\end{align*}
where the first step follows by definition of Gaussian distribution, the second step follows by replacing $ z$ by $ z = V^{\dagger\top} u/\sigma_1(V^\dagger)$, and then $v_i^\top z =  u_i /\sigma_1(V^\dagger) =  u_i \sigma_k(W^*)$, the third step follows by $\| u\|^2 \geq \| \frac{1}{\sigma_1(V^\dagger)} {V^\dagger}^\top u \|^2$ , the fourth  step follows by $\det(V^\dagger /\sigma_1(V^\dagger)) = \det(V^\dagger) / \sigma_1^k(V^\dagger) = 1/\lambda$, the fifth step follows by definition of Gaussian distribution, the ninth step follows by $x^2\geq 0$ for any $x\in \mathbb{R}$,  and the last step follows  by Property~\ref{pro:expect}.

Note that $1 = \|a\|^2 = \|b\|^2+\|c\|^2 $. Thus, we finish the proof for the lower bound. 
\end{proof}

 \subsubsection{Upper Bound on the Eigenvalues of Hessian for Non-orthogonal Case}\label{app:upper_bound_hessian_non_ortho}

\begin{lemma}[Upper bound]\label{lem:upper_bound_pd_ground}
If $\phi(z)$ satisfies Property~\ref{pro:gradient},\ref{pro:expect} and \ref{pro:hessian}, we have the following property for the second derivative of function $f_{\cal D}(W)$ at $W^*$,
\begin{align*}
 \nabla^2 f_{\cal D}(W^*) \preceq O(kv_{\max}^2 \sigma_1^{2p}) I.
\end{align*}
\end{lemma}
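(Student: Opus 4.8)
The plan is to bound the top eigenvalue of $\nabla^2 f_{\cal D}(W^*)$ by evaluating its quadratic form on an arbitrary unit vector. First I would note that at $W=W^*$ the diagonal blocks lose their $\phi''$ contribution: under the noiseless model~\eqref{eq:model} we have $\sum_{i=1}^k v_i^*\phi(w_i^{*\top}x)-y=0$ almost surely, so the Hessian has the uniform block form $\partial^2 f_{\cal D}(W^*)/(\partial w_j\partial w_l)=\E_{x}[v_j^*v_l^*\phi'(w_j^{*\top}x)\phi'(w_l^{*\top}x)xx^\top]$ for all $j,l\in[k]$. Consequently, for any $a=\begin{bmatrix}a_1^\top&\cdots&a_k^\top\end{bmatrix}^\top\in\R^{dk}$ with $\|a\|=1$,
\begin{align*}
a^\top\nabla^2 f_{\cal D}(W^*)\,a=\underset{x\sim{\cal N}(0,I_d)}{\E}\left[\left(\sum_{i=1}^k v_i^*\, a_i^\top x\cdot\phi'(w_i^{*\top}x)\right)^2\right],
\end{align*}
and it remains to show the right-hand side is $O(kv_{\max}^2\sigma_1^{2p})$.

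Next I would apply Cauchy--Schwarz to the sum over the $k$ hidden units, which yields
\begin{align*}
\left(\sum_{i=1}^k v_i^*\, a_i^\top x\cdot\phi'(w_i^{*\top}x)\right)^2\le k\sum_{i=1}^k (v_i^*)^2\,(a_i^\top x)^2\,\phi'(w_i^{*\top}x)^2,
\end{align*}
and then invoke Property~\ref{pro:gradient} to bound $\phi'(w_i^{*\top}x)^2\le L_1^2\,|w_i^{*\top}x|^{2p}$. Taking expectations term by term reduces the whole estimate to controlling Gaussian moments of the form $\E_x[(a_i^\top x)^2|w_i^{*\top}x|^{2p}]$.

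Finally, I would use Fact~\ref{fac:exp_gaussian_dot_three_vectors} (with $p$ and $L_1$ treated as absolute constants, as throughout the paper) to get $\E_x[(a_i^\top x)^2|w_i^{*\top}x|^{2p}]\lesssim\|a_i\|^2\,\|w_i^*\|^{2p}$, then combine with $|v_i^*|\le v_{\max}$ and $\|w_i^*\|\le\sigma_1$ (Fact~\ref{fac:sigma_k_Wbar}, Part~(\RN{1})) to conclude
\begin{align*}
a^\top\nabla^2 f_{\cal D}(W^*)\,a\lesssim k\,v_{\max}^2\,\sigma_1^{2p}\sum_{i=1}^k\|a_i\|^2=k\,v_{\max}^2\,\sigma_1^{2p}\,\|a\|^2=k\,v_{\max}^2\,\sigma_1^{2p}.
\end{align*}
Since $a$ was arbitrary, this gives $\nabla^2 f_{\cal D}(W^*)\preceq O(kv_{\max}^2\sigma_1^{2p})I$. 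The argument is a short chain of standard inequalities, so I do not expect a genuine obstacle; the only step needing mild care is the Gaussian moment estimate, where $a_i^\top x$ and $w_i^{*\top}x$ may be correlated, but Hölder's inequality (splitting into $\E[(a_i^\top x)^4]^{1/2}\,\E[|w_i^{*\top}x|^{4p}]^{1/2}$ when $p\ge 1/2$, with a direct computation otherwise) resolves this immediately.
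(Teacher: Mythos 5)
Your proposal is correct and follows essentially the same route as the paper: both reduce to bounding the quadratic form $\E\big[\big(\sum_i v_i^* a_i^\top x\,\phi'(w_i^{*\top}x)\big)^2\big]$ and finish with Cauchy--Schwarz/H\"older plus Gaussian moment estimates and $\|w_i^*\|\le\sigma_1$. The only (cosmetic) difference is the order of operations: you apply Cauchy--Schwarz to the sum pointwise first, leaving only diagonal moments $\E[(a_i^\top x)^2|w_i^{*\top}x|^{2p}]$, whereas the paper expands into the double sum over $(i,l)$, H\"olders each cross term into $\|a_i\|\|a_l\|\|w_i^*\|^p\|w_l^*\|^p$, and then uses $\sum_{i,l}\|a_i\|\|a_l\|\le k$.
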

\begin{proof}

Similarly, we can calculate the upper bound of the eigenvalues by
\begin{align*}
&~ \| \nabla^2 f(W^*) \| \\
 = &~\max_{ \| a\|=1} a^\top \nabla^2 f(W^*) a\\
 = &~v_{\max}^2 \max_{ \| a \|=1} \underset{x\sim {\cal D}_k }{\mathbb{E}} \left[ \left(\sum_{i=1}^k  a_i^\top x \cdot  \phi'( w_i^{*\top}x ) \right)^2 \right] \\
 = &~v_{\max}^2 \max_{ \| a \|=1} \sum_{i=1}^k \sum_{l=1}^k \underset{x \sim {\cal D}_k }{\mathbb{E}}[  a_i^\top x \cdot  \phi'( w_i^{*\top}x ) \cdot   a_l^\top x \cdot  \phi'( w_l^{*\top}x )  ] \\
\leq & ~ v_{\max}^2 \max_{ \| a \|=1} \sum_{i=1}^k \sum_{l=1}^k \left(\underset{x \sim {\cal D}_k }{\mathbb{E}}[ ( a_i^\top x)^4] \cdot \underset{x \sim {\cal D}_k }{\mathbb{E}}[(\phi'( w_i^{*\top}x ))^4]  \cdot \underset{x \sim {\cal D}_k}{\mathbb{E}}[ ( a_l^\top x)^4] \cdot \underset{x \sim {\cal D}_k }{ \mathbb{E}}[(\phi'( w_l^{*\top}x ))^4]   \right)^{1/4} \\
\lesssim  & ~ v_{\max}^2  \max_{ \| a\|=1} \sum_{i=1}^k \sum_{l=1}^k \| a_i\| \cdot \| a_l\| \cdot \| w_i^*\|^p \cdot \| w_l^*\|^p \\
\leq  & ~ v_{\max}^2  \max_{ \| a\|=1} \sum_{i=1}^k \sum_{l=1}^k \| a_i\| \cdot \| a_l\| \cdot \sigma_1^{2p} \\
 \leq & ~ k v_{\max}^2  \sigma_1^{2p},
\end{align*}
where the first inequality follows H\"{o}lder's inequality, the second inequality follows by  Property~\ref{pro:gradient}, the third inequality follows by $\| w_i^* \| \leq \sigma_1(W^*)$, and the last inequality by Cauchy-Schwarz inequality.

\end{proof}

\subsection{Error Bound of Hessians near the Ground Truth for Smooth Activations}
The goal of this Section is to prove Lemma~\ref{lem:emp_pop_smooth}
\begin{lemma}[Error bound of Hessians near the ground truth for smooth activations]\label{lem:emp_pop_smooth}
Let $\phi(z)$ satisfy Property~\ref{pro:gradient}, Property~\ref{pro:expect} and Property~\ref{pro:hessian}(a). Let $W$ satisfy $\| W - W^* \| \leq \sigma_k/2$. Let $S$ denote a set of i.i.d. samples from the distribution defined in \eqref{eq:model}. Then for any $t\geq 1$ and $0 < \epsilon < 1/2$, if 
\begin{align*}
|S| \geq \epsilon^{-2} d \kappa^2 \tau \cdot \poly(\log d, t)
\end{align*}
then we have, with probability at least $1-1/d^{\Omega(t)}$,
\begin{align*}
\| \nabla^2 \wh{f}_S (W) - \nabla^2 f_{\D} (W^*) \| \lesssim v_{\max}^2 k \sigma_1^p (\epsilon \sigma_1^p + \| W - W^* \|). 
\end{align*}
\end{lemma}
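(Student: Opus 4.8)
The plan is to route the estimate through the population Hessian at the \emph{current} point $W$ and use the triangle inequality,
\begin{align*}
\| \nabla^2 \wh f_S(W) - \nabla^2 f_{\D}(W^*)\| \;\le\; \| \nabla^2 \wh f_S(W) - \nabla^2 f_{\D}(W)\| \;+\; \| \nabla^2 f_{\D}(W) - \nabla^2 f_{\D}(W^*)\|,
\end{align*}
then bound the ``population perturbation'' term and the ``sampling error'' term separately. Throughout I would use that $\|W-W^*\|\le \sigma_k/2$ together with Fact~\ref{fac:sigma_k_Wbar} to conclude $\sigma_k/2 \le \|w_i\| \le 3\sigma_1/2$ for every $i$, so that every row norm appearing below is comparable to $\sigma_1$, and hence every $\phi'(w_i^\top x)$-type quantity is controlled through Property~\ref{pro:gradient} by $L_1 \sigma_1^p |\cdot|^p$-moments of Gaussians.

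For the population perturbation term I would expand the Hessian block by block. An off-diagonal $(j,l)$-block is $v_j^* v_l^*\,\E[\phi'(w_j^\top x)\phi'(w_l^\top x)\,xx^\top]$, and its difference from the $W^*$ version is handled by telescoping $\phi'(w_j^\top x)\phi'(w_l^\top x) - \phi'(w_j^{*\top}x)\phi'(w_l^{*\top}x)$ into two terms, using that $\phi'$ is $L_2$-Lipschitz (Property~\ref{pro:hessian}(a)) on one factor and the homogeneous bound $\phi'(z)\le L_1|z|^p$ (Property~\ref{pro:gradient}) on the other; the remaining Gaussian integral $\E[|(w_j-w_j^*)^\top x|\,|w_l^\top x|^p (a^\top x)^2]$ is controlled by Fact~\ref{fac:exp_gaussian_dot_three_vectors}, giving $\lesssim v_{\max}^2 \sigma_1^p \|w_j-w_j^*\|$ per block. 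The diagonal blocks additionally carry the residual term $\E[(\sum_i v_i^*\phi(w_i^\top x)-y)\,v_j^*\phi''(w_j^\top x)\,xx^\top]$, which vanishes at $W=W^*$; here I bound the residual $\sum_i v_i^*(\phi(w_i^\top x)-\phi(w_i^{*\top}x))$ by local Lipschitzness of $\phi$ (from Property~\ref{pro:gradient}, mean-value form) and use $|\phi''|\le L_2$, then integrate against $(a^\top x)^2$ by H\"older and Gaussian moments. Summing the $k^2$ blocks using the row-sum bound on operator norm and $\sum_j\|w_j-w_j^*\|\le\sqrt k\|W-W^*\|_F$ yields $\| \nabla^2 f_{\D}(W) - \nabla^2 f_{\D}(W^*)\| \lesssim v_{\max}^2 k\sigma_1^p\|W-W^*\|$.

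For the sampling error term, each block of $\nabla^2 \wh f_S(W)-\nabla^2 f_{\D}(W)$ is an empirical average of i.i.d.\ matrices $h(x_i)x_ix_i^\top$, where $h$ is a product of $\phi'$-factors (off-diagonal) or $(v_j^*\phi'(w_j^\top x))^2 + r(x)v_j^*\phi''(w_j^\top x)$ (diagonal). I would apply Corollary~\ref{cor:modified_bernstein_tail_xx} to each block after verifying its hypotheses: condition (\RN{1}) holds with $m=\poly(L_1,L_2,v_{\max},\sigma_1^p,t,\log n)$ via $\phi'(z)\le L_1|z|^p$, $|\phi''|\le L_2$, Fact~\ref{fac:inner_prod_bound}, Fact~\ref{fac:gaussian_norm_bound} and a Gaussian tail bound on the residual; condition (\RN{3}) holds with $L=\poly(L_1,L_2,v_{\max})\sigma_1^{2p}$ by the same pointwise bounds and Gaussian moment computations; condition (\RN{2}) is generic (an exactly-zero block needs only a trivial separate treatment). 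Rescaling the accuracy appropriately and taking a union bound over the $k^2$ blocks, $|S|\ge \epsilon^{-2}d\kappa^2\tau\poly(\log d,t)$ suffices for $\| \nabla^2 \wh f_S(W)-\nabla^2 f_{\D}(W)\|\lesssim \epsilon v_{\max}^2 k\sigma_1^{2p}$ with probability $1-d^{-\Omega(t)}$; the $\kappa^2\tau$ factor absorbs the ratio between the variance proxy $L^2 d$ and the squared scale $\sigma_1^{2p}$ at which relative accuracy is sought (and keeps the bound uniform with the companion lemmas used in Theorem~\ref{thm:lsc_nn}). Adding the two bounds gives the claim.

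The step I expect to be the main obstacle is the diagonal-block part of the population perturbation term: the residual $\sum_i v_i^*(\phi(w_i^\top x)-\phi(w_i^{*\top}x))$ is a sum of $k$ differences, each only \emph{locally} Lipschitz with a constant growing like $|w_i^\top x|^p$, so one must apply H\"older's inequality carefully to disentangle the $\|x\|^2$ growth, the $|w_i^\top x|^p$ growth, and the $\|w_i-w_i^*\|$ factor, and then check that all resulting Gaussian moments are finite and of exactly the advertised order $v_{\max}^2 k\sigma_1^p\|W-W^*\|$ rather than losing extra powers of $k$ or $\sigma_1$.
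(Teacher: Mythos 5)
Your proposal is correct and follows essentially the same route as the paper, which proves this lemma by combining Lemma~\ref{lem:smooth_pop_local} (the population perturbation term, via telescoping the $\phi'$-products, $L_2$-Lipschitzness of $\phi'$, and Fact~\ref{fac:exp_gaussian_dot_three_vectors}) with Lemma~\ref{lem:emp_pop} (the sampling error term, via the truncated matrix Bernstein bound of Corollary~\ref{cor:modified_bernstein_tail_xx} applied blockwise with a union bound over the $k^2$ blocks). The only minor imprecision is your claim that the sampling term is purely $\lesssim \epsilon v_{\max}^2 k\sigma_1^{2p}$: the $\phi''$-residual part of the diagonal blocks concentrates around a nonzero mean of order $k v_{\max}^2\sigma_1^p\|W-W^*\|$ (cf.\ Claim~\ref{cla:bound_emp_diag_1}), so the sampling error also carries a $\|W-W^*\|$-sized contribution, but this is absorbed by the $\|W-W^*\|$ term already present in the final bound.
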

\begin{proof}
This follows by combining Lemma~\ref{lem:smooth_pop_local} and Lemma~\ref{lem:emp_pop} directly.
\end{proof}
\subsubsection{Second-order Smoothness near the Ground Truth for Smooth Activations}\label{proof:lemma:smooth_pop_local}
The goal of this Section is to prove Lemma~\ref{lem:smooth_pop_local}. 

\begin{fact}
Let $w_i$ denote the $i$-th column of $W\in \mathbb{R}^{d\times k}$, and $w_i^*$ denote the $i$-th column of $W^*\in \mathbb{R}^{d\times k}$. If $\|W-W^*\| \leq \sigma_k(W^*)/2$, then for all $i\in [k]$,
\begin{align*}
\frac{1}{2}\| w_i^*\| \leq \| w_i\| \leq \frac{3}{2}\| w_i^*\|.
\end{align*}
\end{fact}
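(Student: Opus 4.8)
The plan is to reduce the column-wise bound to the spectral-norm bound on $W-W^*$ via the standard observation that the $i$-th column of a matrix $A$ is $Ae_i$, so its norm is at most $\|A\|$. Concretely, I would first write $\|w_i - w_i^*\| = \|(W-W^*)e_i\| \le \|W-W^*\|$, and then combine this with the hypothesis $\|W-W^*\|\le \sigma_k(W^*)/2$ to get $\|w_i - w_i^*\| \le \sigma_k(W^*)/2$.

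Next I would invoke Fact~\ref{fac:sigma_k_Wbar}(I) (applied to $W^*$, which is full column-rank under the paper's standing assumptions), which gives $\sigma_k(W^*)\le \|w_i^*\|$ for every $i\in[k]$. Chaining these yields $\|w_i - w_i^*\| \le \tfrac12\|w_i^*\|$. Finally, two applications of the triangle inequality finish the job: $\|w_i\| \le \|w_i^*\| + \|w_i - w_i^*\| \le \tfrac32\|w_i^*\|$ for the upper bound, and $\|w_i\| \ge \|w_i^*\| - \|w_i - w_i^*\| \ge \tfrac12\|w_i^*\|$ for the lower bound.

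There is essentially no obstacle here; the only thing to be slightly careful about is that the bound $\sigma_k(W^*)\le\|w_i^*\|$ requires $W^*$ to have full column rank (so that $\sigma_k(W^*)$ is its smallest singular value and Fact~\ref{fac:sigma_k_Wbar}(I) applies), which holds because the ground-truth weight vectors are assumed linearly independent with $k\le d$. With that in hand the argument is three lines of elementary inequalities.
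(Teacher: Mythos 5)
Your proposal is correct and matches the paper's own argument: both bound $\|w_i - w_i^*\| \le \|W-W^*\| \le \sigma_k(W^*)/2 \le \tfrac12\|w_i^*\|$ (using $\sigma_k(W^*)\le\|w_i^*\|$ from the earlier fact) and finish with the triangle inequality in both directions. The paper additionally mentions a Weyl's-inequality bound on the singular values of $W$, but that step is not actually needed for the conclusion, so your streamlined version is fine.
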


\begin{proof}
Note that if $ \|W-W^*\| \leq \sigma_k(W^*)/2 $, we have $\sigma_k(W^*)/2 \leq \sigma_i(W) \leq \frac{3}{2}\sigma_1(W^*)$ for all $i\in[k]$ by Weyl's inequality.  By definition of singular value, we have $\sigma_k(W^*) \leq \| w_i^*\|\leq \sigma_1(W^*)$. By definition of spectral norm, we have $\| w_i -  w_i^*\| \leq \|W-W^*\|$.
Thus, we can lower bound $\| w_i \|$,
\begin{align*}
\| w_i \| \leq \| w_i^* \| + \| w_i - w_i^* \| \leq \| w_i^* \| + \| W - W^* \| \leq \| w_i^* \| + \sigma_k /2 \leq \frac{3}{2} \| w_i^*\|. 
\end{align*}
Similarly, we have $\| w_i \| \geq \frac{1}{2} \| w_i^*\|$.
\end{proof}

\begin{lemma}[Second-order smoothness near the ground truth for smooth activations]\label{lem:smooth_pop_local}
If $\phi(z)$ satisfies Property~\ref{pro:gradient}, Property~\ref{pro:expect} and Property~\ref{pro:hessian}(a), 
 then for any $W$ with $ \|W-W^*\| \leq \sigma_k/2 $, we have
$$\| \nabla^2 f_{\cal D}(W) - \nabla^2 f_{\cal D}(W^*)\| \lesssim k^2  v_{\max}^{2} \sigma_1^p \|W-W^*\|.$$
\end{lemma}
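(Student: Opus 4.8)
The plan is to control the Hessian difference blockwise. I would write $\Delta := \nabla^2 f_{\cal D}(W) - \nabla^2 f_{\cal D}(W^*) \in \mathbb{R}^{dk\times dk}$ as a $k\times k$ array of $d\times d$ blocks $\Delta_{jl}$. Since $y = \sum_i v_i^*\phi(w_i^{*\top}x)$ for $(x,y)\sim{\cal D}$, the residual $\sum_i v_i^*\phi(w_i^{*\top}x)-y$ vanishes at $W^*$, so the diagonal blocks of $\nabla^2 f_{\cal D}(W^*)$ collapse to $\E[(v_j^*\phi'(w_j^{*\top}x))^2xx^\top]$; hence for $j\neq l$,
\[
\Delta_{jl} = v_j^*v_l^*\,\E\big[(\phi'(w_j^\top x)\phi'(w_l^\top x)-\phi'(w_j^{*\top}x)\phi'(w_l^{*\top}x))xx^\top\big],
\]
while the $j$-th diagonal block splits as $\Delta_{jj}=T_j^{(a)}+T_j^{(b)}$ with
\[
T_j^{(a)} = v_j^*\,\E\Big[\Big(\textstyle\sum_{i=1}^k v_i^*(\phi(w_i^\top x)-\phi(w_i^{*\top}x))\Big)\phi''(w_j^\top x)\,xx^\top\Big],\qquad T_j^{(b)} = v_j^{*2}\,\E\big[(\phi'(w_j^\top x)^2-\phi'(w_j^{*\top}x)^2)xx^\top\big].
\]
Throughout I would use that $\|\E[h(x)xx^\top]\| = \max_{\|a\|=1}|\E[h(x)(a^\top x)^2]| \le \max_{\|a\|=1}\E[|h(x)|(a^\top x)^2]$ by symmetry, together with H\"older's inequality and Fact~\ref{fac:exp_gaussian_dot_three_vectors}.

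Two elementary estimates would drive the rest. First, since $\|W-W^*\|\le\sigma_k/2$, the Fact just above the lemma gives $\|w_i\|\le\frac32\sigma_1$ for every $i$, and trivially $\|w_i-w_i^*\|\le\|W-W^*\|$. Second, Property~\ref{pro:hessian}(a) makes $\phi'$ globally $L_2$-Lipschitz, so $|\phi'(w_i^\top x)-\phi'(w_i^{*\top}x)|\le L_2|(w_i-w_i^*)^\top x|$; Property~\ref{pro:gradient} gives $0\le\phi'(w_i^\top x)\le L_1\|w_i\|^p|\bar u_i^\top x|^p$ with $\bar u_i=w_i/\|w_i\|$ a unit vector; and the mean value theorem together with Property~\ref{pro:gradient} give $|\phi(w_i^\top x)-\phi(w_i^{*\top}x)|\le L_1(|w_i^\top x|+|w_i^{*\top}x|)^p|(w_i-w_i^*)^\top x|$. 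For an off-diagonal block I would telescope $\phi'(w_j^\top x)\phi'(w_l^\top x)-\phi'(w_j^{*\top}x)\phi'(w_l^{*\top}x)=(\phi'(w_j^\top x)-\phi'(w_j^{*\top}x))\phi'(w_l^\top x)+\phi'(w_j^{*\top}x)(\phi'(w_l^\top x)-\phi'(w_l^{*\top}x))$, bound each difference factor by the Lipschitz estimate and each remaining factor by the homogeneous bound, and conclude
\[
\|\Delta_{jl}\|\lesssim v_{\max}^2\sigma_1^p\|W-W^*\|\cdot\max_{\|a\|=1}\E\big[|\tilde u^\top x|\,|\tilde w^\top x|^p(a^\top x)^2\big]\lesssim v_{\max}^2\sigma_1^p\|W-W^*\|,
\]
for suitable unit vectors $\tilde u,\tilde w$, the last step being Fact~\ref{fac:exp_gaussian_dot_three_vectors}. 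The factorization $\phi'(w_j^\top x)^2-\phi'(w_j^{*\top}x)^2=(\phi'(w_j^\top x)-\phi'(w_j^{*\top}x))(\phi'(w_j^\top x)+\phi'(w_j^{*\top}x))$ gives $\|T_j^{(b)}\|\lesssim v_{\max}^2\sigma_1^p\|W-W^*\|$ by the same computation.

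The term $T_j^{(a)}$ is where I expect the main work: it carries the sum over $i$, and it is the one place Property~\ref{pro:hessian}(a) is essential — here $|\phi''(w_j^\top x)|\le L_2$, whereas under Property~\ref{pro:hessian}(b) this term would involve Dirac masses and need the separate non-smooth treatment. I would bound $\|T_j^{(a)}\|\le v_{\max}\sum_{i=1}^k\|\E[v_i^*(\phi(w_i^\top x)-\phi(w_i^{*\top}x))\phi''(w_j^\top x)xx^\top]\|$, apply $|\phi''|\le L_2$ and the mean-value estimate on $|\phi(w_i^\top x)-\phi(w_i^{*\top}x)|$, and invoke Fact~\ref{fac:exp_gaussian_dot_three_vectors} to bound each summand by $\lesssim v_{\max}^2\sigma_1^p\|W-W^*\|$; summing over $i$ gives $\|T_j^{(a)}\|\lesssim k\,v_{\max}^2\sigma_1^p\|W-W^*\|$ and hence $\|\Delta_{jj}\|\lesssim k\,v_{\max}^2\sigma_1^p\|W-W^*\|$.

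Finally I would assemble the blocks: writing $\Delta=\sum_{j,l}J_j\Delta_{jl}J_l^\top$ with $J_j$ the canonical block-inclusion isometry, the triangle inequality gives
\[
\|\Delta\|\le\sum_{j,l}\|\Delta_{jl}\| = \sum_{j}\|\Delta_{jj}\| + \sum_{j\neq l}\|\Delta_{jl}\| \lesssim k\cdot k\,v_{\max}^2\sigma_1^p\|W-W^*\| + k^2\cdot v_{\max}^2\sigma_1^p\|W-W^*\| \lesssim k^2 v_{\max}^2\sigma_1^p\|W-W^*\|,
\]
which is the claimed bound. The main obstacle is the diagonal term $T_j^{(a)}$: one must carefully track the mean-value factorization of $\phi(w_i^\top x)-\phi(w_i^{*\top}x)$ so that a single factor of $(w_i-w_i^*)^\top x$ is produced (yielding the required linear dependence on $\|W-W^*\|$), and then verify that the resulting degree-$(p+3)$ Gaussian moment is controlled by Fact~\ref{fac:exp_gaussian_dot_three_vectors} uniformly in the unit direction $a$.
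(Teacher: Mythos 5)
Your proposal is correct and follows essentially the same route as the paper: the same block decomposition of $\Delta$, the same telescoping of $\phi'(w_j^\top x)\phi'(w_l^\top x)-\phi'(w_j^{*\top}x)\phi'(w_l^{*\top}x)$, the same use of the $L_2$-Lipschitz bound on $\phi'$, the homogeneous bound $\phi'(z)\le L_1|z|^p$, and Fact~\ref{fac:exp_gaussian_dot_three_vectors} on each block. The only (immaterial) difference is the final assembly: you use the crude $\|\Delta\|\le\sum_{j,l}\|\Delta_{jl}\|$, while the paper bounds the quadratic form via Cauchy--Schwarz on $\sum_{j,l}\|\Delta_{jl}\|\,\|a_j\|\,\|a_l\|$; both yield the claimed $k^2$ factor.
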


\begin{proof}

Let $\Delta =\nabla^2 f_{\cal D}(W) - \nabla^2 f_{\cal D}(W^*)$. For each $(i,l)\in [k] \times [k]$, let $\Delta_{i,l} \in \mathbb{R}^{d \times d}$ denote the $(i,l)$-th block of $\Delta $. 
Then, for $i\neq l$, we have
\begin{align*}
\Delta_{i,l} = & ~ \underset{x\sim {\cal D}_d}{\E} \left[ v^*_i v^*_l \left( \phi'( w_i^\top x)  \phi'( w_l^\top x) -\phi'( w_i^{*\top} x)  \phi'( w_l^{*\top} x) \right)  x x^\top \right],
\end{align*}
and for $i= l$, we have {%\small
\begin{align}\label{eq:diag_population} 
 & ~ \Delta_{i,i} \notag \\
= & ~ \underset{x\sim {\cal D}_d}{\E} \left[ \left(  \sum_{r=1}^k v_r^* \phi( w_{r}^\top x)  - y \right) v_i^* \phi''( w_i^\top x) x x^\top + v^{*2}_i \left( \phi'^2( w_i^\top x) -\phi'^2 ( w_i^{*\top} x)  \right) x x^\top \right] \notag \\
= & ~ \underset{x\sim {\cal D}_d}{\E} \left[ \left(  \sum_{r=1}^k v_r^* \phi( w_{r}^\top x)  - y \right) v_i^* \phi''( w_i^\top x) x x^\top \right] + \underset{x\sim {\cal D}_d}{\E} \left[ v^{*2}_i \left( \phi'^2( w_i^\top x) -\phi'^2 ( w_i^{*\top} x)  \right) x x^\top \right]. 
\end{align}
}

In the next a few paragraphs, we first show how to bound the off-diagonal term, and then show how to bound the diagonal term.  

First, we consider off-diagonal terms. 
\begin{align}\label{eq:smooth_pop_local_off_diagonal}
 & ~ \| \Delta_{i,l} \| \notag \\
= & ~ \left\| \underset{x\sim {\cal D}_d }{\E} \left[  v^*_i v^*_l \left( \phi'( w_i^\top x)  \phi'( w_l^\top x) -\phi'( w_i^{*\top} x)  \phi'( w_l^{*\top} x) \right)  x x^\top \right] \right\|  \notag \\
\leq & ~ v_{\max}^2\max_{\| a\|=1} \underset{x\sim {\cal D}_d}{ \E} \left[   \left|\phi'( w_i^\top x)  \phi'( w_l^\top x) -\phi'( w_i^{*\top} x)  \phi'( w_l^{*\top} x) \right| \cdot (x^\top  a)^2 \right] \notag \\
\leq & ~ v_{\max}^2 \max_{\| a\|=1}  \underset{x\sim {\cal D}_d}{ \E} \left[   \left( |\phi'( w_i^\top x)-\phi'( w_i^{*\top} x)| |\phi'( w_l^\top x)|  + |\phi'( w_i^{*\top} x)| |\phi'( w_l^\top x) -  \phi'( w_l^{*\top} x)| \right)  (x^\top a)^2 \right] \notag \\
= & ~ v_{\max}^2 \max_{\| a\|=1} \left( \underset{x\sim {\cal D}_d}{ \E} \left[   |\phi'( w_i^\top x)-\phi'( w_i^{*\top} x)| |\phi'( w_l^\top x)|  (x^\top  a)^2 \right] \right. \notag \\ 
& ~ + \left. \underset{x\sim {\cal D}_d}{ \E} \left[ |\phi'( w_i^{*\top} x)| |\phi'( w_l^\top x) -  \phi'( w_l^{*\top} x)|   (x^\top a)^2 \right] \right) \notag \\
\leq & ~ v_{\max}^2 \max_{\| a\|=1}\left( \underset{x\sim {\cal D}_d}{\E} \left[   L_2 |( w_i- w_i^*)^\top x| |\phi'( w_l^\top x)|  (x^\top a)^2 \right] + \underset{x\sim {\cal D}_d}{\E} \left[ |\phi'( w_i^{*\top} x)| L_2|( w_l- w_l^*)^\top x|   (x^\top a)^2 \right] \right) \notag \\
\leq & ~ v_{\max}^2 \max_{\| a\|=1}\left( \underset{x\sim {\cal D}_d}{\E} \left[   L_2 |( w_i- w_i^*)^\top x|  L_1  | w_l^\top x|^p  (x^\top a)^2 \right] + \underset{x\sim {\cal D}_d}{\E} \left[ L_1 |  w_i^{*\top} x|^p  L_2|( w_l- w_l^*)^\top x|   (x^\top a)^2 \right] \right) \notag \\
\leq & ~ v_{\max}^2 L_1 L_2 \max_{\| a\|=1}\left(  \underset{x\sim {\cal D}_d}{\E} \left[    |( w_i- w_i^*)^\top x|   | w_l^\top x|^p  (x^\top a)^2 \right]+  \underset{x\sim {\cal D}_d}{\E} \left[ |( w_l- w_l^*)^\top x| |  w_i^{*\top} x|^p     (x^\top a)^2 \right]  \right) \notag \\
\lesssim & ~ v_{\max}^2 L_1 L_2 \max_{\| a\|=1} ( \| w_i -w_i^* \|  \| w_l \|^p \| a \|^2 +  \| w_l - w_l^* \|  \| w_i^* \|^p \| a\|^2) \notag \\
\lesssim & ~ v_{\max}^2 L_1 L_2 \sigma_1^p(W^*) \|W-W^*\|
\end{align}
where the first step follows by definition of $\Delta_{i,l}$, the second step follows by definition of spectral norm and $v_i^* v_l^* \leq v_{\max}^2$, the third step follows by triangle inequality, the fourth step follows by linearity of expectation, the fifth step follows by Property~\ref{pro:hessian}(a), i.e., $|\phi'(w_i^\top x) - \phi'(w_i^{*\top} x) | \leq L_2 | (w_i - w_i^*)^\top x|$, the sixth step follows by Property~\ref{pro:gradient}, i.e., $\phi'(z) \leq L_1 |z|^p$, the seventh step follows by Fact~\ref{fac:exp_gaussian_dot_three_vectors}, and the last step follows by $\|a\|^2=1$, $\|w_i -w_i^* \| \leq \| W - W^*\|$, $\|w_i\| \leq \frac{3}{2} \|w_i^*\|$, and $\| w_i^*\| \leq \sigma_1(W^*)$.

  Note that the proof for the off-diagonal terms also applies to bounding the second-term in the diagonal block Eq.~\eqref{eq:diag_population}. 
Thus we only need to show how to bound the first term in the diagonal block Eq.~\eqref{eq:diag_population}. 
\begin{align}\label{eq:smooth_pop_local_diagonal}
& ~ \left\| \underset{x \sim {\cal D}_d }{\E} \left[ \left( \sum_{r=1}^k v_r^* \phi( w_{r}^\top x)  - y \right) v_i^* \phi''( w_i^\top x) x x^\top \right] \right\| \notag \\
= & ~ \left\| \underset{x \sim {\cal D}_d }{\E} \left[ \left( \sum_{r=1}^k v_r^* ( \phi( w_{r}^\top x)  - \phi(w_r^{*\top} x) ) \right) v_i^* \phi''( w_i^\top x) x x^\top \right] \right\| \notag \\
\leq &  ~v_{\max}^{2} \sum_{r=1}^k \max_{\| a\|=1} \underset{x \sim {\cal D}_d }{\E} [ |\phi( w_{r}^\top x)-\phi( w_{r}^{*\top} x)|  |\phi''(w_i^\top x) | (x^\top  a)^2] \notag \\
\leq &  ~v_{\max}^{2} \sum_{r=1}^k \max_{\| a\|=1} \underset{x \sim {\cal D}_d }{\E} [ |\phi( w_{r}^\top x)-\phi( w_{r}^{*\top} x)| L_2 (x^\top  a)^2] \notag \\
\leq & ~ v_{\max}^{2} L_2 \sum_{r=1}^k \max_{\| a\|=1} \underset{x \sim {\cal D}_d }{\E} \left[  \max_{z\in[ w_r^\top x, w_r^{*\top}x]}|\phi'(z)| \cdot |( w_r- w_r^*)^{\top} x| \cdot  (x^\top  a)^2 \right] \notag \\
\leq & ~ v_{\max}^{2} L_2 \sum_{r=1}^k  \max_{\| a\|=1} \underset{x \sim {\cal D}_d }{\E} \left[  \max_{z\in[ w_r^\top x, w_r^{*\top}x]}L_1 |z|^p  \cdot |( w_r- w_r^*)^{\top} x| \cdot  (x^\top a)^2 \right] \notag \\
\leq & ~ v_{\max}^{2} L_1 L_2 \sum_{r=1}^k  \max_{\| a\|=1} \underset{x \sim {\cal D}_d }{\E} [  ( | w_r^\top x|^p+| w_r^{*\top}x|^p ) \cdot    |( w_r- w_r^*)^{\top} x| \cdot  (x^\top a)^2] \notag \\
\lesssim & ~ v_{\max}^{2} L_1L_2 \sum_{r=1}^k  [ (\| w_r\|^p+\| w_r^{*}\|^p)  \| w_r- w_r^*\|] \notag \\
\lesssim & ~ k v_{\max}^{2}L_1L_2 \sigma_1^p(W^*) \|W-W^*\|,
\end{align}
where the first step follows by $y=\sum_{r=1}^k v_r^* \phi(w_r^{*\top} x)$, the second step follows by definition of spectral norm and $v_r^* v_i^* \leq |v_{\max}|^2$, the third step follows by Property~\ref{pro:hessian}(a), i.e., $|\phi''(w_i^\top x)|\leq L_2$, the fourth step follows by $|\phi(w_r^\top x) - \phi(w_r^{*\top}x) \leq \max_{z\in [ w_r^\top x, w_r^{*\top } x]} |\phi'(z)| | (w_r-w_r^{*} )^\top x |$, the fifth step follows Property~\ref{pro:gradient}, i.e., $|\phi'(z)|\leq L_1 |z|^p$, the sixth step follows by $ \max_{z\in [ w_r^\top x, w_r^{*\top } x]} |z|^p \leq ( |w_r^\top x|^p + |w_r^{*\top} x|^p )$, the seventh step follows by Fact~\ref{fac:exp_gaussian_dot_three_vectors}.

Putting it all together, we can bound the error by
\begin{align*}
& ~ \|\nabla^2 f_{\cal D}(W) - \nabla^2 f_{\cal D}(W^*)\| \\
= & ~ \max_{\| a\|=1} a^\top ( \nabla^2 f_{\cal D}(W) - \nabla^2 f_{\cal D}(W^*) ) a \\
= & ~ \max_{ \| a\|=1} \sum_{i=1}^k \sum_{l=1}^k  a_i^\top \Delta_{i,l}  a_l  \\
= & ~ \max_{ \| a\|=1} \left( \sum_{i=1}^k   a_i^\top \Delta_{i,i}  a_i  +  \sum_{i \neq l}  a_i^\top \Delta_{i,l}  a_l \right)\\
 \leq & ~ \max_{ \| a\|=1} \left( \sum_{i=1}^k \|\Delta_{i,i}\| \| a_i\|^2  + \sum_{i\neq l} \|\Delta_{i,l}\| \| a_i\|  \| a_l\| \right) \\
 \leq & \max_{\| a \|=1} \left( \sum_{i=1}^k  C_1\| a_i\|^2  + \sum_{i\neq l}  C_2 \| a_i\|  \| a_l\| \right) \\
 =   & \max_{\| a \|=1} \left( C_1 \sum_{i=1}^k \| a_i\|^2  + C_2 \left( \left( \sum_{i=1}^k \| a_i\| \right)^2 -  \sum_{i=1}^k \| a_i\|^2 \right)  \right) \\
  \leq   & \max_{\| a \|=1} \left( C_1 \sum_{i=1}^k \| a_i\|^2  + C_2 \left( k \sum_{i=1}^k \| a_i\|^2 -  \sum_{i=1}^k \| a_i\|^2 \right)  \right) \\
  = & \max_{\| a \|=1} ( C_1 + C_2 (k-1)) \\
 \lesssim & ~ k^2 v_{\max}^{2} L_1 L_2 \sigma_1^p(W^*) \|W-W^*\|.
\end{align*}
where the first step follows by definition of spectral norm and $a$ denotes a vector $\in \mathbb{R}^{dk}$, the first inequality follows by $\| A\| = \max_{\|x\| \neq 0, \| y \| \neq 0} \frac{ x^\top A y }{ \| x\| \|y \| }$, the second inequality follows by $\| \Delta_{i,i}\| \leq C_1$ and $\| \Delta_{i,l} \| \leq C_2$, the third inequality follows by Cauchy-Scharwz inequality, the eighth step follows by $\sum_{i=1} \| a_i \|^2 =1$, and the last step follows by Eq~\eqref{eq:smooth_pop_local_off_diagonal} and \eqref{eq:smooth_pop_local_diagonal}.
\end{proof}

\subsubsection{Empirical and Population Difference for Smooth Activations}\label{proof:lemma:emp_pop}

The goal of this Section is to prove Lemma~\ref{lem:emp_pop}. For each $i\in [k]$, let $\sigma_i$ denote the $i$-th largest singular value of $W^*\in \mathbb{R}^{d\times k}$.

Note that Bernstein inequality requires the spectral norm of each random matrix to be bounded almost surely. 
However, since we assume Gaussian distribution for $x$, $\|x\|^2$ is not bounded almost surely. The main idea is to do truncation and then use Matrix Bernstein inequality. Details can be found in Lemma~\ref{lem:modified_bernstein_non_zero} and Corollary~\ref{cor:modified_bernstein_tail_xx}.

\begin{lemma}[Empirical and population difference for smooth activations]\label{lem:emp_pop}
Let $\phi(z)$ satisfy Property~\ref{pro:gradient},\ref{pro:expect} and \ref{pro:hessian}(a).
% is satisfied because of its smoothness.
Let $W$ satisfy $\|W-W^*\|\leq \sigma_k/2$. Let $S$ denote a set of i.i.d. samples from distribution ${\cal D}$ (defined in~(\ref{eq:model})). Then for any $t\geq 1$ and $0<\epsilon<1/2$,  if 
\begin{align*}
|S| \geq \epsilon^{-2} d \kappa^2 \tau \cdot \poly(\log d, t)
\end{align*}
%$$|S| \gtrsim  \epsilon^{-2}  \kappa^2 \tau t^{p+2}d   \log^{p+3} d$$
then we have, with probability at least $1-d^{-\Omega(t)}$, {\small
\begin{align*} %%% kappa is actually kappa(W^*)
 \| \nabla^2\widehat{f}_S(W) - \nabla^2 f_{\cal D}(W) \| \lesssim & v^{2}_{\max} k \sigma_1^{p} (\epsilon \sigma_1^{p}+ \|W-W^*\|).
\end{align*} }
\end{lemma}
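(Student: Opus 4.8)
The plan is to control the $dk\times dk$ matrix $\nabla^2\widehat f_S(W)-\nabla^2 f_{\cal D}(W)$ block by block and apply a truncated matrix Bernstein inequality to each block. Write $\Delta_{j,l}\in\R^{d\times d}$ for the $(j,l)$ block of this difference. Exactly as in the final computation of Lemma~\ref{lem:smooth_pop_local}, for any unit vector $a=(a_1^\top,\dots,a_k^\top)^\top$ one has $a^\top(\nabla^2\widehat f_S(W)-\nabla^2 f_{\cal D}(W))a\le\sum_{j,l}\|a_j\|\,\|a_l\|\,\|\Delta_{j,l}\|$, so if $\|\Delta_{j,j}\|\le C_1$ for all $j$ and $\|\Delta_{j,l}\|\le C_2$ for all $j\ne l$, then $\|\nabla^2\widehat f_S(W)-\nabla^2 f_{\cal D}(W)\|\le C_1+(k-1)C_2$. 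It therefore suffices to bound a single block, and each block is an empirical-minus-population average of a rank-one matrix $h(x)xx^\top$: for $j\ne l$ take $h(x)=v_j^*v_l^*\phi'(w_j^\top x)\phi'(w_l^\top x)$; the diagonal block splits into a ``second-moment'' part with $h(x)=(v_j^*\phi'(w_j^\top x))^2$ and a ``residual'' part with $h(x)=\big(\sum_{r}v_r^*(\phi(w_r^\top x)-\phi(w_r^{*\top}x))\big)v_j^*\phi''(w_j^\top x)$ (using $y=\sum_r v_r^*\phi(w_r^{*\top}x)$).

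Since $x\sim\N(0,I_d)$, these matrices are not bounded almost surely, so the classical matrix Bernstein inequality does not apply; instead I would invoke Corollary~\ref{cor:modified_bernstein_tail_xx}, which is built on the truncated Lemma~\ref{lem:modified_bernstein_non_zero}. Its hypotheses are verified as follows. The high-probability bound $m$ on $|h(x)|$ comes from Property~\ref{pro:gradient} ($\phi'(z)\le L_1|z|^p$), Property~\ref{pro:hessian}(a) ($|\phi''|\le L_2$), the mean-value estimate $|\phi(w_r^\top x)-\phi(w_r^{*\top}x)|\le\max_z|\phi'(z)|\,|(w_r-w_r^*)^\top x|$, and Fact~\ref{fac:inner_prod_bound}; together with $\|w_r\|\eqsim\|w_r^*\|\le\sigma_1$ (the Fact preceding Lemma~\ref{lem:smooth_pop_local}, valid since $\|W-W^*\|\le\sigma_k/2$) and $\|w_r-w_r^*\|\le\|W-W^*\|$, this gives $|h(x)|\lesssim v_{\max}^2\sigma_1^{2p}\poly(\log|S|)$ for the first two kinds of $h$ and $|h(x)|\lesssim k\,v_{\max}^2\sigma_1^{p}\|W-W^*\|\poly(\log|S|)$ for the residual one. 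The fourth-moment quantity $L$ in hypothesis~(\RN{3}) of Corollary~\ref{cor:modified_bernstein_tail_xx} is bounded the same way via H\"older and Gaussian moments, yielding $L\lesssim v_{\max}^2\sigma_1^{2p}$ (resp.\ $L\lesssim k\,v_{\max}^2\sigma_1^{p}\|W-W^*\|$), and $\|\bar B\|$ is positive and at most $\lesssim v_{\max}^2\sigma_1^{2p}$ (resp.\ $\lesssim k\,v_{\max}^2\sigma_1^{p}\|W-W^*\|$) by Fact~\ref{fac:exp_gaussian_dot_three_vectors}. Corollary~\ref{cor:modified_bernstein_tail_xx} then gives, with probability $1-d^{-\Omega(t)}$ per block, $\|\Delta_{j,l}\|\lesssim\epsilon\,v_{\max}^2\sigma_1^{2p}$ for the off-diagonal and second-moment parts and $\|\Delta_{j,j}^{\mathrm{res}}\|\lesssim\epsilon\,k\,v_{\max}^2\sigma_1^{p}\|W-W^*\|\le k\,v_{\max}^2\sigma_1^{p}\|W-W^*\|$ (absorbing $\epsilon<1$) for the residual part.

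Combining via the block inequality of the first paragraph, with $C_1\lesssim\epsilon\,v_{\max}^2\sigma_1^{2p}+k\,v_{\max}^2\sigma_1^{p}\|W-W^*\|$ and $C_2\lesssim\epsilon\,v_{\max}^2\sigma_1^{2p}$, and a union bound over the $k^2$ blocks, yields $\|\nabla^2\widehat f_S(W)-\nabla^2 f_{\cal D}(W)\|\lesssim v_{\max}^2 k\sigma_1^{p}(\epsilon\sigma_1^{p}+\|W-W^*\|)$ with probability $1-d^{-\Omega(t)}$. The sample-size requirement is the one Corollary~\ref{cor:modified_bernstein_tail_xx} imposes for each block, namely $|S|\gtrsim(t\log d)\,(L^2 d+\|\bar B\|^2+(m\,t\,d\log|S|)\,\|\bar B\|\epsilon)/(\epsilon^2\|\bar B\|^2)$ together with $\gamma+|S|^{-1}d^{-2t}\lesssim(\epsilon\|\bar B\|/L)^2$; substituting the estimates above, and converting the \emph{relative} guarantee of the corollary into the \emph{absolute} bound we need for blocks whose population mean $\bar B$ has small spectral norm, is what forces the extra $\kappa^2\tau$ and produces $|S|\ge\epsilon^{-2}d\,\kappa^2\tau\cdot\poly(\log d,t)$.

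I expect the main obstacle to be precisely this last bookkeeping: the summands are heavy-tailed, so one must truncate, and then the truncation level $m$, the fourth-moment bound $L$, and the (possibly tiny) norm $\|\bar B\|$ must be tracked simultaneously so that the relative/absolute conversion still lands on the advertised $\epsilon$-dependence and sample complexity. The residual diagonal term needs separate care because its natural scale is $\|W-W^*\|$ rather than $\sigma_1^{p}$ — this is exactly where $\|W-W^*\|\le\sigma_k/2$ (hence $\|w_r\|\eqsim\|w_r^*\|$) and the Lipschitz control of $\phi$ and $\phi'$ from Properties~\ref{pro:gradient} and~\ref{pro:hessian}(a) enter, and it is the term responsible for the $\|W-W^*\|$ summand in the final bound.
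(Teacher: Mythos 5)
Your proposal follows essentially the same route as the paper: the same block decomposition of $\nabla^2\widehat f_S(W)-\nabla^2 f_{\cal D}(W)$, the same split of the diagonal block into a $\phi'^2$-part and a $\phi''$-residual, the same application of the truncated matrix Bernstein bound (Corollary~\ref{cor:modified_bernstein_tail_xx}) to each rank-one-weighted block, the same verification of its hypotheses via Facts~\ref{fac:inner_prod_bound}, \ref{fac:gaussian_norm_bound} and \ref{fac:exp_gaussian_dot_three_vectors}, and the same $C_1+(k-1)C_2$ recombination with a union bound over $k^2$ blocks; the source of $\kappa^2\tau$ (lower bounds on $\|\bar B\|$ via $\rho$ and $\kappa$) is also identified correctly. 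One point to repair: for the residual diagonal term you apply the corollary directly to the \emph{signed} function $h(x)=\bigl(\sum_r v_r^*(\phi(w_r^\top x)-\phi(w_r^{*\top}x))\bigr)v_j^*\phi''(w_j^\top x)$, but the corollary's guarantee is relative to $\|\E[h(x)xx^\top]\|$, which for this $h$ has no useful lower bound (it can vanish), so the sample-complexity condition $n\gtrsim L^2d/(\epsilon^2\|\bar B\|^2)$ can blow up. The paper avoids this by first dominating the summand, $-\wh B_r(x)\preceq(\phi(w_r^\top x)-\phi(w_r^{*\top}x))\phi''(w_i^\top x)xx^\top\preceq\wh B_r(x)$ with the PSD matrix $\wh B_r(x)=L_1L_2(|w_r^\top x|^p+|w_r^{*\top}x|^p)\,|(w_r-w_r^*)^\top x|\,xx^\top$, and applying Bernstein only to $\wh B_r$, whose mean satisfies $\|\ov B_r\|\eqsim L_1L_2\|w_r\|^p\|w_r-w_r^*\|$; the resulting bound on that block is then the absolute quantity $k v_{\max}^2\sigma_1^p\|W-W^*\|$ rather than a concentration statement about the signed residual. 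With that one-line domination inserted, your argument matches the paper's.
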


\begin{proof}
Define $\Delta = \nabla^2 f_{\cal D}(W) -  \nabla^2\widehat{f}_S(W)$. Let's first consider the diagonal blocks. 
Define 
\begin{align*}
 \Delta_{i,i} = & \underset{x\sim {\cal D}_d}{\E} \left[ \left( \overset{k}{ \underset{r=1}{\sum} } v_r^* \phi( w_r^\top x)  - y \right) v_i^* \phi''( w_i^\top x) x x^\top + v^{*2}_i \phi'^2( w_i^\top x)  x x^\top \right] \\
& ~ - \left( \frac{1}{n}\sum_{j=1}^n \left( \overset{k}{ \underset{r=1}{\sum} } v_r^* \phi( w_r^\top x_j)  - y \right) v_i^* \phi''( w_i^\top x_j) x_j x_j^\top  + v^{*2}_i \phi'^2( w_i^\top x_j)  x_j x_j^\top \right).
\end{align*}

Let's further decompose $\Delta_{i,i}$ into $\Delta_{i,i} = \Delta_{i,i}^{(1)}+\Delta_{i,i}^{(2)}$, where

\begin{align*}
 & ~\Delta_{i,i}^{(1)} \\
 = &~ \underset{x\sim {\cal D}_d}{\E} \left[ \left( \overset{k}{ \underset{r=1}{\sum} } v_r^* \phi( w_{r}^\top x)  - y \right) v_i^* \phi''( w_i^\top x) x x^\top \right]  - \frac{1}{n}\sum_{j=1}^n \left( \overset{k}{ \underset{r=1}{\sum} } v_r^* \phi( w_{r}^\top x_j)  - y \right) v_i^* \phi''( w_i^\top x_j) x_j x_j^\top \\
=  & ~   v_i^* \sum_{r=1}^k \left( v_r^*  \underset{x\sim {\cal D}_d}{\E} \left[( \phi( w_{r}^\top x)  - \phi( w_r^{*\top} x)) \phi''( w_i^\top x) x x^\top \right] \right.  \\
- &  \frac{1}{n}\sum_{j=1}^n \left. (  \phi( w_{r}^\top x_j)  -  \phi( w_r^{*\top} x_j))  \phi''( w_i^\top x_j) x_j x_j^\top \right),
\end{align*}
and
\begin{align}\label{eq:delta_ii_2}
 \Delta_{i,i}^{(2)} =  \underset{x\sim {\cal D}_d }{\E}[ v^{*2}_i \phi'^2 ( w_i^\top x) x x^\top ] - \frac{1}{n}\sum_{j=1}^n [v^{*2}_i \phi'^2 ( w_i^\top x_j) x_j x_j^\top ].
\end{align}

The off-diagonal block is
\begin{align*}
 \Delta_{i,l} = v^{*}_iv^{*}_l \left( \underset{x\sim \D_d}{\E} [  \phi'( w_i^\top x) \phi'( w_l^\top x) x x^\top ]  - \frac{1}{n}\sum_{j=1}^n  \phi'( w_i^\top x_j)\phi'( w_l^\top x_j) x_j x_j^\top  \right)
\end{align*}

Combining Claims.~\ref{cla:bound_emp_diag_1}, \ref{cla:bound_emp_diag_2} and \ref{cla:bound_emp_off_diag}, and taking a union bound over $k^2$ different $\Delta_{i,j}$, we obtain if 
$n\geq \epsilon^{-2} \kappa^2(W^*) \tau d   \cdot \poly(t, \log d) $
for any $\epsilon \in (0,1/2)$, with probability at least $1-1/d^{t}$,
\begin{align*}
& \| \nabla^2\widehat{f}_S(W) - \nabla^2 f(W) \| 
\lesssim  v^{2}_{\max} k \sigma_1^{p}(W^*)(\epsilon \sigma_1^{p}(W^*) + \|W-W^*\|) 
\end{align*}

\end{proof}

\begin{claim}\label{cla:bound_emp_diag_1}
For each $i\in [k]$, if $n \geq d \poly(\log d, t)$
 \begin{align*}
 \| \Delta_{i,i}^{(1)} \| \lesssim k v_{\max}^2 \sigma_1^p(W^*) \| W - W^*\|
 \end{align*}
 holds with probability $1-1/d^{4t}$.
\end{claim}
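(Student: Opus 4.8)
The plan is to view $\Delta_{i,i}^{(1)}$ as $v_i^*\sum_{r=1}^k v_r^*\,(\ov B_r-\wh B_r)$, where $B_r(x)=h_r(x)\,xx^\top$ with $h_r(x)=\bigl(\phi(w_r^\top x)-\phi(w_r^{*\top}x)\bigr)\,\phi''(w_i^\top x)$, $\ov B_r=\E_{x\sim{\cal D}_d}[B_r(x)]$ and $\wh B_r=\frac1n\sum_{j\in[n]}B_r(x_j)$, and to control each of the $k$ summands with the truncated matrix Bernstein bound of Corollary~\ref{cor:modified_bernstein_tail_xx}, then combine by the triangle inequality and a union bound. The workhorse pointwise estimate is obtained from Property~\ref{pro:hessian}(a) ($|\phi''|\le L_2$) and the mean value theorem together with Property~\ref{pro:gradient} ($|\phi'(z)|\le L_1|z|^p$): $|h_r(x)|\le L_1L_2\,(|w_r^\top x|^p+|w_r^{*\top}x|^p)\,|(w_r-w_r^*)^\top x|$. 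This single inequality drives the verification of both hypotheses (I) and (III) of the corollary.

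For hypothesis (I), $w_r,w_r^*$ and $w_r-w_r^*$ are fixed vectors (by the resampling assumption $W$ is independent of $S$), so Fact~\ref{fac:inner_prod_bound} applied to each and a union bound give, with probability $1-1/\poly(n,d)$, that $|w_r^\top x|,|w_r^{*\top}x|\lesssim\sigma_1\sqrt{\log n}$ and $|(w_r-w_r^*)^\top x|\lesssim\|W-W^*\|\sqrt{\log n}$, hence $|h_r(x)|\le m$ with $m:=L_1L_2\,\sigma_1^p\,\|W-W^*\|\cdot\poly(\log n)$ and $\gamma=1/\poly(n,d)$. For hypothesis (III), squaring the workhorse inequality twice, expanding $(a+b)^4\lesssim a^4+b^4$, taking expectations, and invoking Fact~\ref{fac:exp_gaussian_dot_three_vectors} with the constant exponents $4p$ and $4$, one gets $\E[h_r^4(x)]\lesssim (L_1L_2)^4\,\sigma_1^{4p}\,\|W-W^*\|^4$, so $L:=(\E h_r^4)^{1/4}\lesssim L_1L_2\,\sigma_1^p\,\|W-W^*\|$. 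Hypothesis (II) ($\|\ov B_r\|>0$) holds generically, and the computation carried out for the diagonal term in the proof of Lemma~\ref{lem:smooth_pop_local} shows in any case $\|\ov B_r\|\lesssim\sigma_1^p\,\|W-W^*\|$, which is of the same order as $L$ (up to the $\poly(\log n)$ in $m$).

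Applying Corollary~\ref{cor:modified_bernstein_tail_xx} to $B_r$ with $\epsilon$ tuned to the desired absolute accuracy $\delta:=c\,\sigma_1^p\|W-W^*\|$ (so $\epsilon=\delta/\|\ov B_r\|\le 1$ in the nondegenerate regime $\|\ov B_r\|\gtrsim\delta$, or $\epsilon=1/2$ otherwise) yields $\|\wh B_r-\ov B_r\|\lesssim\sigma_1^p\|W-W^*\|$ once $n\ge d\cdot\poly(\log d,t)$: with this choice of $\epsilon$ the dominant term in the sample requirement, $(t\log d)\,L^2 d/(\epsilon^2\|\ov B_r\|^2)$, is $d\cdot\poly(\log d,t)$ since $L\lesssim\delta\lesssim\epsilon\|\ov B_r\|$, the term $mtd\log n\,\|\ov B_r\|\epsilon/(\epsilon^2\|\ov B_r\|^2)$ only contributes $\poly(\log n)$ factors, and the side condition $\gamma+1/(nd^{2t})\lesssim(\epsilon\|\ov B_r\|/L)^2$ is arranged by taking the exponent $C$ hidden in $\gamma$ large. (If $\|\ov B_r\|$ is much smaller than $L$, the relative bound is too weak, and one instead invokes the additive-error form of the estimate, which follows from the same truncation-plus-Bernstein argument as in the proof of Lemma~\ref{lem:modified_bernstein_non_zero}; the population part is then already $\lesssim\sigma_1^p\|W-W^*\|$ by Lemma~\ref{lem:smooth_pop_local}, and the conclusion is unchanged.) Multiplying by $|v_i^*v_r^*|\le v_{\max}^2$, summing over $r\in[k]$, and taking a union bound over the $k$ events — applying the corollary with its parameter set to a large multiple of $t$ so each failure probability is at most $d^{-5t}$ — gives $\|\Delta_{i,i}^{(1)}\|\le\sum_{r=1}^k|v_i^*v_r^*|\,\|\wh B_r-\ov B_r\|\lesssim k\,v_{\max}^2\,\sigma_1^p\,\|W-W^*\|$ with probability at least $1-1/d^{4t}$.

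The main difficulty here is not conceptual but careful bookkeeping: because $\|x\|^2$ is unbounded, a vanilla matrix Bernstein inequality does not apply, so the truncated version of Corollary~\ref{cor:modified_bernstein_tail_xx} is essential, and one must simultaneously track the truncation level $m$, the induced bias term (the $L\gamma^{1/2}$ appearing inside that corollary's proof), and the interplay between $L$, $\|\ov B_r\|$ and $\epsilon$ tightly enough that the sample requirement collapses to $d\cdot\poly(\log d,t)$ and the failure probability to $d^{-\Omega(t)}$. The only genuinely delicate point beyond routine estimation is the degeneracy $\|\ov B_r\|\ll L$, which, as indicated above, is absorbed by passing to the additive form of the Bernstein estimate.
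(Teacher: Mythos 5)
Your proposal is correct and follows the same skeleton as the paper's proof (per-component decomposition over $r\in[k]$, the pointwise bound $|(\phi(w_r^\top x)-\phi(w_r^{*\top}x))\phi''(w_i^\top x)|\le L_1L_2(|w_r^\top x|^p+|w_r^{*\top}x|^p)|(w_r-w_r^*)^\top x|$, the truncated matrix Bernstein bound of Corollary~\ref{cor:modified_bernstein_tail_xx}, and a union bound over $k$ terms), but it differs in one substantive choice: you concentrate the \emph{signed} matrix $h_r(x)xx^\top$ with $h_r=(\phi(w_r^\top x)-\phi(w_r^{*\top}x))\phi''(w_i^\top x)$, whereas the paper first passes to the PSD sandwich $-\wh{B}_r(x)\preceq h_r(x)xx^\top\preceq \wh{B}_r(x)$ with the nonnegative envelope $\wh{B}_r(x)=L_1L_2(|w_r^\top x|^p+|w_r^{*\top}x|^p)|(w_r-w_r^*)^\top x|\,xx^\top$ and concentrates that envelope instead. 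This is precisely what dissolves the degeneracy you spend your last paragraph on: because the envelope's integrand is nonnegative, Fact~\ref{fac:exp_gaussian_dot_three_vectors} gives a matching \emph{lower} bound $\|\ov{B}_r\|\gtrsim L_1L_2\|w_r\|^p\|w_r-w_r^*\|$, so $\|\ov{B}_r\|\eqsim L$ always holds and the relative-error form of the corollary applies with $\epsilon=\Theta(1)$; the paper then bounds $\|\Delta_{i,i}^{(1)}\|$ by $v_{\max}^2\sum_r(\|\E[\wh{B}_r]\|+\|\frac{1}{|S|}\sum_{x\in S}\wh{B}_r(x)\|)\lesssim(1+\epsilon)\sum_r v_{\max}^2\|\ov{B}_r\|$ directly, never needing the difference $\ov{B}_r-\wh{B}_r$ of the signed quantities to be small in any additive sense. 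Your route is also sound — the additive fallback you describe does follow from the proof of Lemma~\ref{lem:modified_bernstein_non_zero} with $\nu\lesssim L^2d$ and target accuracy $\delta\eqsim L$, yielding the same $n\geq d\cdot\poly(\log d,t)$ requirement — but it requires restating the corollary in additive form, which the paper's envelope trick avoids. Note also that the paper's stated bound for this claim is one-sided in exactly the sense its argument delivers (a bound on the norm via the envelope), so your slightly stronger concentration statement is more than is needed.
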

\begin{proof}
 %Let's first consider $\Delta_{i,i}^{(1)}$. We don't try to bound the difference between the empirical diagonals and the population diagonals. Instead we try to bound the magnitude of both since $W$ and $W^*$ are close to each other. 

 For each $r\in [k]$, we define function $\wh{B}_r : \mathbb{R}^{d} \rightarrow \mathbb{R}^{d\times d}$,
 \begin{align*}
  \wh{B}_{r}(x) = L_1L_2 \cdot (|  w_r^\top x|^p+|  w_r^{*\top} x|^p) \cdot |( w_r - w_r^*)^\top x| \cdot x x^\top.
 \end{align*}
According to Properties~\ref{pro:gradient},\ref{pro:expect} and \ref{pro:hessian}(a), we have for each $ x\in S$,
$$-\wh{B}_{r}(x) \preceq (  \phi( w_r^\top x)  -  \phi( w_r^{*\top} x))  \phi''( w_i^\top x) x x^\top \preceq \wh{B}_{r}(x) $$
Therefore, 
\begin{align*}
\Delta_{i,i}^{(1)} \preceq v_{\max}^{2}  \sum_{r=1}^k \left(\underset{x\sim {\cal D}_d}{\E}[ \wh{B}_{r}(x)] + \frac{1}{|S|}\sum_{x\in S} \wh{B}_{r}(x) \right).
\end{align*}

Let $h_r(x) = L_1L_2 |  w_{r}^\top x|^p \cdot |( w_r - w_r^*)^\top x| $. Let $\ov{B}_r  = \E_{x\sim {\cal D}_d } [ h_r(x)x x^\top ] $. Define function $B_{r}(x)=  h_r(x)x x^\top$. 

(\RN{1}) Bounding $|h_r(x)|$.

 According to Fact~\ref{fac:inner_prod_bound}, we have for any constant $t \geq1$,
with probability $1-1/( nd^{4t})$,
\begin{align*}
|h_r(x)| =  L_1L_2 |  w_{r}^\top x|^p |( w_r - w_r^*)^\top x| 
\lesssim  L_1L_2  \| w_r\|^p\| w_r- w_r^*\|  (t \log  n)^{(p+1)/2}.
\end{align*}

(\RN{2}) Bounding $\| \ov{B}_r \|$.

 \begin{align*}
\|\ov{B}_r\|  & \geq \underset{x\sim {\cal D}_d}{\E} \left[L_1L_2 |  w_r^\top x|^p |( w_r - w_r^*)^\top x| \left(\frac{( w_r- w_r^*)^\top x}{\| w_r- w_r^*\|} \right)^2 \right]  \gtrsim L_1L_2 \| w_r\|^{p}\| w_r -  w_r^*\|,
\end{align*}
where the first step follows by definition of spectral norm, and last step follows by Fact~\ref{fac:exp_gaussian_dot_three_vectors}. Using Fact~\ref{fac:exp_gaussian_dot_three_vectors}, we can also prove an upper bound $\| \ov{B}_r \|$, $\| \ov{B}_r \| \lesssim L_1L_2  \| w_r\|^{p}\| w_r -  w_r^*\|$.

(\RN{3}) Bounding $(\E_{x\sim \D_d}[h^4(x)])^{1/4}$

Using Fact~\ref{fac:exp_gaussian_dot_three_vectors}, we have
\begin{align*}
\left( \underset{x\sim{\cal D}_d}{\E} [  h^4(x) ] \right)^{1/4} = L_1L_2 \left( \underset{x\sim{\cal D}_d}{\E} \left[ \left(|  w_r^\top x|^p |( w_r - w_r^*)^\top x| \right)^4 \right] \right)^{1/4} 
\lesssim L_1 L_2 \| w_r\|^{p}\| w_r- w_r^*\|.
\end{align*}

By applying Corollary~\ref{cor:modified_bernstein_tail_xx}, if $ n \geq \epsilon^{-2} d  \poly( \log d, t)$, then with probability $1-1/d^{4t}$,

\begin{align}\label{eq:abs_g}
& ~ \left\| \underset{x\sim{\cal D}_d}{\E} \left[|  w_{r}^\top x|^p \cdot |( w_r - w_r^*)^\top x| \cdot x x^\top \right]  -\frac{1}{|S|} \left( \sum_{x\in S} |  w_{r}^\top x_j|^p \cdot |( w_r - w_r^*)^\top x| \cdot xx^\top \right)  \right\| \notag \\
= & ~ \left\| \ov{B}_r - \frac{1}{|S|} \sum_{x\in S} B_{r} (x) \right\| \notag \\
 \leq & ~ \epsilon\| \ov{B}_r\| \notag \\
 \lesssim & ~  \epsilon \| w_r\|^{p}\| w_r -  w_r^*\|.
\end{align}

If $\epsilon \leq 1/2$, we have
\begin{align*}
\|\Delta_{i,i}^{(1)} \| & \lesssim \sum_{i=1}^k v_{\max}^2 \| \ov{B}_r \| \lesssim k v_{\max}^2 \sigma_1^p(W^*) \|W-W^*\| 
\end{align*}
\end{proof}

\begin{claim}\label{cla:bound_emp_diag_2}
For each $i\in [k]$, if $n\geq \epsilon^{-2} d \tau \poly(\log d, t)$ , then 
\begin{align*}
\| \Delta_{i,i}^{(2)}\| \lesssim \epsilon v_{\max}^2 \sigma_1^{2p}
\end{align*}
holds with probability $1-1/d^{4t}$.
\end{claim}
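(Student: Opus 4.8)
\textbf{Proof proposal for Claim~\ref{cla:bound_emp_diag_2}.}
The plan is to recognize $\Delta_{i,i}^{(2)}$ (Eq.~\eqref{eq:delta_ii_2}) as the deviation between the population mean and the empirical average of the rank-one random matrix $B(x)=h(x)\,xx^\top$ with $h(x):=v_i^{*2}\phi'^2(w_i^\top x)\ge 0$, and then to invoke Corollary~\ref{cor:modified_bernstein_tail_xx} directly; so the work is just to verify its three hypotheses for this $h$. For hypothesis (I), Property~\ref{pro:gradient} gives $|h(x)|\le v_{\max}^2 L_1^2 |w_i^\top x|^{2p}$, and Fact~\ref{fac:inner_prod_bound} combined with $\|w_i\|\le \tfrac32\|w_i^*\|\le \tfrac32\sigma_1$ (valid since $\|W-W^*\|\le\sigma_k/2$, using the Fact preceding Lemma~\ref{lem:smooth_pop_local} and Fact~\ref{fac:sigma_k_Wbar}) yields $|h(x)|\le m$ with $m\eqsim v_{\max}^2\sigma_1^{2p}\poly(\log n,t)$ except with probability $\gamma=1/(nd^{4t})$. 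For hypothesis (III), Fact~\ref{fac:exp_gaussian_dot_three_vectors} (with two arguments trivial) gives $(\E[h^4(x)])^{1/4}=v_i^{*2}(\E[\phi'^8(w_i^\top x)])^{1/4}\le v_{\max}^2 L_1^2(\E|w_i^\top x|^{8p})^{1/4}\lesssim v_{\max}^2\sigma_1^{2p}=:L$. For hypothesis (II), one computes $\bar B=v_i^{*2}\bigl(\beta_2(\|w_i\|)\,\ov w_i\ov w_i^\top+\beta_0(\|w_i\|)(I-\ov w_i\ov w_i^\top)\bigr)$, so $\|\bar B\|=v_i^{*2}\max\{\beta_0(\|w_i\|),\beta_2(\|w_i\|)\}$; Property~\ref{pro:expect} forces $\beta_0(\|w_i\|),\beta_2(\|w_i\|)\ge\rho(\|w_i\|)>0$, so $\|\bar B\|\ge v_{\min}^2\rho(\|w_i\|)\ge v_{\min}^2\min_{\sigma\in[\sigma_k/2,3\sigma_1/2]}\rho(\sigma)>0$, while Property~\ref{pro:gradient} with Gaussian moment bounds gives $\|\bar B\|\lesssim v_{\max}^2\sigma_1^{2p}$.

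With the hypotheses checked, I apply Corollary~\ref{cor:modified_bernstein_tail_xx} targeting relative error $\epsilon$ (admissible since $0<\epsilon<1/2<1$), with its internal parameter taken to be a suitable constant multiple of $t$ so that the failure probability $2/d^{2t}+n\gamma$ is at most $d^{-4t}$. Its sample requirement reads $n\gtrsim(t\log d)\bigl(L^2 d+\|\bar B\|^2+(mtd\log n)\|\bar B\|\epsilon\bigr)/(\epsilon^2\|\bar B\|^2)$; substituting $L\lesssim v_{\max}^2\sigma_1^{2p}$, $m\lesssim v_{\max}^2\sigma_1^{2p}\poly(\log d,t)$, and the lower bound on $\|\bar B\|$, the dominant first term is $\lesssim\epsilon^{-2}d\,\nu^4\bigl(\sigma_1^{4p}/\min_{\sigma\in[\sigma_k/2,3\sigma_1/2]}\rho^2(\sigma)\bigr)\poly(\log d,t)$, which by Definition~\ref{def:W_tau} ($\tau=(3\sigma_1/2)^{4p}/\min_{\sigma\in[\sigma_k/2,3\sigma_1/2]}\rho^2(\sigma)$) is $\lesssim\epsilon^{-2}d\tau\poly(\log d,t)$, the factor $\nu^4$ and the remaining $d$-independent constants being absorbed into $\poly$. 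The side condition $\gamma+1/(nd^{2t})\lesssim(\epsilon\|\bar B\|/L)^2$ holds trivially since the left side is $O(1/(nd^{2t}))$ while the right side is at least $1/\poly(d)$ under this lower bound on $n$. Hence, with probability at least $1-d^{-4t}$, $\|\Delta_{i,i}^{(2)}\|=\bigl\|\bar B-\tfrac1n\sum_{j=1}^n B(x_j)\bigr\|\le\epsilon\|\bar B\|\lesssim\epsilon v_{\max}^2\sigma_1^{2p}$, which is the claim.

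The argument is essentially bookkeeping: Corollary~\ref{cor:modified_bernstein_tail_xx} already internalizes the truncation needed because $\|x\|^2$ is not bounded almost surely, so no new concentration argument is required. The only step needing care is matching the polynomial factors coming out of the corollary to the advertised $\epsilon^{-2}d\tau\poly(\log d,t)$ — specifically, identifying that the binding lower bound on $\|\bar B\|$ is $v_{\min}^2\min_{\sigma\in[\sigma_k/2,3\sigma_1/2]}\rho(\sigma)$, exactly the quantity entering $\tau$, and then checking that the other two terms of the Bernstein bound ($\|\bar B\|^2$ and $(mtd\log n)\|\bar B\|\epsilon$) and the $\gamma$-condition are all dominated. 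I expect no genuine analytic obstacle beyond this accounting.
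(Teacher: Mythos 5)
Your proposal is correct and follows essentially the same route as the paper: identify $\Delta_{i,i}^{(2)}$ as a deviation of the rank-one random matrix $h(x)xx^\top$, verify the three hypotheses of Corollary~\ref{cor:modified_bernstein_tail_xx} (with $\|\bar B\|$ lower-bounded via $\max\{\beta_0(\|w_i\|),\beta_2(\|w_i\|)\}\ge\rho(\|w_i\|)$ from Property~\ref{pro:expect}), and read off the sample complexity in terms of $\tau$. One small bookkeeping point: by bounding $L$ with $v_{\max}^2$ and $\|\bar B\|$ with $v_{\min}^2$ separately you introduce a spurious $\nu^4$ that is not in the stated sample bound; since the same factor $v_i^{*2}$ multiplies both $L$ and $\|\bar B\|$ for the fixed index $i$, it cancels exactly (equivalently, factor $v_i^{*2}$ out of $h$ as the paper does), so no $\nu$ dependence actually arises.
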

\begin{proof}
Recall the definition of $\Delta_{i,i}^{(2)}$.
\begin{align*}
 \Delta_{i,i}^{(2)} = \underset{x\sim {\cal D}_d}{\E} [ v^{*2}_i \phi'^2( w_i^\top x) x x^\top ]  - \frac{1}{|S|}\sum_{x\in S} [v^{*2}_i \phi'^2( w_i^\top x) x x^\top ] 
\end{align*}
Let $h_i(x) = \phi'^2 ( w_i^\top x)$. Let $\ov{B}_i = \E_{x\sim {\cal D}_d} [ h_i(x) x x^\top ]$ Define function $B_i(x)  = h_i(x)xx^\top$. 

(\RN{1}) Bounding $|h_i(x)|$.

 For any constant $t \geq 1$, $(\phi'( w_i^\top x) )^2 \leq L_1^2 | w_i^\top x|^{2p} \lesssim L_1^2 \| w_i\|^{2p} t^p \log^p (n)$ with probability $1-1/(nd^{4t})$ according to Fact~\ref{fac:inner_prod_bound}. 

(\RN{2}) Bounding $\| \ov{B}_i \|$. 
\begin{align*}
 \left \| \underset{x\sim {\cal D}_d}{\E} [ \phi'^{2}( w_i^\top x)  x x^\top ] \right\| 
= & ~ \max_{\| a\| = 1}  \underset{x\sim {\cal D}_d}{\E} \left[ \phi'^{2}( w_i^\top x) (x^\top a)^2 \right] \\
= & ~ \max_{\| a\| = 1}  \underset{x\sim {\cal D}_d}{\E} \left[ \phi'^{2}( w_i^\top x)  \left(\alpha \ov{ w}_i^\top x  + \beta x^\top v \right)^2 \right] \\
= & ~ \max_{\alpha^2 +\beta^2= 1,\| v\|=1}  \underset{x\sim {\cal D}_d}{\E} \left[  \phi'^{2}( w_i^\top x) \left((\alpha \ov{ w}_i^\top x )^2 + (\beta x^\top v)^2 \right) \right] \\
= & ~ \max_{\alpha^2 +\beta^2= 1}  \left(\alpha^2 \underset{z\sim {\cal D}_1}{\E} [  \phi'^{2} (\| w_i\| z)  z^2] + \beta^2 \underset{z\sim {\cal D}_1}{\E} [ \phi'^{2}(\| w_i\| z) ] \right)\\
= & ~ \max \left( \underset{x\sim {\cal D}_1}{\E} [  \phi'^{2}(\| w_i\| z)  z^2],  \underset{x\sim {\cal D}_1}{\E} [ \phi'^{2}(\| w_i\| z)  ] \right)
\end{align*}
where $\ov{ w}_i =  w_i/\| w_i\|$ and $v$ is a unit vector orthogonal to $ w_i$ such that $ a = \alpha \ov{ w}_i + \beta v$. Now from Property~\ref{pro:expect}, we have
\begin{align*}
  \rho(\| w_i\|)\leq \left\| \underset{x \sim {\cal D}_d}{\E}  [ \phi'^{2}( w_i^\top x) x x^\top ] \right\| \lesssim  L_1^2 \| w_i\|^{2p}.
\end{align*}

(\RN{3}) Bounding $(\E_{x\sim \D_d} [h_i^4(x)] )^{1/4}$.
\begin{align*}
\left( \E_{x\sim \D_d} [h_i^4(x)] \right)^{1/4} = \left( \underset{x \sim {\cal D}_d}{\E} [ \phi'^8( w_i^\top x) ] \right)^{1/4} \lesssim L_1^2 \| w_i\|^{2p}.
\end{align*}

By applying Corollary ~\ref{cor:modified_bernstein_tail_xx}, we have, for any $0<\epsilon <1$, if $ n\geq 
\epsilon^{-2} d   \frac{\| w_i\|^{4p}}{\rho^2(\| w_i\|)} \poly(\log d, t)
$ 
the following bound holds 
\begin{align*}
\left\| \ov{B}_i - \frac{1}{|S|} \sum_{x\in S} B_i(x) \right\| \leq \epsilon \| \ov{B}_i \|,
\end{align*}
 with probability at least $1-1/d^{4t}$.

Therefore, if 
$n\geq \epsilon^{-2} d \tau \poly(\log d, t)$,
where $\tau=\frac{(3\sigma_1/2)^{4p}}{\min_{\sigma\in [\sigma_k/2,3\sigma_1/2 ]} \rho^2(\sigma)} $, we have with probability $1-1/d^{4t}$
\begin{align*}%\label{eq:bound_emp_diag_2}
 \|\Delta_{i,i}^{(2)}\| \lesssim \epsilon v^{2}_{\max}  \sigma_1^{2p}
\end{align*}
\end{proof}

\begin{claim}\label{cla:bound_emp_off_diag}
For each $i\in [k], j\in [k], i \neq j$, if $n \geq \epsilon^{-2} \kappa^2 \tau d \poly(\log d,t)$, then
\begin{align*}
\| \Delta_{i,j} \| \leq \epsilon v_{\max}^2 \sigma_1^{2p}(W^*)
\end{align*}
holds with probability $1-1/d^{4t}$.
\end{claim}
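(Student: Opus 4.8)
Looking at this claim, I need to bound the spectral norm of the difference between the empirical and population off-diagonal Hessian blocks for smooth activations. This is Claim~\ref{cla:bound_emp_off_diag}, and I should follow the pattern established in the two preceding claims (Claims~\ref{cla:bound_emp_diag_1} and \ref{cla:bound_emp_diag_2}), which both use Corollary~\ref{cor:modified_bernstein_tail_xx}.

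\textbf{Proof plan.} Recall that for $i\neq j$ the off-diagonal block is
\[
\Delta_{i,j} = v^*_i v^*_j \left( \underset{x\sim \D_d}{\E}[\phi'(w_i^\top x)\phi'(w_j^\top x) xx^\top] - \frac{1}{|S|}\sum_{x\in S}\phi'(w_i^\top x)\phi'(w_j^\top x) xx^\top \right).
\]
The plan is to apply Corollary~\ref{cor:modified_bernstein_tail_xx} with the scalar function $h(x) = \phi'(w_i^\top x)\phi'(w_j^\top x)$, so that $B(x) = h(x) xx^\top$ and $\ov B = v_i^* v_j^* {}^{-1}\E[\Delta_{i,j}]$-type object (pulling out the constant $v_i^* v_j^*$ which is at most $v_{\max}^2$ in magnitude). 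I need to verify the three hypotheses of the corollary. First, $|h(x)| = |\phi'(w_i^\top x)||\phi'(w_j^\top x)| \le L_1^2 |w_i^\top x|^p |w_j^\top x|^p$ by Property~\ref{pro:gradient}; combining with Fact~\ref{fac:inner_prod_bound} applied to both $w_i$ and $w_j$ gives a high-probability bound $|h(x)| \lesssim L_1^2 \|w_i\|^p \|w_j\|^p (t\log n)^p$, so we may take $m \eqsim L_1^2 \sigma_1^{2p}(W^*) (t\log n)^p$. Third, the fourth-moment bound $(\E[h^4(x)])^{1/4} = (\E[\phi'^4(w_i^\top x)\phi'^4(w_j^\top x)])^{1/4} \lesssim L_1^2 \|w_i\|^p\|w_j\|^p \lesssim L_1^2 \sigma_1^{2p}(W^*)$ follows from Property~\ref{pro:gradient} and Fact~\ref{fac:exp_gaussian_dot_three_vectors}, so $L \eqsim L_1^2 \sigma_1^{2p}(W^*)$.

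\textbf{The key quantity to control is a lower bound on $\|\ov B\|$}, i.e., on $\|\E[\phi'(w_i^\top x)\phi'(w_j^\top x) xx^\top]\|$, since the relative-error guarantee $\|\ov B - \frac{1}{|S|}\sum B(x)\| \le \epsilon\|\ov B\|$ only translates into the desired absolute bound $\epsilon v_{\max}^2 \sigma_1^{2p}(W^*)$ if $\|\ov B\| \lesssim \sigma_1^{2p}(W^*)$ — and the sample complexity $n\gtrsim (t\log d)(L^2 d + \|\ov B\|^2 + (mtd\log n)\|\ov B\|\epsilon)/(\epsilon^2\|\ov B\|^2)$ blows up if $\|\ov B\|$ is too small. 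This is where the factor $\kappa^2$ in the sample complexity $n\ge \epsilon^{-2}\kappa^2\tau d\,\poly(\log d,t)$ comes from (compare Claim~\ref{cla:bound_emp_diag_1} which has no $\kappa^2$ and Claim~\ref{cla:bound_emp_diag_2} which has $\tau$): I expect that evaluating the $2\times 2$ Gaussian integral $\E_{x}[\phi'(w_i^\top x)\phi'(w_j^\top x)\,xx^\top]$ in the basis spanned by $w_i, w_j$ and computing its operator norm will produce something like $\alpha_1^2(\|w_i\|)\alpha_1^2(\|w_j\|)/(\text{something})$ or a term involving the correlation $\bar w_i^\top \bar w_j$, and lower-bounding it uniformly requires the conditioning of $W^*$, contributing $\kappa$ factors, while the $\tau$ factor tracks $\sigma_1^{2p}/\rho$-type ratios exactly as in Claim~\ref{cla:bound_emp_diag_2}. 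An upper bound $\|\ov B\| \lesssim L_1^2\sigma_1^{2p}(W^*)$ is immediate from Cauchy–Schwarz plus Fact~\ref{fac:exp_gaussian_dot_three_vectors}.

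\textbf{The main obstacle} I anticipate is precisely the lower bound on $\|\ov B\|$: unlike the diagonal case where Property~\ref{pro:expect} directly gives $\rho(\|w_i\|) \le \|\E[\phi'^2(w_i^\top x)xx^\top]\|$, here the off-diagonal cross term need not be bounded below by a clean single-variable quantity, and one has to either exhibit a good test direction $a$ (for instance $a = \bar w_i$ or $a = \bar w_j$, picking up $\E[\phi'(w_i^\top x)\phi'(w_j^\top x)(\bar w_i^\top x)^2]$ which should be positive and $\Theta(1)$-scale when $\phi'\ge 0$) or argue that the whole matrix cannot be negligible. Once $\|\ov B\| \gtrsim \sigma_1^{2p}(W^*)/\kappa^{O(1)}$ is established, plugging $m, L, \|\ov B\|$ into the requirement of Corollary~\ref{cor:modified_bernstein_tail_xx} and simplifying with the definitions $\tau = (3\sigma_1/2)^{4p}/\min_{\sigma\in[\sigma_k/2,3\sigma_1/2]}\rho^2(\sigma)$ and $\kappa = \sigma_1/\sigma_k$ gives $n \ge \epsilon^{-2}\kappa^2\tau d\,\poly(\log d,t)$, and then multiplying the resulting relative error by $|v_i^* v_j^*| \le v_{\max}^2$ and by $\|\ov B\| \lesssim \sigma_1^{2p}(W^*)$ yields $\|\Delta_{i,j}\| \le \epsilon v_{\max}^2 \sigma_1^{2p}(W^*)$ with probability $1 - 1/d^{4t}$, completing the proof.
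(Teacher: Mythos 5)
Your plan is the same as the paper's: treat $h_{i,l}(x)=\phi'(w_i^\top x)\phi'(w_l^\top x)$ as the scalar weight, apply Corollary~\ref{cor:modified_bernstein_tail_xx}, and verify the three hypotheses, with the truncation bound on $|h|$ and the fourth-moment bound $L\lesssim L_1^2\sigma_1^{2p}$ exactly as you state. You have also correctly located the crux: everything hinges on a lower bound for $\|\ov{B}_{i,l}\|=\|\E[\phi'(w_i^\top x)\phi'(w_l^\top x)xx^\top]\|$, since that quantity sits in the denominator of the sample-complexity requirement and is the source of the $\kappa^2\tau$ factor.

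The gap is that you never actually prove that lower bound; you only conjecture that it is $\gtrsim \sigma_1^{2p}/\kappa^{O(1)}$, and your candidate argument (test direction $a=\ov{w}_i$, claiming $\E[\phi'(w_i^\top x)\phi'(w_l^\top x)(\ov{w}_i^\top x)^2]$ is ``positive and $\Theta(1)$-scale'') is not established and is not the right scale in general: for activations like sigmoid with large $\|w_i^*\|$ the correct bound degrades with $\rho$, which is precisely why $\tau$ (which contains $1/\rho^2$) appears in the hypothesis. The paper fills this in by restricting to the two-dimensional span of $\{w_i,w_l\}$, writing $V=U^\top[w_i\; w_l]$, and performing the change of variables $z=V^{\dagger\top}s$ followed by a rescaling $s=u/\sigma_1(V^\dagger)$; this decouples the two arguments of $\phi'$ into independent one-dimensional Gaussians at scale $\sigma_2(V)$, at the cost of a Jacobian/rescaling factor $\sigma_2(V)/\sigma_1(V)\geq 1/\kappa(W^*)$. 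Expanding $(b^\top z)^2$ then yields
\begin{align*}
\|\ov{B}_{i,l}\| \;\geq\; \frac{\sigma_2(V)}{\sigma_1(V)}\Bigl(\E_{z}[\phi'(\sigma_2(V)z)z^2]\cdot\E_{z}[\phi'(\sigma_2(V)z)]-\bigl(\E_{z}[\phi'(\sigma_2(V)z)z]\bigr)^2\Bigr)\;\geq\;\frac{1}{\kappa}\,\rho(\sigma_2(V)),
\end{align*}
where the last step uses the third term $\alpha_0\alpha_2-\alpha_1^2$ in the definition of $\rho$ from Property~\ref{pro:expect} and the fact that $\sigma_2(V)\in[\sigma_k/2,3\sigma_1/2]$ when $\|W-W^*\|\leq\sigma_k/2$. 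Combined with the easy upper bound $\|\ov{B}_{i,l}\|\lesssim L_1^2\sigma_1^{2p}$, this is exactly what converts the relative-error Bernstein bound into the stated absolute bound with $n\geq\epsilon^{-2}\kappa^2\tau d\,\poly(\log d,t)$. Without this step your argument does not close.
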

\begin{proof}
Recall the definition of off-diagonal blocks $\Delta_{i,l}$,
\begin{align}\label{eq:delta_il}
 \Delta_{i,l} = v^{*}_iv^{*}_l \left( \underset{x\sim \D_d}{\E} [  \phi'( w_i^\top x) \phi'( w_l^\top x) x x^\top ]  - \frac{1}{|S|}\sum_{x\in S}  \phi'( w_i^\top x)\phi'( w_l^\top x) x x^\top  \right)
\end{align}

Let $h_{i,l}(x) = \phi'( w_i^\top x) \phi'( w_l^\top x)$. Define function $B_{i,l}(x) = h_{i,l}(x) xx^\top$. Let $\ov{B}_{i,l}  = \E_{x\sim \D_d}[ h_{i,l}(x)xx^\top ]$. 

(\RN{1}) Bounding $|h_{i,l}(x)|$.

 For any constant $t \geq 1$, we have with probability $1-1/(nd^{4t})$
\begin{align*}
 |h_{i,l}(x)| = & ~|\phi'( w_i^\top x) \phi'( w_l^\top x) | \\
 \leq & ~  L_1^2 \| w_i^\top x\|^{p}\| w_l^\top x\|^{p} \\
 \leq & ~ L_1^2 \| w_i\|^{p}\| w_l\|^{p}  (t \log n)^p \\
  \lesssim & ~ L_1^2 \sigma_1^{2p} ( t \log n )^p
\end{align*}
 where the third step follows by Fact~\ref{fac:inner_prod_bound}. 

(\RN{2}) Bounding $\|\ov{B}_{i,l}\|$.

Let $U \in \mathbb{R}^{d \times 2}$ be the orthogonal basis of $\text{span}\{ w_i, w_l\}$ and $U_\perp \in \R^{d\times (d-2)}$ be the complementary matrix of $U$. Let matrix  $V \in \mathbb{R}^{2\times 2}$ denote $U^\top[ w_i\; w_l]$, then $UV = [ w_i\; w_l] \in \mathbb{R}^{d\times 2}$. Given any vector $ a\in \mathbb{R}^d$, there exist vectors $b\in \mathbb{R}^2$ and $c\in \mathbb{R}^{d-2}$ such that $ a = U b+U_{\perp} c$. We can simplify $\|\ov{B}_{i,l}\|$ in the following way,
\begin{align*}
\|\ov{B}_{i,l}\| = & ~ \left\| \underset{x\sim \D_d}{\E} [\phi'( w_i^\top x) \phi'( w_l^\top x) x x^\top ]\right\| \\
= & ~ \max_{\| a\| = 1}  \underset{x\sim \D_d}{\E} [ \phi'( w_i^\top x) \phi'( w_l^\top x)  (x^\top a)^2 ] \\
= & ~ \max_{\| b\|^2+\| c\|^2 = 1}   \underset{x\sim \D_d}{\E} [  \phi'( w_i^\top x) \phi'( w_l^\top x)  ( b^\top U^\top x +  c^\top U_{\perp}^\top x)^2 ] \\
= & ~ \max_{\| b\|^2+\| c\|^2 = 1}   \underset{x\sim \D_d}{\E} [  \phi'( w_i^\top x) \phi'( w_l^\top x)  (( b^\top U^\top x)^2 + ( c^\top U_{\perp}^\top x)^2 )] \\
= & ~ \max_{\| b\|^2+\| c\|^2 = 1} \left( \underbrace{ \underset{ z\sim \D_2}{\E}[  \phi'( v_1^\top  z) \phi'( v_2^\top  z)  ( b^\top  z)^2] }_{A_1} + \underbrace{ \underset{ z\sim \D_2, s\sim \D_{d-2}}{\E}[\phi'( v_1^\top  z) \phi'( v_2^\top  z)  ( c^\top  s)^2 ] }_{A_2} \right)
\end{align*}

We can lower bound the term $A_1$,
\begin{align*}
A_1 = & ~ \underset{z\sim \D_2}{\E} [ \phi'( v_1^\top  z) \phi'( v_2^\top  z) ( b^\top  z)^2] \\
= & ~ \int (2\pi)^{-1}    \phi'( v_1^\top  z) \phi'(v_2^\top  z)  ( b^\top  z)^2 e^{-\| z\|^2/2}d z \\
=& ~ \int (2\pi)^{-1}   \phi'(s_1) \phi'(s_2)  ( b^\top V^{\dagger\top}  s)^2 e^{-\|V^{\dagger\top}  s\|^2/2} \cdot |\det(V^\dagger)| d s  \\
\geq & ~ \int (2\pi)^{-1}   (\phi'(s_1) \phi'(s_2) ) ( b^\top V^{\dagger\top}  s)^2 e^{-\sigma^2_1(V^{\dagger}) \| s\|^2/2}  \cdot |\det(V^\dagger)|  d  s  \\
= & ~ \int (2\pi)^{-1}  (\phi'(u_1/\sigma_1(V^{\dagger})) \phi'(u_2/\sigma_1(V^{\dagger})) ) \cdot ( b^\top V^{\dagger\top} u/\sigma_1(V^{\dagger}))^2 e^{- \|u\|^2/2} |\det(V^\dagger)|/\sigma_1^2(V^{\dagger})  d u  \\
= & ~ \frac{\sigma_2(V)}{\sigma_1(V)} \underset{u\sim \D_2}{\E}  \left[(p^\top u)^2 \phi'(\sigma_2(V) \cdot u_1)\phi'(\sigma_2(V) \cdot u_2) \right]  \\
= & ~ \frac{\sigma_2(V)}{\sigma_1(V)} \underset{u\sim \D_2}{\E}  \left[ \left((p_1u_1)^2+(p_2u_2)^2+2p_1p_2u_1u_2 \right) \phi'(\sigma_2(V) \cdot u_1)\phi'(\sigma_2(V) \cdot u_2) \right] \\
= & ~ \frac{\sigma_2(V)}{\sigma_1(V)} \left(\|p\|^2 \underset{z\sim \D_1} {\E} [\phi'(\sigma_2(V) \cdot z) z^2]\cdot \underset{z\sim \D_1}{\E} [\phi'(\sigma_2(V) \cdot z)] \right.\\ 
+ & ~ \left. (( p^\top\bone)^2 -\|p\|^2) \underset{z\sim \D_1}{\E} [\phi'(\sigma_2(V) \cdot z) z]^2\right)
\end{align*}
where  $p =  V^{\dagger} b\cdot \sigma_2(V) \in \mathbb{R}^2$. 

Since we are maximizing over $b\in \mathbb{R}^2$, we can choose $ b$ such that $\|p\| = \| b\|$. Then
\begin{align*}
 A_1 = & ~ \underset{z\sim \D_2}{\E}[ \phi'( v_1^\top  z) \phi'( v_2^\top  z)  ( b^\top  z)^2] \\
 \geq & ~ \frac{\sigma_2(V)}{\sigma_1(V)}  \| b\|^2 \left( \underset{z\sim \D_1}{\E} [\phi'(\sigma_2(V) \cdot z) z^2]\cdot \underset{z\sim \D_1}{\E} [\phi'(\sigma_2(V) \cdot z)] - \underset{z\sim \D_1}{\E} [\phi'(\sigma_2(V) \cdot z) z]^2 \right)
\end{align*}
For the term $A_2$, similarly we have 
\begin{align*}
A_2 = & ~ \underset{z\sim \D_2, s \sim \D_{d-2} }{\E} [\phi'( v_1^\top  z) \phi'( v_2^\top  z)  ( c^\top  s)^2 ] \\
= & ~\underset{z\sim \D_2 }{\E} [\phi'( v_1^\top  z) \phi'( v_2^\top  z) ] \underset{  s \sim \D_{d-2} }{\E}[ ( c^\top  s)^2 ] \\
= & ~\| c \|^2 \underset{z\sim \D_2 }{\E} [\phi'( v_1^\top  z) \phi'( v_2^\top  z) ] \\
\geq & ~ \| c\|^2 \frac{\sigma_2(V)}{\sigma_1(V)} \left( \underset{z\sim \D_1}{\E} [\phi'(\sigma_2(V) \cdot z)] \right)^2
\end{align*}

For simplicity, we just set $\| b\|=1$ and $\| c\|=0$ to lower bound,
\begin{align*}
& ~ \left\| \underset{x\sim \D_d}{\E} \left[\phi'( w_i^\top x) \phi'( w_l^\top x) x x^\top \right] \right\| \\
\geq & ~ \frac{\sigma_2(V)}{\sigma_1(V)} \left( \underset{z\sim \D_1}{\E} [\phi'(\sigma_2(V) \cdot z) z^2]\cdot \underset{z\sim \D_1}{\E} [\phi'(\sigma_2(V) \cdot z)] - \left( \underset{z\sim \D_1}{\E} [\phi'(\sigma_2(V) \cdot z) z] \right)^2 \right) \\
\geq & ~ \frac{\sigma_2(V)}{\sigma_1(V)} \rho( \sigma_2(V))\\
\geq & ~ \frac{1}{\kappa(W^*)}\rho(\sigma_2(V))
\end{align*}
where the second step is from Property~\ref{pro:expect} and the fact that $\sigma_k/2 \leq \sigma_2(V)\leq \sigma_1(V) \leq 3\sigma_1/2 $.

For the upper bound of $\| \E_{x\sim \D_d}[\phi'( w_i^\top x) \phi'( w_l^\top x) x x^\top ]\|$,
we have
\begin{align*}
 \underset{z \sim \D_2 }{\E}[ \phi'( v_1^\top  z) \phi'( v_2^\top  z)  ( b^\top  z)^2] 
  \leq & ~ L_1^2 \underset{z \sim \D_2 }{\E}[ | v_1^\top  z |^p \cdot | v_2^\top  z |^p \cdot | b^\top  z|^2] 
  \\
\lesssim & ~ L_1^2 \| v_1\|^p \| v_2\|^p \| b\|^2 \\
\lesssim & ~ L_1^2 \sigma_1^{2p}
\end{align*}
where the first step follows by Property~\ref{pro:gradient}, the second step follows by Fact~\ref{fac:exp_gaussian_dot_three_vectors}, and the last step follows by $\| v_1 \| \leq \sigma_1$, $\| v_2\| \leq \sigma_1$ and $\| b \| \leq 1$.
Similarly, we can upper bound,
\begin{align*}
\underset{z \sim \D_2,  s\sim \D_{d-2} }{\E} [\phi'( v_1^\top  z) \phi'( v_2^\top  z)  ( c^\top  s)^2 ] = \| c\|^2 \underset{z \sim \D_2 }{\E} [\phi'( v_1^\top  z) \phi'( v_2^\top  z) ]  \lesssim  L_1^2 \sigma_1^{2p}
\end{align*}

Thus, we have
\begin{align*}
\left\| \E_{x\sim \D_d}[\phi'( w_i^\top x) \phi'( w_l^\top x) x x^\top ] \right\| \lesssim L_1^2 \sigma_1^{2p} \lesssim \sigma_1^{2p}. 
\end{align*}

(\RN{3}) Bounding $(\E_{x\sim \D_d}[ h_{i,l}^4(x)] )^{1/4}$.

\begin{align*}
\left(\E_{x\sim \D_d}[ h_{i,l}^4(x)] \right)^{1/4} = \left( \E_{x\sim \D_d}  [ \phi'^4( w_i^\top x) \cdot \phi'^4 ( w_l^\top x) ] \right)^{1/4} \lesssim L_1^2 \| w_i\|^{p}\| w_l\|^{p} \lesssim L_1^2 \sigma_1^{2p} .
\end{align*}

Therefore, applying Corollary~\ref{cor:modified_bernstein_tail_xx}, we have, if 
$n\geq \epsilon^{-2} \kappa^2(W^*) \tau d \poly(\log d, t)$, then
\begin{align*}
  \|\Delta_{i,j}\|  ~ \leq  ~ \epsilon v^{2}_{\max}  \sigma_1^{2p}(W^*).
\end{align*}
holds with probability at least $1-1/d^{4t}$.
\end{proof}

\subsection{Error Bound of Hessians near the Ground Truth for Non-smooth Activations}\label{proof:lemma:emp_pop_nonsmooth}

The goal of this Section is to prove Lemma~\ref{lem:emp_pop_nonsmooth},

\begin{lemma}[Error bound of Hessians near the ground truth for non-smooth activations]\label{lem:emp_pop_nonsmooth}
Let $\phi(z)$ satisfy Property~\ref{pro:gradient},\ref{pro:expect} and \ref{pro:hessian}(b). %is satisfied because $\phi''(z) = 0$ except for $e$ points. 
Let $W$ satisfy $\|W-W^*\|\leq \sigma_k/2$. Let $S$ denote a set of
i.i.d. samples from the distribution defined in~(\ref{eq:model}). Then for any $t \geq 1$ and $0<\epsilon<1/2$, if 
\begin{align*}
 |S| \geq \epsilon^{-2} \kappa^2 \tau d  \poly( \log d, t) 
\end{align*}
with probability at least $1-d^{-\Omega(t)}$,
\begin{align*}
\| \nabla^2\widehat{f}_S(W) - \nabla^2 f_{\cal D}(W^*) \| \lesssim v^{2}_{\max} k \sigma_1^{2p}(\epsilon  +( \sigma_k^{-1}\cdot \|W-W^*\|)^{1/2 }).
\end{align*}
\end{lemma}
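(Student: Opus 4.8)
The goal is to bound $\| \nabla^2 \widehat f_S(W) - \nabla^2 f_{\cal D}(W^*)\|$ for non-smooth activations (those satisfying Property~\ref{pro:hessian}(b)), where the Hessian blocks cannot be controlled via a uniform bound on $\phi''$. The plan is to follow the same two-hop decomposition used in Lemma~\ref{lem:emp_pop_smooth}, namely $\| \nabla^2 \widehat f_S(W) - \nabla^2 f_{\cal D}(W^*)\| \le \| \nabla^2 \widehat f_S(W) - H \| + \| H - \nabla^2 f_{\cal D}(W^*)\|$, but now the ``population anchor'' $H$ is not $\nabla^2 f_{\cal D}(W)$ (which involves the ill-defined $\phi''$ term). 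Instead I would take $H$ to be the matrix obtained by dropping the $\phi''$-term entirely: for each diagonal block set $H_{i,i} = \E_{x\sim{\cal D}_d}[(v_i^* \phi'(w_i^\top x))^2 xx^\top]$ and for off-diagonal blocks $H_{i,l} = \E_{x\sim{\cal D}_d}[v_i^* v_l^* \phi'(w_i^\top x)\phi'(w_l^\top x)xx^\top]$. This is legitimate because under Property~\ref{pro:hessian}(b), $\phi''(z)=0$ except at $e$ points, so the first-derivative-only part is exactly the empirical Hessian formula stated in the excerpt (the $\phi''$ contribution to $\widehat f_S$ vanishes almost surely).

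First I would bound $\| H - \nabla^2 \widehat f_S(W)\|$. Here the empirical Hessian blocks are precisely $\frac{1}{|S|}\sum (v_i^*\phi'(w_i^\top x))^2 xx^\top$ and $\frac{1}{|S|}\sum v_i^* v_l^* \phi'(w_i^\top x)\phi'(w_l^\top x)xx^\top$, so this is an empirical-vs-population concentration statement with the \emph{same} $W$ on both sides. This is exactly Claims~\ref{cla:bound_emp_diag_2} and \ref{cla:bound_emp_off_diag} from the smooth case (those claims only used Property~\ref{pro:gradient},\ref{pro:expect} and the boundedness of $\phi'$, not smoothness), giving $\| H - \nabla^2 \widehat f_S(W)\| \lesssim \epsilon v_{\max}^2 k \sigma_1^{2p}$ once $|S| \ge \epsilon^{-2}\kappa^2 \tau d\,\poly(\log d,t)$ via Corollary~\ref{cor:modified_bernstein_tail_xx}. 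So this hop requires essentially no new work.

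The substantive hop is $\| H - \nabla^2 f_{\cal D}(W^*)\|$. Writing $\Delta_{i,l} = H_{i,l} - [\nabla^2 f_{\cal D}(W^*)]_{i,l}$, the off-diagonal and the second diagonal piece are the \emph{first-derivative} differences $\E[v_i^*v_l^*(\phi'(w_i^\top x)\phi'(w_l^\top x) - \phi'(w_i^{*\top}x)\phi'(w_l^{*\top}x))xx^\top]$, which I would bound exactly as in Eq.~\eqref{eq:smooth_pop_local_off_diagonal}; \emph{but} without $L_2$-smoothness I cannot write $|\phi'(w_i^\top x) - \phi'(w_i^{*\top}x)| \le L_2 |(w_i-w_i^*)^\top x|$. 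Instead, since $\phi'$ is nonnegative, bounded by $L_1|z|^p$, and is piecewise constant (jumps only at $e$ points) hence of bounded variation, I would estimate $\E[|\phi'(w_i^\top x)-\phi'(w_i^{*\top}x)|\,|\phi'(w_l^\top x)|\,(a^\top x)^2]$ by splitting on whether $w_i^\top x$ and $w_i^{*\top}x$ straddle one of the finitely many breakpoints. On the straddling event the integrand is $O(\sigma_1^{2p}\cdot\poly)$ and the event has Gaussian probability $O(\|w_i - w_i^*\|/\sigma_k)$ (distance of a near-parallel pair of Gaussian linear forms from a fixed threshold); off that event $\phi'(w_i^\top x)=\phi'(w_i^{*\top}x)$ and the term vanishes. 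After a Cauchy--Schwarz/Hölder split to handle the $(a^\top x)^2$ and the $\phi'(w_l^\top x)$ factors this produces the characteristic $\sqrt{\|W-W^*\|/\sigma_k}$ rate rather than the linear $\|W-W^*\|$ rate of the smooth case. Summing over the $k^2$ blocks (diagonal-squared-term and off-diagonal alike) and using $\|\diag\|$-type bookkeeping as in Lemma~\ref{lem:smooth_pop_local} gives $\| H - \nabla^2 f_{\cal D}(W^*)\| \lesssim v_{\max}^2 k \sigma_1^{2p}(\sigma_k^{-1}\|W-W^*\|)^{1/2}$.

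Combining the two hops via the triangle inequality yields the claimed bound $\| \nabla^2\widehat f_S(W) - \nabla^2 f_{\cal D}(W^*)\| \lesssim v_{\max}^2 k \sigma_1^{2p}(\epsilon + (\sigma_k^{-1}\|W-W^*\|)^{1/2})$, with the stated sample complexity $|S| \ge \epsilon^{-2}\kappa^2\tau d\,\poly(\log d,t)$ and failure probability $d^{-\Omega(t)}$ from a union bound over the $k^2$ blocks. The main obstacle is the straddling-event estimate in the third hop: I need a clean quantitative statement that for two Gaussian linear forms $w_i^\top x$ and $w_i^{*\top}x$ with $\|w_i - w_i^*\|$ small, the probability that they lie on opposite sides of a fixed value $c$ (uniformly over the $e$ breakpoints and with $\sigma_k/2 \le \|w_i\|,\|w_i^*\| \le 3\sigma_1/2$) is $O(\|w_i-w_i^*\|/\sigma_k)$, and that the accompanying moment factors $(a^\top x)^2$ and $|\phi'(w_l^\top x)|$ contribute only constants/$\sigma_1^{2p}$ after Hölder — this is where the $1/2$ power and the $\kappa$, $\tau$ dependence enter, and it is the only place non-smoothness genuinely changes the argument.
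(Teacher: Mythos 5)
Your proposal matches the paper's proof in every essential respect: the same anchor matrix $H$ built from first-derivative blocks only, the same reuse of the concentration claims from the smooth case for $\|H - \nabla^2\widehat{f}_S(W)\|$, and the same straddling-event argument combined with H\"older's inequality to obtain the $(\sigma_k^{-1}\|W-W^*\|)^{1/2}$ rate for $\|H - \nabla^2 f_{\cal D}(W^*)\|$. The quantitative straddling estimate you flag as the main obstacle is handled in the paper by observing that $v_i^{*\top}z/(\|z\|\|v_i^*\|)$ has a uniform (Beta$(1,1)$) distribution, giving the $O(\|W-W^*\|/\sigma_k)$ probability bound you anticipated.
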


\begin{proof}
As we noted previously, when Property~\ref{pro:hessian}(b) holds, the diagonal blocks of the empirical Hessian can be written as, with probability $1$,
 \begin{align*}
\frac{\partial^2 \widehat{f}_S(W)}{\partial  w_i^2} = & ~\frac{1}{|S|}\sum_{(x,y) \in S} (v^*_i \phi'( w_i^\top x ))^2 x x^\top
\end{align*}
We construct a matrix $H\in \R^{dk\times dk}$ with $i,l$-th block as
\begin{align*}
H_{i,l} = v^*_iv^*_l  \E_{x\sim \D_d} \left[\phi'( w_i^\top x) \phi'( w_l^\top x)xx^\top \right] \quad \in \mathbb{R}^{d\times d}, \quad \forall i\in [k], l\in [k].
\end{align*}

Note that $H \neq \nabla^2 f_{\D}(W)$. However we can still bound $\|H - \nabla^2 \widehat{f}_S(W)\|$ and $\|H - \nabla^2 f_{\D}(W^*)\|$ when we have enough samples and $\|W-W^*\|$ is small enough. The proof for $\|H - \nabla^2 \widehat{f}_S(W)\|$ basically follows the proof of Lemma~\ref{lem:emp_pop} as $\Delta_{ii}^{(2)}$ in Eq.~\eqref{eq:delta_ii_2} and $\Delta_{il}$ in Eq.~\eqref{eq:delta_il} forms the blocks of $H - \nabla^2 \widehat{f}_S(W)$ and we can bound them without smoothness of $\phi(\cdot)$. 

Now we focus on $H-\nabla^2 f_{\D}(W^*)$. We again consider each block.
\begin{align*}
\Delta_{i,l} =  \E_{x\sim \D_d} \left[  v^*_i v^*_l (\phi'( w_i^\top x)  \phi'( w_l^\top x)-\phi'( w_i^{*\top} x)  \phi'( w_l^{*\top} x))  x x^\top \right] .
\end{align*}
We used the boundedness of $\phi''(z)$ to prove Lemma~\ref{lem:smooth_pop_local}. Here we can't use this condition. Without smoothness, we will stop at the following position. 
\begin{align}
& ~ \left\| \E_{x\sim \D_d}[  v^*_i v^*_l (\phi'( w_i^\top x)  \phi'( w_l^\top x) -\phi'( w_i^{*\top} x)  \phi'( w_l^{*\top} x))  x x^\top ] \right\|  \notag \\
\leq & ~|v^*_i v^*_l| \max_{\| a\|=1}  \E_{x\sim \D_d } \left[   |\phi'( w_i^\top x)  \phi'( w_l^\top x)  -\phi'( w_i^{*\top} x)  \phi'( w_l^{*\top} x)|   (x^\top a)^2 \right]   \notag  \\
\leq & ~ |v^*_i v^*_l | \max_{\| a\|=1} \E_{x\sim \D_d} \left[   |\phi'( w_i^\top x)-\phi'( w_i^{*\top} x)| |\phi'( w_l^\top x)| \right. \notag  \\
& ~ + \left. |\phi'( w_i^{*\top} x)| |\phi'( w_l^\top x) -  \phi'( w_l^{*\top} x)|   (x^\top a)^2 \right]    \notag \\
= & ~ |v^*_i v^*_l| \max_{\| a\|=1} \left( \E_{x\sim \D_d} \left[   |\phi'( w_i^\top x)-\phi'( w_i^{*\top} x)| |\phi'( w_l^\top x)|  (x^\top a)^2 \right] \right. \notag \\
& ~ + \left. \E_{x\sim \D_d} \left[ |\phi'( w_i^{*\top} x)| |\phi'( w_l^\top x) -  \phi'( w_l^{*\top} x)|   (x^\top a)^2 \right] \right) . \label{eq:decomp_exp_off_diag}
\end{align}
where the first step follows by definition of spectral norm, the second step follows by triangle inequality, and the last step follows by linearity of expectation.

Without loss of generality, we just bound the first term in the above formulation. Let $U$ be the orthogonal basis of $\text{span}( w_i, w_i^*, w_l)$. If $ w_i, w_i^*, w_l$ are independent, $U$ is $d$-by-$3$. Otherwise it can be $d$-by-$\rank(\text{span}( w_i, w_i^*, w_l))$. Without loss of generality, we assume $U = \text{span}( w_i, w_i^*, w_l)$ is $d$-by-3. Let $[ v_i\;  v_i^*\; v_l] = U^\top [ w_i \;  w_i^* \;  w_l] \in \mathbb{R}^{3 \times 3}$, and $[ u_i \; u_i^* \; u_l ]= U_{\bot}^\top [ w_i \;  w_i^* \;  w_l] \in \mathbb{R}^{(d-3) \times 3} $. Let $ a = U b+U_\perp  c$, where $U_\perp \in \R^{d\times (d-3)}$ is the complementary matrix of $U$. 
\begin{align}\label{eq:reduce_non_smooth}
& ~\E_{x\sim \D_d} \left[ |\phi'( w_i^\top x)-\phi'( w_i^{*\top} x)| |\phi'( w_l^\top x)|  (x^\top a)^2  \right] \notag \\
= & ~\E_{x\sim \D_d} \left[ |\phi'( w_i^\top x)-\phi'( w_i^{*\top} x)| |\phi'( w_l^\top x)|  (x^\top  (Ub + U_{\bot} c) )^2 \right] \notag \\
\lesssim & ~ \E_{x\sim \D_d} \left[   |\phi'( w_i^\top x)-\phi'( w_i^{*\top} x)| |\phi'( w_l^\top x)| \left(  (x^\top U  b)^2 +(x^\top U_\perp  c )^2 \right) \right] \notag \\
= & ~ \E_{x\sim \D_d} \left[   |\phi'( w_i^\top x)-\phi'( w_i^{*\top} x)| |\phi'( w_l^\top x)|    (x^\top U  b)^2  \right] \notag \\
+ & ~ \E_{x\sim \D_d} \left[   |\phi'( w_i^\top x)-\phi'( w_i^{*\top} x)| |\phi'( w_l^\top x)|   (x^\top U_\perp  c )^2  \right] \notag \\
= & ~ \E_{z\sim \D_3} \left[   |\phi'( v_i^\top z)-\phi'( v_i^{*\top} z)| |\phi'( v_l^\top z)|   (z^\top  b)^2  \right] \notag \\
+ & ~ \E_{y\sim \D_{d-3}} \left[   |\phi'( u_i^\top y)-\phi'( u_i^{*\top} y)| |\phi'( u_l^\top y)|   (y^\top  c )^2  \right]
\end{align}
where the first step follows by $a = Ub + U_{\bot} c$, the last step follows by $(a+b)^2 \leq 2a^2 + 2b^2$.

%Let's consider the first term. The second term is similar. 
%\begin{align}\label{eq:reduce_non_smooth}
%\E_{x\sim \D_d}[   |\phi'( w_i^\top x)-\phi'( w_i^{*\top} x)| |\phi'( w_l^\top x)|  (x^\top U  b)^2] = \E_{z \sim \D_3} [   |\phi'( v_i^\top  z)-\phi'( v_i^{*\top}  z)| |\phi'( v_l^\top  z)|  ( z^\top  b)^2] 
%\end{align}

By Property~\ref{pro:hessian}(b), we have $e$ exceptional points which have $\phi''(z) \neq 0$. Let these $e$ points be $p_1,p_2,\cdots,p_e$. Note that if $ v_i^\top  z$ and $ v_i^{*\top}  z$ are not separated by any of these exceptional points, i.e., there exists no $j\in[e]$ such that $ v_i^\top  z \leq p_j \leq  v_i^{*\top}  z$ or $ v_i^{*\top}  z \leq p_j \leq  v_i^\top  z $, then we have $\phi'( v_i^\top  z) = \phi'( v_i^{*\top}  z)$ since $\phi''(s)$ are zeros except for $\{p_j\}_{j=1,2,\cdots,e}$. So we consider the probability that $ v_i^\top  z, v_i^{*\top}  z$ are separated by any exception point. We use $\xi_j$ to denote the event that $ v_i^\top  z, v_i^{*\top}  z$ are separated by an exceptional point $p_j$. By union bound, $1- \sum_{j=1}^e\Pr [ \xi_j]$ is the probability that $ v_i^\top  z, v_i^{*\top}  z$ are not separated by any exceptional point. 
The first term of Equation~\eqref{eq:reduce_non_smooth} can be bounded as,
\begin{align*}
& ~ \E_{z\sim \D_3} \left[   |\phi'( v_i^\top  z)-\phi'( v_i^{*\top}  z)| |\phi'( v_l^\top  z)|  ( z^\top  b)^2 \right] \\
= & ~ \E_{z\sim \D_3} \left[\bone_{\cup_{j=1}^e \xi_j}|\phi'( v_i^\top  z) + \phi'( v_i^{*\top}  z)| |\phi'( v_l^\top  z)|  ( z^\top  b)^2 \right] \\
\leq & ~ \left( \E_{z\sim \D_3} \left[\bone_{\cup_{j=1}^e \xi_j} \right] \right)^{1/2 } \left(\E_{z\sim \D_3} \left[ (\phi'( v_i^\top  z) + \phi'( v_i^{*\top}  z))^2 \phi'( v_l^\top  z)^2  ( z^\top  b)^4 \right] \right)^{1/2} \\
\leq & ~ \left(\sum_{j=1}^e \Pr_{z\sim \D_3} [ \xi_j ] \right)^{1/2 } \left(\E_{z\sim \D_3} \left[(\phi'( v_i^\top  z)  + \phi'( v_i^{*\top}  z))^2 \phi'( v_l^\top  z)^2  ( z^\top  b)^4 \right] \right)^{1/2} \\
\lesssim & ~ \left(\sum_{j=1}^e \Pr_{z \sim \D_3}[\xi_j] \right)^{1/2 } (\| v_i\|^p + \| v_i^*\|^p)\| v_l\|^p\| b\|^2
\end{align*}
where the first step follows by if $ v_i^\top  z, v_i^{*\top}  z$ are not separated by any exceptional point then $\phi'( v_i^\top  z) = \phi'( v_i^{*\top} z)$ and the last step follows by H\"{o}lder's inequality and Property~\ref{pro:gradient}.

It remains to upper bound $\Pr_{z\sim \D_3}[\xi_j]$. First note that if $ v_i^\top  z, v_i^{*\top}  z$ are separated by an exceptional point, $p_j$, then $ | v_i^{*\top}  z - p_j| \leq | v_i^\top  z- v_i^{*\top}  z|  \leq  \| v_i- v_i^{*}\| \|  z\| $. Therefore, 
\begin{align*}
\Pr_{z\sim \D_3}[\xi_j] \leq \Pr_{z\sim \D_3} \left[ \frac{| v_i^\top  z-p_j|}{\| z\|} \leq \| v_i- v_i^*\| \right].
\end{align*}

 Note that $(\frac{ v_i^{*\top}  z}{\| z\| \| v_i^*\|}+1)/2$ follows Beta(1,1) distribution which is uniform distribution on $[0,1]$. 
\begin{align*}
& ~\Pr_{z\sim \D_3} \left[\frac{| v_i^{*\top}  z - p_j|}{\| z\|\| v_i^*\| }\leq \frac{\| v_i- v_i^*\|}{\| v_i^*\|} \right] 
\leq \Pr_{z\sim \D_3} \left[ \frac{| v_i^{*\top}  z|}{\| z\|\| v_i^*\| }\leq \frac{\| v_i- v_i^*\|}{\| v_i^*\|} \right]
\lesssim \frac{\| v_i- v_i^*\|}{\| v_i^*\|} 
\lesssim  \frac{ \|W-W^*\| }{\sigma_k(W^*)},
\end{align*}
where the first step is because we can view $\frac{ v_i^{*\top}  z}{\| z\|}$ and $\frac{p_j}{\| z\|}$ as two independent random variables: the former is about the direction of $ z$ and the later is related to the magnitude of $ z$. 
Thus, we have 
\begin{align}
\E_{z\in \D_3} [   |\phi'( v_i^\top  z)-\phi'( v_i^{*\top}  z)| |\phi'( v_l^\top  z)|  ( z^\top  b)^2] 
\lesssim  (e \|W-W^*\|/\sigma_k(W^*))^{1/2 } \sigma_1^{2p}(W^*) \| b\|^2 . \label{eq:decomp_off_first_part}
\end{align}

Similarly we have 
\begin{align} \label{eq:decomp_off_second_part}
\E_{y\in \D_{d-3}}[   |\phi'( u_i^\top y)-\phi'( u_i^{*\top} y)| |\phi'( u_l^\top y)| (y^\top   c )^2 ] 
\lesssim  (e \|W-W^*\|/\sigma_k(W^*))^{1/2 } \sigma_1^{2p}(W^*) \| c\|^2. 
\end{align}
Finally combining Eq.~\eqref{eq:decomp_exp_off_diag}, Eq.~\eqref{eq:decomp_off_first_part} and Eq.~\eqref{eq:decomp_off_second_part}, we have 
$$\| H - \nabla^2f_{\D}(W^*)\| \lesssim kv_{\max}^2(e \|W-W^*\|/\sigma_k(W^*))^{1/2 } \sigma_1^{2p}(W^*) ,$$ which completes the proof.

\end{proof}

%\subsubsection{Linear convergence}

\subsection{Positive Definiteness for a Small Region}
Here we introduce a lemma which shows that the Hessian of any $W$, which may be dependent on the samples but is very close to an anchor point, is close to the Hessian of this anchor point.

\begin{lemma}\label{lem:pd_near_anchors}
Let $S$ denote a set of samples from Distribution ${\cal D}$ defined in Eq.~\eqref{eq:model}. Let $W^a\in \R^{d\times k}$ be a point (respect to function $\wh{f}_S(W)$), which is independent of the samples $S$, satisfying $\|W^a - W^*\| \leq \sigma_k/2$. Assume $\phi$ satisfies Property~\ref{pro:gradient}, \ref{pro:expect} and \ref{pro:hessian}(a). % is satisfied because of the smoothness. 
 Then for any $t\geq 1$, if
\begin{align*}
|S| \geq d \poly( \log d,t), % t^{p+3}   d \log^{p+4} d,
\end{align*} 
with probability at least $1-d^{-t}$, for any $W$ (which is not necessarily to be independent of samples) satisfying $\|W^a - W\|\leq \sigma_k/4$, we have 
$$\|\nabla^2 \widehat{f}_S (W) - \nabla^2 \widehat{f}_S (W^a) \|\leq k v^2_{\max}\sigma_1^{p} (\|W^a-W^*\|+\|W - W^a\| d^{(p+1)/2}).$$
\end{lemma}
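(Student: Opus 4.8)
The plan is to prove the bound directly, block by block, never invoking a concentration inequality for any quantity that depends on the (possibly sample-dependent) matrix $W$. Write $\Delta := \nabla^2\widehat{f}_S(W) - \nabla^2\widehat{f}_S(W^a)$ as a $dk\times dk$ matrix of $d\times d$ blocks $\Delta_{j,l}$; by the standard reduction $\|\Delta\|\le \max_j\|\Delta_{j,j}\| + (k-1)\max_{j\ne l}\|\Delta_{j,l}\|$ (exactly as in the proof of Lemma~\ref{lem:smooth_pop_local}) it suffices to bound each block. Each block has the form $\frac{1}{|S|}\sum_{(x,y)\in S} c^{(x)}\, xx^\top$ where $c^{(x)}$ is an algebraic combination of $\phi,\phi',\phi''$ evaluated at $w_j^\top x$, $w_l^\top x$, $w_j^{a\top}x$, $w_l^{a\top}x$. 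The first key step is the elementary bound $\big\|\frac{1}{|S|}\sum_x c^{(x)}xx^\top\big\|\le (\max_x|c^{(x)}|)\cdot\big\|\frac{1}{|S|}\sum_x xx^\top\big\|$ together with $\big\|\frac{1}{|S|}\sum_x xx^\top\big\| = O(1)$ with probability $\ge 1-d^{-\Omega(t)}$ whenever $|S|\ge d\cdot\poly(\log d,t)$ (covariance concentration, e.g.\ Corollary~\ref{cor:modified_bernstein_tail_xx} with $h\equiv 1$). Thus the whole problem reduces to a \emph{uniform} bound, over the finitely many $x\in S$ and over all admissible $W$, on the scalars $c^{(x)}$.

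To control the scalars I first record, via Fact~\ref{fac:gaussian_norm_bound}, Fact~\ref{fac:inner_prod_bound} and a union bound over the $|S|$ samples, that with probability $\ge 1-d^{-\Omega(t)}$ the following hold for every $x\in S$ simultaneously: $\|x\|\lesssim\sqrt{d\log(|S|)t}$, and — since $W^a,W^*$ are fixed, independent of $S$ — $|w_j^{a\top}x|\lesssim\sigma_1\sqrt{\log(|S|)t}$ and $|(w_j^a-w_j^*)^\top x|\lesssim\|W^a-W^*\|\sqrt{\log(|S|)t}$ for all $j$. For the off-diagonal block $\Delta_{j,l}$ and the "$\phi'^2$" part of the diagonal block, $c^{(x)}$ is a difference of products of $\phi'$'s, so by Property~\ref{pro:hessian}(a) ($|\phi'(a)-\phi'(b)|\le L_2|a-b|$) and Property~\ref{pro:gradient} ($|\phi'(z)|\le L_1|z|^p$) it is $\lesssim v_{\max}^2 L_1L_2\,(|w_j^\top x|^p+|w_j^{a\top}x|^p)\,|(w_j-w_j^a)^\top x|$. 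Here $w_j-w_j^a$ is sample-dependent and must be bounded crudely as $\|W-W^a\|\|x\|\lesssim\|W-W^a\|\sqrt{d\log(|S|)t}$, while $|w_j^\top x|\le|w_j^{a\top}x|+|(w_j-w_j^a)^\top x|\lesssim\sigma_1\sqrt{d\log(|S|)t}$ (using $\|W-W^a\|\le\sigma_k/4\le\sigma_1$). Multiplying, $|c^{(x)}|\lesssim v_{\max}^2\sigma_1^p\,\|W-W^a\|\,d^{(p+1)/2}\cdot\poly(\log d,t)$, and this product is exactly where the $d^{(p+1)/2}$ amplification originates.

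The remaining piece is the diagonal cross term $\big(\sum_r v_r^*(\phi(w_r^\top x)-y)\big)v_j^*\phi''(w_j^\top x)$; using $y=\sum_r v_r^*\phi(w_r^{*\top}x)$ I split its $W$-vs-$W^a$ difference into $\big(\sum_r v_r^*(\phi(w_r^\top x)-\phi(w_r^{a\top}x))\big)v_j^*\phi''(w_j^\top x)$ plus $\big(\sum_r v_r^*(\phi(w_r^{a\top}x)-\phi(w_r^{*\top}x))\big)v_j^*(\phi''(w_j^\top x)-\phi''(w_j^{a\top}x))$. The first summand is Lipschitz in $W$ (bound $|\phi''|\le L_2$, then proceed as above) and feeds the $\|W-W^a\|d^{(p+1)/2}$ term. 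The second summand is the delicate one: Property~\ref{pro:hessian}(a) gives \emph{no} Lipschitz bound on $\phi''$, only $|\phi''(w_j^\top x)-\phi''(w_j^{a\top}x)|\le 2L_2$ — but now the companion factor involves only the fixed vectors $w_r^a,w_r^*$, so $|\phi(w_r^{a\top}x)-\phi(w_r^{*\top}x)|\le L_1\max(|w_r^{a\top}x|,|w_r^{*\top}x|)^p|(w_r^a-w_r^*)^\top x|\lesssim\sigma_1^p\|W^a-W^*\|\poly(\log d,t)$ uniformly over $x\in S$, crucially with \emph{no} $\sqrt d$ factor because both directions are fixed. Hence $|c^{(x)}|\lesssim\poly(k)\,v_{\max}^2\sigma_1^p\|W^a-W^*\|\poly(\log d,t)$ for this term, which is why the $\|W^a-W^*\|$ contribution enters \emph{without} the $d^{(p+1)/2}$ factor. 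Recombining all blocks through $\|\Delta\|\le C_1+(k-1)C_2$ and absorbing the polylog factors into the sample-size requirement gives $\|\Delta\|\lesssim\poly(k)\,v_{\max}^2\sigma_1^p\big(\|W^a-W^*\|+\|W-W^a\|d^{(p+1)/2}\big)$, matching the claim.

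The main obstacle is precisely the one this lemma is built to overcome: because $W$ may depend on $S$, no Bernstein-type argument may touch any expression containing $W$, so every $W$-dependence must be absorbed into deterministic per-sample worst-case bounds on $\|x\|$ and $|w_j^{a\top}x|$, which unavoidably costs the $d^{(p+1)/2}$ loss on the $\|W-W^a\|$ part. The subtle sub-point is the $\phi''$ term: since $\phi''$ is not assumed Lipschitz, one cannot control $\phi''(w_j^\top x)-\phi''(w_j^{a\top}x)$ beyond the trivial $2L_2$, and the decomposition above — pairing that trivial bound against a factor depending only on the fixed matrices $W^a$ and $W^*$ — is exactly what keeps that contribution free of the $\sqrt d$ amplification and proportional to $\|W^a-W^*\|$. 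Finally, the probability bookkeeping (a union bound over the $|S|$ samples for Facts~\ref{fac:inner_prod_bound} and~\ref{fac:gaussian_norm_bound}, plus one covariance-concentration event, each failing with probability $d^{-\Omega(t)}$ after taking the constants in those facts proportional to $t$) together with $\log|S|=\poly(\log d,t)$ explains why $|S|\ge d\cdot\poly(\log d,t)$ suffices for overall failure probability at most $d^{-t}$.
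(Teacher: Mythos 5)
Your proof is essentially correct and shares the paper's structural backbone: the same block decomposition, and in particular the same crucial splitting of the diagonal $\phi''$ term into a piece that is Lipschitz in $W$ (paying the $d^{(p+1)/2}$ amplification against $\|W-W^a\|$) and a piece where the crude bound $|\phi''(w_i^\top x)-\phi''(w_i^{a\top}x)|\le 2L_2$ is paired with a factor depending only on the fixed matrices $W^a,W^*$ (producing the $\|W^a-W^*\|$ term without the $\sqrt d$ loss). Where you genuinely diverge is the concentration step. The paper keeps the scalar weights inside the empirical average and applies its matrix Bernstein corollary (Corollary~\ref{cor:modified_bernstein_tail_xx}) to weighted sums such as $\frac{1}{|S|}\sum h(x)xx^\top$ with $h(x)=\|x\|^{p+1}$ or $h(x)=|w_q^{*\top}x|^p|(w_q^*-w_q^a)^\top x|$ — legitimate because those weights involve only sample-independent vectors — and thereby gets, e.g., $\|\frac{1}{|S|}\sum\|x\|^{p+1}xx^\top\|\lesssim d^{(p+1)/2}$ with no logarithmic overhead, since this matches the expectation up to constants. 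You instead pull out $\max_{x\in S}|c^{(x)}|$ and multiply by $\|\frac{1}{|S|}\sum xx^\top\|=O(1)$; this is simpler and avoids any Bernstein argument on the delicate terms, but the per-sample worst-case bounds $\|x\|\lesssim\sqrt{td\log|S|}$ and $|w^{a\top}x|\lesssim\sigma_1\sqrt{t\log|S|}$ inject $\poly(\log d,t)$ factors that do \emph{not} disappear by enlarging $|S|$ — your remark about "absorbing the polylog factors into the sample-size requirement" is not accurate, since these factors multiply the final bound rather than the sample count. So what you actually prove is the stated inequality times $\poly(\log d,t)$. This is harmless for the downstream use in Theorem~\ref{thm:lc_gd} (one simply spaces polylogarithmically more anchor points), but it is a strictly weaker conclusion than the lemma as written; if you want the clean bound you should follow the paper and run the truncated matrix Bernstein argument on the weighted empirical matrices whose weights are sample-independent.
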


\begin{proof}
Let $\Delta = \nabla^2 \widehat{f}_S (W) - \nabla^2 \widehat{f}_S (W^a) \in \mathbb{R}^{dk \times dk}$, then $\Delta$ can be thought of as $k^2$ blocks, and each block has size $d\times d$. 
The off-diagonal blocks are,
\begin{align*}
 \Delta_{i,l} = v^{*}_iv^{*}_l  \frac{1}{|S|}\sum_{x\in S} \left( \phi'( w_i^\top x)\phi'( w_l^\top x)- \phi'( w_i^{a\top} x) \phi'( w_l^{a\top} x) \right) x x^\top  
\end{align*}
 For diagonal blocks,
\begin{align*}
 \Delta_{i,i} = & \frac{1}{|S|}\sum_{x\in S} \left( \left( \overset{k}{ \underset{q=1}{\sum} } v_q^* \phi( w_{q}^\top x)  - y \right) v_i^* \phi''( w_i^\top x) x x^\top  + v^{*2}_i  \phi'^2( w_i^\top x) x x^\top \right) \\
& - \frac{1}{|S|}\sum_{x\in S} \left( \left( \overset{k}{ \underset{q=1}{\sum} } v_q^* \phi( w_{q}^{a\top} x)  - y \right) v_i^* \phi''( w_i^{a\top} x) x x^\top  + v^{*2}_i \phi'^2( w_i^{a\top} x)  x x^\top \right) \\
\end{align*}

We further decompose $\Delta_{i,i}$ into $\Delta_{i,i} = \Delta_{i,i}^{(1)}+\Delta_{i,i}^{(2)}$, where
\begin{align*}
 \Delta_{i,i}^{(1)} =  v_i^* \frac{1}{|S|}\sum_{x\in S} \left( \left( \overset{k}{ \underset{q=1}{\sum} } v_q^* \phi( w_{q}^\top x )  - y \right)  \phi''( w_i^\top x )  - \left( \overset{k}{ \underset{q=1}{\sum} } v_q^* \phi( w_{q}^{a\top} x )  - y \right) \phi''( w_i^{a\top} x ) \right) x x^\top ,
\end{align*}
and
\begin{align}\label{eq:delta_ii_2_hat}
 \Delta_{i,i}^{(2)} = &  v^{*2}_i   \frac{1}{|S|}\sum_{ (x,y) \in S}  \phi'^2( w_i^\top x)  x x^\top - \phi'^2( w_i^{a\top} x) x x^\top .
\end{align}

We can further decompose $\Delta_{i,i}^{(1)}$ into $\Delta_{i,i}^{(1,1)}$ and $\Delta_{i,i}^{(1,2)}$,
\begin{align*}
\Delta_{i,i}^{(1)} = & ~ v_i^* \frac{1}{|S|}\sum_{x\in S}  \left( \overset{k}{ \underset{q=1}{\sum} } v_q^* \phi( w_{q}^\top x)  - y \right)  \phi''( w_i^\top x)  - \left( \sum_{q=1}^k v_q^* \phi( w_{q}^{a\top} x)  - y \right) \phi''( w_i^{a\top} x)  xx^\top \\
= & ~ v_i^* \frac{1}{|S|}\sum_{x\in S}  \left( \sum_{q=1}^k  v_q^* \phi( w_{q}^\top x)  -\overset{k}{{\underset{q=1}{\sum} }}  v_q^* \phi( w_{q}^{a\top} x) \right)  \phi''( w_i^\top x) xx^\top \notag  \\
+ & ~ v_i^* \frac{1}{|S|}\sum_{x\in S}  \left( \sum_{q=1}^k v_q^* \phi( w_{q}^{a\top} x)  - y \right)(\phi''( w_i^\top x)- \phi''( w_i^{a\top} x))     x x^\top  \notag  \\
= & ~ v_i^* \frac{1}{|S|}\sum_{x\in S}   \sum_{q=1}^k  v_q^* ( \phi( w_{q}^\top x)  -\phi( w_{q}^{a\top} x) )  \phi''( w_i^\top x) xx^\top \notag  \\
+ & ~ v_i^* \frac{1}{|S|}\sum_{x\in S}   \sum_{q=1}^k v_q^* ( \phi( w_{q}^{a\top} x)  - \phi(w_q^{*\top }  x) ) (\phi''( w_i^\top x)- \phi''( w_i^{a\top} x))     x x^\top  \notag  \\
= & ~ \Delta_{i,i}^{(1,1)} + \Delta_{i,i}^{(1,2)}.
\end{align*}

Combining Claim~\ref{cla:SW_SWa_bound_Delta_ii_11}  and Claim~\ref{cla:SW_SWa_bound_Delta_ii_12} , we have if
\begin{align*}
n \geq  d \poly(\log d, t) %  t^{p+3} d \log^{p+4} d,
\end{align*} 
with probability at least $1-1/d^{4t}$,
\begin{align}\label{eq:decomp_delta_ii_1}
 \|\Delta_{i,i}^{(1)}\| \lesssim k v^2_{\max}\sigma_1^{p}  (\|W^a-W^*\|+ \| W^a - W \| d^{(p+1)/2}).
 \end{align}

Therefore, combining Eq.~\eqref{eq:decomp_delta_ii_1}, Claim~\ref{cla:decomp_delta_ii_2} and Claim~\ref{cla:decomp_delta_il}, we complete the proof.
\end{proof}

\begin{claim}\label{cla:SW_SWa_bound_Delta_ii_11}%{cla:decomp_delta_ii_11}
For each $i\in [k]$, if $n\geq d \poly(\log d ,t)$, then
\begin{align*}
\| \Delta_{i,i}^{(1,1)} \| \lesssim k v_{\max}^2 \sigma_1^p \| W^a - W \| d^{(p+1)/2}
\end{align*}
\end{claim}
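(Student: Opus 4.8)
\textbf{Proof proposal for Claim~\ref{cla:SW_SWa_bound_Delta_ii_11}.} The plan is to write $\Delta_{i,i}^{(1,1)} = \frac{1}{|S|}\sum_{x\in S} g(x)\,xx^\top$ with the scalar weight $g(x) = v_i^*\sum_{q=1}^k v_q^*\big(\phi(w_q^\top x)-\phi(w_q^{a\top}x)\big)\phi''(w_i^\top x)$, and then reduce the spectral-norm bound to a uniform pointwise estimate on $|g(x)|$ times a covariance bound. Indeed, since $-|g(x)|\,xx^\top \preceq g(x)\,xx^\top \preceq |g(x)|\,xx^\top$, we get $\|\Delta_{i,i}^{(1,1)}\| \le \big(\max_{x\in S}|g(x)|\big)\cdot\big\|\tfrac{1}{|S|}\sum_{x\in S}xx^\top\big\|$. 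So it suffices to (i) bound $|g(x)|$ pointwise on the (high-probability) event that all sampled $\|x\|$ are moderate, and (ii) control the empirical covariance norm.

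For (i): by Property~\ref{pro:hessian}(a), $|\phi''(w_i^\top x)|\le L_2$; and by the mean-value form of the increment together with Property~\ref{pro:gradient}, $|\phi(w_q^\top x)-\phi(w_q^{a\top}x)| \le \max_{z\in[w_q^\top x,\,w_q^{a\top}x]}|\phi'(z)|\cdot|(w_q-w_q^a)^\top x| \le L_1\big(|w_q^\top x|^p+|w_q^{a\top}x|^p\big)\,|(w_q-w_q^a)^\top x|$. Using Cauchy--Schwarz on each inner product, $\|w_q\|,\|w_q^a\|\lesssim\sigma_1$ (which follows from $\|W-W^*\|,\|W^a-W^*\|\lesssim\sigma_k$ via the column-norm fact preceding Lemma~\ref{lem:smooth_pop_local}), and $\|w_q-w_q^a\|\le\|W-W^a\|$, one obtains $|\phi(w_q^\top x)-\phi(w_q^{a\top}x)|\lesssim \sigma_1^p\|x\|^{p+1}\|W-W^a\|$. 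Summing over the $k$ values of $q$ and using $|v_i^*||v_q^*|\le v_{\max}^2$ gives $|g(x)|\lesssim v_{\max}^2 k\sigma_1^p\|W-W^a\|\,\|x\|^{p+1}$. Next, by Fact~\ref{fac:gaussian_norm_bound} and a union bound over the at most $n$ samples (taking the constant $C\gtrsim t$), we have $\max_{x\in S}\|x\|^2\lesssim d\log n$ with probability at least $1-d^{-\Omega(t)}$, hence $\max_{x\in S}|g(x)| \lesssim v_{\max}^2 k\sigma_1^p\|W-W^a\|\,(d\log n)^{(p+1)/2}$, i.e.\ $\lesssim v_{\max}^2 k\sigma_1^p\|W-W^a\|\,d^{(p+1)/2}\poly(\log d,t)$.

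For (ii): applying Corollary~\ref{cor:modified_bernstein_tail_xx} with $h\equiv 1$ (so $\ov{B}=I$, $L=m=1$, $\gamma=0$) and $\epsilon=\tfrac12$ shows that $\big\|\tfrac{1}{|S|}\sum_{x\in S}xx^\top\big\|\le \tfrac32$ with probability at least $1-d^{-\Omega(t)}$ whenever $|S|\ge d\poly(\log d,t)$. Multiplying the two bounds yields $\|\Delta_{i,i}^{(1,1)}\| \lesssim k v_{\max}^2\sigma_1^p\|W^a-W\|\,d^{(p+1)/2}$, the $\poly(\log d,t)$ factors being absorbed into $\lesssim$ as is consistent with the sample-size hypothesis and the surrounding statements. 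The only mildly delicate point is this last bookkeeping — ensuring the polylogarithmic factors produced by the union bound on $\max_{x\in S}\|x\|$ are harmless once $|S|\ge d\poly(\log d,t)$ is invoked; the rest (bounding the increment of $\phi$ via Properties~\ref{pro:gradient} and~\ref{pro:hessian}(a), Cauchy--Schwarz, and off-the-shelf sample-covariance concentration) is routine.
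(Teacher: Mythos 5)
Your decomposition and pointwise estimates coincide with the paper's: you write $\Delta_{i,i}^{(1,1)}=\frac{1}{|S|}\sum_{x\in S}g(x)\,xx^\top$ with the same scalar weight, bound the increment of $\phi$ via Properties~\ref{pro:gradient} and~\ref{pro:hessian}(a), and arrive at $|g(x)|\lesssim v_{\max}^2 k\sigma_1^p\|W-W^a\|\,\|x\|^{p+1}$ exactly as in the paper. The divergence is in the final aggregation, and it is where your proof falls short of the stated claim. You pull out $\max_{x\in S}\|x\|^{p+1}$ by a union bound, which costs a factor $(\log n)^{(p+1)/2}$ (times powers of $t$), and then multiply by $\|\frac{1}{|S|}\sum xx^\top\|\le 3/2$. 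The resulting bound is $k v_{\max}^2\sigma_1^p\|W^a-W\|\,(d\log n)^{(p+1)/2}$, not $k v_{\max}^2\sigma_1^p\|W^a-W\|\,d^{(p+1)/2}$. You cannot absorb this into $\lesssim$: the paper's notation section defines $f\lesssim g$ as $f\le Cg$ for an \emph{absolute} constant $C$, and $(\log n)^{(p+1)/2}$ grows with $d$. So as written you prove a strictly weaker statement than the claim.

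The paper avoids this loss by keeping the weight inside the empirical average: after bounding $|g(x)|\lesssim v_{\max}^2k\sigma_1^p\|W-W^a\|\,\|x\|^{p+1}$, it applies Corollary~\ref{cor:modified_bernstein_tail_xx} to $h_1(x)=\|x\|^{p+1}$ itself, so that $\|\frac{1}{|S|}\sum_{x\in S}\|x\|^{p+1}xx^\top\|$ concentrates around $\|\E[\|x\|^{p+1}xx^\top]\|\asymp d^{(p+1)/2}$, which is log-free because a typical sample has $\|x\|\asymp\sqrt{d}$; the $(d\log n)^{(p+1)/2}$ tail bound enters only as the truncation level $m$ and hence only inflates the required sample size inside $\poly(\log d,t)$, not the final bound. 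To repair your argument, replace step (ii): instead of bounding $\|\frac{1}{|S|}\sum xx^\top\|$, bound $\|\frac{1}{|S|}\sum\|x\|^{p+1}xx^\top\|\lesssim d^{(p+1)/2}$ via the corollary with $h=h_1$. (Your weaker bound would in fact still suffice for the downstream use in Lemma~\ref{lem:pd_near_anchors} and Theorem~\ref{thm:lc_gd} after increasing the number of anchor points by the same polylog factor, but that is a modification of the surrounding argument, not a proof of the claim as stated.)
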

\begin{proof}

Define function $h_1(x)= \|x\|^{p+1} $ and $h_2(x) = | w_{q}^{*\top} x|^p | ( w_q^* - w_{q}^{a})^\top x|$. %Note that $\|\Delta_{ii}^{(1)}\| \leq \|\frac{1}{n} \sum_{j=1}^n \sum_{q=1}^k v_q^* L_1L_2\delta_0 \| w_q^a\| h_1(x_j)x_jx_j^\top + L_1L_2 v_q^* h_2(x_j) x_jx_j^\top\|$
Note that $h_1$ and $h_2$ don't contain $W$ which maybe depend on the samples. Therefore, we can use the modified matrix Bernstein inequality (Corollary~\ref{cor:modified_bernstein_tail_xx}) to bound $\Delta_{i,i}^{(1)}$. 
%First we bound $\|\frac{1}{n}\sum_{j=1}^n \|x_j\|^{p+1} x_jx_j^\top - \Expect{\|x\|^{p+1} xx^\top}\|$.
%We check if $h_1$ satisfies the conditions in Corollary~\ref{cor:modified_bernstein_tail_xx}. 

(\RN{1}) Bounding $|h_1(x)|$.

 By Fact~\ref{fac:gaussian_norm_bound}, we have $h_1(x) \lesssim ( t d \log  n)^{(p+1)/2}$ with probability at least $1-1/(nd^{4t})$. %$R_h(t) = (t d )^{(p+1)/2}$ and $K_h(t) = (p+1)/2$. 

(\RN{2}) Bounding $\|\E_{x\sim \D_d} [ \|x\|^{p+1}xx^\top ]\| $.

 Let $g(x) = (2\pi)^{-d/2} e^{-\|x\|^2/2}$. Note that $x g(x) \mathrm{d} x = -\mathrm{d} g(x)$.
\begin{align*}
\E_{x\sim \D_d} \left[ \|x\|^{p+1}xx^\top \right] = & ~ \int \|x\|^{p+1}g(x) xx^\top \mathrm{d} x \\
= & ~ -\int \|x\|^{p+1} \mathrm{d} (g(x)) x^\top\\
= & ~ - \int \|x\|^{p+1}  \mathrm{d} (g(x)x^\top) +  \int \|x\|^{p+1}  g(x)I_d \mathrm{d} x \\
= & ~ \int d(\|x\|^{p+1}) g(x) x^\top +  \int \|x\|^{p+1}  g(x)I_d \mathrm{d} x \\
= & ~ \int (p+1)\|x\|^{p-1} g(x) x x^\top \mathrm{d} x+  \int \|x\|^{p+1}  g(x)I_d \mathrm{d} x \\
\succeq & ~ \int \|x\|^{p+1}  g(x) I_d \mathrm{d} x \\
= & ~ \E_{x\sim \D_d} [ \|x\|^{p+1} ] I_d .
\end{align*}
Since $\|x\|^2$ follows $\chi^2$ distribution with degree $d$, $\E_{x\sim \D_d} [ \|x\|^q ] = 2^{q/2}\frac{\Gamma((d+q)/2)}{\Gamma(d/2)}$ for any $q\geq 0$. So, $ d^{q/2} \lesssim \E_{x\sim \D_d} [ \|x\|^q ] \lesssim d^{q/2}$. Hence, $\|\E_{x\sim \D_d} [ h_1(x) x x^\top ] \| \gtrsim d^{(p+1)/2}$. Also 
\begin{align*}
\left\|\E_{x\sim \D_d} \left[ h_1(x) x x^\top \right] \right\| \leq & ~ \max_{\| a\|=1} \E_{x\sim \D_d} \left[ h_1(x) (x^\top a)^2 \right]   \\
\leq & ~ \max_{\| a\|=1} \left( \E_{x\sim \D_d} \left[ h_1^2(x) \right] \right)^{1/2} \left( \E_{x\sim \D_d} \left[ (x^\top a)^4 \right] \right)^{1/2} \\
\lesssim & ~ d^{(p+1)/2}.
\end{align*}

(\RN{3}) Bounding $ (\E_{x\sim \D_d} [ h^4_1(x)  ] )^{1/4}$.

\begin{align*}
\left(\E_{x\sim \D_d} [ h^4_1(x)  ] \right)^{1/4} \lesssim d^{(p+1)/2} .
\end{align*}

Define function  $B(x)= h(x) x x^\top \in \mathbb{R}^{d\times d}$, $\forall i\in[n]$. Let $\ov{B} = \E_{x \sim \D_d} [ h(x) x x^\top ]$.
%Let $B_i = g(x_i)x_ix_i^\top$ and $B = \Expect{B_i}$.
Therefore by applying Corollary~\ref{cor:modified_bernstein_tail_xx}, we obtain for any $0< \epsilon <1$, if 
\begin{align*}
n \geq \epsilon^{-2} d \poly(\log d, t) %\delta^{-2} t^{p/2+5/2} \cdot d \log^{p/2+7/2}(d),
\end{align*} 
with probability at least $1-1/d^{t}$,
\begin{align*}
\left\|\frac{1}{|S|}\sum_{x\in S} \|x\|^{p+1} xx^\top - \E_{x\sim \D_d} \left[ \|x\|^{p+1} xx^\top \right] \right\| \lesssim \delta d^{(p+1)/2}.
\end{align*}

Therefore we have with probability at least $1-1/d^{t}$,
\begin{align}\label{eq:d_norm_bound}
\left\|\frac{1}{|S|}\sum_{x\in S} \|x\|^{p+1} xx^\top \right\| \lesssim  d^{(p+1)/2}.
\end{align}
\end{proof}

\begin{claim}\label{cla:SW_SWa_bound_Delta_ii_12}
For each $i\in [k]$, if $n\geq d \poly(\log d, t)$, then
\begin{align*}
\| \Delta_{i,i}^{(1,2)} \| \lesssim k \sigma_1^p \| W^a - W^* \|,
\end{align*}
holds with probability at least $1-1/d^{4t}$.
\end{claim}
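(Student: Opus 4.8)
The plan is to exploit the structure of $\Delta_{i,i}^{(1,2)}$, which is a sum over $q\in[k]$ of terms carrying two factors: a \emph{sample-dependent} factor $\phi''(w_i^\top x)-\phi''(w_i^{a\top}x)$ (here $W$ may depend on $S$, so matrix concentration cannot be applied to it directly) and a \emph{sample-independent} factor $\phi(w_q^{a\top}x)-\phi(w_q^{*\top}x)$ involving only the fixed matrices $W^a$ and $W^*$. The first step is to remove the sample-dependence by a crude pointwise bound: by Property~\ref{pro:hessian}(a), $|\phi''(w_i^\top x)-\phi''(w_i^{a\top}x)|\le 2L_2$ for every $x$, and since $x x^\top\succeq 0$ this gives, for each $q$ (with $g_q(x):=\phi(w_q^{a\top}x)-\phi(w_q^{*\top}x)$),
\begin{align*}
-2 v_{\max}^2 L_2\, |g_q(x)|\, x x^\top \;&\preceq\; v_i^* v_q^*\,\big(\phi(w_q^{a\top}x)-\phi(w_q^{*\top}x)\big)\big(\phi''(w_i^\top x)-\phi''(w_i^{a\top}x)\big)\, x x^\top \\
&\preceq\; 2 v_{\max}^2 L_2\, |g_q(x)|\, x x^\top .
\end{align*}
Averaging over $x\in S$ and using the triangle inequality for the spectral norm reduces the claim to showing $\big\|\frac1{|S|}\sum_{x\in S}|g_q(x)|\,x x^\top\big\|\lesssim \sigma_1^p\|w_q^a-w_q^*\|$ for each $q$, with the required probability.

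Next I would bound $|g_q(x)|$: by the mean value theorem and Property~\ref{pro:gradient}, $|g_q(x)|\le \max_{z\in[w_q^{a\top}x,\,w_q^{*\top}x]}|\phi'(z)|\cdot|(w_q^a-w_q^*)^\top x|\le L_1\big(|w_q^{a\top}x|^p+|w_q^{*\top}x|^p\big)|(w_q^a-w_q^*)^\top x|=:L_1 h_q(x)$. The point is that $h_q$ depends only on $W^a,W^*$, not on $W$, so $\frac1{|S|}\sum_{x\in S}h_q(x)x x^\top$ is an empirical average of i.i.d. matrices and I can invoke Corollary~\ref{cor:modified_bernstein_tail_xx}. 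Its three hypotheses are checked exactly as in the proof of Claim~\ref{cla:bound_emp_diag_1}: (i) the tail bound $|h_q(x)|\lesssim (\|w_q^a\|^p+\|w_q^*\|^p)\|w_q^a-w_q^*\|(t\log n)^{(p+1)/2}$ holds with probability $1-1/(nd^{4t})$ by Fact~\ref{fac:inner_prod_bound}; (ii) $\|\E[h_q(x)x x^\top]\|\eqsim (\|w_q^a\|^p+\|w_q^*\|^p)\|w_q^a-w_q^*\|$ by Fact~\ref{fac:exp_gaussian_dot_three_vectors}, where the lower bound needed for hypothesis~(II) of the corollary comes from testing against the direction $(w_q^a-w_q^*)/\|w_q^a-w_q^*\|$ together with $\|w_q^a\|\ge\sigma_k/2>0$ (valid since $\|W^a-W^*\|\le\sigma_k/2$); the degenerate case $w_q^a=w_q^*$ contributes nothing to $\Delta_{i,i}^{(1,2)}$ and can be discarded; (iii) $(\E[h_q^4(x)])^{1/4}\lesssim (\|w_q^a\|^p+\|w_q^*\|^p)\|w_q^a-w_q^*\|$ again by Fact~\ref{fac:exp_gaussian_dot_three_vectors}. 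Taking the accuracy parameter in the corollary to be a fixed constant (say $1/2$), the condition $n\ge d\cdot\poly(\log d,t)$ suffices to conclude, with probability $\ge 1-1/d^{4t}$, that $\big\|\frac1{|S|}\sum_{x\in S}h_q(x)x x^\top\big\|\le\tfrac32\|\E[h_q(x)x x^\top]\|\lesssim \sigma_1^p\|w_q^a-w_q^*\|$, using $\|w_q^a\|,\|w_q^*\|\le\tfrac32\sigma_1$ and $\|w_q^a-w_q^*\|\le\|W^a-W^*\|$. Summing over the $k$ blocks (a union bound over $q\in[k]$, absorbed into the $\poly(\log d,t)$ factor) and folding back the $2L_1L_2v_{\max}^2$ constant then yields $\|\Delta_{i,i}^{(1,2)}\|\lesssim k\sigma_1^p\|W^a-W^*\|$.

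The step I expect to be the main obstacle is the first one — the clean separation of the sample-dependent and sample-independent factors. The whole subtlety is that $W$ is allowed to depend on $S$ (the lemma will be applied to gradient-descent iterates), so one cannot concentrate the summand as a whole; the argument works only because Property~\ref{pro:hessian}(a) lets us absorb the entire $W$-dependence into a bounded scalar $\le 2L_2$ \emph{before} averaging, at the cost of only a constant factor, leaving a sum whose coefficients are sample-independent. Everything afterward is a routine application of Fact~\ref{fac:inner_prod_bound}, Fact~\ref{fac:exp_gaussian_dot_three_vectors}, and Corollary~\ref{cor:modified_bernstein_tail_xx}, essentially identical to the computation already carried out for $\Delta_{i,i}^{(1)}$ in Claim~\ref{cla:bound_emp_diag_1} and $\Delta_{i,i}^{(1,1)}$ in Claim~\ref{cla:SW_SWa_bound_Delta_ii_11}.
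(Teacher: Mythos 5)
Your proposal is correct and follows essentially the same route as the paper: bound the sample-dependent factor $|\phi''(w_i^\top x)-\phi''(w_i^{a\top}x)|$ by $2L_2$ via Property~\ref{pro:hessian}(a), bound $|\phi(w_q^{a\top}x)-\phi(w_q^{*\top}x)|$ by the mean value theorem and Property~\ref{pro:gradient}, and then concentrate the resulting sample-independent average $\frac{1}{|S|}\sum_{x\in S}|w_q^{*\top}x|^p|(w_q^*-w_q^a)^\top x|\,xx^\top$ exactly as in Claim~\ref{cla:bound_emp_diag_1} (the paper simply cites Eq.~\eqref{eq:abs_g} for this). Your explicit Loewner-order sandwich and the handling of the degenerate case $w_q^a=w_q^*$ are just more careful write-ups of steps the paper leaves implicit.
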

\begin{proof}
Recall the definition of $\Delta_{i,i}^{(1,2)}$,
\begin{align*}
\Delta_{i,i}^{(1,2)} = v_i^* \frac{1}{|S|}\sum_{x\in S}   \sum_{q=1}^k v_q^* ( \phi( w_{q}^{a\top} x)  - \phi(w_q^{*\top }  x) ) (\phi''( w_i^\top x)- \phi''( w_i^{a\top} x))     x x^\top
\end{align*}
In order to upper bound the $\| \Delta_{i,i}^{(1,2)} \| $, it suffices to upper bound the spectral norm of this quantity,
\begin{align*}
\frac{1}{|S|} \sum_{x\sim S} ( \phi( w_{q}^{a\top} x)  - \phi(w_q^{*\top }  x) ) (\phi''( w_i^\top x)- \phi''( w_i^{a\top} x))     x x^\top.
\end{align*}
By Property \ref{pro:gradient}, we have 
\begin{align*}
| \phi( w_{q}^{a\top} x)  - \phi(w_q^{*\top }  x) | \lesssim L_1 ( |w_q^{a\top } x|^p + |w_q^{*\top }x| ) |  ( w_q^* - w_q^a )^\top x |.
\end{align*}
By Property \ref{pro:hessian}, we have $ |\phi''( w_i^\top x)- \phi''( w_i^{a\top} x) | \leq 2L_2$.

For the second part $ | w_{q}^{*\top} x|^p | ( w_q^* - w_{q}^{a})^\top x| x x^\top$, according to Eq.~\eqref{eq:abs_g}, we have with probability $1-d^{-t}$, if $n\geq d \poly(\log d , t)$, 
\begin{align*}
 \left\|\E_{x\sim \D_d} \left[| w_{q}^{*\top} x |^p | ( w_q^* - w_{q}^{a})^\top x | x x^\top \right] - \frac{1}{|S|}\sum_{x\in S}  | w_{q}^{*\top} x|^p | ( w_q^* - w_{q}^{a})^\top x| xx^\top \right\| \lesssim \delta \| w_{q}^*\|^p \| w_q^* -  w_q^{a}\| .
\end{align*}

Also, note that 
\begin{align*}
 \left\|\E_{x\sim \D_d} \left[ | w_{q}^{*\top} x|^p | ( w_q^* - w_{q}^{a})^\top x| xx^\top \right] \right\| \lesssim  \| w_{q}^*\|^p \| w_q^* -  w_q^{a}\| .
 \end{align*}
Thus, we obtain
\begin{align}\label{eq:norm_average_bound}
\left\| \frac{1}{|S|} \sum_{x\in S} | w_{q}^{*\top} x |^p | ( w_q^* - w_{q}^{a})^\top x |x x^\top\right\| \lesssim \|W^a-W^*\| \sigma_1^p.
\end{align}
\end{proof}

\begin{claim}\label{cla:decomp_delta_ii_2}
For each $i\in [k]$, if $n \geq d \poly(\log d ,t )$, then 
\begin{align*}
 \|\Delta_{i,i}^{(2)}\| \lesssim k v^2_{\max}\sigma_1^{p} \|W- W^a \| d^{(p+1)/2} 
\end{align*}
holds with probability $1-1/d^{4t}$.
\end{claim}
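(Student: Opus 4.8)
The plan is to reduce the bound on $\|\Delta_{i,i}^{(2)}\|$ to a concentration statement for a random matrix that does \emph{not} involve the possibly sample-dependent iterate $W$, so that the matrix Bernstein machinery of Corollary~\ref{cor:modified_bernstein_tail_xx} applies. Recall from Eq.~\eqref{eq:delta_ii_2_hat} that $\Delta_{i,i}^{(2)} = \frac{v^{*2}_i}{|S|}\sum_{x\in S}\big(\phi'^2(w_i^\top x) - \phi'^2(w_i^{a\top}x)\big)xx^\top$. First I would factor the difference of squares as $\phi'^2(w_i^\top x) - \phi'^2(w_i^{a\top}x) = \big(\phi'(w_i^\top x) + \phi'(w_i^{a\top}x)\big)\big(\phi'(w_i^\top x) - \phi'(w_i^{a\top}x)\big)$ and use Property~\ref{pro:gradient} ($0\le\phi'(z)\le L_1|z|^p$) together with Property~\ref{pro:hessian}(a) ($|\phi'(z)-\phi'(z')|\le L_2|z-z'|$) to get the pointwise estimate
\[
\big|\phi'^2(w_i^\top x) - \phi'^2(w_i^{a\top}x)\big| \;\le\; L_1L_2\big(|w_i^\top x|^p + |w_i^{a\top}x|^p\big)\,\big|(w_i - w_i^a)^\top x\big| \;=:\; c(x).
\]
Since each $xx^\top$ is PSD and $c(x)\ge 0$ (using $\phi'\ge0$), this gives the PSD sandwich $-\tfrac{1}{|S|}\sum_x c(x)xx^\top \preceq \Delta_{i,i}^{(2)}/v^{*2}_i \preceq \tfrac{1}{|S|}\sum_x c(x)xx^\top$, hence $\|\Delta_{i,i}^{(2)}\| \le v_{\max}^2\big\|\tfrac{1}{|S|}\sum_x c(x)xx^\top\big\|$.

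Next I would strip $W$ out of the coefficient by Cauchy--Schwarz: $|w_i^\top x|^p\,|(w_i-w_i^a)^\top x| \le \|w_i\|^p\|w_i - w_i^a\|\,\|x\|^{p+1}$, and similarly for the $w_i^a$ term, so $c(x) \le L_1L_2\big(\|w_i\|^p + \|w_i^a\|^p\big)\|w_i - w_i^a\|\,\|x\|^{p+1}$. The hypotheses $\|W^a - W^*\|\le\sigma_k/2$ and $\|W - W^a\|\le\sigma_k/4$ give $\|W - W^*\|\le\sigma_k$, so Weyl's inequality yields $\|w_i\|\le\sigma_1(W)\lesssim\sigma_1$ and $\|w_i^a\|\le\sigma_1(W^a)\lesssim\sigma_1$, while $\|w_i - w_i^a\| = \|(W-W^a)e_i\|\le\|W-W^a\|$. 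Plugging these in gives, in PSD order,
\[
0\;\preceq\;\frac{1}{|S|}\sum_{x\in S}c(x)xx^\top \;\lesssim\; L_1L_2\,\sigma_1^p\,\|W-W^a\|\cdot\frac{1}{|S|}\sum_{x\in S}\|x\|^{p+1}xx^\top,
\]
and therefore $\|\Delta_{i,i}^{(2)}\| \lesssim v_{\max}^2\sigma_1^p\|W-W^a\|\cdot\big\|\tfrac{1}{|S|}\sum_{x\in S}\|x\|^{p+1}xx^\top\big\|$.

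Finally I would invoke the concentration bound already established inside the proof of Claim~\ref{cla:SW_SWa_bound_Delta_ii_11}, namely Eq.~\eqref{eq:d_norm_bound}: since $\tfrac{1}{|S|}\sum_{x\in S}\|x\|^{p+1}xx^\top$ depends only on the Gaussian samples and not on $W$, Corollary~\ref{cor:modified_bernstein_tail_xx} shows that for $|S|\ge d\poly(\log d,t)$ one has $\big\|\tfrac{1}{|S|}\sum_{x\in S}\|x\|^{p+1}xx^\top\big\|\lesssim d^{(p+1)/2}$ with probability at least $1-1/d^{4t}$ (applying the corollary with $t$ scaled by an absolute constant, which only changes the $\poly(\log d,t)$ factor). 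Combining the displays yields $\|\Delta_{i,i}^{(2)}\|\lesssim v_{\max}^2\sigma_1^p\|W-W^a\|d^{(p+1)/2}$, which is stronger than the claimed bound by the harmless factor $k\ge1$.

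The main obstacle is conceptual rather than computational: $W$ may be correlated with the samples $S$, so one cannot apply a concentration inequality directly to $\tfrac{1}{|S|}\sum_x \phi'^2(w_i^\top x)xx^\top$. The resolution — the only real idea in the proof — is that the pointwise Lipschitz/homogeneity bound lets us dominate the $W$-dependent summand by a constant multiple (depending on $W$ only through the scalar $\|W-W^a\|$, which already appears in the target bound) of the fixed random matrix $\|x\|^{p+1}xx^\top$; after this domination the randomness is decoupled from $W$ and Claim~\ref{cla:SW_SWa_bound_Delta_ii_11} applies verbatim. Everything else is Cauchy--Schwarz, Weyl's inequality, and bookkeeping of the proximity parameters.
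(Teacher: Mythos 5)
Your proof is correct and follows essentially the same route as the paper: factor the difference of squares, apply the Lipschitz bound from Property~\ref{pro:hessian}(a) and the growth bound from Property~\ref{pro:gradient}, use Cauchy--Schwarz to pull out $\|W-W^a\|$ and reduce to the sample-independent matrix $\frac{1}{|S|}\sum_{x\in S}\|x\|^{p+1}xx^\top$, and then invoke Corollary~\ref{cor:modified_bernstein_tail_xx}. The only cosmetic difference is that the paper keeps the $|w_i^{a\top}x|^p$ term intact (harmless, since $w_i^a$ is independent of the samples) while you bound it by $\|w_i^a\|^p\|x\|^p$ as well; you also correctly observe that the resulting bound is stronger than the stated one by the factor $k$.
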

\begin{proof}
We have
\begin{align*}
\|\Delta_{i,i}^{(2)} \|  \leq & ~ v^{*2}_i   \left\|\frac{1}{|S|}\sum_{x\in S}  \left( (\phi'( w_i^\top x_j) -\phi'( w_i^{a\top} x))  \cdot (\phi'( w_i^\top x) +\phi'( w_i^{a\top} x)) \right) x x^\top  \right\| \\
\leq  & ~ v^{*2}_i   \left\| \frac{1}{|S|}\sum_{x\in S}  \left( L_2|( w_i- w_i^{a}) ^\top x|  \cdot L_1( | w_i^\top x |^p +| w_i^{a\top} x |^p) \right) x x^\top  \right\| \\
\leq  & ~ v^{*2}_i  \| W- W^a \| \left\| \frac{1}{|S|}\sum_{x\in S} \left( L_2\|x\| \cdot L_1( \| w_i\|^p \| x\|^p +| w_i^{a\top} x |^p) \right) x x^\top  \right\|.
\end{align*}
Applying Corollary~\ref{cor:modified_bernstein_tail_xx} finishes the proof.

%Next, we bound $\|\Delta_{i,i}^{(2)}\|$.
% if
%\begin{equation*}
%n \gtrsim t^{p+3} d \log^{p+4} d,
%\end{equation*} 
%with probability at least $1-d^{-t}$,
%\begin{equation}\label{eq:decomp_delta_ii_2}
% \|\Delta_{ii}^{(2)}\| \lesssim k v^2_{\max}\sigma_1^{p} \delta_0 d^{(p+1)/2} 
% \end{equation}
\end{proof}

\begin{claim}\label{cla:decomp_delta_il}
For each $i\in [k], j\in [k], i\neq j$, if $n \geq d \poly(\log d , t)$, then
\begin{align*}
\| \Delta_{i,l} \| \lesssim v_{\max}^2 \sigma_1^p \| W^a - W \|
\end{align*}
holds with probability $1-d^{4t}$.
\end{claim}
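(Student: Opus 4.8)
The plan is to estimate the off-diagonal block
\[
\Delta_{i,l} = v_i^* v_l^* \frac{1}{|S|}\sum_{x\in S}\Big(\phi'(w_i^\top x)\phi'(w_l^\top x) - \phi'(w_i^{a\top}x)\phi'(w_l^{a\top}x)\Big)\, x x^\top
\]
by a deterministic, pointwise-in-$x$ bound on the scalar coefficient, and then to invoke the matrix concentration already established for $\frac{1}{|S|}\sum_{x\in S}\|x\|^{p+1} x x^\top$. The order matters here: since $W$ is allowed to depend on the sample set $S$, no quantity still containing $w_i$ or $w_l$ may be fed into a Bernstein-type inequality, so I must first discharge every sample-dependent vector by worst-case norm inequalities and only afterwards apply concentration to a $W$-free random matrix.

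First I would telescope the scalar difference,
\[
\phi'(w_i^\top x)\phi'(w_l^\top x) - \phi'(w_i^{a\top}x)\phi'(w_l^{a\top}x) = \big(\phi'(w_i^\top x)-\phi'(w_i^{a\top}x)\big)\phi'(w_l^\top x) + \phi'(w_i^{a\top}x)\big(\phi'(w_l^\top x)-\phi'(w_l^{a\top}x)\big),
\]
and bound each piece: for the derivative differences I use Property~\ref{pro:hessian}(a), so that $|\phi'(w_i^\top x)-\phi'(w_i^{a\top}x)|\le L_2|(w_i-w_i^a)^\top x|\le L_2\|W-W^a\|\,\|x\|$ and likewise for the $l$-th column; for the surviving single factors $|\phi'(w_l^\top x)|$ and $|\phi'(w_i^{a\top}x)|$ I use Property~\ref{pro:gradient} together with $\|w_l\|\le\|w_l^*\|+\|W-W^*\|\lesssim\sigma_1$ (valid because $\|W-W^*\|\le\|W-W^a\|+\|W^a-W^*\|\le 3\sigma_k/4<\sigma_1$) and the identical bound $\|w_i^a\|\lesssim\sigma_1$, giving $|\phi'(w_l^\top x)|\lesssim L_1\sigma_1^p\|x\|^p$. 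Multiplying out yields the pointwise estimate
\[
\Big|\phi'(w_i^\top x)\phi'(w_l^\top x) - \phi'(w_i^{a\top}x)\phi'(w_l^{a\top}x)\Big| \lesssim L_1 L_2\,\sigma_1^{p}\,\|W-W^a\|\,\|x\|^{p+1},
\]
so that, summing over $x\in S$ and using $|v_i^*v_l^*|\le v_{\max}^2$,
\[
\|\Delta_{i,l}\|\lesssim v_{\max}^2\,\sigma_1^p\,\|W-W^a\|\cdot\Big\|\frac{1}{|S|}\sum_{x\in S}\|x\|^{p+1}xx^\top\Big\| .
\]

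Finally I would quote the bound $\big\|\frac{1}{|S|}\sum_{x\in S}\|x\|^{p+1}xx^\top\big\|\lesssim d^{(p+1)/2}$ from Eq.~\eqref{eq:d_norm_bound}, proved via Corollary~\ref{cor:modified_bernstein_tail_xx} inside Claim~\ref{cla:SW_SWa_bound_Delta_ii_11} and holding with probability at least $1-1/d^{4t}$ as soon as $n\ge d\poly(\log d,t)$; this step is legitimate precisely because that random matrix is free of $W$. Combining the two displays gives the claimed estimate (up to the factor $d^{(p+1)/2}$, which is absorbed exactly as in Lemma~\ref{lem:pd_near_anchors}, where the anchor points are spaced so that $\|W-W^a\|\,d^{(p+1)/2}$ is of the order of $\|W^c-W^*\|$). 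I expect the only genuine obstacle to be the sample-dependence of $W$: it forces the ``peel off $W$ deterministically, then concentrate'' route, with its attendant loss of a $\poly(d)$ factor, rather than the sharper direct concentration used for the $W$-independent blocks (such as $\Delta_{i,i}^{(2)}$ and the off-diagonal blocks of Lemma~\ref{lem:emp_pop}).
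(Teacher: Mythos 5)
Your proposal is correct and follows essentially the same route as the paper: telescope the product $\phi'(w_i^\top x)\phi'(w_l^\top x)-\phi'(w_i^{a\top}x)\phi'(w_l^{a\top}x)$, bound each difference via Property~\ref{pro:hessian}(a) and each surviving factor via Property~\ref{pro:gradient} with $\|w_i\|\lesssim\sigma_1$, and then apply Corollary~\ref{cor:modified_bernstein_tail_xx} only to the resulting $W$-free random matrix $\frac{1}{|S|}\sum_{x\in S}\|x\|^{p+1}xx^\top$. Your remark about the extra $d^{(p+1)/2}$ factor is also consistent with the paper: its own displayed bound for $\Delta_{i,l}$ carries that factor (the claim's statement drops it, but it resurfaces in the conclusion of Lemma~\ref{lem:pd_near_anchors}).
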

\begin{proof}
Recall the definition of $\Delta_{i,l}$,
\begin{align*}
 \Delta_{i,l} = & ~ v^{*}_iv^{*}_l  \frac{1}{|S|} \sum_{x\in S} \left( \phi'( w_i^\top x )\phi'( w_l^\top x)- \phi'( w_i^{\top} x ) \phi'( w_l^{a\top} x ) \right. \\
 & \left. + \phi'( w_i^{\top} x) \phi'( w_l^{a\top} x ) - \phi'( w_i^{a\top} x ) \phi'( w_l^{a\top} x) \right) x  x^\top   \\
 = & ~  v^{*}_iv^{*}_l  \frac{1}{|S|} \sum_{x\in S} \left( \phi'( w_i^\top x )\phi'( w_l^\top x)- \phi'( w_i^{\top} x ) \phi'( w_l^{a\top} x ) \right) \\
 & ~ +  v^{*}_iv^{*}_l  \frac{1}{|S|} \sum_{x\in S} \left( \phi'( w_i^{\top} x) \phi'( w_l^{a\top} x ) - \phi'( w_i^{a\top} x ) \phi'( w_l^{a\top} x) \right) x  x^\top \\
 \preceq & ~ |v^{*}_i v^{*}_l|  \frac{1}{|S|}\sum_{x\in S} \left( L_1 \| w_i\|^p L_2 \| w_l -  w_l^a\| \| x \|^{p+1}+ L_2 \| w_i -  w_i^{a}\| \| x \| L_1\| w_l^{a\top} x\|^p \right) x x^\top  
\end{align*}
Applying Corollary~\ref{cor:modified_bernstein_tail_xx} completes the proof.
%Using the same techniques as before, we obtain similar results, if
%\begin{equation*}
%n \gtrsim t^{p+3} d \log^{p+4} d,
%\end{equation*} 
%with probability at least $1-d^{-t}$,
%\begin{equation}\label{eq:decomp_delta_il}
% \|\Delta_{il}\| \lesssim v^2_{\max}\sigma_1^{p} \delta_0 d^{(p+1)/2} .
% \end{equation}
\end{proof}

%Now we are ready to show the linear convergence of gradient descent for smooth activations. 

\section{Tensor Methods}\label{app:tensor}

%\subsection{Main Results for Tensor Methods}

\subsection{Tensor Initialization Algorithm}\label{app:details_tensor_alg}
We describe the details of each procedure in Algorithm~\ref{alg:tensor} in this section. 

{\bf a) Compute the subspace estimation from $\wh{P}_2$ (Algorithm~\ref{alg:power_method})}. Note that the eigenvalues of $P_2$ and $\wh{P}_2$ are not necessarily nonnegative. However, only $k$ of the eigenvalues will have large magnitude. So we can first compute the top-k eigenvectors/eigenvalues of both $C\cdot I + \wh{P}_2 $ and $C\cdot I - \wh{P}_2 $, where $C$ is large enough such that $C \geq 2\|P_2\|$. Then from the $2k$ eigen-pairs, we pick the top-$k$ eigenvectors with the largest eigenvalues in magnitude, which is executed in \textsc{TopK} in Algorithm~\ref{alg:power_method}. For the outputs of \textsc{TopK}, $k_1,k_2$ are the numbers of picked largest eigenvalues from $C\cdot I + \wh{P}_2 $ and $C\cdot I - \wh{P}_2 $ respectively and $\pi_1(i)$ returns the original index of $i$-th largest eigenvalue from $C\cdot I + \wh{P}_2 $ and similarly $\pi_2$ is for $C\cdot I - \wh{P}_2 $. Finally orthogonalizing the picked eigenvectors leads to an estimation of the subspace spanned by $\{ w_1^*\; w_2^*\;\cdots\;  w_k^*\}$. Also note that forming $\wh{P}_2$ takes $O(n\cdot d^2)$ time and each step of the power method doing a multiplication between a $d\times d$ matrix and a $d\times k$ matrix takes $k\cdot d^2$ time by a naive implementation. Here we reduce this complexity from $O((k+n)d^2)$ to $O(knd)$. The idea is to compute each step of the power method without explicitly forming $\wh{P}_2$. We take $P_2 = M_2$ as an example; other cases are similar. In Algorithm~\ref{alg:power_method}, let the step $\wh{P}_2V$ be calculated as $ \wh{P}_2 V=\frac{1}{|S|} \sum_{(x,y)\in S} y ( x  ( x^\top V) -V)$. 
%Computationally, we can approximate the eigenvectors of $\wh{P}_2$ with largest magnitude eigenvalues using power methods in linear time in dimension. 
Now each iteration only needs $O(knd)$ time. Furthermore,
the number of iterations required will be a small number, since the power
method has a linear convergence rate and as an initialization method we don't need a very accurate
solution. The detailed algorithm is shown in Algorithm~\ref{alg:power_method}. The approximation error bound of $\wh{P}_2$ to $P_2$ is provided in Lemma~\ref{lem:tensor_2_m}. Lemma~\ref{lem:subspace_estimation} provides the theoretical bound for Algorithm~\ref{alg:power_method}.

{\bf b) Form and decompose the 3rd-moment $\wh{R}_3$ (Algorithm 1 in \cite{kuleshov2015tensor})}. We apply the non-orthogonal tensor factorization algorithm, Algorithm 1 in \cite{kuleshov2015tensor}, to decompose $\wh{R}_3$. According to Theorem 3 in \cite{kuleshov2015tensor}, when $\wh{R}_3$ is close enough to $R_3$, the output of the algorithm, $ \wh{u}_i$ will close enough to $s_i V^\top \ov w_i^*$, where $s_i$ is an unknown sign. Lemma~\ref{lem:tensor_w_error} provides the error bound for $\|\wh{R}_3-R_3\|$.

{\bf c) Recover the magnitude of $ w_i^*$ and the signs $s_i, v_i^*$ (Algorithm~\ref{alg:recover}). } For Algorithm~\ref{alg:recover}, we only consider homogeneous functions. Hence we can assume $v_i^*\in \{-1,1\}$ and there exist some universal constants $c_j$ such that $m_{j,i} = c_j \| w_i^*\|^{p+1}$ for $j=1,2,3,4$, where $p+1$ is the degree of homogeneity. Note that different activations have different situations even under Assumption~\ref{assumption:non_zero_moment}. In particular, if $M_4=M_2=0$, $\phi(\cdot)$ is an odd function and we only need to know $s_iv_i^*$. If $M_3=M_1=0$, $\phi(\cdot)$ is an even function, so we don't care about what $s_i$ is. %On the other hand, $P_3$ can be equal to $M_3$ or $M_4(I,I,I, \alpha)$. Therefore, we consider four different cases, {\it (1)} $P_3=M_3$ with $M_4=M_2=0$; {\it (2)} $P_3=M_3$ with $M_4\neq 0$ or $M_2\neq 0$; {\it (3)} $P_3 = M_4(I,I,I, \alpha)$ with $M_1=M_3=0$; {\it (4)} $P_3 = M_4(I,I,I, \alpha)$ with $M_1\neq 0$ or $M_3\neq 0$.  

Let's describe the details for Algorithm~\ref{alg:recover}. First define two quantities $Q_1$ and $Q_2$,
\begin{align}
& Q_1 = M_{l_1}(I,\underbrace{ \alpha,\cdots, \alpha}_{(l_1 - 1)~ \alpha\text{'s}}) = \sum_{i=1}^k v_i^* c_{l_1} \| w_i^*\|^{p+1} ( \alpha^\top \ov{ w}_i^*)^{{l_1-1}} \ov{ w}_i^{*},\label{eq:def_Q1} \\
& Q_2 = M_{l_2}(V,V,\underbrace{ \alpha,\cdots, \alpha}_{(l_2 - 2)~ \alpha\text{'s}}) = \sum_{i=1}^k v_i^* c_{l_2} \| w_i^*\|^{p+1} ( \alpha^\top \ov{ w}_i^*)^{{l_2-2}} (V^\top\ov{ w}_i^{*})(V^\top\ov{ w}_i^{*})^\top,\label{eq:def_Q2} 
\end{align}
where $l_1\geq 1$ such that $M_{l_1} \neq 0$ and $l_2 \geq 2$ such that $M_{l_2} \neq 0$. There are possibly multiple choices for $l_1$ and $l_2$. We will discuss later on how to choose them. 
Now we solve two linear systems. 
\begin{align}\label{eq:zr_star}
z^*  = \argmin_{z \in \mathbb{R}^k } \left\| \sum_{i=1}^k z_i s_i\ov{ w}_i^*  - Q_1 \right\|, \text{~and~}
 r^*  = \argmin_{ r \in \mathbb{R}^k } \left\|\sum_{i=1}^k r_i V^\top \ov  w_i^* (V^\top \ov  w_i^*)^\top - Q_2 \right\|_F. 
\end{align}
The solutions of the above linear systems are
\begin{align*}
 z^*_i & = v_i^*s_i^{l_1}   c_{l_1} \| w_i^*\|^{p+1} ( \alpha^\top s_i\ov{ w}_i^*)^{{l_1-1}}, \text{~and~}
r^*_i = v_i^*s_i^{l_2} c_{l_2} \| w_i^*\|^{p+1} ( \alpha^\top s_i\ov{ w}_i^*)^{{l_2-2}} .
\end{align*}
We can approximate $s_i \ov  w_i^* $ by $V \wh{u}_i$ and approximate $Q_1$ and $Q_2$ by their empirical versions $\wh{Q}_1$ and $\wh{Q}_2$ respectively. Hence, in practice, we solve
\begin{equation}\label{eq:linear_systems}
\wh{z}  = \argmin_{z \in \mathbb{R}^k } \left\| \sum_{i=1}^k z_i  V  \wh{u}_i - \wh{Q}_1 \right\|, \text{~and~}
 \wh{r}  = \argmin_{ r \in \mathbb{R}^k } \left\|\sum_{i=1}^k r_i  \wh{u}_i  \wh{u}_i^\top - \wh{Q}_2 \right\|_F
\end{equation}
So we have the following approximations,
\begin{align*}
  \wh{z}_i \approx v_i^*s_i^{l_1} c_{l_1} \| w_i^*\|^{p+1} ( \alpha^\top V \wh{u}_i)^{{l_1-1}}, \text{~and~}
\wh{r}_i \approx v_i^*s_i^{l_2} c_{l_2} \| w_i^*\|^{p+1} ( \alpha^\top V \wh{u}_i)^{{l_2-2}}, \forall i \in [k] .
\end{align*}
In Lemma~\ref{lem:solu_linsys1} and Lemma~\ref{lem:solu_linsys2}, we provide robustness of the above two linear systems, i.e., the solution errors, $\|\wh{z}-z^*\|$ and $\| \wh{r} -  r^*\|$, are bounded under small perturbations of $\ov w_i^*,Q_1$ and $Q_2$.
Recall that the final goal is to approximate $\| w_i^*\|$ and the signs $v_i^*,s_i$. 
Now we can approximate $\| w_i^*\|$ by $(|  \wh{z}_i / (c_{l_1}  ( \alpha^\top V \wh{u}_i)^{{l_1-1}}) |)^{1/(p+1)} $. 
To recover $v_i^*,s_i$, we need to note that if $l_1$ and $l_2$ are both odd or both even, we can't recover both $v_i^*$ and $s_i$. So we consider the following situations,
\begin{enumerate}
\item If $M_1=M_3=0$, we choose $l_1 = l_2 = \min\{j\in \{2,4\} | M_j \neq 0 \}$. Return $v_i^{(0)} = \sign(\wh{r}_i c_{l_2} )  $ and $s_i^{(0)}$ being $-1$ or $1$. \label{13is0}
\item If $M_2=M_4=0$, we choose $l_1 = \min\{j\in \{1,3\} | M_j \neq 0 \}$, $l_2 = 3$. Return $v_i^{(0)}$ being $-1$ or $1$ and $s_i^{(0)} = \sign(v_i^{(0)} \wh{z}_i c_{l_1})$. \label{24is0}
\item Otherwise, we choose $l_1 = \min\{j\in \{1,3\} | M_j \neq 0 \}$, $l_2 = \min\{j\in \{2,4\} | M_j \neq 0 \}$.  Return $v_i^{(0)} = \sign(\wh{r}_i c_{l_2} ) $ and $s_i^{(0)} = \sign(v_i^{(0)} \wh{z}_i c_{l_1})$.
\end{enumerate}
The 1st situation corresponds to part 3 of Assumption~\ref{assumption:non_zero_moment},%\ref{assume:13}, 
where $s_i$ doesn't matter, and the 2nd situation corresponds to part 4 of  Assumption~\ref{assumption:non_zero_moment},
%\ref{assume:24},
 where only $s_iv_i^*$ matters. So we recover $\| w_i^*\|$ to some precision and $v_i^*,s_i$ exactly provided enough samples. The recovery of $ w_i^*$ and $v_i^*$ then follows. 

%The main idea of \textsc{RecMagSign} is to use the empirical 1st-order moment and the reduced ($k$-by-$k$) empirical 2nd-order moment to solve linear systems to estimate the magnitude of $ w_i^*$ and the signs $s_i, v_i^*$. Now we consider a simple example to show the essential idea of \textsc{RecMagSign}. Let $M_2\neq 0$ and $\wh{Q}_2$ be the empirical version of $M_2(V,V)$, then we can obtain $\| w_i^*\|$ and the sign information of $v_i^*$ from $ \wh{r}$ where $ \wh{r} = \argmin_{ r}\|\wh{Q}_2 - \sum_{i=1}^k r_i  \wh{u}_i  \wh{u}_i^\top\|$, which is an overdetermined linear system and whose output $\wh{r}_i$ approximates $c_{2} v_i^* \| w_i^*\|^{p+1}$. More discussions and details about \textsc{RecMagSign} can be found in Appendix~\ref{app:algo} and Algorithm~\ref{alg:recover}. 
%Sample complexity and computational complexity are discussed as follows. 

\begin{algorithm}[t]
\caption{Power Method via Implicit Matrix-Vector Multiplication}
\label{alg:power_method}
\begin{algorithmic}[1]
%\ENSURE Estimation of $\{v_i^{(0)}, w^{(0)}_i\}_{i=1}^k$
\Procedure{\textsc{PowerMethod}}{$\wh{P}_2,k$} 
\State $C\leftarrow 3\|\wh{P}_2\|$, $T \leftarrow$ a large enough constant.
\State Initial guess $\wh{V}_1^{(0)} \in \R^{d\times k},\wh{V}_2^{(0)} \in \R^{d\times k}$
\For{$t=1 \to T$}
%\State $U_1 \leftarrow \textsc{Multiple}(c,+1,\wh{V}_1^{(t-1)}, S)$ \Comment{$U_1\approx (c\cdot I + \wh{P}_2) \wh{V}_1^{(t-1)}$ }
\State $\wh{V}_1^{(t)} \leftarrow \textsc{QR}(C\wh{V}_1^{(t-1)} +\wh{P}_2 \wh{V}_1^{(t-1)} )$ \Comment{$\wh{P}_2 \wh{V}_1^{(t-1)}$ is not calculated directly, see Sec.~\ref{app:details_tensor_alg}(a)}
%\State $U_2 \leftarrow \textsc{Multiple}(c,-1,\wh{V}_2^{(t-1)}, S)$ \Comment{$U_2\approx (c\cdot I - \wh{P}_2) \wh{V}_2^{(t-1)}$ }
\State $\wh{V}_2^{(t)} \leftarrow \textsc{QR}(C\wh{V}_2^{(t-1)} - \wh{P}_2 \wh{V}_2^{(t-1)})$
\EndFor
\For {$j=1, 2$} 
	\State $\wh{V}_j^{(T)} \leftarrow \begin{bmatrix} \wh{v}_{j,1} & \wh{v}_{j,2} & \cdots & \wh{v}_{j,k} \end{bmatrix}$
	\For{$i=1\to k$}
		\State $\lambda_{j,i} \leftarrow |  \wh{v}_{j,i}^\top \wh{P}_2  \wh{v}_{j,i}|$ \Comment{Calculate the absolute of eigenvalues}
	\EndFor
\EndFor  
\State $\pi_1, \pi_2, k_1, k_2  \leftarrow \textsc{TopK}(\lambda,k)$ \Comment{$\pi_j : [k_j] \rightarrow [k]$ and $k_1+k_2=k$, see Sec.~\ref{app:details_tensor_alg}(a)}
\For {$j=1, 2$}
	\State $V_j \leftarrow \begin{bmatrix} \wh{v}_{j,\pi_j(1)} & \wh{v}_{1,\pi_j(2)} & \cdots & \wh{v}_{j,\pi_j(k_j)} \end{bmatrix}$
\EndFor
\State $\wt{V}_2 \leftarrow \textsc{QR}((I-V_1V_1^\top)V_2)$
\State $V \leftarrow [V_1, \wt{V}_2]$
\State \Return $V $
\EndProcedure
%\Procedure{\textsc{Multiple}}{$c,\sigma,V,S$} \Comment{$\approx (c \cdot I +\sigma \cdot \wh{P}_2) V$}
%\State $u\leftarrow \frac{1}{|S|} \underset{(x,y)\in S}{\sum} y \cdot ( \sigma\cdot( xx^\top V - V) + c \cdot V)$
%\State \Return $u$
%\EndProcedure
\end{algorithmic}
\end{algorithm}

\begin{algorithm}[t]
\caption{Recovery of the Ground Truth Parameters of the Neural Network, i.e., $w_i^*$ and $v_i^*$}
\label{alg:recover}
\begin{algorithmic}[1]
%\ENSURE Estimation of $\{v_i^{(0)}, w^{(0)}_i\}_{i=1}^k$
\Procedure{\textsc{RecMagSign}}{$V,\{ \wh{u}_i\}_{i\in [k]}, S$} 
%\Comment{Theorem~\ref{}}
\If{$M_1=M_3=0$}
\State $l_1 \leftarrow l_2 \leftarrow \min\{j\in \{2,4\} | M_j \neq 0 \}$
\ElsIf{$M_2=M_4=0$}
\State $l_1 \leftarrow \min\{j\in \{1,3\} | M_j \neq 0 \}$, $l_2 \leftarrow 3$
\Else
\State $l_1 \leftarrow \min\{j\in \{1,3\} | M_j \neq 0 \}$, $l_2 \leftarrow \min\{j\in \{2,4\} | M_j \neq 0 \}$.
\EndIf
\State $S_1, S_2\leftarrow \textsc{Partition}(S,2)$ \Comment{$|S_1|, |S_2| = \wt{\Omega}(d)$} %Divide $S$ into $S = S_1\cup S_2$ each with size $\wt{\Omega}(d)$. 
\State Choose $\alpha$ to be a random unit vector
%\State Compute the empirical versions of the following two quantities, $\wh{Q}_1,\wh{Q}_2$, from $S_1,S_2$ respectively.
\State $\wh{Q}_1 \leftarrow \E_{S_1} [Q_1]$ \Comment{$\wh{Q}_1$ is the empirical version of $Q_1$(defined in Eq.\eqref{eq:def_Q1})} 
\State $\wh{Q}_2 \leftarrow \E_{S_2} [Q_2]$ \Comment{$\wh{Q}_2$ is the empirical version of $Q_2$(defined in Eq.\eqref{eq:def_Q2})} 
%\begin{align*}
%& Q_1 := M_{l_1}(I,\underbrace{ \alpha,\cdots, \alpha}_{(l_1 - 1)~ \alpha\text{'s}}) ,\;Q_2 := M_{l_2}(V,V,\underbrace{ \alpha,\cdots, \alpha}_{(l_2 - 2)~ \alpha\text{'s}}).
%\end{align*}
\State $\wh{z} \leftarrow \argmin_{z} \left\| \sum_{i=1}^k z_i  V  \wh{u}_i - \wh{Q}_1 \right\| $
\State $\wh{r} \leftarrow \argmin_{ r} \left\|\sum_{i=1}^k r_i  \wh{u}_i  \wh{u}_i^\top - \wh{Q}_2 \right\|_F$
\State $v_i^{(0)} \leftarrow \sign(\wh{r}_i c_{l_2} ) $
\State $s_i^{(0)} \leftarrow \sign(v_i^{(0)} \wh{z}_i c_{l_1})$
\State $ w_i^{(0)} \leftarrow s_i^{(0)} (|  \wh{z}_i / (c_{l_1}  ( \alpha^\top V \wh{u}_i)^{{l_1-1}}) |)^{1/(p+1)}  V \wh{u}_i$
\State \Return $v_i^{(0)}$,$ w_i^{(0)}$
\EndProcedure
\end{algorithmic}
\end{algorithm}

%The estimation of the 2nd-order moment $P_2$ requires only $O(d)$ samples to achieve a certain precision. 
{\bf Sample complexity:} We use matrix Bernstein inequality to bound the error between $\wh{P}_2$ and $P_2$, which requires $\wt{\Omega}(d)$ samples (Lemma~\ref{lem:tensor_2_m}). To bound the estimation error between $R_3$ and $\wh{R}_3$, we flatten the tensor to a matrix and then use matrix Bernstein inequality to bound the error, which requires $\wt{\Omega}(k^3)$ samples (Lemma~\ref{lem:tensor_w_error}). In Algorithm~\ref{alg:recover}, we also need to approximate a $\R^d$ vector and a $\R^{k\times k}$ matrix, which also requires $\wt{\Omega}(d)$. Thus, taking $\wt{O}(d) + \wt{O}(k^3)$ samples is sufficient.

{\bf Time complexity:} In Part a), by using a specially designed power method, we only need $O(knd)$ time to compute the subspace estimation $V$. Part b) needs $O(knd)$ to form $\wh{R}_3$ and the tensor factorization needs $O(k^3)$ time. Part c) requires calculation of $d\times k$ and $k^2 \times k$ linear systems in Eq.~\eqref{eq:linear_systems}, which takes at most $O(knd)$ running time. Hence, the total time complexity is $O(knd)$. 

\subsection{Main Result for Tensor Methods}\label{app:tensor_main_result}
The goal of this Section is to prove Theorem~\ref{thm:tensor_final}.
\restate{thm:tensor_final}
\begin{proof}
The success of Algorithm~\ref{alg:tensor} depends on two approximations. The first is the estimation of the normalized $ w_i^*$ up to some unknown sign flip, i.e., the error $\|\ov w_i^*- s_iV \wh{u}_i  \|$ for some $s_i\in\{-1,1\}$. The second is the estimation of the magnitude of $ w_i^*$ and the signs $v_i^*,s_i$ which is conducted in Algorithm~\ref{alg:recover}. 

For the first one, 
\begin{align}\label{eq:wi_minus_siVui}
\|\ov w_i^*- s_iV \wh{u}_i\| \leq & ~\| VV^\top \ov w_i^* -\ov w_i^* \| + \| V V^\top \ov w_i^* - V s_i \wh{u}_i\| \notag \\
= &~ \| VV^\top \ov w_i^* -\ov w_i^* \| + \| V^\top \ov w_i^* - s_i \wh{u}_i\|,
\end{align}
where the first step follows by triangle inequality, the second step follows by $V^\top V = I$.

We can upper bound $\| VV^\top \ov w_i^* -\ov w_i^* \|$,
\begin{align}\label{eq:wi_minus_siVui_part1}
\| VV^\top \ov w_i^* -\ov w_i^* \| \leq & ~ ( \| \wh{P}_2 - P_2 \| / \sigma_k(P_2) + \epsilon)  \notag \\
\leq & ~ ( \poly(k,\kappa) \| \wh{P}_2 - P_2 \| + \epsilon) \notag \\
\leq & ~  \poly(k,\kappa)  \epsilon , 
\end{align}

where the first step follows by Lemma~\ref{lem:subspace_estimation}, the second step follows by $\sigma_{k}(P_2) \geq 1/\poly(k,\kappa)$, and the last step follows by $ \| \wh{P}_2 - P_2 \| \leq \epsilon \poly(k,\kappa)$ if the number of samples is proportional to $\wt{O}(d/\epsilon^2)$ as shown in Lemma~\ref{lem:tensor_2_m}.

We can upper bound $\|V^\top \ov w_i^* - s_i \wh{u}_i\|$,
\begin{align}\label{eq:wi_minus_siVui_part2}
\|V^\top \ov w_i^* - s_i \wh{u}_i\| \leq  \poly(k,\kappa) \|\wh{R}_3 - R_3\|
\leq  \epsilon \poly(k, \kappa),
\end{align}
where the first step follows by Theorem 3 in \cite{kuleshov2015tensor}, and the last step follows by $\|\wh{R}_3 - R_3\| \leq \epsilon \poly(k,\kappa)$ if the number of samples is proportional to $\wt{O}(k^2/\epsilon^2)$ as shown in Lemma~\ref{lem:tensor_w_error}.

Combining Eq.~\eqref{eq:wi_minus_siVui}, \eqref{eq:wi_minus_siVui_part1} and \eqref{eq:wi_minus_siVui_part2} together,
\begin{align*}
\|\ov w_i^*- s_iV \wh{u}_i\| \leq \epsilon \poly(k,\kappa).
\end{align*}

For the second one, we can bound the error of the estimation of moments, $Q_1$ and $Q_2$, using number of samples proportional to $\wt{O}(d)$ by Lemma~\ref{lem:tensor_1_m} and Lemma~\ref{lem:tensor_2_m} respectively. The error of the solutions of the linear systems Eq.\eqref{eq:linear_systems} can be bounded by $\|Q_1-\wh{Q}_1\|,\|Q_2-\wh{Q}_2\|, \| \wh{u}_i - V^\top \ov w_i^*\|$ and $\|(I-VV^\top)\ov w_i^*\|$ according to Lemma~\ref{lem:solu_linsys1} and Lemma~\ref{lem:solu_linsys2}. Then we can bound the error of the output of Algorithm~\ref{alg:recover}. Furthermore, since $v_i^*$'s are discrete values, they can be exactly recovered. All the sample complexities mentioned in the above lemmata are linear in dimension and polynomial in other factors to achieve a constant error. So accumulating all these errors we complete our proof.

%The roadmap of the proof follows Algorithm~\ref{alg:tensor}. We bound the errors introduced by the steps in Algorithm~\ref{alg:tensor}. In particular, we bound $\|P_2-\wh{P}_2\|$ using Lemma~\ref{lem:tensor_2_m}, $\|(I-VV^\top)\ov w_i^*\|$ using Lemma~\ref{lem:subspace_estimation} and $\|R_3-\wh{R}_3\|$ using Lemma~\ref{lem:tensor_w_error}. In Appendix~\ref{app:linear_sys} we show the robustness of linear systems that are used in Algorithm~\ref{alg:recover}. Finally we accumulate the errors to the final output. Details and proofs can be found in Appendix~\ref{app:tensor}. %The following lemmata provide bounds for each step in Algorithm~\ref{alg:tensor}. 
\end{proof}

\begin{remark} The proofs of these lemmata for Theorem~\ref{alg:tensor} can be found in the following sections. Note that these lemmata also hold for any activations satisfying
Property~\ref{pro:gradient} and Assumption~\ref{assumption:non_zero_moment}.
However, since we are unclear how to implement  the last step of
Algorithm~\ref{alg:tensor} (Algorithm~\ref{alg:recover})
for general non-homogeneous activations, we restrict our theorem to homogeneous activations only.
\end{remark}

\subsection{Error Bound for the Subspace Spanned by the Weight Matrix}
\subsubsection{Error Bound for the Second-order Moment in Different Cases}
\begin{lemma}\label{lem:2nd_for_P2_approx}
Let $M_2$ be defined as in Definition~\ref{def:M_m}. Let $\wh{M}_2$ be the empirical version of $M_2$, i.e.,
\begin{align*}
\wh{M}_2 = \frac{1}{|S|}\sum_{ (x,y)\in S} y \cdot ( x \otimes  x -I_d),  
\end{align*}
where $S$ denote a set of samples from Distribution $\D$ defined in Eq.~\eqref{eq:model}. 
Assume $M_2 \neq 0$, i.e., $m_i^{(2)} \neq 0$ for any $i$. 
Then for any $0<\epsilon<1,t\geq 1$, if 
\begin{align*}
|S| \geq  \max_{i\in[k]} ( \| w_i^*\|^{p+1} / |m_{2,i}|  +1 ) \cdot \epsilon^{-2}   d \poly(\log d , t)
\end{align*} %\cdot t^{p+3}  \log^{p+4} (d)
with probability   at least $1-d^{-t}$,
\begin{align*}
 \|M_2 - \wh{M}_2\| \leq \epsilon \sum_{i=1}^k  |v_i^* m_{2,i} |. 
\end{align*}
\end{lemma}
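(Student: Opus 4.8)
The plan is to apply the truncated matrix Bernstein inequality for unbounded random matrices, Lemma~\ref{lem:modified_bernstein_non_zero}, to the i.i.d.\ matrix-valued samples $B(x,y) := y\cdot(x x^\top - I_d)$, whose population mean is $\ov B := \E_{(x,y)\sim\D}[B] = M_2 = \sum_{i=1}^k v_i^* m_{2,i}\, \ov w_i^{*\otimes 2}$ by Claim~\ref{cla:M_m}. First I would verify the four hypotheses of Lemma~\ref{lem:modified_bernstein_non_zero}. (I) Since $|\phi(z)| \lesssim |\phi(0)| + L_1 |z|^{p+1}$ by Property~\ref{pro:gradient}, we have $|y| \le \sum_i |v_i^*|\,|\phi(w_i^{*\top}x)| \lesssim \sum_i |v_i^*|(1 + \|w_i^*\|^{p+1}|\ov w_i^{*\top}x|^{p+1})$, so Fact~\ref{fac:inner_prod_bound} and Fact~\ref{fac:gaussian_norm_bound} together with a union bound over $i\in[k]$ give $\|B\| = |y|\,\|x x^\top - I_d\| \le |y|(\|x\|^2+1) \le m$ with probability $\ge 1-\gamma$, where $m \lesssim d\cdot\poly(\log n, t)\cdot\sum_i|v_i^*|(1+\|w_i^*\|^{p+1})$ and $\gamma$ is polynomially small in $1/(nd)$. (II) $\|\ov B\| = \|M_2\| > 0$ holds exactly by the hypothesis $M_2\neq 0$. (III) By Cauchy--Schwarz and Gaussian moment bounds, $\max(\|\E[B B^\top]\|,\|\E[B^\top B]\|) \le \nu$ with $\nu \lesssim L^2 d$, where $L := (\E_{(x,y)\sim\D}[y^4])^{1/4}$. (IV) Likewise $\max_{\|a\|=\|b\|=1}(\E[(a^\top B b)^2])^{1/2} \le L\cdot(\E[(a^\top(xx^\top-I_d)b)^4])^{1/4} \lesssim L$. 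Finally, Minkowski's inequality and Property~\ref{pro:gradient} give $L \le \sum_i|v_i^*|\,\|\phi(w_i^{*\top}x)\|_{L^4} \lesssim \sum_i|v_i^*|(1+\|w_i^*\|^{p+1})$.

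The second ingredient is a lower bound $\|M_2\| \ge \kappa^{-2}\max_i|v_i^* m_{2,i}|$. Writing $\ov W^* = [\ov w_1^*\ \cdots\ \ov w_k^*]\in\R^{d\times k}$ and $D = \diag(v_1^* m_{2,1},\dots,v_k^* m_{2,k})$, we have $M_2 = \ov W^* D \ov W^{*\top}$; with a thin SVD $\ov W^* = U\Sigma V^\top$, $V\in\R^{k\times k}$ orthogonal, one gets $\|M_2\| = \|\Sigma V^\top D V\Sigma\| \ge \sigma_k(\ov W^*)^2\|V^\top D V\| = \sigma_k(\ov W^*)^2\max_i|v_i^* m_{2,i}| \ge \kappa^{-2}\max_i|v_i^* m_{2,i}|$, the last step by Fact~\ref{fac:sigma_k_Wbar}. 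With these in hand I would invoke Lemma~\ref{lem:modified_bernstein_non_zero} with its internal accuracy parameter equal to the target $\epsilon$ (legitimate since $\epsilon<1$): it yields, with probability $\ge 1-d^{-\Omega(t)}$, $\|\wh M_2 - M_2\| \le \epsilon\|M_2\|$, and since $\|M_2\| \le \sum_i|v_i^*||m_{2,i}|\,\|\ov w_i^{*\otimes 2}\| = \sum_i|v_i^* m_{2,i}|$, this is exactly the desired bound.

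It then remains to check that the present lemma's sample-size hypothesis implies the one in Lemma~\ref{lem:modified_bernstein_non_zero}, i.e.\ $n \gtrsim (t\log d)(\nu+\|M_2\|^2+m\|M_2\|\epsilon)/(\epsilon^2\|M_2\|^2)$: the dominant term is $\nu/(\epsilon^2\|M_2\|^2)\lesssim L^2 d/(\epsilon^2\|M_2\|^2)$, and plugging in the bounds above and using the elementary inequality $(\sum_i a_i)/\max_j b_j \le \sum_i a_i/b_i \le k\max_i a_i/b_i$ with $a_i = |v_i^*|(1+\|w_i^*\|^{p+1})$, $b_i = |v_i^* m_{2,i}|$, brings it to the claimed form $\max_i(\|w_i^*\|^{p+1}/|m_{2,i}|+1)\cdot\epsilon^{-2}d\cdot\poly(\log d,t)$, the residual $k,\kappa,p$ dependence being absorbed into the polynomial factor; the side condition $\gamma\le(\epsilon\|M_2\|/(2L))^2$ holds since $\gamma$ is polynomially small in $1/(nd)$ while its right-hand side is a fixed positive quantity. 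I expect the only real work to be this bookkeeping --- tracking how $k,\kappa,p,\log n,t$ propagate through the four Bernstein hypotheses and through $\|M_2\|^{-2}$, and confirming that the per-coordinate ratio $\|w_i^*\|^{p+1}/|m_{2,i}|$ is the quantity governing the sample complexity. The conceptual points --- that $B = y(xx^\top-I_d)$ is heavy-tailed rather than almost-surely bounded, so the truncated Lemma~\ref{lem:modified_bernstein_non_zero} (rather than a standard Bernstein bound) is needed, and that $\|M_2\|$ is bounded below via the smallest singular value of the normalized weight matrix --- are routine given the infrastructure already assembled in this appendix.
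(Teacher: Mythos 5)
Your proposal is essentially correct but takes a genuinely different route from the paper. The paper never touches $\|M_2\|$ as a whole: it decomposes $y=\sum_i v_i^*\phi(w_i^{*\top}x)$ and applies the truncated Bernstein bound (via the four conditions, exactly as you list them) separately to each $B_i(x)=\phi(w_i^{*\top}x)(xx^\top-I_d)$, whose mean $m_{2,i}\,\ov w_i^*\ov w_i^{*\top}$ is rank one with \emph{exactly} known spectral norm $|m_{2,i}|$; the per-component relative-error bounds $\epsilon|m_{2,i}|$ are then summed with weights $|v_i^*|$ by the triangle inequality, which is precisely why the right-hand side of the conclusion is $\epsilon\sum_i|v_i^*m_{2,i}|$ rather than $\epsilon\|M_2\|$. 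Your single-shot application to $B=y(xx^\top-I_d)$ instead requires the lower bound $\|M_2\|\gtrsim\kappa^{-2}\max_i|v_i^*m_{2,i}|$ (your SVD argument for it is correct), and this is where the two approaches part quantitatively: your Bernstein hypothesis scales like $\kappa^4\bigl(\sum_i|v_i^*|(1+\|w_i^*\|^{p+1})\bigr)^2/\max_i|v_i^*m_{2,i}|^2$, which carries extra factors of roughly $k^2\kappa^4$ (and a $\nu=v_{\max}/v_{\min}$-type ratio) relative to the stated $\max_i(\|w_i^*\|^{p+1}/|m_{2,i}|+1)\cdot\epsilon^{-2}d\,\poly(\log d,t)$, whose $\poly$ factor depends only on $\log d$ and $t$. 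So strictly speaking you prove the conclusion under a stronger sample-size hypothesis than the lemma states; this is harmless downstream (Lemma~\ref{lem:tensor_2_m} and Theorem~\ref{thm:tensor_final} already allow $\poly(k,\kappa)$ in $|S|$), but if you want the lemma as written, the per-component decomposition is the cleaner path because it sidesteps the condition number of $\ov W^*$ entirely. What your approach buys in exchange is a lower bound on $\|M_2\|$ (hence on $\sigma_k(P_2)$ when $j_2=2$), which the paper needs anyway, separately, in the proof of Theorem~\ref{thm:tensor_final}.
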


\begin{proof} 

Recall that, for each sample $(x,y)$, $y = \sum_{i=1}^k v_i^* \phi( w_i^{*\top } x)$. We consider each component $i\in[k]$. 
Define function  $B_i(x) : \mathbb{R}^d \rightarrow \mathbb{R}^{d\times d}$ such that
\begin{align*}
 B_i(x) = \phi( w_i^{*\top } x) \cdot ( x \otimes  x -I_d). 
\end{align*}
 Define $g(z) = \phi(z) - \phi(0)$, then $|g(z)| = |\int_0^z \phi'(s) \mathrm{d} s |\leq L_1/(p+1) |z|^{p+1}$, which follows Property~\ref{pro:gradient}. %We will apply Lemma~\ref{lem:modified_bernstein_non_zero} and check each condition in it.

(\RN{1}) Bounding $\| B_i(x) \|$.

\begin{align*}
\|B_i(x)\|  \lesssim & ~ ( \frac{L_1}{p+1} | w_i^{*\top } x|^{p+1} + |\phi(0)|) (\| x_j\|^2 + 1) \\
\lesssim & ~ ( \frac{L_1}{p+1} \| w_i^*\|^{p+1} + |\phi(0)|) d \poly(\log d, t)
\end{align*}
where the last step follows by Fact~\ref{fac:inner_prod_bound} and Fact~\ref{fac:gaussian_norm_bound}.%, we have for any constant $t \geq1$, with probability  $1-n^{-1}d^{-t}$,
%\begin{align*}
%\|B_j\|   \lesssim  (L_1/(p+1) \| w_i^*\|^{p+1} + |\phi(0)|) d (t \log(n))^{(p+1)/2+1}
%\end{align*}
%Therefore, $R(t) = (L_1/(p+1) \| w_i^*\|^{p+1} + |\phi(0)|) d\cdot  t^{(p+1)/2+1}$ and $K(t) = (p+1)/2+1$. 

(\RN{2}) Bounding $\| \E_{x\sim \D_d} [ B_i(x)] \|$.

 Note that $ \E_{x\sim \D_d}[ B_i(x) ] = m_{2,i} \ov w_i^*\ov w_i^{*\top}$. Therefore, $\|\E_{x\sim \D_d}[ B_i(x) ] \| = | m_{2,i} | $. 

(\RN{3}) Bounding $\max ( \E_{x\sim \D_d} \| B_i(x) B_i(x)^\top \|,\E_{x\sim \D_d} \| B_i(x)^\top B_i(x) \| ) $.

Note that $B_i(x)$ is a symmetric matrix, thus it suffices to only bound one of them.
\begin{align*}
\left\| \E_{x\sim \D_d} [ B_i^2(x) ] \right\| \lesssim \left( \E_{x\sim \D_d}[ \phi( w_i^{*\top} x)^4 ]\right) ^{1/2}  \left( \E_{x\sim \D_d } [ \| x\|^4 ] \right)^{1/2} \lesssim ( \frac{L_1}{ p+1 } \| w_i^*\|^{p+1}+|\phi(0)|)^2 d.
\end{align*}

(\RN{4}) Bounding $\max_{\| a\|=\|b\|=1} (\E_{x\sim \D_d} [(a^\top B_i(x) b)^2] )$.

Note that $B_i(x)$ is a symmetric matrix, thus it suffices to consider the case where $a= b$.
\begin{align*}
\max_{\| a\|=1} \left( \E_{x\sim \D_d} \left[ ( a^\top B_i(x)  a)^2 \right] \right)^{1/2}  \lesssim \left( \E_{x\sim \D_d} [ \phi^4( w_i^{*\top} x) ] \right)^{1/4} \lesssim \frac{L_1}{p+1} \| w_i^*\|^{p+1}+|\phi(0)|.
\end{align*}

Define $L = \| w_i^*\|^{p+1}+|\phi(0)|$.
Then we have for any $0<\epsilon <1$, if
\begin{equation*}
n \gtrsim \frac{ L^2d + | m_{2,i} |^2 +  L |m_{2,i} |  d\cdot \poly(\log d, t) \epsilon}{ \epsilon^2 | m_{2,i} |^2 } t \log d 
\end{equation*} % t^{(p+1)/2+1} \log^{(p+1)/2+1}(n) 
with probability   at least $1-1/d^t$,
\begin{equation*}
\left\| \E_{x\sim \D_d}[B_i(x)]- \frac{1}{|S|}\sum_{x\in S} B_i(x) \right\| \leq \epsilon |m_{2,i} |.
\end{equation*}

\end{proof}

\begin{lemma}\label{lem:3rd_for_P2_approx}
Let $M_3$ be defined as in Definition~\ref{def:M_m}. Let $\wh{M}_3$ be the empirical version of $M_3$, i.e.,
\begin{align*}
\wh{M}_3 = \frac{1}{|S|} \sum_{(x,y)\in S} y \cdot ( x^{\otimes 3} -  x \wt{\otimes} I),  
\end{align*}
where $S$ denote a set of samples (each sample is i.i.d. sampled from Distribution $\D$ defined in Eq.~\eqref{eq:model}). 
Assume $M_3 \neq 0$, i.e., $m_{3,i} \neq 0$ for any $i$. Let $ \alpha$ be a fixed unit vector.
Then for any $0<\epsilon<1,t\geq 1$,  if 
\begin{equation*}
|S| \geq  \max_{i\in[k]} ( \| w_i^*\|^{p+1} /  |m_{3,i} (\ov w_i^{*\top}  \alpha)|^2 + 1  ) \cdot \epsilon^{-2} d \poly(\log d, t)
\end{equation*} %t^{p+4}\log^{p+5} (d) 
with probability   at least $1-1/d^{t}$,
\begin{align*}
 \|M_3(I,I, \alpha) - \wh{M}_3(I,I, \alpha)\| \leq   \epsilon \sum_{i=1}^k  |v_i^* m_{3,i} (\ov w_i^{*\top}  \alpha)|. 
 \end{align*}
\end{lemma}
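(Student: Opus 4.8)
The plan is to follow the same route as the proof of Lemma~\ref{lem:2nd_for_P2_approx}, replacing the second-order object by the $\alpha$-contraction of the third-order object and invoking the unbounded matrix Bernstein inequality (Lemma~\ref{lem:modified_bernstein_non_zero}). First I would expand $y=\sum_{i=1}^k v_i^*\phi(w_i^{*\top}x)$, so that
\begin{align*}
\wh{M}_3(I,I,\alpha) = \frac{1}{|S|}\sum_{(x,y)\in S}\sum_{i=1}^k v_i^*\,\phi(w_i^{*\top}x)\cdot\big(x^{\otimes 3}-x\wt{\otimes}I\big)(I,I,\alpha),
\end{align*}
and by the triangle inequality it suffices to control each summand $i\in[k]$ separately and then union bound over $i$. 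A direct computation of the contraction gives $x^{\otimes 3}(I,I,\alpha)=(\alpha^\top x)xx^\top$ and $(x\wt{\otimes}I)(I,I,\alpha)=x\alpha^\top+\alpha x^\top+(\alpha^\top x)I_d$, so the per-sample, per-component random matrix is
\begin{align*}
B_i(x)=\phi(w_i^{*\top}x)\cdot\big((\alpha^\top x)xx^\top-x\alpha^\top-\alpha x^\top-(\alpha^\top x)I_d\big),
\end{align*}
which is symmetric. By Claim~\ref{cla:M_m} (equivalently a direct Hermite/Stein computation), $\E_{x\sim\D_d}[B_i(x)]=m_{3,i}(\ov w_i^{*\top}\alpha)\ov w_i^*\ov w_i^{*\top}$, so $\|\E[B_i(x)]\|=|m_{3,i}(\ov w_i^{*\top}\alpha)|>0$ under the assumption $M_3\neq 0$.

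Next I would verify the four hypotheses of Lemma~\ref{lem:modified_bernstein_non_zero} for the distribution of $B_i(x)$. Using $|\phi(z)|\le \frac{L_1}{p+1}|z|^{p+1}+|\phi(0)|$ from Property~\ref{pro:gradient} and writing $L_i:=\|w_i^*\|^{p+1}+|\phi(0)|$: (I) for the tail-type bound, combine Fact~\ref{fac:inner_prod_bound} (applied both to $w_i^*$ and to $\alpha$) and Fact~\ref{fac:gaussian_norm_bound} to get $\|B_i(x)\|\lesssim L_i\cdot d\cdot\poly(\log d,t)$ with probability $1-\gamma$, $\gamma\lesssim 1/(nd^{2t})$, the dominant term in the matrix factor being $|\alpha^\top x|\,\|x\|^2$; (III) bound $\|\E[B_i(x)^2]\|$ by H\"older's inequality using the fourth moment of $\phi(w_i^{*\top}x)$ together with eighth moments of the Gaussian linear forms $\alpha^\top x,\,a^\top x$ and of $\|x\|^2$, giving $\|\E[B_i(x)^2]\|\lesssim L_i^2 d$; (IV) similarly $\max_{\|a\|=1}(\E[(a^\top B_i(x)a)^2])^{1/2}\lesssim L_i$, again by H\"older and eighth-moment Gaussian estimates. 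Hypothesis (II) is exactly $\|\E[B_i(x)]\|>0$ as shown above, and the smallness condition $\gamma\le(\epsilon\|\E[B_i]\|/(2L_i))^2$ holds once $n$ is polynomially large, just as in the $M_2$ case.

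Finally, I would apply Lemma~\ref{lem:modified_bernstein_non_zero} with $m\asymp L_i\,d\poly(\log d,t)$, $\nu\asymp L_i^2 d$, $L\asymp L_i$, and $\|\ov B\|=|m_{3,i}(\ov w_i^{*\top}\alpha)|$ to conclude that, for any $0<\epsilon<1$, once $n\gtrsim\big(L_i^2 d + |m_{3,i}(\ov w_i^{*\top}\alpha)|^2 + L_i|m_{3,i}(\ov w_i^{*\top}\alpha)|\,d\poly(\log d,t)\,\epsilon\big)/(\epsilon^2|m_{3,i}(\ov w_i^{*\top}\alpha)|^2)\cdot t\log d$, we have $\|\E[B_i(x)]-\frac1{|S|}\sum_{x\in S}B_i(x)\|\le\epsilon\,|m_{3,i}(\ov w_i^{*\top}\alpha)|$ with probability at least $1-1/d^t$. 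A union bound over $i\in[k]$ (rescaling $t$ by a constant), the triangle inequality, and $|v_i^*|$-weighting then yield $\|M_3(I,I,\alpha)-\wh M_3(I,I,\alpha)\|\le\epsilon\sum_{i=1}^k|v_i^* m_{3,i}(\ov w_i^{*\top}\alpha)|$; absorbing $|\phi(0)|$ and the logarithmic/constant factors into $\poly(\log d,t)$ reduces the per-component sample requirement to $(\|w_i^*\|^{p+1}/|m_{3,i}(\ov w_i^{*\top}\alpha)|^2+1)\cdot\epsilon^{-2}d\poly(\log d,t)$, matching the stated bound. The main obstacle is essentially bookkeeping: correctly expanding the $\alpha$-contraction of $x\wt{\otimes}I$, tracking the extra $|\alpha^\top x|$ factor in the high-probability norm bound of step (I), and checking that, despite the additional lower-order terms $-x\alpha^\top-\alpha x^\top-(\alpha^\top x)I_d$ in $B_i(x)$ (which prevent a direct appeal to Corollary~\ref{cor:modified_bernstein_tail_xx}), the moment bounds in (III) and (IV) retain the same $d$-dependence as in the rank-one case.
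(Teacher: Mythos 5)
Your proposal matches the paper's proof essentially step for step: the same per-component decomposition via $y=\sum_i v_i^*\phi(w_i^{*\top}x)$, the same contracted matrix $B_i(x)=\phi(w_i^{*\top}x)\cdot((x^\top\alpha)x^{\otimes 2}-\alpha^\top x\,I-\alpha x^\top-x\alpha^\top)$ with mean $m_{3,i}(\ov w_i^{*\top}\alpha)\ov w_i^*\ov w_i^{*\top}$, the same four quantities verified via Facts~\ref{fac:inner_prod_bound}, \ref{fac:gaussian_norm_bound} and H\"older, and the same direct appeal to Lemma~\ref{lem:modified_bernstein_non_zero} followed by a union bound over $i\in[k]$. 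The argument is correct and there is nothing substantive to add.
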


\begin{proof} 

Since $y = \sum_{i=1}^k v_i^* \phi( w_i^{*\top } x)$. We consider each component $i\in[k]$. 

Define function $B_i(x) : \mathbb{R}^d \rightarrow \mathbb{R}^{d\times d}$ such that
\begin{align*}
B_i(x) = [\phi( w_i^{*\top } x) \cdot ( x^{\otimes 3} -  x \tilde\otimes I)](I,I, \alpha) = \phi( w_i^{*\top } x) \cdot (( x^\top  \alpha) x^{\otimes 2} -  \alpha^\top  x I -  \alpha  x^\top - x \alpha^\top).
\end{align*}
 Define $g(z) = \phi(z) - \phi(0)$, then $|g(z)| = |\int_0^z \phi'(s)ds |\leq \frac{L_1}{p+1} |z|^{p+1} \lesssim |z|^{p+1}$, which follows Property~\ref{pro:gradient}. In order to apply Lemma~\ref{lem:modified_bernstein_non_zero}, we need to bound the following four quantities, %and check each condition in it.

(\RN{1}) Bounding $\| B_i(x) \|$. 
\begin{align*}
\|B_i(x) \|  = & ~ \| \phi( w_i^{*\top } x) \cdot (( x^\top  \alpha) x^{\otimes 2} -  \alpha^\top  x I_d -  \alpha  x^\top - x \alpha^\top) \| \\
\leq & ~ |\phi( w_i^{*\top } x)| \cdot \| ( x^\top  \alpha) x^{\otimes 2} -  \alpha^\top  x I -  \alpha  x^\top - x \alpha^\top \| \\
\lesssim & ~ (  | w_i^{*\top } x|^{p+1} + |\phi(0)| ) \| ( x^\top  \alpha) x^{\otimes 2} -  \alpha^\top  x I -  \alpha  x^\top - x \alpha^\top \| \\
\lesssim & ~ (  | w_i^{*\top } x|^{p+1} + |\phi(0)| ) ( | x^\top  \alpha| \| x\|^2 + 3| \alpha^\top  x|) ,
\end{align*}
where the third step follows by definition of $g(z)$, and last step follows by definition of spectral norm and triangle inequality.

Using Fact~\ref{fac:inner_prod_bound} and Fact~\ref{fac:gaussian_norm_bound}, we have for any constant $t \geq1$, with probability  $1-1/(n d^{4t})$,
\begin{align*}
\|B_i(x)\|   \lesssim  ( \| w_i^*\|^{p+1} + |\phi(0)|) d \poly(\log d, t). % (t \log(n))^{(p+1)/2+2}
\end{align*}
%Therefore, $R(t) = (L_1/(p+1) \| w_i^*\|^{p+1} + |\phi(0)|) d t^{(p+1)/2+2}$ and $K(t) = (p+1)/2+2$. 

(\RN{2}) Bounding $\| \E_{x\sim \D_d}[ B_i(x) ] \| $.

Note that $ \E_{x\sim \D_d}[ B_i(x) ] = m_{3,i} (\ov w_i^{*\top}  \alpha) \ov w_i^*\ov w_i^{*\top}$. Therefore, $\| \E_{x\sim \D_d}[ B_i(x) ] \| = | m_{3,i} (\ov w_i^{*\top}  \alpha)| $. 

(\RN{3}) Bounding $\max( \| \E_{x\sim \D_d} [B_i(x) B_i(x)^\top ] \|, \| \E_{x\sim \D_d} [B_i(x)^\top B_i(x) ] \| )$.

Because matrix $B_i(x)$ is symmetric, thus it suffices to bound one of them,
\begin{align*}
\left\|\E_{x\sim \D_d}[ B_i^2(x) ] \right\| \lesssim & ~ \left( \E_{x\sim \D_d} \left[ \phi( w_i^{*\top} x)^4 \right] \right)^{1/2} \left( \E_{x\sim \D_d} \left[ ( x^\top  \alpha)^4 \right] \right)^{1/2} \left( \E_{x\sim \D_d} [ \| x\|^4 ]\right)^{1/2} \\
 \lesssim & ~ ( \| w_i^*\|^{p+1}+|\phi(0)|)^2 d.
\end{align*}

(\RN{4}) Bounding $\max_{\| a\|=\|b\|=1} (\E_{x\sim \D_d}[ ( a^\top B_i(x)  b)^2] )^{1/2} $.
 
\begin{align*}
 \max_{\| a\|=1} \left(\E_{x\sim \D_d} \left[ ( a^\top B_i(x)  a)^2 \right] \right)^{1/2}  \lesssim \left( \E_{x\sim \D_d} \left[ \phi^4( w_i^{*\top} x) \right]\right)^{1/4} \lesssim  \| w_i^*\|^{p+1}+|\phi(0)|.
\end{align*}

Define $L = \| w_i^*\|^{p+1}+|\phi(0)|$. %% ignore L_1/(p+1)
Then we have for any $0<\epsilon <1$, if
\begin{equation*}
|S| \gtrsim \frac{ L^2d + |m_{3,i} (\ov w_i^{*\top}  \alpha)|^2+  L|m_{3,i} (\ov w_i^{*\top}  \alpha)|  d \cdot \poly(\log d,t)  \epsilon}{ \epsilon^2 |m_{3,i} (\ov w_i^{*\top}  \alpha)|^2 } \cdot t \log d 
\end{equation*} %t^{(p+1)/2+2} \log^{(p+1)/2+2}(n)
with probability at least $1-d^{-t}$,
\begin{equation*}
\left \| \E_{x\sim \D_d} [ B_i(x)] - \frac{1}{|S|}\sum_{x\in S} B_i(x) \right\| \leq \epsilon |m_{3,i} (\ov w_i^{*\top}  \alpha)|.
\end{equation*}

\end{proof}

\begin{lemma}\label{lem:4th_for_P2_approx}
Let $M_4$ be defined as in Definition~\ref{def:M_m}. Let $\wh{M}_4$ be the empirical version of $M_4$, i.e.,
\begin{align*}
  \wh{M}_4 = \frac{1}{|S|}\sum_{(x,y)\in S} y \cdot ( x^{\otimes 4} -  ( x \otimes  x) \wt{\otimes} I +   I \wt{\otimes} I), 
\end{align*}
where $S$ denote a set of samples (where each sample is i.i.d. sampled are sampled from Distribution $\D$ defined in Eq.~\eqref{eq:model}). 
Assume $M_4 \neq 0$, i.e., $m_{4,i} \neq 0$ for any $i$. Let $ \alpha$ be a fixed unit vector.
Then for any $0<\epsilon<1,t\geq 1$, if 
\begin{equation*}
|S| \geq  \max_{i\in[k]} (  \| w_i^*\|^{p+1} / |m_{4,i} | (\ov w_i^{*\top}  \alpha)^2 +1   )^2  \cdot \epsilon^{-2} \cdot d \poly(\log d, t)
\end{equation*} %t^{p+5}  \log^{p+6} (d) 
with probability   at least $1-1/d^{t}$,
$$ \|M_4(I,I, \alpha, \alpha) - \wh{M}_4(I,I, \alpha, \alpha)\| \leq \epsilon \sum_{i=1}^k  |v_i^*m_{4,i}| (\ov w_i^{*\top}  \alpha)^2. $$  
\end{lemma}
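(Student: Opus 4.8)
The plan is to follow the same template as in the proofs of Lemma~\ref{lem:2nd_for_P2_approx} and Lemma~\ref{lem:3rd_for_P2_approx}. Since $y = \sum_{i=1}^k v_i^*\phi(w_i^{*\top}x)$, I would write $M_4(I,I,\alpha,\alpha) = \sum_{i=1}^k v_i^*\E_{x\sim\D_d}[B_i(x)]$ and $\wh M_4(I,I,\alpha,\alpha) = \sum_{i=1}^k v_i^*\tfrac1{|S|}\sum_{x\in S}B_i(x)$, where
\[
B_i(x) = \big[\phi(w_i^{*\top}x)\cdot\big(x^{\otimes 4} - (x\otimes x)\wt{\otimes}I + I\wt{\otimes}I\big)\big](I,I,\alpha,\alpha)\in\R^{d\times d}.
\]
I would then apply the modified matrix Bernstein inequality, Lemma~\ref{lem:modified_bernstein_non_zero}, to each of the $k$ terms individually and conclude by a union bound over $i\in[k]$ together with $\|M_4(I,I,\alpha,\alpha)-\wh M_4(I,I,\alpha,\alpha)\| \le \sum_{i=1}^k |v_i^*|\cdot\|\E_{x\sim\D_d}[B_i(x)] - \tfrac1{|S|}\sum_{x\in S}B_i(x)\|$. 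The population mean is already known: by Claim~\ref{cla:M_m} and Claim~\ref{cla:P2_P3} (the case $j_2 = 4$), $\E_{x\sim\D_d}[B_i(x)] = m_{4,i}(\ov{w}_i^{*\top}\alpha)^2\,\ov{w}_i^*\ov{w}_i^{*\top}$, hence $\|\E_{x\sim\D_d}[B_i(x)]\| = |m_{4,i}|(\ov{w}_i^{*\top}\alpha)^2 > 0$, which takes care of Property~(II) of Lemma~\ref{lem:modified_bernstein_non_zero}.

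Next I would verify the three remaining properties exactly as in Lemma~\ref{lem:3rd_for_P2_approx}. Writing $g(z)=\phi(z)-\phi(0)$ and using Property~\ref{pro:gradient} gives $|\phi(w_i^{*\top}x)| \lesssim |w_i^{*\top}x|^{p+1}+|\phi(0)|$; expanding the contraction $B_i(x)$ as an explicit sum — a quartic term $(x^\top\alpha)^2 xx^\top$, several terms quadratic in $x$ carrying one or two factors of $x^\top\alpha$, and constant matrices coming from $I\wt{\otimes}I$ — and applying Fact~\ref{fac:inner_prod_bound} and Fact~\ref{fac:gaussian_norm_bound} yields $\|B_i(x)\| \lesssim (\|w_i^*\|^{p+1}+|\phi(0)|)\,d\,\poly(\log d,t)$ with probability $1-1/(nd^{4t})$, which is Property~(I). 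Since $B_i(x)$ is symmetric, Properties~(III) and~(IV) reduce to bounding $\|\E_{x\sim\D_d}[B_i^2(x)]\|$ and $\max_{\|a\|=1}(\E_{x\sim\D_d}[(a^\top B_i(x)a)^2])^{1/2}$; both follow from H\"older's inequality together with the controlled Gaussian moments of $\phi(w_i^{*\top}x)$, $x^\top\alpha$ and $\|x\|$, giving $\|\E_{x\sim\D_d}[B_i^2(x)]\| \lesssim (\|w_i^*\|^{p+1}+|\phi(0)|)^2 d$ and $\max_{\|a\|=1}(\E_{x\sim\D_d}[(a^\top B_i(x)a)^2])^{1/2} \lesssim \|w_i^*\|^{p+1}+|\phi(0)|$. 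Setting $L = \|w_i^*\|^{p+1}+|\phi(0)|$ and feeding $\nu \eqsim L^2 d$, $m \eqsim L\,d\,\poly(\log d,t)$, $\gamma \eqsim 1/(nd^{4t})$ and $\|\ov B\| = |m_{4,i}|(\ov{w}_i^{*\top}\alpha)^2$ into Lemma~\ref{lem:modified_bernstein_non_zero}, the per-component estimate holds once $|S| \gtrsim \big(L/(|m_{4,i}|(\ov{w}_i^{*\top}\alpha)^2) + 1\big)^2\epsilon^{-2}d\,\poly(\log d,t)$; the square is forced because the potentially small quantity $|m_{4,i}|(\ov{w}_i^{*\top}\alpha)^2$ appears in $\|\ov B\|$, so both the $\nu/\|\ov B\|^2$ factor in the sample bound and the requirement $\gamma \le (\epsilon\|\ov B\|/L)^2$ each contribute a factor $L/\|\ov B\|$.

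The only new bookkeeping relative to Lemma~\ref{lem:3rd_for_P2_approx} is that the $4$th-order contraction produces more terms and a leading piece quartic in $x$ rather than cubic, so the tail and moment estimates carry slightly higher powers of $\|x\|$ and of $\log n$; these are absorbed by Fact~\ref{fac:gaussian_norm_bound} and the sub-Gaussianity of $x^\top\alpha$ without changing the form of the bounds. I expect the one mildly delicate step to be writing the contraction $\big[x^{\otimes 4} - (x\otimes x)\wt{\otimes}I + I\wt{\otimes}I\big](I,I,\alpha,\alpha)$ out cleanly as a $d\times d$ matrix and confirming that its population mean collapses to $m_{4,i}(\ov{w}_i^{*\top}\alpha)^2\,\ov{w}_i^*\ov{w}_i^{*\top}$ — i.e. that all cross terms and the $I\wt{\otimes}I$ correction cancel in expectation by Gaussian integration by parts — but this is exactly the identity already encoded in Definition~\ref{def:M_m} and Claim~\ref{cla:M_m}, so it can be quoted rather than rederived. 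Combining the per-component bounds with the union bound over $i\in[k]$ then gives $\|M_4(I,I,\alpha,\alpha)-\wh M_4(I,I,\alpha,\alpha)\| \le \epsilon\sum_{i=1}^k |v_i^* m_{4,i}|(\ov{w}_i^{*\top}\alpha)^2$ with probability at least $1-d^{-t}$, as claimed.
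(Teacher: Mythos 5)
Your proposal is correct and follows essentially the same route as the paper's proof: decompose by component, define the same contracted matrix $B_i(x)$, verify the four conditions of the modified matrix Bernstein inequality (bounded spectral norm with high probability, nonzero mean $m_{4,i}(\ov w_i^{*\top}\alpha)^2\,\ov w_i^*\ov w_i^{*\top}$, variance bound $\lesssim L^2 d$, and the fourth-moment bound $\lesssim L$), and then sum over $i\in[k]$ with a union bound. Your explanation of where the square in the sample complexity comes from matches the mechanism in Lemma~\ref{lem:modified_bernstein_non_zero} as well.
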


\begin{proof} 

Since $y = \sum_{i=1}^k v_i^* \phi( w_i^{*\top } x)$. We consider each component $i\in[k]$. 

Define function $B_i(x) : \mathbb{R}^d \rightarrow \mathbb{R}^{d\times d}$ such that
\begin{align*}
B_i(x) & = ~ [\phi( w_i^{*\top } x) \cdot ( x^{\otimes 4} -  ( x \otimes x) \tilde\otimes I +   I \tilde \otimes I)](I,I, \alpha, \alpha) \\
& = ~ \phi( w_i^{*\top } x ) \cdot (( x^\top  \alpha)^2 x^{\otimes 2} - ( \alpha^\top  x)^2 I - 2( \alpha^\top x)( x \alpha^\top+ \alpha  x^\top) -  x x^\top + 2 \alpha  \alpha^\top + I).
\end{align*}
 Define $g(z) = \phi(z) - \phi(0)$, then $|g(z)| = |\int_0^z \phi'(s)ds |\leq L_1/(p+1) |z|^{p+1} \lesssim |z|^{p+1}$, which follows Property~\ref{pro:gradient}. %We will apply Lemma~\ref{lem:modified_bernstein_non_zero} and check each condition in it.

(\RN{1}) Bounding $\| B_i(x) \|$.

\begin{align*}
\|B_i(x) \|  
\lesssim & |\phi( w_i^{*\top } x )| \cdot (( x^\top  \alpha)^2 \| x\|^2 + 1 +\| x\|^2 + ( \alpha^\top  x )^2) \\
\lesssim & (  | w_i^{*\top } x |^{p+1} + |\phi(0)|) \cdot (( x^\top  \alpha)^2\| x \|^2 + 1 +\| x\|^2 + ( \alpha^\top  x )^2) 
\end{align*}
Using Fact~\ref{fac:inner_prod_bound} and Fact~\ref{fac:gaussian_norm_bound}, we have for any constant $t \geq1$, with probability  $1- 1/(nd^{4t})$,
\begin{align*}
\|B_i(x) \|   \lesssim  (  \| w_i^*\|^{p+1} + |\phi(0)|) d \poly(\log d,t). %(t \log(n))^{(p+1)/2+3}
\end{align*}
%Therefore, $R(t) = (  \| w_i^*\|^{p+1} + |\phi(0)|) d\cdot  t^{(p+1)/2+3}$ and $K(t) = (p+1)/2+3$. 

(\RN{2}) Bounding $\| \E_{x\sim \D_d}[B_i(x)] \|$.

 Note that $ \E_{x\sim \D_d}[ B_i(x) ] = m_{4,i} (\ov w_i^{*\top}  \alpha)^2 \ov w_i^*\ov w_i^{*\top}$. Therefore, $\| \E_{x\sim \D_d}[ B_i(x) ] \| = |m_{4,i} | (\ov w_i^{*\top}  \alpha)^2 $. 

(\RN{3}) Bounding $\max ( \E_{x\sim \D_d} \| B_i(x) B_i(x)^\top \|,\E_{x\sim \D_d} \| B_i(x)^\top B_i(x) \| ) $.

\begin{align*}
\left\|\E_{x\sim \D_d}[ B_i(x)^2] \right\| \lesssim & ~ \left( \E_{x\sim \D_d} [ \phi( w_i^{*\top} x)^4] \right)^{1/2} \left( \E_{x\sim \D_d} [ ( x^\top  \alpha)^8 ] \right)^{1/2} \left( \E_{x\sim \D_d}[\| x\|^4] \right)^{1/2} \\
\lesssim & ~ ( \| w_i^*\|^{p+1}+|\phi(0)|)^2 d.
\end{align*}

(\RN{4}) Bounding $\max_{\| a\| = \|b\| =1} (\E_{x\sim \D_d}[( a^\top B_i(x)  b)^2])^{1/2}$.

\begin{align*}
 \max_{\| a\|=1} \left(\E_{x\sim\D_d} \left[ ( a^\top B_i(x)  a)^2 \right] \right)^{1/2} \lesssim \left( \E_{x\sim \D_d} \left[ \phi^4( w_i^{*\top} x) \right] \right)^{1/4} \lesssim \| w_i^*\|^{p+1}+|\phi(0)|.
\end{align*}

Define $L =  \| w_i^*\|^{p+1}+|\phi(0)|$.
Then we have for any $0<\epsilon <1$, if
\begin{equation*}
n \gtrsim \frac{ L^2d + |m_{4,i} |^2 (\ov w_i^{*\top}  \alpha)^4+  L|m_{4,i} |(\ov w_i^{*\top}  \alpha)^2  d \poly(\log d,t) \epsilon}{ \epsilon^2 |m_{4,i}|^2 (\ov w_i^{*\top}  \alpha)^4 } \cdot t \log d 
\end{equation*} %t^{(p+1)/2+3} \log^{(p+1)/2+3}(n) 
with probability   at least $1-d^{-t}$,
\begin{equation*}
\left\| \E_{x\sim \D_d}[ B_i(x)]- \frac{1}{|S|}\sum_{x\in S} B_i(x) \right\| \leq \epsilon |m_{4,i}| (\ov w_i^{*\top}  \alpha)^2.
\end{equation*}

\end{proof}

\subsubsection{Error Bound for the Second-order Moment}%{Proof of Lemma~\ref{lem:tensor_2_m}}

The goal of this section is to prove Lemma~\ref{lem:tensor_2_m}, which shows we can approximate the second order moments up to some precision by using linear sample complexity in $d$.
\begin{lemma}[Estimation of the second order moment]\label{lem:tensor_2_m}
Let $P_2$ and $j_2$ be defined in Definition~\ref{def:P2_P3}.
Let $S$ denote a set of i.i.d. samples generated from distribution ${\cal D}$(defined in \eqref{eq:model}). Let $\wh{P}_2$ be the empirical version of $P_2$ using dataset $S$, i.e., $\wh{P}_2 = \E_{S}[P_2]$.  Assume the activation function satisfies Property~\ref{pro:gradient} and Assumption~\ref{assumption:non_zero_moment}. Then for any $0<\epsilon <1$ and $t\geq 1$, and $m_0=\min_{i\in[k]} \{|m_{j_2,i} |^2 (\ov w_i^{*\top}  \alpha)^{2(j_2-2)} \}$,  if 
\begin{equation*}
|S| \gtrsim \sigma_1^{2p+2}  \cdot d \cdot  \poly(t, \log d)  / (\epsilon^2 m_0),
\end{equation*} 
then with probability  at least $1-d^{-\Omega(t)}$,
\begin{align*}
 \|P_2 - \wh{P}_2\| \leq   \epsilon \sum_{i=1}^k  |v_i^*m_{j_2,i} (\ov w_i^{*\top}  \alpha)^{j_2-2}|. 
 \end{align*}

\end{lemma}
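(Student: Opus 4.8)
The statement reduces to the three case-specific concentration bounds already proved in Lemma~\ref{lem:2nd_for_P2_approx}, Lemma~\ref{lem:3rd_for_P2_approx}, and Lemma~\ref{lem:4th_for_P2_approx}. By Definition~\ref{def:P2_P3} we have $P_2 = M_{j_2}(I,I,\alpha,\ldots,\alpha)$ with $j_2 = \min\{j\geq 2 : M_j\neq 0\}$, and by part 2 of Assumption~\ref{assumption:non_zero_moment} at least one of $M_3,M_4$ is nonzero, so $j_2\in\{2,3,4\}$ is well defined; part 1 of the same assumption then gives $m_{j_2,i}\neq 0$ for all $i$, hence $m_0>0$. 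The plan is to split into the three cases $j_2=2$, $j_2=3$, $j_2=4$: in case $j_2=\ell$, $\widehat P_2$ is exactly the empirical contraction $\widehat M_\ell(I,I,\alpha,\ldots,\alpha)$, so I would simply invoke the matching lemma with its parameters $\epsilon$ and $t$. Each lemma's conclusion already has the desired shape — $\|M_2-\widehat M_2\|\leq \epsilon\sum_i|v_i^* m_{2,i}|$, $\|M_3(I,I,\alpha)-\widehat M_3(I,I,\alpha)\|\leq \epsilon\sum_i|v_i^* m_{3,i}(\ov{w}_i^{*\top}\alpha)|$, and $\|M_4(I,I,\alpha,\alpha)-\widehat M_4(I,I,\alpha,\alpha)\|\leq \epsilon\sum_i|v_i^* m_{4,i}|(\ov{w}_i^{*\top}\alpha)^2$ — and all three equal $\epsilon\sum_i|v_i^* m_{j_2,i}(\ov{w}_i^{*\top}\alpha)^{j_2-2}|$, which is precisely the bound claimed here.

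The only genuine bookkeeping is to check that the single uniform sample size $|S|\gtrsim \sigma_1^{2p+2}\, d\, \poly(t,\log d)/(\epsilon^2 m_0)$ dominates the three case-specific requirements. Using $\|w_i^*\|\leq\sigma_1$ (Fact~\ref{fac:sigma_k_Wbar}) together with $m_0 = \min_i |m_{j_2,i}|^2(\ov{w}_i^{*\top}\alpha)^{2(j_2-2)}$, one has $|m_{j_2,i}|(\ov{w}_i^{*\top}\alpha)^{j_2-2}\geq\sqrt{m_0}$ for every $i$, so the per-coordinate ratio appearing inside each lemma's hypothesis is at most $\sigma_1^{p+1}/\sqrt{m_0}$. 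Since $\sigma_1^{p+1}/\sqrt{m_0}\gtrsim 1$ (as $|m_{j_2,i}|\lesssim\sigma_1^{p+1}$ for activations with homogeneously bounded derivative, cf.\ Property~\ref{pro:gradient}), the additive $+1$ and, in the $j_2=4$ case, the outer square are both absorbed: $(\sigma_1^{p+1}/\sqrt{m_0}+1)^2\lesssim \sigma_1^{2p+2}/m_0$, and likewise $\sigma_1^{p+1}/\sqrt{m_0}+1\lesssim \sigma_1^{2p+2}/m_0$. The constants $L_1$, $p$, and (for non-homogeneous activations) $|\phi(0)|$ that enter the numerators of the earlier lemmas are $O(1)$ and are folded into $\poly(t,\log d)$; for the homogeneous activations relevant to Theorem~\ref{thm:tensor_final} one has $\phi(0)=0$ and this step is cleaner still. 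Hence in every case the stated $|S|$ is at least as large as what the invoked lemma asks for.

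Since $j_2$ is fixed by the data, exactly one case occurs and no union over cases is needed, so the failure probability is just that of the single invoked lemma, namely $d^{-t}=d^{-\Omega(t)}$ (rescaling $t$ by a constant if desired). The main — and essentially only — difficulty is the polynomial accounting above: controlling the quadratic blow-up in the $M_4$ regime, and treating the factors $(\ov{w}_i^{*\top}\alpha)$ as $\Theta(1)$ quantities (legitimate since $\alpha$ is a generic random unit vector, as the paper does throughout), so that one bound in $\sigma_1$ and $m_0$ covers all three regimes uniformly.
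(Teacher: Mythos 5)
Your proposal matches the paper's own proof: the paper likewise handles the three cases $j_2\in\{2,3,4\}$ by directly invoking Lemma~\ref{lem:2nd_for_P2_approx}, Lemma~\ref{lem:3rd_for_P2_approx}, and Lemma~\ref{lem:4th_for_P2_approx}, then summarizes them under a single sample-size condition. Your additional bookkeeping showing that the uniform bound $\sigma_1^{2p+2}d\,\poly(t,\log d)/(\epsilon^2 m_0)$ dominates each case-specific requirement is exactly the (mostly implicit) step the paper relies on, so the argument is correct and complete.
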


\begin{proof}
We have shown the bound for $j_2=2,3,4$ in Lemma~\ref{lem:2nd_for_P2_approx}, Lemma~\ref{lem:3rd_for_P2_approx} and Lemma~\ref{lem:4th_for_P2_approx} respectively.
To summarize, for any $0<\epsilon <1$ we have if
\begin{equation*}
|S| \geq  \max_{i\in[k]} \left\{\frac{(  \| w_i^*\|^{p+1} + |\phi(0)| + |m_{j_2,i} (\ov w_i^{*\top}  \alpha)^{(j_2-2)}|)^2   }{ |m_{j_2,i}|^2 (\ov w_i^{*\top}  \alpha)^{2(j_2-2)} }\right\} \cdot \epsilon^{-2}  d \poly(\log d, t)
\end{equation*} %\cdot t^{p+5} \cdot d \log^{p+6} (d)
with probability at least $1-d^{-t}$,
\begin{align*}
 \|P_2 - \wh{P}_2\| \leq   \epsilon \sum_{i=1}^k  |v_i^*m_{j_2,i} (\ov w_i^{*\top}  \alpha)^{j_2-2}|. 
 \end{align*}
%Note that  there exists a $ s_i\in\R^d$ such that $\ov{W}^\top  s_i = \be_i$, where $\ov{W} = [\ov w_1^{*}\;\ov w_2^{*}\;\cdots;\ov w_k^{*}]$ and $\be_i$ is one of the standard basis. Since $\sigma_k(P_2) \leq \frac{| s_i^\top P_2  s_i|}{\| s_i\|^2} \leq \frac{|m_{j_2,i} (\ov w_i^{*\top}  \alpha)^{(j_2-2)}|}{\| s_i\|^2}$ and $\| s_i\| \geq \|\ov{W}^\top  s_i\|/\|\ov{W}\| \geq 1/\sqrt{k}$, we have 
%$ \sigma_k(P_2)/k \leq \min_{i}\{|m_{j_2,i} (\ov w_i^{*\top}  \alpha)^{(j_2-2)}|\} $. 
%Also note that $\frac{1}{k} \sum_{i=1}^k  |v_i^*m_{j_2,i} (\ov w_i^{*\top}  \alpha)^{j_2-2}| \leq \|P_2\|  $. So we have for any $0<\delta <1$ we have if
%\begin{equation*}
%n \gtrsim \frac{\sigma_1^{2p+2}}{\sigma_k^2(P_2)} k^2\cdot \delta^{-2} \cdot t^{p+5} \cdot d \log^{p+6} (d) 
%\end{equation*} 
%with probability   at least $1-d^{-t}$,
%$$ \|P_2 - \wh{P}_2\| \leq   \delta k\|P_2\|. $$  
\end{proof}

\subsubsection{Subspace Estimation Using Power Method}%{Proof of Lemma~\ref{lem:subspace_estimation}}

Lemma~\ref{lem:subspace_estimation} shows a small number of power iterations can estimate the subspace of $\{ w_i^*\}_{i\in[k]}$ to some precision, which provides guarantees for Algorithm~\ref{alg:power_method}.
\begin{lemma}[Bound on subspace estimation]\label{lem:subspace_estimation}
%Let $$P_{2} := M_{j_2}(I,I,\underbrace{ \alpha,\cdots, \alpha}_{(j_2 - 2)~ \alpha\text{'s}}) = \sum_{i=1}^k v_i^* m_i^{(j_2)}( \alpha^\top \ov{ w}_i^*)^{{j_2-2}} \ov{ w}_i^{*\otimes 2},$$ where $j_2 = \min \{j\geq 2| M_{j}\neq 0\} $.  
Let $P_2$ be defined as in Definition.~\ref{def:P2_P3} and $\wh{P}_2$ be its empirical version. Let $U \in \mathbb{R}^{d\times k}$ be the orthogonal column span of $W^* \in \R^{d\times k}$. Assume $\|\wh{P}_2 - P_2\| \leq \sigma_k(P_2)/10$. Let $C$ be a large enough positive number such that $C>2\|P_2\|$.
Then after $T = O(\log(1/\epsilon))$ iterations, the output of Algorithm~\ref{alg:power_method}, $V \in \R^{d\times k}$, will satisfy
\begin{align*}
\|UU^\top-VV^\top\| \lesssim  \|\wh{P}_2 - P_2\|  / \sigma_k(P_2) +\epsilon, 
\end{align*}
which implies
\begin{align*}
\|(I-VV^\top)  w_i^*\| \lesssim (\|\wh{P}_2 - P_2\|  / \sigma_k(P_2)+\epsilon) \| w_i^*\| . 
\end{align*}
\end{lemma}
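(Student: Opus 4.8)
\textbf{Proof proposal for Lemma~\ref{lem:subspace_estimation}.}

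The plan is to treat this as a standard noisy-subspace-iteration argument applied to the two shifted matrices $C\cdot I + \widehat P_2$ and $C\cdot I - \widehat P_2$, then combine the two estimated subspaces. First I would record the eigen-structure of $P_2$: by Claim~\ref{cla:P2_P3}, $P_2 = \sum_{i=1}^k v_i^* m_{j_2,i}(\alpha^\top\ov w_i^*)^{j_2-2}\ov w_i^{*\otimes 2}$ is symmetric of rank exactly $k$ (by Assumption~\ref{assumption:non_zero_moment} and genericity of $\alpha$), so its column span equals $\mathrm{span}\{\ov w_1^*,\dots,\ov w_k^*\}=\mathrm{colspan}(U)$. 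Its nonzero eigenvalues can be positive or negative, but $C\cdot I\pm P_2$ are both PSD with top-$k$ eigenvalues separated from the rest (which all equal $C$) by at least $\min_i|\lambda_i(P_2)|=\sigma_k(P_2)$; here I would split the $k$ directions of $U$ according to the sign of the corresponding eigenvalue of $P_2$, obtaining the $k_1$ ``positive'' directions (captured by $C I+P_2$) and $k_2=k-k_1$ ``negative'' directions (captured by $C I-P_2$). The perturbation $\widehat P_2 - P_2$ has norm at most $\sigma_k(P_2)/10$ by hypothesis, so the shifted matrices $C I\pm\widehat P_2$ have the same eigen-gap structure up to a $1/10$ factor, and the top-$k_j$ eigenvalues of $C I+\widehat P_2$ (resp.\ $C I-\widehat P_2$) are exactly those picked out by \textsc{TopK}.

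Next I would invoke the classical convergence guarantee for orthogonal (subspace) iteration with a perturbed matrix: running $T$ steps of power iteration with QR re-orthonormalization on $C I+\widehat P_2$ starting from a generic $\widehat V_1^{(0)}$ produces $V_1$ with $\|(I-V_1V_1^\top)U_{+}\| \lesssim (1-\gamma)^T + \|\widehat P_2-P_2\|/\sigma_k(P_2)$, where $U_{+}$ is the true positive-eigenvalue subspace and $\gamma\gtrsim \sigma_k(P_2)/C$ is the relative eigen-gap — this is where the Davis–Kahan $\sin\Theta$ theorem enters, bounding the distance from the true invariant subspace of $C I+\widehat P_2$ to $U_+$ by $\|\widehat P_2-P_2\|/\sigma_k(P_2)$, and the geometric term $(1-\gamma)^T$ bounds convergence of power iteration to that invariant subspace. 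Choosing $T=O(\log(1/\epsilon))$ (with the constant absorbing $C/\sigma_k(P_2)$, which is an ``other factor'' in the theorem's accounting) makes the geometric term at most $\epsilon$. The same argument applied to $C I-\widehat P_2$ gives $V_2$ with $\|(I-V_2V_2^\top)U_-\|\lesssim \|\widehat P_2-P_2\|/\sigma_k(P_2)+\epsilon$. Then the merge step $\wt V_2 = \textsc{QR}((I-V_1V_1^\top)V_2)$, $V=[V_1,\wt V_2]$ produces an orthonormal $d\times k$ matrix whose column span approximates $\mathrm{colspan}(U_+)\oplus\mathrm{colspan}(U_-)=\mathrm{colspan}(U)$; I would bound $\|UU^\top - VV^\top\|$ by a triangle-inequality-style combination of the two individual subspace errors (e.g.\ using Fact~\ref{fac:UU_VV} to convert between $\sin\Theta$ and projector-difference norms, and accounting for the cross-term introduced by projecting $V_2$ off $V_1$), getting $\|UU^\top-VV^\top\|\lesssim \|\widehat P_2-P_2\|/\sigma_k(P_2)+\epsilon$. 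Finally, the implication for $\|(I-VV^\top)w_i^*\|$ is immediate: $w_i^*\in\mathrm{colspan}(U)$, so $(I-VV^\top)w_i^* = (I-VV^\top)UU^\top w_i^*$, hence $\|(I-VV^\top)w_i^*\|\le \|(I-VV^\top)U\|\,\|w_i^*\| = \|UU^\top-VV^\top\|\,\|w_i^*\|$ by Fact~\ref{fac:UU_VV}.

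The main obstacle I anticipate is the bookkeeping around the two-sided split and the merge step rather than any single hard estimate: one must verify that \textsc{TopK} correctly separates the $k_1+k_2=k$ large-magnitude eigenvalues from the bulk at level $C$ under the $\sigma_k(P_2)/10$ perturbation (a Weyl-inequality argument), that the two estimated subspaces $V_1,V_2$ are nearly orthogonal to each other because the true $U_+,U_-$ are exactly orthogonal (so the $(I-V_1V_1^\top)V_2$ projection loses only $O(\text{error})$), and that the error terms compose additively without blowup. A secondary subtlety is making the power-iteration contraction rate explicit in terms of $\sigma_k(P_2)$ and $C=3\|\widehat P_2\|$ so that $T=O(\log(1/\epsilon))$ genuinely suffices; since $C/\sigma_k(P_2)\le \poly(k,\kappa)$ (as used later in the proof of Theorem~\ref{thm:tensor_final} via $\sigma_k(P_2)\ge 1/\poly(k,\kappa)$), this is only a matter of tracking constants, not a genuine difficulty. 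The randomness of the initial guess $\widehat V_1^{(0)}$ being non-degenerate with respect to the target subspace holds with probability $1$, so no high-probability argument is needed beyond what the perturbation bound on $\widehat P_2$ already provides.
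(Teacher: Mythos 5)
Your proposal follows essentially the same route as the paper's proof: split $U$ into the positive- and negative-eigenvalue subspaces of $P_2$, apply a perturbed subspace-iteration guarantee to $C I\pm\widehat P_2$ (Davis--Kahan/Wedin for the $\|\widehat P_2-P_2\|/\sigma_k(P_2)$ term plus geometric convergence for the $\epsilon$ term, exactly the content of the paper's citations to Lemma 9 of [hk13] and Theorem 7.2 of [arbenz2012lecture]), then control the QR merge step via the near-orthogonality of $V_1$ and $V_2$ and conclude with Fact~\ref{fac:UU_VV}. The obstacles you flag — Weyl for \textsc{TopK}, the cross-terms in the merge, and the hidden $C/\sigma_k(P_2)$ factor in $T$ — are precisely the points the paper's proof spends its effort on, so the proposal is correct and matches the paper's argument.
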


\begin{proof}
According to Weyl's inequality, we are able to pick up the correct numbers of positive eigenvalues and negative eigenvalues in Algorithm~\ref{alg:power_method} as long as $\wh{P}_2$ and $P_2$ are close enough. 

Let $U = [U_1\;U_2] \in \R^{d\times k}$ be the eigenspace of $\text{span}\{ w_1^*\; w_2^*\;\cdots\;  w_k^*\}$, where $U_1 \in \R^{d\times k_1}$ corresponds to positive eigenvalues of $P_2$ and $U_2\in \R^{d\times k_2}$ is for negatives. 

Let $\ov{V}_1$ be the top-$k_1$ eigenvectors of $CI+\wh{P}_2$. Let $\ov{V}_2$ be the top-$k_2$ eigenvectors of $CI-\wh{P}_2$.  Let $\ov{V} = [\ov{V}_1 \; \ov{V}_2] \in \R^{d\times k}$.

According to Lemma 9 in \cite{hk13}, we have
$\|(I - U_1U_1^\top)  \ov{V}_1\| \lesssim \|\wh{P}_2 - P_2\|/\sigma_{k}(P_2) $,$\|(I - U_2U_2^\top)  \ov{V}_2\| \lesssim \|\wh{P}_2 - P_2\| /\sigma_{k}(P_2)$ . Using Fact~\ref{fac:UU_VV}, we have $ \|(I - UU^\top)  \ov{V} \| = \|UU^\top - \bar V\bar V^\top\|$.

Let $\epsilon$ be the precision we want to achieve using power method. Let $V_1$ be the top-$k_1$ eigenvectors returned after $O(\log(1/\epsilon))$ iterations of power methods on $CI+\wh{P}_2$ and $V_2\in\R^{d\times k_2}$ for $CI-\wh{P}_2$ similarly.

According to Theorem 7.2 in \cite{arbenz2012lecture}, we have $\| \bar V_1\bar V_1^\top - V_1 V_1^\top \| \leq \epsilon$ and $\| \bar V_2\bar V_2^\top - V_2 V_2^\top \| \leq \epsilon$.

Let $U_{\perp} $ be the complementary matrix of $U$. Then we have,
\begin{align}\label{eq:IU1U1V1}
 \|(I-U_1U_1^\top) \ov{V}_1\| 
= & ~ \max_{\|a\|=1} \|(I-U_1U_1^\top) \ov{V}_1 a\| \notag \\
= & ~ \max_{\|a\|=1} \|(U_{\perp}U_{\perp}^\top + U_2U_2^\top) \ov{V}_1 a\| \notag \\
= & ~ \max_{\|a\|=1}\sqrt{\|U_{\perp} U_{\perp}^\top \ov{V}_1 a \|^2 + \|U_2 U_2^\top \ov{V}_1 a\|^2} \notag \\
\geq & ~ \max_{\|a\|=1}  \|U_2U_2^\top \ov{V}_1 a\| \notag \\
= & ~ \|U_2U_2^\top \ov{V}_1\|,
\end{align}
where the first step follows by definition of spectral norm, the second step follows by $I = U_1 U_1^\top + U_2 U_2^\top + U_{\perp}^\top U_{\perp}^\top$, the third step follows by $U_2^\top U_{\perp} = 0$, and last step follows by definition of spectral norm.

 We can upper bound $\|(I - UU^\top)  \ov{V} \| $,
\begin{align}\label{eq:I_UU_V_upper_bound}
& ~\|(I - UU^\top)  \ov{V} \| \notag \\
 \leq & ~(\|(I - U_1U_1^\top)  \ov{V}_1\| +\|(I - U_2U_2^\top)  \ov{V}_2\| +\|U_2U_2^\top  \ov{V}_1\|+\|U_1U_1^\top  \ov{V}_2\|) \notag  \\
 \leq & ~ 2(\|(I - U_1U_1^\top)  \ov{V}_1\| +\|(I - U_2U_2^\top)  \ov{V}_2\|) \notag \\
\lesssim & ~\|\wh{P}_2 - P_2\|/\sigma_k(P_2),
\end{align}
where the first step follows by triangle inequality, the second step follows by Eq.~\eqref{eq:IU1U1V1}, and the last step follows by Lemma 9 in \cite{hk13}.

We define matrix $R$ such that $\ov{V}_2 R = (I-V_1V_1^\top ) V_2$ is the QR decomposition of $(I-V_1V_1^\top ) V_2$, then we have
\begin{align*}
 & ~\|(I-\ov{V}_2 \ov{V}_2^\top)\ov{V}_2 \|\\
= & ~ \|(I-\ov{V}_2 \ov{V}_2^\top)(I-V_1V_1^\top) V_2 R^{-1}\| \\
 = & ~ \|(I-\ov{V}_2 \ov{V}_2^\top)( I - \ov{V}_1 \ov{V}_1^\top +  \ov{V}_1 \ov{V}_1^\top -V_1V_1^\top) V_2 R^{-1} \|\\
 \leq & ~ \underbrace{\|(I-\ov{V}_2 \ov{V}_2^\top)( I - \ov{V}_1 \ov{V}_1^\top) V_2 R^{-1} \| }_{\alpha}  + \underbrace{ \|(I-\ov{V}_2 \ov{V}_2^\top)\| \|R^{-1}\|\|\ov{V}_1 \ov{V}_1^\top -V_1V_1^\top\| }_{\beta},
 \end{align*}
 where the first step follows by $\ov{V}_2 = (I-V_1 V_1^\top) V_2 R^{-1}$, and the last step follows by triangle inequality.

Furthermore, we have,
 \begin{align*}
& ~ \alpha + \beta \\
= & ~ \|(I-\ov{V}_2 \ov{V}_2^\top - \ov{V}_1 \ov{V}_1^\top) V_2 R^{-1}\|   + \|(I-\ov{V}_2 \ov{V}_2^\top)\| \|R^{-1}\|\|\ov{V}_1 \ov{V}_1^\top -V_1V_1^\top\| \\ 
\leq & ~ \|(I-\ov{V}_2 \ov{V}_2^\top ) V_2 R^{-1}\| +  \| \ov{V}_1 \ov{V}_1^\top V_2 R^{-1}\|  + \|(I-\ov{V}_2 \ov{V}_2^\top)\| \|R^{-1}\|\|\ov{V}_1 \ov{V}_1^\top -V_1V_1^\top\| \\ 
\leq & ~ \|\ov{V}_2 \ov{V}_2^\top -V_2V_2^\top\| \|R^{-1}\| +  \| \ov{V}_1 \ov{V}_1^\top V_2\| \|R^{-1}\| + \|(I-\ov{V}_2 \ov{V}_2^\top)\| \|R^{-1}\|\|\ov{V}_1 \ov{V}_1^\top -V_1V_1^\top\| \\ 
= & ~ \|\ov{V}_2 \ov{V}_2^\top -V_2V_2^\top\| \|R^{-1}\| +  \| \ov{V}_1 \ov{V}_1^\top V_2\| \|R^{-1}\| +  \|R^{-1}\|\|\ov{V}_1 \ov{V}_1^\top -V_1V_1^\top\| \\ 
\leq & ~ \|\ov{V}_2 \ov{V}_2^\top -V_2V_2^\top\| \|R^{-1}\| +  \| (I-\ov{V}_2 \ov{V}_2^\top ) V_2 \| \|R^{-1}\| +  \|R^{-1}\|\|\ov{V}_1 \ov{V}_1^\top -V_1V_1^\top\| \\
\leq & ~ (2 \|\ov{V}_2 \ov{V}_2^\top -V_2V_2^\top\| +\|\ov{V}_1 \ov{V}_1^\top -V_1V_1^\top\| ) \|R^{-1}\| \\
\leq & ~ 3 \epsilon  \|R^{-1}\|  \\
\leq & ~ 6 \epsilon,
\end{align*}
where the first step follows by definition of $\alpha,\beta$, the second step follows by triangle inequality, the third step follows by $\| A B\| \leq \| A\| \| B\|$, the fourth step follows by $\|(I-\ov{V}_2 \ov{V}_2^\top)\| =1$, the fifth step follows by Eq.~\eqref{eq:IU1U1V1}, the sixth step follows by Fact~\ref{fac:UU_VV}, the seventh step follows by $\|\ov{V}_1\ov{V}_1^\top - V_1V_1^\top\|\leq \epsilon$ and $\|\ov{V}_2\ov{V}_2^\top - V_2 V_2^\top\| \leq \epsilon$, and the last step follows by $\| R^{-1}\| \leq 2$ (Claim~\ref{cla:sigma_k}).

Finally,
\begin{align*}
\|UU^\top -VV^\top\| \leq & ~ \|UU^\top - \ov{V}\ov{V}^\top\| +  \|\ov{V}\ov{V}^\top - VV^\top\| \\
 = & ~ \| (I - UU^\top )\ov{V} \| +  \|\ov{V}\ov{V}^\top - VV^\top\| \\
\leq & ~ \|\wh{P}_2 - P_2\|/\sigma_k(P_2) +  \|\ov{V}\ov{V}^\top - VV^\top\| \\
\leq & ~ \|\wh{P}_2 - P_2\|/\sigma_k(P_2) + \|\ov{V}_1\ov{V}_1^\top - V_1V_1^\top\| + \|\ov{V}_2\ov{V}_2^\top - V_2 V_2^\top\| \\
\leq & ~ \|\wh{P}_2 - P_2\|/\sigma_k(P_2) + 2\epsilon,
\end{align*}
where the first step follows by triangle inequality, the second step follows by Fact~\ref{fac:UU_VV}, the third step follows by Eq.~\eqref{eq:I_UU_V_upper_bound}, the fourth step follows by triangle inequality, and the last step follows by $\|\ov{V}_1\ov{V}_1^\top - V_1V_1^\top\|\leq \epsilon$ and $\|\ov{V}_2\ov{V}_2^\top - V_2 V_2^\top\| \leq \epsilon$.
 
Therefore we finish the proof. 
\end{proof}

It remains to prove Claim~\ref{cla:sigma_k}.
\begin{claim}\label{cla:sigma_k}
$ \sigma_k(R) \geq 1/2$.
\end{claim}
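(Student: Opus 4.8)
\textbf{Proof proposal for Claim~\ref{cla:sigma_k}.}
The key observation is that the singular values of $R$ are exactly the singular values of $(I-V_1V_1^\top)V_2$. Indeed, $\ov V_2 R=(I-V_1V_1^\top)V_2$ and $\ov V_2$ has orthonormal columns, so for every $i$, $\sigma_i(R)=\sigma_i(\ov V_2 R)=\sigma_i((I-V_1V_1^\top)V_2)$. In particular $\sigma_k(R)=\sigma_{\min}((I-V_1V_1^\top)V_2)$, where this is the smallest singular value of the $d\times k_2$ matrix $(I-V_1V_1^\top)V_2$ (which, as we will see, is full column rank). Since $V_2$ also has orthonormal columns, for any unit $a$ we have $\|(I-V_1V_1^\top)V_2a\|^2=\|V_2a\|^2-\|V_1^\top V_2a\|^2=1-\|V_1^\top V_2a\|^2$, hence
\[
\sigma_k(R)^2=\sigma_{\min}^2((I-V_1V_1^\top)V_2)=1-\max_{\|a\|=1}\|V_1^\top V_2a\|^2=1-\|V_1^\top V_2\|^2 .
\]
So it suffices to show $\|V_1^\top V_2\|\le \tfrac{\sqrt3}{2}$ (in fact we will get something much smaller).

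Next I would bound $\|V_1^\top V_2\|$. Because $V_1,V_2$ have orthonormal columns, the nonzero singular values of $V_1V_1^\top V_2V_2^\top=V_1(V_1^\top V_2)V_2^\top$ coincide with those of $V_1^\top V_2$, so $\|V_1^\top V_2\|=\|V_1V_1^\top V_2V_2^\top\|$. Now recall $U=[U_1\ U_2]$ has orthonormal columns, so $U_1^\top U_2=0$ and therefore $U_1U_1^\top U_2U_2^\top=0$. Writing $P_1=V_1V_1^\top$, $P_2'=V_2V_2^\top$, $Q_1=U_1U_1^\top$, $Q_2'=U_2U_2^\top$, a triangle-inequality-plus-submultiplicativity argument gives
\[
\|V_1^\top V_2\|=\|P_1P_2'-Q_1Q_2'\|\le \|P_1-Q_1\|\,\|P_2'\|+\|Q_1\|\,\|P_2'-Q_2'\|\le \|P_1-Q_1\|+\|P_2'-Q_2'\| .
\]

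It then remains to bound each projection difference. For the first one, $\|P_1-Q_1\|\le \|V_1V_1^\top-\ov V_1\ov V_1^\top\|+\|\ov V_1\ov V_1^\top-U_1U_1^\top\|$. The first term is $\le\epsilon$ by the convergence guarantee of the power method after $O(\log(1/\epsilon))$ iterations (Theorem 7.2 in \cite{arbenz2012lecture}, as used earlier in the proof of Lemma~\ref{lem:subspace_estimation}). The second term equals $\|(I-U_1U_1^\top)\ov V_1\|$ by Fact~\ref{fac:UU_VV}, which is $\lesssim \|\wh P_2-P_2\|/\sigma_k(P_2)$ by Lemma~9 of \cite{hk13}; under the standing assumption $\|\wh P_2-P_2\|\le\sigma_k(P_2)/10$ this is bounded by an absolute small constant. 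The same reasoning bounds $\|P_2'-Q_2'\|$. Combining, $\|V_1^\top V_2\|\le 2\epsilon+O(\|\wh P_2-P_2\|/\sigma_k(P_2))\le \tfrac12$ once $\epsilon$ is taken small enough (and, if necessary, the threshold $1/10$ replaced by a slightly smaller absolute constant). Hence $\sigma_k(R)^2\ge 1-\tfrac14=\tfrac34$, so $\sigma_k(R)\ge\tfrac{\sqrt3}{2}\ge\tfrac12$, which also confirms $(I-V_1V_1^\top)V_2$ is full column rank and $\|R^{-1}\|\le 2$.

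The only delicate point is constant-chasing: one must check that the absolute constant hidden in Lemma~9 of \cite{hk13} times $1/10$, plus $2\epsilon$, stays comfortably below $\sqrt3/2$; this is the place where one may wish to shrink the perturbation threshold or $\epsilon$. Everything else is routine manipulation of orthogonal projectors and of the QR factorization, so I expect no real difficulty beyond bookkeeping.
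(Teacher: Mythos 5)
Your reduction of the claim to a bound on $\|V_1^\top V_2\|$ is the same as the paper's: the paper computes $R^\top R = V_2^\top(I-V_1V_1^\top)V_2 = I - V_2^\top V_1V_1^\top V_2$ and concludes $\sigma_k^2(R) = 1-\|V_2^\top V_1\|^2$, which is exactly your identity obtained via the singular values of $\ov{V}_2R=(I-V_1V_1^\top)V_2$. Where you diverge is in bounding $\|V_1^\top V_2\|$. The paper never touches the population eigenspaces $U_1,U_2$ here: it uses that $\ov{V}_1$ and $\ov{V}_2$ are \emph{exactly} orthogonal (they are eigenvectors of the same symmetric matrix $\wh{P}_2$ associated with disjoint sets of eigenvalues), writes $\|V_2^\top V_1\|=\|V_2V_2^\top V_1\|\le \|(V_2V_2^\top-\ov{V}_2\ov{V}_2^\top)V_1\|+\|\ov{V}_2^\top(V_1V_1^\top-\ov{V}_1\ov{V}_1^\top)\|+\|\ov{V}_2^\top\ov{V}_1\ov{V}_1^\top\|$, and the last term vanishes, giving $\|V_2^\top V_1\|\le 2\epsilon\le 1/4$ with only the power-method accuracy $\epsilon$ entering.

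Your detour through $U_1,U_2$ replaces this exact orthogonality with $U_1^\top U_2=0$ and then pays for it with the terms $\|V_jV_j^\top-U_jU_j^\top\|$, which forces you to invoke Lemma~9 of \cite{hk13} and hence an uncontrolled absolute constant multiplying $\|\wh{P}_2-P_2\|/\sigma_k(P_2)\le 1/10$. This is a genuine weakness, not just bookkeeping: under the hypotheses as stated your final bound is $2\epsilon + C/10$ for an unspecified $C$, and the claim does not follow unless you shrink the perturbation threshold by an amount you cannot name. You correctly flag this, but the fix is not to tighten constants; it is to notice that the comparison to $U_1,U_2$ is unnecessary because $\ov{V}_1^\top\ov{V}_2=0$ holds exactly, at which point your argument collapses to the paper's and the constant problem disappears.
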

\begin{proof}
First, we can rewrite $R^\top R$ in the follow way,
\begin{align*}
R^\top R = V_2^\top(I-V_1V_1^\top ) V_2 = I - V_2^\top V_1V_1^\top  V_2
\end{align*}
Second, we can upper bound $\|V_2^\top V_1\|$ by $1/4$,
\begin{align*}
\|V_2^\top V_1\| = & ~ \|V_2 V_2^\top V_1\| \\
\leq & ~ \| ( V_2 V_2^\top - \ov{V}_2\ov{V}_2^\top)V_1 \| + \| \ov{V}_2\ov{V}_2^\top V_1 \| \\
\leq & ~ \| ( V_2 V_2^\top - \ov{V}_2\ov{V}_2^\top)V_1 \| + \|\ov{V}_2^\top (V_1 V_1^\top - \ov{V}_1\ov{V}_1^\top)\| +  \|\ov{V}_2^\top \ov{V}_1\ov{V}_1^\top\| \\
= & ~ \| ( V_2 V_2^\top - \ov{V}_2\ov{V}_2^\top)V_1 \| + \|\ov{V}_2^\top (V_1 V_1^\top - \ov{V}_1\ov{V}_1^\top)\|  \\
\leq & ~ \|  V_2 V_2^\top - \ov{V}_2\ov{V}_2^\top\| \cdot \| V_1 \| + \|\ov{V}_2^\top \| \cdot \| V_1 V_1^\top - \ov{V}_1\ov{V}_1^\top\|  \\
\leq & ~\epsilon +\epsilon\\
\leq & ~ 1/4,
\end{align*}
where the first step follows by $V_2^\top V_2 = I$, the second step follows by triangle inequality, the third step follows by triangle inequality, the fourth step follows by $\|\ov{V}_2^\top \ov{V}_1 \ov{V}_1^\top \|=0$, the fifth step follows by $\| A B \| \leq \|A \| \cdot \| B\|$, and the last step follows by $\|\ov{V}_1\ov{V}_1^\top - V_1V_1^\top\|\leq \epsilon$, $\| V_1 \|=1$, $\|\ov{V}_2\ov{V}_2^\top - V_2 V_2^\top\| \leq \epsilon$ and $\| \ov{V}_2^\top \|=1$, and the last step follows by $\epsilon < 1/8$.

Thus, we can lower bound $\sigma_k^2(R)$,
\begin{align*}
\sigma_k^2(R ) = & ~ \lambda_{\min} (R^\top R)\\ 
= & ~ \min_{\| a\|=1} a^\top R^\top R a\\
= & ~ \min_{ \|a\|=1} a^\top I a - \|V_2^\top V_1 a \|^2 \\
= & ~ 1 - \max_{\| a\|=1}  \|V_2^\top V_1 a\|^2 \\
= & ~ 1 - \| V_2^\top V_1 \|^2 \\
\geq &  ~3/4
\end{align*}
which implies $\sigma_k(R)\geq 1/2$.
\end{proof}

\subsection{Error Bound for the Reduced Third-order Moment}
\subsubsection{Error Bound for the Reduced Third-order Moment in Different Cases}
\begin{lemma}\label{lem:3rd_for_R3_approx}
Let $M_3$ be defined as in Definition~\ref{def:M_m}. Let $\wh{M}_3$ be the empirical version of $M_3$, i.e.,
\begin{align*}
  \wh{M}_3 = \frac{1}{|S|}\sum_{(x,y)\in S} y \cdot ( x^{\otimes 3} -  x \wt\otimes I), 
\end{align*}
where $S$ denote a set of samples (where each sample is i.i.d. sampled from Distribution $\D$ defined in Eq.~\eqref{eq:model}). 
Assume $M_3 \neq 0$, i.e., $m_{3,i} \neq 0$ for any $i$. Let $V\in\R^{d\times k}$ be an orthogonal matrix satisfying $\|UU^\top - VV^\top\| \leq 1/4$, where $U$ is the orthogonal basis of $\mathrm{span}\{ w_1^*, w_2^*,\cdots, w_k^*\}$. 
Then for any $0<\epsilon<1,t\geq 1$, if 
\begin{equation*}
|S| \geq \max_{i\in[k]} (  \| w_i^*\|^{p+1} / |m_{3,i} |^2 + 1 ) \cdot \epsilon^{-2} \cdot  k^2 \poly(\log d, t)
\end{equation*} % t^{(p+1)/2+3} \log^{p+7}(d)
 with probability at least $1-1/d^{t}$,
\begin{align*}
\|M_3(V,V,V) - \wh{M}_3(V,V,V)\| \leq   \epsilon \sum_{i=1}^k  |v_i^*m_{3,i} |. 
\end{align*}  
\end{lemma}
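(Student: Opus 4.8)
The plan is to follow the same template as Lemma~\ref{lem:3rd_for_P2_approx}, but to exploit the dimension reduction provided by $V$ so that the ambient dimension $d$ is replaced by $k$. Since $y=\sum_{i=1}^k v_i^*\phi(w_i^{*\top}x)$, it suffices to control, for each $i\in[k]$ separately, the deviation of the empirical average of the random tensor $T_i(x):=\phi(w_i^{*\top}x)\cdot(x^{\otimes 3}-x\wt\otimes I)(V,V,V)\in\R^{k\times k\times k}$ from its expectation, and then to combine the estimates via the triangle inequality, using $\|M_3(V,V,V)-\wh{M}_3(V,V,V)\|\le\sum_i|v_i^*|\,\|\E[T_i]-\tfrac1{|S|}\sum_{x\in S}T_i(x)\|$ together with $\|T\|\le\|T^{(1)}\|$ for the flattening along the first mode (all tensors in sight are symmetric, so this bound applies).

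The key observation is that, because $V$ has orthonormal columns, $z:=V^\top x\sim\N(0,I_k)$ and
\[(x^{\otimes 3}-x\wt\otimes I)(V,V,V)=z^{\otimes 3}-z\wt\otimes I_k,\]
using $\sum_j(V^\top e_j)(V^\top e_j)^\top=V^\top V=I_k$. Hence the tensor part of $T_i(x)$ is a polynomial in the $k$-dimensional Gaussian $z$, and only the scalar factor $\phi(w_i^{*\top}x)$ depends on the full vector $x$. I would flatten $T_i$ along the first mode to obtain $B_i(x)\in\R^{k\times k^2}$, and apply the modified matrix Bernstein inequality of Lemma~\ref{lem:modified_bernstein_non_zero} (not the rank-one Corollary~\ref{cor:modified_bernstein_tail_xx}, since $B_i$ is not rank one; note the ``$\log d$'' there becomes $\log(k+k^2)\le\log d$). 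Verifying the four hypotheses is routine: for property (I), $\|B_i(x)\|\lesssim|\phi(w_i^{*\top}x)|\,(\|z\|^3+\|z\|)$, which by Fact~\ref{fac:inner_prod_bound} applied to the one-dimensional Gaussian $w_i^{*\top}x$ and Fact~\ref{fac:gaussian_norm_bound} applied to $z\in\R^k$ is $\lesssim(\|w_i^*\|^{p+1}+|\phi(0)|)\poly(k,\log d,t)$ with high probability, using $|\phi(a)|\lesssim|a|^{p+1}+|\phi(0)|$ from Property~\ref{pro:gradient}; properties (III) and (IV) follow from H\"older's inequality to decouple the scalar factor from the $z$-polynomial, together with $(\E[\phi^4(w_i^{*\top}x)])^{1/4}\lesssim\|w_i^*\|^{p+1}+|\phi(0)|$ and the Gaussian moment estimates for $z$, which contribute $\poly(k)$ rather than $\poly(d)$; property (II) follows from the expectation computation below.

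For the expectation: by Claim~\ref{cla:M_m}, $\E[\phi(w_i^{*\top}x)(x^{\otimes 3}-x\wt\otimes I)]=m_{3,i}\,\ov w_i^{*\otimes 3}$, so $\E[T_i(x)]=m_{3,i}\,(V^\top\ov w_i^*)^{\otimes 3}$ and hence $\|\E[B_i]\|=|m_{3,i}|\,\|V^\top\ov w_i^*\|^3$. Since $\ov w_i^*\in\mathrm{span}(U)$ and $\|UU^\top-VV^\top\|\le1/4$, we get $\|V^\top\ov w_i^*\|^2=\ov w_i^{*\top}VV^\top\ov w_i^*\ge\ov w_i^{*\top}UU^\top\ov w_i^*-\|UU^\top-VV^\top\|\ge3/4$, and trivially $\le1$; thus $\|\E[B_i]\|\eqsim|m_{3,i}|$, and in particular it is nonzero since $M_3\neq0$ implies $m_{3,i}\neq0$ by Assumption~\ref{assumption:non_zero_moment}. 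Plugging $\nu\lesssim(\|w_i^*\|^{p+1}+|\phi(0)|)^2k^2$, $m\lesssim(\|w_i^*\|^{p+1}+|\phi(0)|)\poly(k,\log d,t)$, $L\lesssim\|w_i^*\|^{p+1}+|\phi(0)|$ and $\|\ov B\|\eqsim|m_{3,i}|$ into Lemma~\ref{lem:modified_bernstein_non_zero} and simplifying shows that $|S|\gtrsim(\|w_i^*\|^{p+1}/|m_{3,i}|^2+1)\,\epsilon^{-2}k^2\poly(\log d,t)$ suffices to make $\|\E[T_i]-\tfrac1{|S|}\sum_{x\in S}T_i(x)\|\le\epsilon|m_{3,i}|$ with probability $1-d^{-\Omega(t)}$; a union bound over $i\in[k]$ together with the triangle inequality above yields $\|M_3(V,V,V)-\wh{M}_3(V,V,V)\|\le\epsilon\sum_i|v_i^*m_{3,i}|$.

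The main obstacle is bookkeeping rather than conceptual: one must carefully separate the scalar factor $\phi(w_i^{*\top}x)$ — which still depends on all of $x$ and is correlated with $z$ — from the $z$-polynomial when bounding the Bernstein variance parameters, so that the correlation is handled purely through H\"older's inequality and the resulting dimension dependence is $k^2$ (coming from the $\R^{k\times k^2}$ flattening) with no factor of $d$. A secondary point is that the constant $1/4$ in $\|UU^\top-VV^\top\|\le1/4$ is precisely what keeps $\|V^\top\ov w_i^*\|$ bounded away from zero, which is what prevents the sample complexity from blowing up.
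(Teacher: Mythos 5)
Your proposal matches the paper's proof essentially step for step: the same per-component decomposition of $y$, the same reduction to the $k$-dimensional Gaussian $z=V^\top x$ via the identity $(x^{\otimes 3}-x\wt\otimes I)(V,V,V)=z^{\otimes 3}-z\wt\otimes I_k$, the same flattening to a $k\times k^2$ matrix followed by the modified matrix Bernstein inequality with the scalar factor $\phi(w_i^{*\top}x)$ decoupled by H\"older, and the same use of $\|UU^\top-VV^\top\|\le 1/4$ to keep $\|V^\top\ov w_i^*\|$ bounded away from zero so that $\|\E[B_i]\|\eqsim|m_{3,i}|$. The argument is correct and no further comparison is needed.
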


\begin{proof} 

Since $y = \sum_{i=1}^k v_i^* \phi( w_i^{*\top } x)$. We consider each component $i\in[k]$.  We define function $T_i(x) : \R^d \rightarrow \R^{k\times k \times k}$ such that, 
\begin{align*}
T_i(x) = \phi( w_i^{*\top } x ) \cdot ((V^\top  x)^{\otimes 3} - (V^\top  x) \wt\otimes I).
\end{align*}
We flatten tensor $T_i(x)$ along the first dimension into matrix $B_i(x)  \in \R^{k\times k^2}$. %Let $B= \E{B_j} $.
 Define $g(z) = \phi(z) - \phi(0)$, then $|g(z)| = |\int_0^z \phi'(s)ds |\leq L_1/(p+1) |z|^{p+1}$, which follows Property~\ref{pro:gradient}. 

 %We will apply Lemma~\ref{lem:modified_bernstein_non_zero} and check each condition in it.

(\RN{1}) Bounding $\| B_i(x) \|$.

\begin{align*}
\|B_i(x) \|  \leq & ~ |\phi( w_i^{*\top } x )| \cdot (\|V^\top  x \|^3 + 3k \|V^\top  x\| ) \\
\lesssim & ~ (  | w_i^{*\top } x|^{p+1} + |\phi(0)|) \cdot (\|V^\top  x \|^3 + 3k \|V^\top  x_j\| ) \\
\end{align*}
Note that $V^\top  x \sim \N(0,I_k)$. According to Fact~\ref{fac:inner_prod_bound} and Fact~\ref{fac:gaussian_norm_bound}, we have for any constant $t \geq1$, with probability  $1-1/ (n d^t )$,
\begin{align*}
\|B_i(x)\|   \lesssim  ( \| w_i^*\|^{p+1} + |\phi(0)|) k^{3/2} \poly(\log d,t) %(t \log(n))^{(p+1)/2+2}
\end{align*}
%Therefore, $R(t) = (L_1/(p+1) \| w_i^*\|^{p+1} + |\phi(0)|) k^{3/2} t^{(p+1)/2+2}$ and $K(t) = (p+1)/2+2$. 

(\RN{2}) Bounding $\| \E_{x\sim \D_d} [B_i(x) ] \|$.

 Note that $ \E_{x\sim \D_d} [B_i(x) ] = m_{3,i} (V^\top \ov w_i^{*}) \text{vec}((V^\top \ov w_i^{*})(V^\top \ov w_i^{*})^\top)^\top $. Therefore, $\|\E_{x\sim \D_d} [B_i(x) ]\| = |m_{3,i} | \|V^\top \ov w_i^{*}\|^{3} $. Since $\| VV^\top - UU^\top \| \leq 1/4$, $\|VV^\top \ov w_i^{*} - \ov w_i^{*}\| \leq 1/4 $ and $3/4\leq \|V^\top \ov w_i^{*}\|\leq 5/4$. So $\frac{1}{4}|m_{3,i}| \leq \|B\|\leq 2 |m_{3,i} | $.

(\RN{3}) Bounding $\max ( \E_{x\sim \D_d} \| B_i(x) B_i(x)^\top \|,\E_{x\sim \D_d} \| B_i(x)^\top B_i(x) \| ) $.

\begin{align*}
\left\|\E_{x\sim \D_d} [ B_i(x) B_i(x)^\top ] \right\| \lesssim & ~ \left( \E_{x\sim \D_d}[ \phi( w_i^{*\top} x)^4 ] \right)^{1/2}  \left( \E_{x\sim \D_d}[ \|V^\top  x\|^6] \right)^{1/2} \\
\lesssim & ~ ( \| w_i^*\|^{p+1}+|\phi(0)|)^2 k^{3/2}.
\end{align*}
\begin{align*}
 & ~\left\|\E_{x\sim\D_d}[ B_i(x)^\top B_i(x)] \right\| \\
 \lesssim & ~ \left( \E_{x\sim \D_d}[ \phi( w_i^{*\top} x)^4] \right)^{1/2} \left( \E_{x\sim \D_d}[ \|V^\top  x\|^4] \right)^{1/2} \max_{\|A\|_F=1} \left( \E_{x\sim \D_d}[\langle A,(V^\top  x)(V^\top  x)^\top \rangle^4]\right)^{1/2} \\
\lesssim & ~ ( \| w_i^*\|^{p+1}+|\phi(0)|)^2 k^2.
\end{align*}

(\RN{4}) Bounding $\max_{\| a\|=\|b\|=1} (\E_{x\sim \D_d} [(a^\top B_i(x) b)^2] )$.

\begin{align*}
&  ~ \max_{\|a\|=\| b\| = 1} \left(\E_{x\sim \D_d}[ (a^\top B_i(x) b)^2 ]\right)^{1/2} \\
   \lesssim & ~ \left( \E_{x\sim \D_d}[ (\phi( w_i^{*\top} x))^4] \right)^{1/4} \max_{\| a\|=1} \left( \E_{x\sim \D_d}[ ( a^\top V^\top  x)^4] \right)^{1/2}\max_{\|A\|_F=1}\left( \E_{x\sim \D_d}[\langle A,(V^\top  x)(V^\top  x)^\top \rangle^4] \right)^{1/2} \\
 \lesssim & ~ ( \| w_i^*\|^{p+1}+|\phi(0)|)k
\end{align*}

Define $L = \| w_i^*\|^{p+1}+|\phi(0)|$.
Then we have for any $0<\epsilon <1$, if
\begin{equation*}
|S| \gtrsim \frac{L^2k^2 + |m_{3,i} | ^2+ k^{3/2} \poly(\log d, t) | m_{3,i}| \epsilon}{ \epsilon^2 |m_{3,i} |^2 } t \log (k) 
\end{equation*} 
with probability at least $1-k^{-t}$,
\begin{equation*}
\left \|\E_{x\sim \D_d}[B_i(x)] - \frac{1}{|S|}\sum_{x\in S} B_i(x) \right\| \leq \epsilon |m_{3,i}|.
\end{equation*}
We can set $t =T \log(d)/\log(k)$, then  if
\begin{equation*}
|S| \geq \epsilon^{-2} (1+ 1/|m_{3,i} |^2)  \poly(T,\log d) % k^2 T^{(p+1)/2+2} \log^{(p+1)/2+2}(d)\log^{ (p+1)/2+2}(n)   T \log^2 d
\end{equation*}
 with probability  at least $1-d^{-T}$,
\begin{equation*}
\left \|\E_{x\sim \D_d}[B_i(x)] - \frac{1}{|S|}\sum_{x\in S} B_i(x) \right\| \leq \epsilon |m_{3,i}|.
\end{equation*}
Also note that for any symmetric 3rd-order tensor $R$, the operator norm of $R$, 
\begin{align*}
\|R\| = \max_{\| a\|=1} |R( a, a, a)| \leq \max_{\| a\|=1} \|R( a,I,I)\|_F = \|R^{(1)}\| .
\end{align*}
%Now using Lemma~\ref{lem:remove_n} to replace $\log(n)$ by $\log(d)$ and summing up all $k$ components complete the proof.
\end{proof}

\begin{lemma}\label{lem:4th_for_R3_approx}
Let $M_4$ be defined as in Definition~\ref{def:M_m}. Let $\wh{M}_4$ be the empirical version of $M_4$, i.e.,
\begin{align*}
  \wh{M}_4 = \frac{1}{|S|} \sum_{(x,y)\in S} y \cdot \left( x^{\otimes 4} -  ( x \otimes  x) \wt{\otimes} I +   I \wt{\otimes} I \right), 
\end{align*}
where $S$ is a set of samples (where each sample is i.i.d. sampled from Distribution $\D$ defined in Eq.~\eqref{eq:model}). 
Assume $M_4 \neq 0$, i.e., $m_{4,i} \neq 0$ for any $i$. Let $ \alpha$ be a fixed unit vector. Let $V\in\R^{d\times k}$ be an orthogonal matrix satisfying $\|UU^\top - VV^\top\| \leq 1/4$, where $U$ is the orthogonal basis of $\text{span}\{ w_1^*, w_2^*,\cdots, w_k^*\}$.  
Then for any $0<\epsilon<1,t\geq 1$, if 
\begin{align*}
|S| \geq \max_{i\in[k]}( 1+ \| w_i^*\|^{p+1} / |m_{4,i} ( \alpha^\top\ov  w_i^*) |^2  ) \cdot \epsilon^{-2} \cdot  k^2 \poly(\log d,t)
\end{align*} 
 with probability at least $1-d^{-t}$,
\begin{align*}
 \|M_4(V,V,V, \alpha) - \wh{M}_4(V,V,V, \alpha)\| \leq   \epsilon \sum_{i=1}^k  |v_i^*m_{4,i} ( \alpha^\top\ov  w_i^*)|. 
\end{align*}
\end{lemma}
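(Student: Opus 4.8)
The plan is to follow the exact same truncated-matrix-Bernstein template used in Lemma~\ref{lem:3rd_for_R3_approx}, now applied to the reduced fourth-order moment $M_4(V,V,V,\alpha)$. First I would split $y=\sum_{i=1}^k v_i^*\phi(w_i^{*\top}x)$ and reduce to bounding, for each $i\in[k]$, the deviation of the empirical average of the tensor-valued function
\begin{align*}
T_i(x) = \phi(w_i^{*\top}x)\cdot\Bigl((V^\top x)^{\otimes 3}\otimes (\alpha^\top x)\ \text{(after the appropriate Hermite/Stein correction terms)}\Bigr),
\end{align*}
i.e.\ the contraction of $x^{\otimes 4}-(x\otimes x)\wt\otimes I+I\wt\otimes I$ along three copies of $V$ and one copy of $\alpha$. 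As in the other lemmas, I would flatten $T_i(x)\in\R^{k\times k\times k}$ (the $\alpha$-contraction produces a scalar factor $\alpha^\top x$, so the ambient tensor is still $k\times k\times k$ times that scalar) along the first mode into a matrix $B_i(x)\in\R^{k\times k^2}$, and then verify the four hypotheses of Corollary~\ref{cor:modified_bernstein_tail_xx} (equivalently Lemma~\ref{lem:modified_bernstein_non_zero}). Since $V^\top x\sim\N(0,I_k)$ and $\alpha^\top x\sim\N(0,1)$, all moment computations are Gaussian moments in $k+1$ coordinates, handled by Fact~\ref{fac:inner_prod_bound}, Fact~\ref{fac:gaussian_norm_bound} and Fact~\ref{fac:exp_gaussian_dot_three_vectors}.

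The four quantities to control are: (I) a high-probability bound $\|B_i(x)\|\lesssim(\|w_i^*\|^{p+1}+|\phi(0)|)\,k^{3/2}\poly(\log d,t)$, using $|\phi(z)-\phi(0)|\lesssim|z|^{p+1}$ from Property~\ref{pro:gradient} together with $\|V^\top x\|^2\lesssim k\log n$, $|\alpha^\top x|^2\lesssim\log n$ and $|w_i^{*\top}x|\lesssim\|w_i^*\|\sqrt{\log n}$; (II) the mean $\|\E[B_i(x)]\|=|m_{4,i}||\alpha^\top\ov w_i^*|\,\|V^\top\ov w_i^*\|^3$, where the constraint $\|UU^\top-VV^\top\|\le1/4$ gives $3/4\le\|V^\top\ov w_i^*\|\le5/4$, so $\|\E[B_i(x)]\|\eqsim|m_{4,i}(\alpha^\top\ov w_i^*)|$ — this is why that numerator appears in the sample bound; (III) the second-moment terms $\|\E[B_iB_i^\top]\|$ and $\|\E[B_i^\top B_i]\|$, bounded by $(\|w_i^*\|^{p+1}+|\phi(0)|)^2 k^2$ via H\"older splitting $\phi^4$ against Gaussian moments of $\|V^\top x\|$, $|\alpha^\top x|$, and $\langle A,(V^\top x)(V^\top x)^\top\rangle$ with $\|A\|_F=1$; and (IV) the per-direction variance $\max_{\|a\|=\|b\|=1}(\E[(a^\top B_i b)^2])^{1/2}\lesssim(\|w_i^*\|^{p+1}+|\phi(0)|)k$. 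Plugging these into Corollary~\ref{cor:modified_bernstein_tail_xx} with $L\eqsim\|w_i^*\|^{p+1}+|\phi(0)|$ gives, for each $i$, a sample requirement $|S|\gtrsim\epsilon^{-2}(1+\|w_i^*\|^{p+1}/|m_{4,i}(\alpha^\top\ov w_i^*)|^2)\,k^2\poly(\log d,t)$ (after rescaling $t\mapsto t\log d/\log k$ so the failure probability is $d^{-t}$ rather than $k^{-t}$); a union bound over $i\in[k]$ and the observation $\|R\|\le\|R^{(1)}\|_F$ for symmetric tensors (already used in Lemma~\ref{lem:3rd_for_R3_approx}) then converts the flattened-matrix bound into the claimed operator-norm bound on $M_4(V,V,V,\alpha)-\wh M_4(V,V,V,\alpha)$ with total error $\epsilon\sum_i|v_i^* m_{4,i}(\alpha^\top\ov w_i^*)|$.

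The one genuinely new bookkeeping ingredient relative to Lemma~\ref{lem:3rd_for_R3_approx} is the extra $\alpha$-contraction: the fourth-order Hermite correction $x^{\otimes4}-(x\otimes x)\wt\otimes I+I\wt\otimes I$ produces more cross terms than the third-order one, and after contracting with $\alpha$ one gets a sum of rank-structured pieces (e.g.\ $(\alpha^\top x)(V^\top x)^{\otimes3}$, $(\alpha^\top x)(V^\top x)\wt\otimes I$, plus $\alpha\!\leftrightarrow\!V$-mixed terms). Each piece is still a product of at most four Gaussian linear forms times $\phi(w_i^{*\top}x)$, so Fact~\ref{fac:exp_gaussian_dot_three_vectors} and H\"older bound every piece by the same $L^2 k^2$ (resp.\ $L k$) order; I would just note that there are $O(1)$ such terms and absorb them into constants rather than expanding them. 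I expect this term-counting for the contracted fourth moment to be the only mildly tedious step; everything else is a verbatim re-run of the established pattern, so the proof is essentially "apply Corollary~\ref{cor:modified_bernstein_tail_xx} after checking (I)--(IV), union bound over $k$ components, and flatten." I would close by remarking (as in the $M_3$ case) that since $V^\top x$ and $\alpha^\top x$ are standard Gaussians, no smoothness of $\phi$ beyond Property~\ref{pro:gradient} is needed, which is why this lemma applies to \ReLU-type homogeneous activations as well.
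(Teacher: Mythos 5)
Your proposal is correct and follows essentially the same route as the paper's proof: the same per-component decomposition, the same contracted tensor $T_i(x)$ flattened along the first mode into $B_i(x)\in\R^{k\times k^2}$, the same four bounds (I)--(IV) feeding into the truncated matrix Bernstein corollary, the same $t\mapsto t\log d/\log k$ rescaling, union bound over $i\in[k]$, and the $\|R\|\leq\|R^{(1)}\|$ flattening inequality (note you want the spectral norm of $R^{(1)}$ there, not its Frobenius norm, since that is what Bernstein controls).
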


\begin{proof} 

Recall that for each $(x,y) \in S$, we have $y = \sum_{i=1}^k v_i^* \phi( w_i^{*\top } x)$. We consider each component $i\in[k]$.  
We define function $r(x) :\R^d \rightarrow \R^k$ such that
\begin{align*}
r(x) = V^\top  x.
\end{align*}
Define function $T_i(x) : \R^d \rightarrow \R^{k\times k \times k}$ such that
 \begin{align*}
T_i(x)  =  \phi( w_i^{*\top} x )  \left( x^\top  \alpha \cdot  r(x) \otimes  r(x) \otimes  r(x) - (V^\top  \alpha) \tilde \otimes ( r(x) \otimes  r(x) )  -  \alpha^\top  x \cdot  r(x) \tilde \otimes I + (V^\top  \alpha) \tilde\otimes I \right).
\end{align*}
 
We flatten $T_i(x) :\R^d \rightarrow  \mathbb{R}^{k\times k \times k}$ along the first dimension to obtain function $B_i(x) : \R^d \rightarrow  \R^{k \times k^2}$.
 Define $g(z) = \phi(z) - \phi(0)$, then $|g(z)| = |\int_0^z \phi'(s)ds |\leq L_1/(p+1) |z|^{p+1}$, which follows Property~\ref{pro:gradient}. 

(\RN{1}) Bounding $\| B_i(x) \|$.
\begin{align*}
\|B_i(x) \|  \lesssim & ( | w_i^{*\top } x|^{p+1} + |\phi(0)|)  \cdot (|( x^\top  \alpha)| \|V^\top  x\|^3 + 3 \|V^\top  \alpha\| \|V^\top  x\|^2 \\
&+ 3 |( x^\top  \alpha)| \|V^\top  x_j\|\sqrt{k} +3 \|V^\top  \alpha\|\sqrt{k})
\end{align*}
Note that $V^\top x \sim \N(0,I_k)$. According to Fact~\ref{fac:inner_prod_bound} and Fact~\ref{fac:gaussian_norm_bound}, we have for any constant $t \geq1$, with probability  $1-1/( n d^t ) $,
\begin{align*}
\|B_i(x) \|   \lesssim  (\| w_i^*\|^{p+1} + |\phi(0)|) k^{3/2} \poly(\log d, t) 
\end{align*} 

(\RN{2}) Bounding $\| \E_{x\sim \D_d}[ B_i(x)] \|$.

 Note that $ \E_{x\sim \D_d}[ B_i(x)] = m_{4,i} ( \alpha^\top \ov w_i^*)(V^\top \ov w_i^{*}) \text{vec}((V^\top \ov w_i^{*})(V^\top \ov w_i^{*})^\top)^\top $. Therefore, 
\begin{align*}
\left\| \E_{x\sim \D_d} [ B_i(x) ] \right\| = |m_{4,i} ( \alpha^\top \ov w_i^*)| \|V^\top \ov w_i^{*}\|^{3} .
\end{align*}
Since $\| VV^\top - UU^\top \| \leq 1/4$, $\|VV^\top \ov w_i^{*} - \ov w_i^{*}\| \leq 1/4 $ and $3/4\leq \|V^\top \ov w_i^{*}\|\leq 5/4$. So $\frac{1}{4}|m_{4,i} ( \alpha^\top \ov w_i^*)| \leq \| \E_{x\sim \D_d}[ B_i(x)] \| \leq 2 |m_{4,i} ( \alpha^\top \ov w_i^*)| $. 

(\RN{3}) Bounding $\max ( \E_{x\sim \D_d}[B_i(x) B_i(x)^\top ] , \E_{x\sim \D_d}[B_i(x)^\top B_i(x) ] )$.
\begin{align*}
\left\|\E_{x\sim \D_d} [B_i(x) B_i(x)^\top ] \right\| \lesssim & ~ \left( \E_{x\sim \D_d} \left[ \phi( w_i^{*\top} x)^4 \right] \right)^{1/2}  \left( \E_{x\sim\D_d} \left[ ( \alpha^\top x)^4 \right] \right)^{1/2} \left( \E_{x\sim \D_d} \left[\|V^\top  x\|^6\right] \right)^{1/2} \\
\lesssim & ~(\| w_i^*\|^{p+1}+|\phi(0)|)^2 k^{3/2}.
\end{align*}
\begin{align*}
 & ~ \left\| \E_{x\sim \D_d}[ B_i(x)^\top B_i(x) ] \right\|  \\
\lesssim & ~ \left( \E_{x\sim \D_d} [ \phi( w_i^{*\top} x)^4]\right)^{1/2} \left( \E_{x\sim \D_d} [ ( \alpha^\top x)^4 ] \right)^{1/2} \left( \E_{x\sim \D_d}[\|V^\top  x\|^4] \right)^{1/2}  \\
& ~ \cdot \left( \max_{\|A\|_F=1}\E_{x\sim \D_d} \left[ \langle A,(V^\top  x)(V^\top   x)^\top \rangle^4 \right] \right)^{1/2} \\
\lesssim & ~ ( \| w_i^*\|^{p+1}+|\phi(0)|)^2 k^2.
\end{align*}

(\RN{4}) Bounding $ \max_{\| a\|=\| b\| = 1} ( \E_{x\sim \D_d} \left[ ( a^\top B_i(x)  b)^2 \right] )^{1/2} $.
\begin{align*}
&~ \max_{\| a\|=\| b\| = 1} \left(\E_{x\sim \D_d} \left[ ( a^\top B_i(x)  b)^2 \right] \right)^{1/2} \\
   \lesssim & ~\left( \E_{x\sim \D_d}[ \phi^4( w_i^{*\top} x) ] \right)^{1/4} \left( \E_{x\sim \D_d} \left[( \alpha^\top x)^4 \right]\right)^{1/4} \max_{\| a\|=1} \left( \E_{x\sim \D_d} \left[( a^\top V^\top  x)^4 \right]\right)^{1/2} \\
  & ~ \cdot \max_{\|A\|_F=1} \left( \E_{x\sim \D_d} \left[ \langle A,(V^\top  x)(V^\top  x)^\top \rangle^4 \right] \right)^{1/2} \\
 \lesssim & ~ ( \| w_i^*\|^{p+1}+|\phi(0)|)k.
\end{align*}

Define $L = \| w_i^*\|^{p+1}+|\phi(0)|$.
Then we have for any $0<\epsilon <1$, if
\begin{equation*}
|S| \geq \frac{L^2k^2 + |  m_{4,i} ( \alpha^\top \ov w_i^*)| ^2+ k^{3/2} \poly(t,\log d) | m_{4,i} ( \alpha^\top \ov w_i^*)| \epsilon}{ \epsilon^2 ( m_{4,i} ( \alpha^\top \ov w_i^*))^2 } \cdot t \log k
\end{equation*} 
with probability   at least $1-k^{-t}$,
\begin{align}\label{eq:bound_by_eps_m4i}
\left\| \E_{x\sim \D_d}[B_i(x)] - \frac{1}{|S|}\sum_{x\in S}^n B_i(x) \right\| \leq \epsilon | m_{4,i} ( \alpha^\top \ov w_i^*)|.
\end{align}
We can set $t =T \log(d)/\log(k)$, then  if
\begin{equation*}
|S| \geq \frac{(L+ | m_{4,i} ( \alpha^\top \ov w_i^*)|)^2 k^2 \poly(T,\log d) }{ \epsilon^2 | m_{4,i} ( \alpha^\top \ov w_i^*)|^2 } \cdot T \log^2 d
\end{equation*}
 with probability at least $1-d^{-T}$, Eq.~\eqref{eq:bound_by_eps_m4i} holds.
%\begin{equation*}
%\left\| \E_{x\sim \D_d}[B_i(x)] - \frac{1}{|S|}\sum_{x\in S}^n B_i(x) \right\| \leq \epsilon | m_{4,i} ( \alpha^\top \ov w_i^*)|.
%\end{equation*}
Also note that for any symmetric 3rd-order tensor $R$, the operator norm of $R$,
\begin{align*}
 \|R\| = \max_{\| a\|=1} |R( a, a, a)| \leq \max_{\| a\|=1} \|R( a,I,I)\|_F = \|R^{(1)}\|.
 \end{align*}
\end{proof}

\subsubsection{Final Error Bound for the Reduced Third-order Moment}%{Proof of Lemma~\ref{lem:tensor_w_error}}

Lemma~\ref{lem:tensor_w_error} shows $\wh{R}_3$ can approximate $R_3$ to some small precision with $\poly(k)$ samples. 
\begin{lemma}[Estimation of the reduced third order moment]\label{lem:tensor_w_error}
Let $U\in \R^{d\times k}$ denote the orthogonal column span of $W^*$. 
Let $ \alpha$ be a fixed unit vector and $V\in \R^{d\times k}$ denote an orthogonal matrix satisfying $\| VV^\top - UU^\top \| \leq 1/4$. Define 
$R_3 := P_3(V,V,V)$, where $P_3$ is defined as in Definition~\ref{def:P2_P3} using $ \alpha$. 
Let $\wh{R}_3$ be the empirical version of $R_3$ using dataset $S$, where each sample of $S$ is i.i.d. sampled from distribution ${\cal D}$(defined in \eqref{eq:model}). Assume the activation function satisfies Property~\ref{pro:gradient} and Assumption~\ref{assumption:non_zero_moment}. Then for any $0<\epsilon <1$ and $t\geq 1$, define $ j_3 = \min \{j\geq 3| M_{j}\neq 0\} $ and $m_0=\min_{i\in[k]} \{(m_i^{(j_3)}( \alpha^\top \ov{ w}_i^*)^{j_3-3})^2\}$, if
\begin{equation*}
|S| \geq \sigma_1^{2p+2} \cdot k^2 \cdot \poly(\log d,t)  /(\epsilon^2 m_0)
\end{equation*} 
then we have ,
\begin{align*}
 \|R_3 - \wh{R}_3\| \leq     \epsilon \sum_{i=1}^k  |v_i^* m_{j_3,i} (\ov w_i^{*\top}  \alpha)^{j_3-3}| , 
 \end{align*} 
holds with probability at least $1-d^{-\Omega(t)}$.
\end{lemma}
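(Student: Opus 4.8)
The plan is to prove Lemma~\ref{lem:tensor_w_error} by the same truncated matrix Bernstein strategy already used for $P_2$ and for the individual third-order cases, reducing the operator-norm bound on the tensor to a spectral-norm bound on its flattening. First I would observe that since $\| VV^\top - UU^\top \| \leq 1/4$, the columns of $V$ span (approximately) the range of $W^*$, so $\ov w_i^{*\top} V V^\top \ov w_i^* \in [9/16, 25/16]$ and the reduced tensor $R_3 = P_3(V,V,V)$ is still a rank-$k$ symmetric tensor of the form $R_3 = \sum_{i=1}^k v_i^* m_{j_3,i} (\ov w_i^{*\top}\alpha)^{j_3-3} (V^\top \ov w_i^*)^{\otimes 3}$ by Claim~\ref{cla:P2_P3}. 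The key reduction is the standard one: for a symmetric third-order tensor $R$, $\|R\| = \max_{\|a\|=1}|R(a,a,a)| \le \max_{\|a\|=1}\|R(a,I,I)\|_F = \|R^{(1)}\|$, so it suffices to bound $\|\hat R_3^{(1)} - R_3^{(1)}\|$, a $k \times k^2$ matrix difference.

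Next I would split into the two cases $j_3 = 3$ (i.e., $M_3 \neq 0$) and $j_3 = 4$ (i.e., $M_3 = 0$ but $M_4 \neq 0$), which are exactly Lemma~\ref{lem:3rd_for_R3_approx} and Lemma~\ref{lem:4th_for_R3_approx} respectively. For each sample $(x,y)$ with $y = \sum_i v_i^* \phi(w_i^{*\top}x)$, write the empirical tensor as a sum over $i \in [k]$ of flattened random matrices $B_i(x) \in \R^{k\times k^2}$, and apply Corollary~\ref{cor:modified_bernstein_tail_xx}'s underlying machinery (Lemma~\ref{lem:modified_bernstein_non_zero}) after truncating on the high-probability event $\|V^\top x\|^2 \lesssim k \log n$ via Fact~\ref{fac:gaussian_norm_bound} (note $V^\top x \sim \N(0,I_k)$, so the effective dimension is $k$, not $d$ — this is where the $k^2$ rather than $d^3$ sample complexity comes from). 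The four Bernstein inputs — almost-sure-after-truncation norm bound $m$, nonvanishing mean $\|\ov B\| \gtrsim m_0^{1/2}$, the variance proxies $\|\E[BB^\top]\|, \|\E[B^\top B]\|$, and the $L$ parameter — are all controlled using Property~\ref{pro:gradient} (to bound $\phi(z)^2 \lesssim |z|^{2p+2} + \phi(0)^2$) together with Gaussian moment estimates like Fact~\ref{fac:exp_gaussian_dot_three_vectors}; these give bounds scaling as $\sigma_1^{2p+2}$ and powers of $k$. Taking a union bound over the $k$ components and rescaling $t \leftarrow T\log d/\log k$ so the failure probability is $d^{-\Omega(t)}$ yields the claimed $|S| \gtrsim \sigma_1^{2p+2} k^2 \poly(\log d, t)/(\epsilon^2 m_0)$.

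The main obstacle I anticipate is the bookkeeping in case $j_3 = 4$: there $P_3 = M_4(I,I,I,\alpha)$, so $B_i(x)$ is the flattening of the contraction of the degree-4 Hermite-type tensor $x^{\otimes 4} - (x\otimes x)\wt\otimes I + I \wt\otimes I$ with $V$ in three slots and $\alpha$ in one, which expands into several terms (a rank-one piece $(x^\top\alpha)(V^\top x)^{\otimes 3}$ plus lower-order corrections involving $V^\top\alpha$ and the $\wt\otimes$ operator). Getting the variance-proxy bounds $\|\E[B_iB_i^\top]\| \lesssim L^2 k^{3/2}$ and $\|\E[B_i^\top B_i]\| \lesssim L^2 k^2$ requires carefully applying Hölder's inequality across the factors $\phi(w_i^{*\top}x)$, $(\alpha^\top x)$, $\|V^\top x\|$, and $\langle A, (V^\top x)(V^\top x)^\top\rangle$, keeping track of which expectations contribute powers of $k$. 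A secondary subtlety is verifying the mean $\|\ov B_i\|$ is bounded below by a constant times $|m_{j_3,i}(\alpha^\top \ov w_i^*)^{j_3-3}|$ rather than being accidentally small; this uses $\|V^\top \ov w_i^*\| \in [3/4, 5/4]$, which is where the hypothesis $\|VV^\top - UU^\top\| \le 1/4$ is essential. Once the per-case estimates are in hand, assembling them and translating back from the flattening to the operator norm is routine. I would also remark, as the paper does elsewhere, that the lemma holds for any activation satisfying Property~\ref{pro:gradient} and Assumption~\ref{assumption:non_zero_moment}, not just homogeneous ones, since nothing in this estimate uses homogeneity.
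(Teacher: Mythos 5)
Your proposal matches the paper's proof: the paper likewise splits into the cases $j_3=3$ and $j_3=4$ (its Lemmas~\ref{lem:3rd_for_R3_approx} and~\ref{lem:4th_for_R3_approx}), flattens each per-component tensor $T_i(x)$ into a $k\times k^2$ matrix $B_i(x)$, verifies the four inputs to the truncated matrix Bernstein bound (Lemma~\ref{lem:modified_bernstein_non_zero}) exactly as you describe — including the lower bound on $\|\E[B_i(x)]\|$ via $\|V^\top \ov w_i^*\|\in[3/4,5/4]$ and the rescaling $t\leftarrow T\log d/\log k$ — and finishes with the union bound over $i\in[k]$ and the inequality $\|R\|\le\|R^{(1)}\|$. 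No gaps; the approach is the same.
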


\begin{proof}
The main idea is to use matrix Bernstein bound after matricizing the third-order tensor. Similar to the proof of Lemma~\ref{lem:tensor_2_m}, we consider each node component individually and then sum up the errors and apply union bound. 

We have shown the bound for $j_3=3,4$ in Lemma~\ref{lem:3rd_for_R3_approx} and Lemma~\ref{lem:4th_for_R3_approx} respectively.
To summarize, for any $0<\epsilon <1$ we have if
\begin{equation*}
|S| \geq  \max_{i\in[k]} \left(1 +  \| w_i^*\|^{p+1} / | m_{j_3,i} (\ov w_i^{*\top}  \alpha)^{(j_3-3)}|^2   \right)  \cdot \epsilon^{-2} \cdot k^2  \poly(\log d , t)
\end{equation*} %t^{p+5}\log^{p+8} (d)  
with probability   at least $1-d^{-t}$,
\begin{align*}
\|R_3 - \wh{R}_3\| \leq  \epsilon \sum_{i=1}^k  |v_i^* m_{j_3,i} (\ov w_i^{*\top}  \alpha)^{j_3-3}| . 
\end{align*}
\end{proof}

\subsection{Error Bound for the Magnitude and Sign of the Weight Vectors}

The lemmata in this section together with Lemma~\ref{lem:tensor_2_m} provide guarantees for Algorithm~\ref{alg:recover}. In particular, Lemma~\ref{lem:tensor_1_m} shows with linear sample complexity in $d$, we can approximate the 1st-order moment to some precision.  And Lemma~\ref{lem:solu_linsys1} and Lemma~\ref{lem:solu_linsys2} provide the error bounds of linear systems Eq.~\eqref{eq:linear_systems} under some perturbations. 
\subsubsection{Robustness for Solving Linear Systems}\label{app:linear_sys}

\begin{lemma}[Robustness of linear system]\label{lem:robust_linear_system}
Given two matrices $A,\wt{A} \in\R^{d\times k}$, and two vectors $ b,\wt{b} \in\R^{d}$. Let $z^* = \argmin_{z\in \R^k}\| Az -  b\|$ and $\wh{z} = \argmin_{z\in \R^k} \| ( A + \wt{A} ) z - ( b + \wt{b} ) \|$. If $\|\wt{A}\| \leq \frac{1}{4\kappa} \sigma_k(A)$ and $\|\wt{b}\| \leq \frac{1}{4} \| b\|$,
then, we have 
\begin{align*}
\| z^* - \wh{z} \| \lesssim &(\sigma_k^{-4}(A) \sigma_1^2(A) +   \sigma_k^{-2}(A)  )\| b\| \|\wt{A} \|  +  \sigma_k^{-2}(A)  \sigma_1(A) \|\wt{b}\|.
\end{align*}
\end{lemma}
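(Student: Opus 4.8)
The plan is to reduce the statement to a perturbation bound for the least-squares (pseudo-inverse) solution of an overdetermined linear system. Recall that $z^* = A^\dagger b$ and $\wh z = (A+\wt A)^\dagger (b+\wt b)$, where $A^\dagger = (A^\top A)^{-1}A^\top$ since $A$ has full column rank $k$ (which is guaranteed because $\sigma_k(A)>0$: this follows from the hypothesis $\|\wt A\|\le \sigma_k(A)/(4\kappa)$, which is only meaningful when $\sigma_k(A)>0$). First I would check that $A+\wt A$ is also full column rank: by Weyl's inequality $\sigma_k(A+\wt A)\ge \sigma_k(A)-\|\wt A\|\ge \tfrac34\sigma_k(A)>0$, so $(A+\wt A)^\dagger$ is well defined and $\|(A+\wt A)^\dagger\|\le 1/\sigma_k(A+\wt A)\le \tfrac{4}{3}\sigma_k^{-1}(A)$. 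Similarly $\sigma_1(A+\wt A)\le \sigma_1(A)+\|\wt A\|\lesssim \sigma_1(A)$ and $\|b+\wt b\|\le \tfrac54\|b\|$.

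Next I would split the error $z^*-\wh z = A^\dagger b - (A+\wt A)^\dagger (b+\wt b)$ through the triangle inequality into
\begin{align*}
\| z^* - \wh z\| \le \big\| \big(A^\dagger - (A+\wt A)^\dagger\big) b\big\| + \big\|(A+\wt A)^\dagger \wt b\big\|.
\end{align*}
The second term is immediately bounded by $\|(A+\wt A)^\dagger\|\,\|\wt b\| \lesssim \sigma_k^{-1}(A)\|\wt b\|$, which is dominated by the claimed $\sigma_k^{-2}(A)\sigma_1(A)\|\wt b\|$ term since $\sigma_1(A)\ge\sigma_k(A)$. For the first term I would invoke the standard perturbation identity for pseudo-inverses of full-column-rank matrices: writing $E=\wt A$,
\begin{align*}
(A+E)^\dagger - A^\dagger = -(A+E)^\dagger E\, A^\dagger + \big((A+E)^\top(A+E)\big)^{-1} E^\top (I - A A^\dagger),
\end{align*}
so that, taking norms and using $\|A^\dagger\|\le\sigma_k^{-1}(A)$, $\|(A+E)^\dagger\|\lesssim\sigma_k^{-1}(A)$, $\|\big((A+E)^\top(A+E)\big)^{-1}\|\lesssim\sigma_k^{-2}(A)$, and $\|I-AA^\dagger\|\le 1$, we get $\|(A+E)^\dagger - A^\dagger\| \lesssim \sigma_k^{-2}(A)\|E\| + \sigma_k^{-2}(A)\|E\| \lesssim \sigma_k^{-2}(A)\|\wt A\|$. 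This would already give $\|(A^\dagger-(A+\wt A)^\dagger)b\| \lesssim \sigma_k^{-2}(A)\|b\|\|\wt A\|$, which matches the second summand in the claimed bound. The first summand $\sigma_k^{-4}(A)\sigma_1^2(A)\|b\|\|\wt A\|$ looks weaker (larger) than what this crude bound gives; I would obtain it — or verify it suffices — by being slightly more careful about which term carries the $AA^\dagger b$ versus the residual $(I-AA^\dagger)b$ contribution, since the paper's bound is stated generously and any correct estimate of this form implies it. Concretely, decomposing $b = AA^\dagger b + (I-AA^\dagger)b$ and noting $z^* = A^\dagger b = A^\dagger (AA^\dagger b)$ with $\|AA^\dagger b\|\le\|b\|$, one tracks the residual through the second term of the identity where it picks up the factor $\sigma_k^{-2}(A)$ together with $\sigma_1(A)$ factors coming from relating $\|b\|$ to quantities like $\|A z^*\|$; this is the source of the higher powers of $\sigma_1/\sigma_k$.

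The main obstacle is bookkeeping rather than conceptual: making sure the small-perturbation hypotheses $\|\wt A\|\le\sigma_k(A)/(4\kappa)$ and $\|\wt b\|\le\|b\|/4$ are used exactly where needed (to keep $A+\wt A$ full rank with comparable singular values, and to control $\|b+\wt b\|$), and then matching the resulting mix of $\sigma_1,\sigma_k,\kappa$ powers to the stated bound — the stated bound is an upper estimate, so any clean application of the pseudo-inverse perturbation identity that produces powers no larger than those listed will complete the proof. I would finish by collecting the two contributions and absorbing constants, which yields
\begin{align*}
\|z^*-\wh z\| \lesssim \big(\sigma_k^{-4}(A)\sigma_1^2(A) + \sigma_k^{-2}(A)\big)\|b\|\,\|\wt A\| + \sigma_k^{-2}(A)\sigma_1(A)\|\wt b\|,
\end{align*}
as required.
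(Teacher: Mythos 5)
Your proof is correct, but it takes a genuinely different route from the paper's. The paper writes both solutions via the normal equations, $z^*=(A^\top A)^{-1}A^\top b$ and $\wh z=((A+\wt A)^\top(A+\wt A))^{-1}(A+\wt A)^\top(b+\wt b)$, expands the perturbed expression to first order in $(\wt A,\wt b)$, and argues that the smallness hypotheses let one ``ignore the high-order errors''; the stated bound is then exactly the norm of the collected first-order terms, $\sigma_k^{-2}(A)(\|\wt A\|\|b\|+\sigma_1(A)\|\wt b\|)+\sigma_k^{-4}(A)\sigma_1^2(A)\|\wt A\|\|b\|$. You instead split $z^*-\wh z$ by the triangle inequality into $(A^\dagger-(A+\wt A)^\dagger)b$ and $(A+\wt A)^\dagger\wt b$ and invoke the exact Wedin-type identity $(A+E)^\dagger-A^\dagger=-(A+E)^\dagger E A^\dagger+((A+E)^\top(A+E))^{-1}E^\top(I-AA^\dagger)$, which is valid here since both matrices have full column rank (your Weyl argument). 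This buys you two things: the argument is fully rigorous with no informally discarded second-order terms, and it yields the sharper estimate $\|z^*-\wh z\|\lesssim\sigma_k^{-2}(A)\|\wt A\|\|b\|+\sigma_k^{-1}(A)\|\wt b\|$, which dominates into the lemma's (weaker) stated bound because $\sigma_1(A)\ge\sigma_k(A)$. Your closing remarks about redistributing the residual $(I-AA^\dagger)b$ to ``recover'' the larger $\sigma_k^{-4}\sigma_1^2$ term are unnecessary — as you yourself note, an upper bound smaller than the claimed one already finishes the proof — but they do no harm. The only hypothesis you do not use is $\|\wt b\|\le\|b\|/4$, which is fine since the hypotheses are only sufficient conditions.
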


\begin{proof}
By definition of $z$ and $\wh{z}$, we can rewrite $z$ and $\wh{z}$,
\begin{align*}
z = & ~ A^\dagger b = (A^\top A)^{-1} A^\top b \\
\wh{z} = & ~ ( A + \wt{A} )^\dagger ( b + \wt{b} ) = ((A+\wt{A} )^\top (A+\wt{A} ))^{-1} (A+\wt{A} )^\top ( b+\wt{b} ).
\end{align*}
As $\|\wt{A} \| \leq \frac{1}{4\kappa} \sigma_k(A)$, we have $\| \wt{A}^\top A + A^\top \wt{A} \| \|(A^\top A)^{-1}\| \leq 1/4$. Together with $\|\wt{b}\|\leq  \frac{1}{4} \| b\|$, we can ignore the high-order errors. 
So we have 
\begin{align*}
&  ~ \|\wh{z} - z^*\|  \\
 \lesssim & ~ \| (A^\top A)^{-1} (\wt{A}^\top  b +A^\top \wt{b} ) + (A^\top A)^{-1}(A^\top \wt{A} + \wt{A}^\top A) (A^\top A)^{-1} A^\top  b \| \\
 \lesssim & ~ \|(A^\top A)^{-1} \|(\|\wt{A} \| \| b\| +\|A\| \|\wt{b} \|)  + \|(A^\top A)^{-2}\| \cdot \|A\| \|\wt{A} \| \|A\|\| b\|  \\
  \lesssim & ~ \sigma_k^{-2}(A) (\|\wt{A} \| \| b\| +\sigma_1(A) \|\wt{b}\|)  +  \sigma_k^{-4}(A) \cdot \sigma_1^2(A) \|\wt{ A} \| \| b\|.
 \end{align*}
\end{proof}

\subsubsection{Error Bound for the First-order Moment}%{Proof of Lemma~\ref{lem:tensor_1_m}}

\begin{lemma}[Error bound for the first-order moment]\label{lem:tensor_1_m}
Let $Q_1$ be defined as in Eq.~\eqref{eq:def_Q1} and $\wh{Q}_1$ be the empirical version of $Q_1$ using dataset $S$, where each sample of $S$ is i.i.d. sampled from distribution ${\cal D}$(defined in \eqref{eq:model}). Assume the activation function satisfies Property~\ref{pro:gradient} and Assumption~\ref{assumption:non_zero_moment}. Then for any $0<\epsilon <1$  and $t\geq 1$, define $ j_1 = \min \{j\geq 1| M_{j}\neq 0\} $ and $m_0 = \min_{i\in[k]}(m_{j_1,i} (\ov w_i^{*\top}  \alpha)^{j_1-1})^2$ if
 \begin{equation*}
|S| \geq \sigma_1^{2p+2} d  \poly(t ,\log d)  / (\epsilon^2 m_0)
\end{equation*} 
 we have with probability at least $1-d^{-\Omega(t)}$,
\begin{align*} \|Q_1 - \wh{Q}_1\| \leq   \epsilon \sum_{i=1}^k  |v_i^*m_{j_1,i} (\ov w_i^{*\top}  \alpha)^{j_1-1}|. 
\end{align*}  
\end{lemma}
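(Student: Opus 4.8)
The plan is to mirror the proof of Lemma~\ref{lem:tensor_2_m} (the second-order case), adapted to a vector-valued moment. Write $\wh{Q}_1 - Q_1 = \sum_{i=1}^k v_i^*\,(\wh{B}_i - \bar{B}_i)$, where for hidden node $i$ we set $B_i(x) := \phi(w_i^{*\top}x)\, g_{j_1}(x) \in \R^d$, with $g_{j_1}(x)$ the vector obtained by contracting the $j_1$-th tensor of Definition~\ref{def:M_m} (that is, $x^{\otimes j_1}$ minus its Hermite-type corrections) against $(j_1-1)$ copies of the fixed unit vector $\alpha$, and $\bar{B}_i = \E_{x \sim \D_d}[B_i(x)]$, $\wh{B}_i = |S|^{-1}\sum_{x\in S}B_i(x)$. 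Since $\E_{x \sim \D_d}[\phi(w_i^{*\top}x)g_{j_1}(x)] = m_{j_1,i}(\alpha^\top\ov{w}_i^*)^{j_1-1}\ov{w}_i^*$, it suffices to prove $\|\wh{B}_i - \bar{B}_i\| \le \epsilon\,|m_{j_1,i}(\alpha^\top\ov{w}_i^*)^{j_1-1}|$ for every $i$ and then sum with the triangle inequality; the union bound over $i\in[k]$ costs only $d^{-\Omega(t)}$ since $k\le d$. So everything reduces to a single invocation, per node, of the unbounded matrix Bernstein inequality Lemma~\ref{lem:modified_bernstein_non_zero} with $d_2=1$.

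By Assumption~\ref{assumption:non_zero_moment} at least one of $M_3, M_4$ is nonzero, so $j_1 \in \{1,2,3,4\}$, and it suffices to treat these four cases. In each, $g_{j_1}$ is an explicit polynomial in $x$ of degree $j_1$: $g_1(x)=x$; $g_2(x)=(\alpha^\top x)x-\alpha$; $g_3(x)=(\alpha^\top x)^2 x - x - 2(\alpha^\top x)\alpha$; and the analogous expression obtained from $x^{\otimes 4}-(x\otimes x)\wt{\otimes}I+I\wt{\otimes}I$ for $j_1=4$. The key structural point — the same one exploited in Lemmas~\ref{lem:2nd_for_P2_approx}--\ref{lem:4th_for_P2_approx} — is that every monomial in $g_{j_1}(x)$ contains at most one ``free'' copy of $x$, all the remaining copies appearing as inner products $\alpha^\top x$ with the fixed vector.

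Now verify the four hypotheses of Lemma~\ref{lem:modified_bernstein_non_zero} for $B_i$. Hypothesis (II) is the identity $\|\bar{B}_i\| = |m_{j_1,i}(\alpha^\top\ov{w}_i^*)^{j_1-1}| \ge \sqrt{m_0} > 0$. For (I), split $\phi(z)=\phi(0)+g(z)$ with $|g(z)|\le \tfrac{L_1}{p+1}|z|^{p+1}$ (Property~\ref{pro:gradient}) and apply Fact~\ref{fac:inner_prod_bound} to $w_i^{*\top}x$ and $\alpha^\top x$ and Fact~\ref{fac:gaussian_norm_bound} to $\|x\|$: except with probability $|S|^{-1}d^{-\Omega(t)}$, $\|B_i(x)\| \lesssim (\|w_i^*\|^{p+1}+|\phi(0)|)\sqrt{d}\,\poly(\log|S|,t) =: m$. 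For (III) and (IV), since $B_iB_i^\top = \phi^2(w_i^{*\top}x)\,g_{j_1}(x)g_{j_1}(x)^\top$ and each entry of $g_{j_1}g_{j_1}^\top$ has at most two free copies of $x$, Hölder's inequality together with Fact~\ref{fac:exp_gaussian_dot_three_vectors} and direct fourth-/eighth-moment computations give $\max(\|\E[B_iB_i^\top]\|, \E\|B_i(x)\|^2) \lesssim (\|w_i^*\|^{p+1}+|\phi(0)|)^2 =: \nu$ and $\max_{\|a\|=1}(\E[(a^\top B_i)^2])^{1/2} \lesssim \|w_i^*\|^{p+1}+|\phi(0)| =: L$; crucially $\nu$ carries no factor of $d$, exactly because at most two copies of $x$ are free. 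Plugging $m,\nu,L$ and $\|\bar{B}_i\|^2\ge m_0$ into the requirement $n \gtrsim (\nu+\|\bar{B}_i\|^2+m\|\bar{B}_i\|\epsilon)/(\epsilon^2\|\bar{B}_i\|^2)\cdot\log d$ of Lemma~\ref{lem:modified_bernstein_non_zero}, and using $\|w_i^*\|\le\sigma_1$ together with $m_0 \lesssim L^2$ (which holds since $|m_{j_1,i}| = |\gamma_{j_1}(\|w_i^*\|)-\text{corrections}| \lesssim \|w_i^*\|^{p+1}+|\phi(0)|$), shows $|S| \gtrsim \sigma_1^{2p+2} d\,\poly(t,\log d)/(\epsilon^2 m_0)$ suffices, with total failure probability $d^{-\Omega(t)}$ after the union bound.

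The main obstacle is purely bookkeeping in the $j_1=4$ case, where $g_4(x)$ carries the most correction terms, and — as in the matrix analogues — making sure the operator-norm estimate in hypothesis (III) genuinely avoids a factor of $d$: if some monomial of $g_{j_1}(x)g_{j_1}(x)^\top$ contained three free copies of $x$ (as happens, e.g., for the $\|x\|^2 xx^\top$ term in the \emph{matrix} second moment of Lemma~\ref{lem:2nd_for_P2_approx}), then $\nu$ would scale with $d$ and the sample complexity would degrade. Here it does not; the only $d$-dependence entering the Bernstein bound is the $\sqrt d$ inside the truncation level $m$, which we simply over-estimate by $d$ to match the stated complexity. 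Beyond this, there is no new idea relative to Lemmas~\ref{lem:2nd_for_P2_approx}--\ref{lem:4th_for_P2_approx}, and the vector case is in fact slightly easier than those matrix cases.
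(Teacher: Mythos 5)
Your proposal follows the paper's proof essentially verbatim: decompose $\wh{Q}_1-Q_1$ node by node, write $B_i(x)=\phi(w_i^{*\top}x)\,g_{j_1}(x)$ with $g_{j_1}$ the $\alpha$-contracted Hermite-corrected tensor, verify the four hypotheses of Lemma~\ref{lem:modified_bernstein_non_zero}, and sum over $i\in[k]$ with a union bound. The case split over $j_1\in\{1,2,3,4\}$, the truncation level $m\lesssim(\|w_i^*\|^{p+1}+|\phi(0)|)\sqrt{d}\,\poly(\log d,t)$, and the bounds for hypotheses (II) and (IV) all match the paper's.

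One quantitative claim in your verification of hypothesis (III) is wrong, though it happens not to affect the conclusion. For a column vector $B_i(x)\in\R^d$, the quantity $\|\E[B_i^\top B_i]\|$ appearing in Lemma~\ref{lem:modified_bernstein_non_zero} is the scalar $\E\|B_i(x)\|^2$, i.e.\ the \emph{trace} of $\E[B_iB_i^\top]$, not its operator norm. Already for $j_1=1$, where $g_1(x)=x$, one has $\E[\phi^2(w_i^{*\top}x)\|x\|^2]=\Theta\bigl((\|w_i^*\|^{p+1}+|\phi(0)|)^2\,d\bigr)$, so your assertion that $\nu\lesssim(\|w_i^*\|^{p+1}+|\phi(0)|)^2$ with ``no factor of $d$'' is false; the paper's own proof correctly records $\|\E[B_i^\top B_i]\|\lesssim L^2 d$ (while $\|\E[B_iB_i^\top]\|\lesssim L^2$, which is the part your ``at most two free copies of $x$'' argument actually controls). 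Consequently your closing remark that the $d$-dependence of the sample complexity enters only through the truncation level $m$ is also inaccurate: it enters through $\nu=L^2d$, and plugging that into the Bernstein requirement $n\gtrsim(\nu+\|\bar{B}_i\|^2+m\|\bar{B}_i\|\epsilon)/(\epsilon^2\|\bar{B}_i\|^2)\cdot t\log d$ is precisely what produces the stated $|S|\gtrsim\sigma_1^{2p+2}d\,\poly(t,\log d)/(\epsilon^2 m_0)$. So the lemma still follows, but you should correct the bound on $\E\|B_i(x)\|^2$ and the accompanying explanation.
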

%{\color{red}Don't we have $\|\wh{Q}_1\|$ on the right hand side of the above equation?}

 \begin{proof}
  We consider the case when $l_1 = 3$, i.e.,
 \begin{align*}
 Q_1 = M_{3}(I, \alpha, \alpha) = \sum_{i=1}^k v_i^*  m_{3,i} ( \alpha^\top \ov{ w}_i^*)^{3} \ov{ w}_i^{*}.
 \end{align*}
 And other cases are similar. 
 
Since $y = \sum_{i=1}^k v_i^* \phi( w_i^{*\top } x)$. We consider each component $i\in[k]$. 

Define function $B_i(x) : \R^d \rightarrow \R^d$ such that
\begin{align*}
 B_i(x) = [\phi( w_i^{*\top } x) \cdot ( x^{\otimes 3} -  x \wt{\otimes} I)](I, \alpha, \alpha) = \phi( w_i^{*\top } x) \cdot (( x^\top  \alpha)^2 x - 2(  x^\top \alpha) \alpha - x).
 \end{align*}

 Define $g(z) = \phi(z) - \phi(0)$, then $|g(z)| = |\int_0^z \phi'(s) \mathrm{d}s |\leq L_1/(p+1) |z|^{p+1}$, which follows Property~\ref{pro:gradient}.
 % We will apply Lemma~\ref{lem:modified_bernstein_non_zero} and check each condition in it.

(\RN{1}) Bounding $\| B_i(x) \|$.

\begin{align*}
\| B_i(x) \|  %= & \phi( w_i^{*\top } x_j) \cdot (( x_j^\top  \alpha) x_j^{\otimes 2} -  \alpha^\top  x_j I -  \alpha  x_j^\top - x_j \alpha^\top) \\
\leq & ~ |\phi( w_i^{*\top } x)| \cdot \|(( x^\top  \alpha)^2  x  - 2  \alpha^\top  x  \alpha  - x)\| \\
\leq & ~ (  | w_i^{*\top } x|^{p+1} + |\phi(0)|) ((( x^\top  \alpha)^2+1) \| x\| + 2| \alpha^\top  x|)
\end{align*}
According to Fact~\ref{fac:inner_prod_bound} and Fact~\ref{fac:gaussian_norm_bound}, we have for any constant $t \geq1$, with probability  $1- 1/ (n d^{t})$,
\begin{align*}
\| B_i(x) \|   \lesssim  ( \| w_i^*\|^{p+1} + |\phi(0)|) \sqrt{d} \poly(\log d, t) %(t \log(n))^{(p+1)/2+2}
\end{align*}
%Therefore, $R(t) = (L_1/(p+1) \| w_i^*\|^{p+1} + |\phi(0)|) \sqrt{d} t^{(p+1)/2+2}$ and $K(t) = (p+1)/2+2$. 

(\RN{2}) Bounding $\| \E_{x\sim \D_d}[B_i(x)] \|$.

 Note that $\E_{x\sim \D_d}[ B_i(x) ] =  m_{3,i} (\ov w_i^{*\top}  \alpha)^2 \ov w_i^*$. Therefore, $\| \E_{x\sim \D_d}[ B_i(x) ] \| = | m_{3,i} (\ov w_i^{*\top}  \alpha)^2| $. 

(\RN{3}) Bounding $\max ( \E_{x\sim \D_d} \| B_i(x) B_i(x)^\top \|,\E_{x\sim \D_d} \| B_i(x)^\top B_i(x) \| ) $.

\begin{align*}
\left\|\E_{x\sim \D_d} \left[ B_i(x)^\top B_i(x) \right] \right\| \lesssim & ~ \left( \E_{x\sim \D_d} \left[ \phi( w_i^{*\top} x)^4 \right] \right)^{1/2} \left( \E_{x\sim \D_d}\left[ ( x^\top  \alpha)^8 \right] \right)^{1/2} \left( \E_{x\sim \D_d} \left[ \| x\|^4 \right] \right)^{1/2} \\
\lesssim & ~ ( \| w_i^*\|^{p+1}+|\phi(0)|)^2 d.
\end{align*}
\begin{align*}
\left\| \E_{x\sim \D_d} \left[ B_i(x) B_i(x)^\top \right] \right\| \lesssim & ~ \left( \E_{x\sim \D_d} \left[ \phi( w_i^{*\top} x)^4 \right] \right)^{1/2} \left( \E_{x\sim \D_d} \left[ ( x^\top  \alpha)^8 \right] \right)^{1/2} \left(\max_{\| a\|=1}\E_{x\sim \D_d} \left[ ( x^\top a)^4 \right] \right)^{1/2}  \\
 \lesssim & ~ ( \| w_i^*\|^{p+1}+|\phi(0)|)^2.
\end{align*}

(\RN{4})  Bounding $\max_{\| a\|=\|b\|=1} (\E_{x\sim \D_d} [(a^\top B_i(x) b)^2] )$.

\begin{align*}
\max_{\| a\|=1} \left( \E_{x\sim \D_d} \left[ ( a^\top B_i(x)  a)^2 \right] \right)^{1/2} \lesssim  \left( \E_{x\sim \D_d} \left[ \phi^4( w_i^{*\top} x) \right] \right)^{1/4} \lesssim  \| w_i^*\|^{p+1}+|\phi(0)|.
\end{align*}

Define $L = \| w_i^*\|^{p+1}+|\phi(0)|$.
Then we have for any $0< \epsilon <1$, if
\begin{equation*}
|S| \gtrsim \frac{ L^2d + | m_{3,i} (\ov w_i^{*\top}  \alpha)^2|^2+  L| m_{3,i} (\ov w_i^{*\top}  \alpha)^2|  \sqrt{d} \poly(\log d,t)  \epsilon}{ \epsilon^2 | m_{3,i} (\ov w_i^{*\top}  \alpha)^2|^2 } \cdot t \log d 
\end{equation*} %t^{(p+1)/2+2} \log^{(p+1)/2+2}(n) 
with probability at least $1-1/d^{t}$,
\begin{equation*}
\left\| \E_{x\sim\D_d}[ B_i(x)] - \frac{1}{|S|}\sum_{x\sim S}^n  B_i(x) \right\| \leq \epsilon | m_{3,i} (\ov w_i^{*\top}  \alpha)^2|.
\end{equation*}
%Now using Lemma~\ref{lem:remove_n} to replace $\log(n)$ by $\log(d)$ and
Summing up all $k$ components, we obtain if 
\begin{equation*}
|S| \geq  \max_{i\in[k]} \left\{\frac{( \| w_i^*\|^{p+1} + |\phi(0)| + | m_{3,i} (\ov w_i^{*\top}  \alpha)^2|)^2   }{ | m_{3,i} (\ov w_i^{*\top}  \alpha)^2|^2 }\right\} \cdot \epsilon^{-2}  d \poly(\log d,t )%\cdot t^{p+4} \cdot d \log^{p+5} (d) 
\end{equation*} 
with probability at least $1-1/d^t$,
\begin{align*}
 \|M_3(I, \alpha, \alpha) - \wh{M}_3(I, \alpha, \alpha)\| \leq   \epsilon \sum_{i=1}^k  |v_i^* m_{3,i} (\ov w_i^{*\top}  \alpha)^2|. 
\end{align*}
Other cases ($j_1 = 1,2,4$) are similar, so we complete the proof.
 \end{proof}

\subsubsection{Linear System for the First-order Moment}%{Proof of Lemma~\ref{lem:solu_linsys1}}
The following lemma provides estimation error bound for the first linear system in Eq.~\eqref{eq:linear_systems}.
\begin{lemma}[Solution of linear system for the first order moment]\label{lem:solu_linsys1}
%Given $\{ w_1^*, w_2^*,\cdots, w_k^*\}$, 
Let $U\in \R^{d\times k}$ be the orthogonal column span of $W^*$. Let $V\in \R^{d\times k}$ denote an orthogonal matrix satisfying that $\| VV^\top - UU^\top \| \leq  \wh{\delta}_2 \lesssim 1/ ( \kappa^2 \sqrt{k})$. For each $i\in[k]$, let $\widehat{u}_i$ denote the vector satisfying $\| \wh{u}_i - V^\top \ov{ w}_i^*\| \leq \wh{\delta}_3\lesssim  1/ ( \kappa^2 \sqrt{k})$. %Let $$Q_1 := M_{l_1}(I,\underbrace{ \alpha,\cdots, \alpha}_{(l_1 - 1)~ \alpha\text{'s}}) = \sum_{i=1}^k v_i^* c^{(l_1)} \| w_i^*\|^{p+1} ( \alpha^\top \ov{ w}_i^*)^{{l_1-1}} \ov{ w}_i^{*}, $$ where $l_1\geq 1$ such that $M_{l_1}\neq 0$. 
Let $Q_1$ be defined as in Eq.~\eqref{eq:def_Q1} and $\wh{Q}_1$ be the empirical version of $Q_1$ such that $\|Q_1 - \wh{Q}_1\| \leq \wh{\delta}_4 \|Q_1\| \leq \frac{1}{4}\|Q_1\|$. 
%Define $z^*_i := v_i^*  c^{(l_1)} \| w_i^*\|^{p+1} ( \alpha^\top \ov{ w}_i^*)^{{l_1-1}}$. 
%Let $\wh{z} = \argmin_{z} \| \sum_{i=1}^k z_i V \wh{u}_i  - \wh{Q}_1\|$.  
Let $z^* \in \R^k$ and $\wh{z} \in \R^k$ be defined as in Eq.~\eqref{eq:zr_star} and Eq.~\eqref{eq:linear_systems}.
Then 
\begin{align*}
| \wh{z}_i - z_i^*| \leq (\kappa^4 k^{3/2} (\wh{\delta}_2+\wh{\delta}_3) + \kappa^2 k^{1/2} \wh{\delta}_4)\|z^*\|_1.
\end{align*}
%Let $ \alpha\in \R^{d}$ be a fixed unit vector sampled from uniform distribution. Then 
\end{lemma}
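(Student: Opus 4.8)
The plan is to reduce the statement to the perturbed linear-system bound of Lemma~\ref{lem:robust_linear_system}. Write $A = [\,s_1\ov w_1^*\;\; s_2\ov w_2^*\;\cdots\; s_k\ov w_k^*\,] = \ov W^*\diag(s)$, $b = Q_1$, $A+\wt A = [\,V\wh u_1\;\cdots\;V\wh u_k\,]$ and $b+\wt b = \wh Q_1$; with these identifications $z^*$ of~\eqref{eq:zr_star} and $\wh z$ of~\eqref{eq:linear_systems} are exactly $\argmin_z\|Az-b\|$ and $\argmin_z\|(A+\wt A)z-(b+\wt b)\|$. So it suffices to control the conditioning of $A$ and the sizes $\|\wt A\|$ and $\|\wt b\|$, verify the hypotheses of Lemma~\ref{lem:robust_linear_system}, and substitute.

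First I would bound the singular values of $A$: since $\diag(s)$ is orthogonal, $\sigma_i(A) = \sigma_i(\ov W^*)$, so part~(\RN{2}) of Fact~\ref{fac:sigma_k_Wbar} gives $\sigma_k(A)\ge 1/\kappa$ and $\sigma_1(A)\le\sqrt k$, hence $\kappa(A)\le\sqrt k\,\kappa$. Next, column $i$ of $\wt A$ is $V\wh u_i - s_i\ov w_i^*$, which I split as $V(\wh u_i - s_iV^\top\ov w_i^*) + (VV^\top - I)(s_i\ov w_i^*)$; since $\ov w_i^*\in\mathrm{col}(U)$ one has $(VV^\top-I)\ov w_i^* = (VV^\top-UU^\top)\ov w_i^*$, so this column has norm at most $\wh\delta_3+\wh\delta_2$ and therefore $\|\wt A\|\le\|\wt A\|_F\le\sqrt k(\wh\delta_2+\wh\delta_3)$. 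Finally $\|\wt b\| = \|\wh Q_1-Q_1\|\le\wh\delta_4\|Q_1\|\le\tfrac14\|Q_1\|$, and, because $\{s_i\ov w_i^*\}$ is a basis and $Q_1$ lies in its span, the minimizer $z^*$ attains zero residual, i.e. $Q_1 = \sum_i z_i^* s_i\ov w_i^*$, whence $\|Q_1\|\le\sum_i|z_i^*|\,\|\ov w_i^*\| = \|z^*\|_1$. Under the stated smallness of $\wh\delta_2,\wh\delta_3$ (which, together with $\sigma_k(A)\ge1/\kappa$, $\sigma_1(A)\le\sqrt k$, makes $\|\wt A\|\le\tfrac1{4\kappa(A)}\sigma_k(A)$) and $\wh\delta_4\le\tfrac14$, the hypotheses of Lemma~\ref{lem:robust_linear_system} hold.

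Applying Lemma~\ref{lem:robust_linear_system} then gives
\[
\|z^*-\wh z\|\lesssim\bigl(\sigma_k^{-4}(A)\sigma_1^2(A)+\sigma_k^{-2}(A)\bigr)\|Q_1\|\,\|\wt A\| + \sigma_k^{-2}(A)\sigma_1(A)\,\|\wt b\|,
\]
and substituting $\sigma_k^{-1}(A)\le\kappa$, $\sigma_1(A)\le\sqrt k$, $\|Q_1\|\le\|z^*\|_1$, $\|\wt A\|\le\sqrt k(\wh\delta_2+\wh\delta_3)$, $\|\wt b\|\le\wh\delta_4\|z^*\|_1$ yields $\|z^*-\wh z\|\lesssim(\kappa^4 k^{3/2}(\wh\delta_2+\wh\delta_3)+\kappa^2 k^{1/2}\wh\delta_4)\|z^*\|_1$. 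Since $|\wh z_i - z_i^*|\le\|\wh z-z^*\|$ for every $i$, this is the claimed bound.

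I expect the only genuinely delicate points to be bookkeeping rather than depth: correctly tracking the unknown signs $s_i$ when relating $V\wh u_i$ to $\ov w_i^*$, so that the perturbation column splits cleanly into the subspace error $\wh\delta_2$ and the tensor-decomposition error $\wh\delta_3$; and checking that the assumed orders $\wh\delta_2,\wh\delta_3\lesssim1/(\kappa^2\sqrt k)$ and $\wh\delta_4\le1/4$ really do imply the two smallness conditions needed to invoke Lemma~\ref{lem:robust_linear_system}. This last step is where the crude estimates $\sigma_k(A)\ge1/\kappa$ and $\sigma_1(A)\le\sqrt k$ are used and where the constants (and possibly an extra factor of $\sqrt k$) must be absorbed into the $\lesssim$.
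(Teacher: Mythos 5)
Your proposal is correct and follows essentially the same route as the paper: reduce to Lemma~\ref{lem:robust_linear_system} with $A=[s_1\ov w_1^*\cdots s_k\ov w_k^*]$, bound $\sigma_1(A)\le\sqrt k$ and $\sigma_k(A)\ge 1/\kappa$ via Fact~\ref{fac:sigma_k_Wbar}, split each perturbation column as $V(\wh u_i-s_iV^\top\ov w_i^*)+s_i(VV^\top-UU^\top)\ov w_i^*$ to get $\|\wt A\|\le\sqrt k(\wh\delta_2+\wh\delta_3)$, and substitute. Your observation that $z^*$ has zero residual so $\|Q_1\|\le\|z^*\|_1$ is exactly what the paper uses in its final step, and your caveat about the $\sqrt k$ slack in verifying the smallness hypothesis of Lemma~\ref{lem:robust_linear_system} applies equally to the paper's own argument.
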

%{\color{red} why do we care about $\ell_1$ norm in the above Lemma 9?}

\begin{proof}
Let $A \in \mathbb{R}^{k\times k}$ denote the matrix where the $i$-th column is $s_i \ov{w}_i^*$. Let $\wt{A}\in \mathbb{R}^{k\times k}$ denote the matrix where the $i$-th column is $V\wh{u}_i$. Let $b\in \mathbb{R}^k$ denote the vector $Q_1$, let $\wt{b}$ denote the vector $ \wh{Q}_1 - Q_1$.
%We define $A  = \begin{bmatrix} s_1\ov{ w}_1^* & s_2\ov{ w}_2^* & \cdots & s_k\ov{ w}_k^* \end{bmatrix}$, $\wt{A}= \begin{bmatrix} V\wh{u}_1 & V\wh{u}_2 & \cdots & V\wh{u}_k \end{bmatrix} - A$, $ b = Q_1$ and $\wt{b} = \wh{Q}_1 - Q_1$. 
Then we have 
\begin{align*}
 \|A\|  \leq \sqrt{k}.
\end{align*}

Using Fact~\ref{fac:sigma_k_Wbar}, we can lower bound $\sigma_k(A)$,
\begin{align*}
\sigma_k(A)  \geq 1/\kappa.
\end{align*}

We can upper bound $\| \wt{A} \|$ in the following way,
\begin{align*}
\|\wt{A}\| \leq & ~ \sqrt{k}  \max_{i \in [k]} \{\| V\wh{u}_i - s_i\ov{ w}_i^*\| \} \\
\leq & ~ \sqrt{k}  \max_{i\in [k]} \{\| V\wh{u}_i - s_iVV^\top \ov{ w}_i^* + s_iVV^\top \ov{ w}_i^* - s_iUU^\top \ov{ w}_i^*\| \} \\
\leq & ~ \sqrt{k} (\wh{\delta}_3+\wh{\delta}_2).
\end{align*}

We can upper bound $\|b\|$ and $\| \wt{b}\|$,
\begin{align*}
\|  b\| = \|Q_1\| \leq k \sum_{i=1}^k |z_i^*|, \text{~and~} \|\wt{b} \| \leq \wh{\delta}_4 \|Q_1\|.
\end{align*}
To apply Lemma~\ref{lem:robust_linear_system}, we need $\wh{\delta}_4 \leq 1/4$ and $\wh{\delta}_2\lesssim 1/ ( \sqrt{k}\kappa^2 ) $, $\wh{\delta}_3 \lesssim 1/ ( \sqrt{k}\kappa^2 ) $. So we have,
\begin{align*}
\|\wh{z}_i - z_i^* \| \leq & ~ (\kappa^4 k^{3/2} (\wh{\delta}_2+\wh{\delta}_3) + \kappa^2 k^{1/2} \wh{\delta}_4)  \|Q_1\| \\
\leq & ~ (\kappa^4 k^{3/2} (\wh{\delta}_2+\wh{\delta}_3) + \kappa^2 k^{1/2} \wh{\delta}_4) \sum_{i=1}^k |z_i^*|.
\end{align*}
\end{proof}

\subsubsection{Linear System for the Second-order Moment}%
The following lemma provides estimation error bound for the second linear system in Eq.~\eqref{eq:linear_systems}.
\begin{lemma}[Solution of linear system for the second order moment]\label{lem:solu_linsys2}
%Given $\{ w_1^*, w_2^*,\cdots, w_k^*\}$, 
Let $U\in \R^{d\times k}$ be the orthogonal column span of $W^*$ denote an orthogonal matrix satisfying that $\| VV^\top - UU^\top \| \leq  \wh{\delta}_2 \lesssim 1/(\kappa \sqrt{k})$. For each $i\in[k]$, let $\wh{u}_i$ denote the vector satisfying $\| \wh{u}_i - V^\top \ov{ w}_i^*\| \leq \wh{\delta}_3 \lesssim 1/(\sqrt{k}\kappa^3)$.
%Let $$Q_2 := M_{l_2}(V,V,\underbrace{ \alpha,\cdots, \alpha}_{(l_2 - 2)~ \alpha\text{'s}}) = \sum_{i=1}^k v_i^* c^{(l_2)} \| w_i^*\|^{p+1} ( \alpha^\top \ov{ w}_i^*)^{{l_2-2}} (V\ov{ w}_i^{*})(V\ov{ w}_i^{*})^\top,$$ where $l_2 \geq 2$ such that $M_{l_2} \neq 0$.
Let $Q_2$ be defined as in Eq.~\eqref{eq:def_Q2} and $\wh{Q}_2$ be the estimation of $Q_2$ such that $\|Q_2 - \wh{Q}_2\|_F \leq \wh{\delta}_4 \|Q_2\|_F \leq  \frac{1}{4}\|Q_2\|_F$. %Define $r_i^* := v_i^* c^{(l_2)} \| w_i^*\|^{p+1} ( \alpha^\top \ov{ w}_i^*)^{{l_2-2}} $. 
Let $ r^* \in \R^k$ and $ \wh{r} \in \R^k$ be defined as in Eq.~\eqref{eq:zr_star} and Eq.~\eqref{eq:linear_systems}.
%Let $ \wh{r} = \argmin_{ r} \| \sum_{i=1}^k r_i  \wh{u}_i  \wh{u}_i^\top - \wh{Q}_2\|_F$.  
Then
\begin{align*}
|\wh{r}_i - r_i^*| \leq  ( k^3\kappa^{8}\wh{\delta}_3 +  \kappa^2 k^{2} \wh{\delta}_4) \| r^*\| .
\end{align*}
%Let $ \alpha\in \R^{d}$ be a fixed unit vector sampled from uniform distribution. Then 
\end{lemma}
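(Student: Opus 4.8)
The plan is to reduce this to the general linear-system robustness lemma (Lemma~\ref{lem:robust_linear_system}), exactly in the spirit of the proof of Lemma~\ref{lem:solu_linsys1}, but now the linear system is over the space of $k\times k$ symmetric (or general) matrices rather than vectors. First I would vectorize: set up $A\in\R^{k^2\times k}$ whose $i$-th column is $\mathrm{vec}\big((V^\top\ov w_i^*)(V^\top\ov w_i^*)^\top\big)$, set up $\wt A$ whose $i$-th column is $\mathrm{vec}(\wh u_i\wh u_i^\top)$, let $b=\mathrm{vec}(Q_2)$ and $\wt b=\mathrm{vec}(\wh Q_2-Q_2)$. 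Then $r^*=\argmin_r\|Ar-b\|$ and $\wh r=\argmin_r\|(A+\wt A)r-(b+\wt b)\|$, so Lemma~\ref{lem:robust_linear_system} applies once I have bounds on $\|A\|$, $\sigma_k(A)$, $\|\wt A\|$, $\|b\|$, $\|\wt b\|$.

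The key estimates, in order, are: (i) $\|A\|\le\sqrt k$ and, more importantly, $\sigma_k(A)\gtrsim 1/\kappa^2$ — this is where $\kappa^2$ (rather than $\kappa$) enters, because the columns are rank-one outer products $\ov w_i^*\ov w_i^{*\top}$ projected by $V$, and the Gram matrix of these outer products is the Hadamard square of $\bar W^{*\top}\bar W^*$, whose least singular value is $\sigma_k^2(\bar W^*)\gtrsim 1/\kappa^2$ by Fact~\ref{fac:sigma_k_Wbar}(II); the $1/4$-closeness $\|VV^\top-UU^\top\|\le\wh\delta_2$ only perturbs this by a lower-order amount under the hypothesis $\wh\delta_2\lesssim1/(\kappa\sqrt k)$. (ii) Bounding $\|\wt A\|$: each column difference is $\mathrm{vec}(\wh u_i\wh u_i^\top)-\mathrm{vec}\big((V^\top\ov w_i^*)(V^\top\ov w_i^*)^\top\big)$, which by the identity $aa^\top-cc^\top=a(a-c)^\top+(a-c)c^\top$ and $\|\wh u_i\|,\|V^\top\ov w_i^*\|\le 2$ is $O(\wh\delta_3)$; summing over $k$ columns gives $\|\wt A\|\le\sqrt k\cdot O(\wh\delta_3)$. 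Note $\wt A$ absorbs both the $\wh u_i$ error and the subspace error, just as in Lemma~\ref{lem:solu_linsys1}, so I do not need a separate $\wh\delta_2$ term beyond what the hypothesis already folds in. (iii) $\|b\|=\|Q_2\|_F\le k\|r^*\|_1$ (each term has Frobenius norm $\le1$ times $|r_i^*|$ up to the $(\alpha^\top\ov w_i^*)$ constants absorbed into $r^*$), and $\|\wt b\|=\|\wh Q_2-Q_2\|_F\le\wh\delta_4\|Q_2\|_F$. Plugging $\sigma_1(A)\le\sqrt k$, $\sigma_k(A)\gtrsim1/\kappa^2$, $\|\wt A\|\lesssim\sqrt k\wh\delta_3$, $\|b\|\lesssim k\|r^*\|$, $\|\wt b\|\lesssim\wh\delta_4 k\|r^*\|$ into Lemma~\ref{lem:robust_linear_system}'s bound $\|z^*-\wh z\|\lesssim(\sigma_k^{-4}\sigma_1^2+\sigma_k^{-2})\|b\|\|\wt A\|+\sigma_k^{-2}\sigma_1\|\wt b\|$ gives $\lesssim(\kappa^8 k\cdot k\wh\delta_3\cdot\sqrt k+\kappa^2\sqrt k\cdot\wh\delta_4 k)\|r^*\|=(k^3\kappa^8\wh\delta_3+\kappa^2 k^2\wh\delta_4)\|r^*\|$, which is exactly the claimed bound.

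I would also need to check the applicability hypotheses of Lemma~\ref{lem:robust_linear_system}, namely $\|\wt A\|\le\frac1{4\kappa(A)}\sigma_k(A)$ and $\|\wt b\|\le\frac14\|b\|$: the first becomes $\sqrt k\wh\delta_3\lesssim\sigma_k(A)^2/\sigma_1(A)\gtrsim1/(\kappa^4\sqrt k)$, i.e. $\wh\delta_3\lesssim1/(k\kappa^4)$, which is implied by the stated hypothesis $\wh\delta_3\lesssim1/(\sqrt k\kappa^3)$ only if one is slightly careful about constants — I expect this is where I would have to be most attentive, tightening the condition number bookkeeping (the $\kappa(A)\le\kappa^2$ for the matricized system versus $\kappa$ for the original), and the second ($\wh\delta_4\le1/4$) is given. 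The main obstacle is thus step (i): establishing $\sigma_k(A)\gtrsim1/\kappa^2$ for the matrix of projected rank-one terms, since this requires identifying the Gram structure as a Hadamard power and invoking that Hadamard products do not decrease the smallest singular value too much for this particular well-conditioned factor — everything else is routine perturbation bookkeeping analogous to Lemma~\ref{lem:solu_linsys1}.
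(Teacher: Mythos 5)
Your proposal follows essentially the same route as the paper's proof: vectorize the rank-one system into $A\in\R^{k^2\times k}$, identify $A^\top A$ as the Hadamard square $(U^\top U)\circ(U^\top U)$ and invoke the Schur product theorem with Fact~\ref{fac:sigma_k_Wbar} to get $\sigma_k(A)\gtrsim 1/\kappa^2$, bound $\|\wt A\|\lesssim\sqrt k\,\wh\delta_3$ via the rank-one perturbation identity, and finish with Lemma~\ref{lem:robust_linear_system}. The only blemishes are bookkeeping slips (your $\wt A$ should be the column-wise difference $\mathrm{vec}(\wh u_i\wh u_i^\top)-\mathrm{vec}(u_iu_i^\top)$, as you in fact use in step (ii), and the last term should carry $\sigma_k^{-2}(A)\lesssim\kappa^4$), which are of the same order of looseness as the paper's own accounting.
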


\begin{proof}
For each $i\in [k]$, let $u_i  = V^\top \ov{w}_i^*$. Let  $A\in \R^{k^2 \times k}$ denote the matrix where the $i$-th column is $\text{vec}(u_i u_i^\top)$. Let $\wt{A}\in \mathbb{R}^{k^2 \times k}$ denote the matrix where the $i$-th column is $\text{vec}(u_i u_i^\top - \wh{u}_i \wh{u}_i^\top)$. Let $b\in \mathbb{R}^{k^2}$ denote the vector $\text{vex}(Q_2)$, let $\wt{b} \in \mathbb{R}^{k^2}$ denote the vector $\text{vec}(Q_2 - \wh{Q}_2)$.

% $ b \in \R^k, \wt{A} \in \R^{k^2\times k}, \wt{b} \in \R^{k}$, as follows,
%\begin{align*}
%A_i  =   \text{vec}(u_i u_i^\top), \text{~and~} \wt{A}_i = \text{vec}(u_i u_i^\top - \wh{u}_i \wh{u}_i^\top).
%\end{align*}

%\delta_3 k^3 v^2_{\max} \kappa^5 \frac{v_{\max}}{v_{\min}} + \delta_2 \kappa k v_{\max}\sigma_1^{p+1} \\

%\begin{align*}
% b =  \text{vec}(Q_2), \text{~and~} \wt{b} = \text{vec}(Q_2 - \wh{Q}_2).
%\end{align*}
%where $A_i$ is the $i$-th column of $A$. 

Let $\circ$ be the element-wise matrix product (a.k.a. Hadamard product), $\ov{W} = [\ov{ w}_1^{*}\;\ov{ w}_2^{*}\;\cdots\;\ov{ w}_k^{*}]$ and $U = [u_1\;u_2\;\cdots\;u_k] = V^\top \ov{W}$ .
We can upper bound $\|A\|$ and $\| \wt{A}\|$ as follows,
\begin{align*}
\|A\|  = & ~ \max_{\|x\|=1} \left\|\sum_{i=1}^k x_i \text{vec}(u_iu_i^\top) \right\| \\
= & ~ \max_{\| x\|=1 }\|U \diag(x)U^\top \|_F \\
\leq & ~ \|U\|^2 \\
\leq & ~ \sigma^2_1(V^\top \ov{W}),
\end{align*}
and
\begin{align*}
  \|\wt{A}\|  = & ~ \sqrt{k} \max_{i\in [k]} \|\wt{A}_i\| \\
  \leq & ~\sqrt{k} \max_{i \in [k] } \|u_iu_i^\top  - \wh{u}_i\wh{u}_i^\top\|_F \\
  \leq & ~ \sqrt{k} \max_{i \in [k] } 2  \|u_i  - \wh{u}_i\|_2 \\
  \leq & ~ 2\sqrt{k} \wh{\delta}_3 .
\end{align*}

We can lower bound $\sigma_k(A)$,
 \begin{align*}
\sigma_k(A) = & ~ \sqrt{\sigma_k(A^\top A)} \\
= & ~ \sqrt{\sigma_{k} ((U^\top U) \circ ( U^\top U))}\\
 = & ~ \min_{\|x\|=1}\sqrt{ x^\top ((U^\top U) \circ ( U^\top U))x } \\
=  & ~ \min_{\|x\|=1}\| (U^\top U)^{1/2} \diag(x) ( U^\top U)^{1/2}\|_F \\
\geq   & ~\sigma^2_k(V^\top \ov{W})
  \end{align*}
where fourth step follows Schur product theorem, the last step follows by the fact that $\|CB\|_F \geq \sigma_{\min}(C)\|B\|_F $ and $\circ$ is the element-wise multiplication of two matrices. 

We can upper bound $\| b\|$ and $\| \wt{b} \|$,
  \begin{align*}
 \| b\| \leq & \|Q_2\|_F  \leq \| r^*\| ,
 \end{align*}
  \begin{align*}
 \| \wt{b} \| = & \|Q_2 - \wh{Q}_2\|_F  \leq \wh{\delta}_4 \| r^*\|.
\end{align*}

%where $\xi_1$ follows the fact that $\|CB\|_F \leq \|C\|\|B\|_F $ for any matrices $C,B$,

Since $\|VV^\top \ov{W} - \ov{W}\| \leq \sqrt{k} \wh{\delta}_2 $, we have for any $x \in\R^k$,
\begin{align*}
\|VV^\top \ov{W} x \| \geq & ~ \|\ov{W} x \| - \|(VV^\top \ov{W} - \ov{W})x\| \\
\geq & ~ \sigma_k(\ov{W}) \|x\| -  \wh{\delta}_2 \sqrt{k}  \|x\|
\end{align*}

Note that according to Fact~\ref{fac:sigma_k_Wbar}, $\sigma_k(\ov{W}) \geq 1/\kappa$. Therefore, if $\wh{\delta}_2  \leq 1/(2\kappa \sqrt{k})$, we will have $\sigma_{k}(V^\top \ov{W}) \geq 1 / ( 2\kappa )$.
Similarly, we have $\sigma_{1}(V^\top \ov{W}) \leq \|V\|\|\ov{W}\| \leq \sqrt{k}$. Then applying Lemma~\ref{lem:robust_linear_system} and setting $\wh{\delta}_2 \lesssim \frac{1}{\sqrt{k}\kappa^3}$, we complete the proof. 
\end{proof}

\section{Acknowledgments}
%abcdefghijklmnopqrstuvwxyz
The authors would like to thank Surbhi Goel, Adam Klivans, Qi Lei, Eric Price, David P. Woodruff, Peilin Zhong, Hongyang Zhang and Jiong Zhang for useful discussions.
\end{document}